\numberwithin{equation}{section}
\newtheorem{theorem}{Theorem}[section]
\newtheorem{lemma}[theorem]{Lemma}
\newtheorem{assumption}{Assumption}[section]
\newtheorem{corollary}[theorem]{Corollary}
\newtheorem{definition}[theorem]{Definition}
\newtheorem{remark}[theorem]{Remark}
\def\eqref#1{(\ref{#1})}
\def\vzero{{\bm{0}}}
\def\valpha{{\bm{\alpha}}}
\def\vbeta{{\bm{\beta}}}
\def\vdelta{{\bm{\delta}}}
\def\vmu{{\bm{\mu}}}
\def\vphi{{\bm{\phi}}}
\def\vb{{\bm{b}}}
\def\vf{{\bm{f}}}
\def\vs{{\bm{s}}}
\def\vu{{\bm{u}}}
\def\vv{{\bm{v}}}
\def\vw{{\bm{w}}}
\def\vx{{\bm{x}}}
\def\vy{{\bm{y}}}
\def\vz{{\bm{z}}}
\def\vmu{{\bm{\mu}}}
\def\vxi{{\bm{\xi}}}
\def\vnu{{\bm{\nu}}}
\def\mG{{\bm{G}}}
\def\mI{{\bm{I}}}
\def\mSigma{{\bm{\Sigma}}}
\def\msigma{{\bm{\sigma}}}
\def\mupsilon{{\bm{\upsilon}}}
\DeclareMathAlphabet{\mathsfit}{\encodingdefault}{\sfdefault}{m}{sl}
\SetMathAlphabet{\mathsfit}{bold}{\encodingdefault}{\sfdefault}{bx}{n}
\def\gB{{\mathcal{B}}}
\def\gC{{\mathcal{C}}}
\def\gF{{\mathcal{F}}}
\def\gH{{\mathcal{H}}}
\def\gN{{\mathcal{N}}}
\def\gO{{\mathcal{O}}}
\def\gV{{\mathcal{V}}}
\def\sR{{\mathbb{R}}}
\newcommand{\dif}{{\mathrm{d}}}
\newcommand{\E}{\mathbb{E}}
\newcommand{\Ls}{\mathcal{L}}
\newcommand{\R}{\mathbb{R}}
\newcommand{\KL}{D_{\mathrm{KL}}}
\newcommand{\cov}{\mathrm{cov}}
\newcommand{\TV}{\mathrm{TV}}
\DeclareMathOperator{\tr}{tr}
\newcommand{\poly}{{\mathrm{poly}}}
\newcommand{\flrepsc}[1]{{\lfloor \frac{#1}{\epsilon^\dagger} \rfloor\epsilon^\dagger}}
\DeclareRobustCommand{\cev}[1]{%
  {\mathpalette\do@cev{#1}}%
}
\newcommand{\do@cev}[2]{%
  \vbox{\offinterlineskip
    \sbox\z@{$\m@th#1 x$}%
    \ialign{##\cr
      \hidewidth\reflectbox{$\m@th#1\vec{}\mkern4mu$}\hidewidth\cr
      \noalign{\kern-\ht\z@}
      $\m@th#1#2$\cr
    }%
  }%
}
\newenvironment{manualassumption}[1]{%
  \manualtheoreminner
}{\endmanualtheoreminner}
\title{Accelerating Diffusion Models with Parallel Sampling:\\ Inference at Sub-Linear Time Complexity}
\author{
  Haoxuan Chen\thanks{Equal contribution, alphabetical order.}\\
  ICME\\
  Stanford University\\
  \texttt{haoxuanc@stanford.edu} \\
  \And
  Yinuo Ren\footnotemark[1]~\thanks{Corresponding author.}\\
  ICME\\
  Stanford University\\
  {\tt yinuoren@stanford.edu}\\
  \AND
  Lexing Ying\\
  Department of Mathematics and ICME\\
  Stanford University\\
  {\tt lexing@stanford.edu}\\
  \And
  Grant M. Rotskoff\\
  Department of Chemistry and ICME\\
  Stanford University\\
  {\tt rotskoff@stanford.edu}
}
\begin{document}

\maketitle

\begin{abstract}

Diffusion models have become a leading method for generative modeling of both image and scientific data.
As these models are costly to train and \emph{evaluate}, reducing the inference cost for diffusion models remains a major goal.
Inspired by the recent empirical success in accelerating diffusion models via the parallel sampling technique~\cite{shih2024parallel}, we propose to
divide the sampling process into $\mathcal{O}(1)$ blocks with parallelizable Picard iterations within each block. Rigorous theoretical analysis reveals that our algorithm achieves $\widetilde{\mathcal{O}}(\mathrm{poly} \log d)$ overall time complexity, marking \emph{the first implementation with provable sub-linear complexity w.r.t. the data dimension $d$}. Our analysis is based on a generalized version of Girsanov's theorem and is compatible with both the SDE and probability flow ODE implementations. Our results shed light on the potential of fast and efficient sampling of high-dimensional data on fast-evolving modern large-memory GPU clusters.

\end{abstract}

\section{Introduction}

Diffusion and probability flow based models~\cite{albergo2023stochastic, albergo2022building,lipman2022flow, liu2022flow, sohl2015deep, ho2020denoising, song2021maximum, song2019generative, song2020score, zhang2018monge} are now state-of-the-art in many fields, such as computer vision and image generation~\cite{bar2023multidiffusion, chen2024deconstructing, ho2022cascaded, ma2024sit, meng2021sdedit, ramesh2021zero, ramesh2022hierarchical, rombach2022high, song2021solving, sun2023provable, xu2024provably}, natural language processing~\cite{austin2021structured,li2022diffusion}, audio and video generation \cite{ho2022video, huang2024symbolic, mittal2021symbolic, schneider2023archisound, yang2023diffusion}, optimization \cite{li2024diffusion, xu2024flow}, sampling and learning of fixed classes of distributions \cite{alaoui2023sampling, chen2024learning, el2022sampling, gatmiry2024learning, he2024zeroth, huang2023reverse, huang2024sampling, mei2023deep, montanari2023sampling, montanari2023posterior}, solving high-dimensional partial differential equations~\cite{boffi2023probability, huang2024score, li2024self, lu2024score, maoutsa2020interacting}, and more recently several applications in physical, chemical and biological fields~\cite{alakhdar2024diffusion, campbell2024generative, stark2024dirichlet, avdeyev2023dirichlet, alamdari2023protein, watson2023novo, cotler2023renormalizing, furrutter2023quantum, guo2024diffusion, sheshmani2024renormalization, triplett2023diffusion, wang2023generative, xu2022geodiff, yang2023scalable, fan2024accurate, yim2024diffusion, zhu2024quantum}. For a more comprehensive list of related work, one may refer to the following review papers \cite{yang2023review, chan2024tutorial, chen2024opportunities}. While there are already many variants, such as denoising diffusion probabilistic models (DDPMs)~\cite{ho2020denoising}, score-based generative models (SGMs)~\cite{song2019generative}, diffusion schr{\"o}dinger bridges \cite{de2021diffusion}, stochastic interpolants and flow matching~\cite{albergo2023stochastic, albergo2022building, lipman2022flow},  \emph{etc.}, the recurring idea is to design a stochastic process that interpolates between the data distribution and some simple distribution, along which \emph{score functions} or alike are learned by neural network-based estimators, and then perform inference guided by the learned score functions.

Due to the sequential nature of the sampling process, the inference of high-quality samples from diffusion models often requires a large number of iterations and, thus, evaluations of the neural network-based score function, which can be computationally expensive~\cite{song2020denoising}. Efforts have been made to accelerate this process by resorting to higher-order or randomized numerical schemes~\cite{dockhorn2022genie, li2024accelerating, liu2022pseudo, lu2022dpm, lu2022dpm++, zheng2023dpm, zhou2024fast, karras2022elucidating, zhang2022fast, kandasamy2024poisson, wu2024stochastic}, augmented dynamics~\cite{dockhorn2021score},  adaptive step sizes~\cite{jolicoeur2021gotta}, operator learning~\cite{zheng2023fast}, restart sampling~\cite{xu2023restart}, self-consistency~\cite{heek2024multistep, song2023consistency, song2023improved, lu2024simplifying} and knowledge distillation~\cite{luhman2021knowledge,meng2023distillation, salimans2022progressive}. Recently, several empirical works~\cite{shih2024parallel,chung2023parallel,tang2024accelerating, cao2024deep, selvam2024self} leverage the Picard iteration and triangular Anderson acceleration to parallelize the sampling procedure of diffusion models and achieve empirical success in large-scale image generation tasks. Some other recent work~\cite{gupta2024faster, li2024improved} also combine the parallel sampling technique with the randomized midpoint method~\cite{shen2019randomized} to accelerate the inference of diffusion models.

This efficiency issue is closely related to the problem of bounding the required number of steps and evaluations of score functions to approximate an arbitrary data distribution on $\sR^d$ to $\delta$-accuracy, which has been analyzed extensively in the literature~\cite{tzen2019theoretical, block2020generative, chen2022sampling, lee2022convergence, pedrotti2023improved, chen2023restoration, chen2023improved, lee2023convergence, mbacke2023note, benton2023linear, li2023towards, li2024towards, li2024sharp, chen2024probability, gao2024convergence, liang2024non, huang2024convergence, li2024d}. In terms of the dependency on the dimension $d$, the current state-of-the-art result for the SDE implementation of diffusion models is $\widetilde\gO(d)$~\cite{benton2023linear}, improved from the previous $\widetilde\gO(d^2)$ bound~\cite{chen2023improved}. \cite{chen2024probability} gives a $\widetilde\gO(\sqrt{d})$ bound for the probability flow ODE implementation by considering a predictor-corrector scheme with the underdamped Langevin Monte Carlo (UMLC) algorithm.

In this work, we aim to provide parallelization strategies, rigorous analysis, and theoretical guarantees for accelerating the inference process of diffusion models. The time complexity of previous implementations of diffusion models has been largely hindered by the discretization error, which requires the step size to scale with $\widetilde\gO(1/d)$ for the SDE implementation and $\widetilde\gO(1/\sqrt{d})$ for the probability flow ODE implementation. We show that the inference process can be first divided into $\gO(1)$ blocks with parallelizable evaluations of the score function within each, and thus reduce the overall time complexity to $\widetilde\gO(\poly \log d)$. We provide \textbf{the first implementation of diffusion models with poly-logarithmic complexity}, a significant improvement over the current state-of-the-art polynomial results that sheds light on the potential fast and efficient sampling of high-dimensional distributions with diffusion models on fast-developing memory-efficient modern GPU clusters. 

\begin{table}[t]
    \centering
    \begin{tabular}{cccc}
        \toprule
        \textbf{Work} & \textbf{Implementation} & \textbf{Measure} & \textbf{Approx. Time Complexity}  \\
        \midrule
        \cite[Theorem 2]{chen2022sampling} & SDE & $\TV(p_0, \widehat q_T)^2$ & $\widetilde\gO(d \delta^{-1})$  \\
        \cite[Theorem 2]{chen2023improved} & SDE & $\KL(p_\eta \| \widehat q_{T-\eta})$ & $\widetilde\gO(d^2 \delta^{-2})$ \\
        \cite[Corollary 1]{benton2023linear} & SDE & $\KL(p_\eta \| \widehat q_{T-\eta})$ & $\widetilde\gO(d \delta^{-2})$ \\
        \cite[Theorem 3]{chen2024probability} & ODE w/UMLC correction & $\TV(p_\eta, \widehat{q}_{T-\eta})^2$ & $\widetilde\gO(\sqrt{d} \delta^{-1})$ \\
        \bf Theorem~\ref{thm:sde} & \bf SDE w/parallel sampling & \boldmath $\KL(p_\eta \| \widehat q_{T-\eta})$ & \boldmath $\widetilde\gO(\poly \log (d \delta^{-2}))$ \\
        \bf Theorem~\ref{thm:ode} & \bf ODE w/parallel sampling & \boldmath $\TV(p_\eta, \widehat q_{T-\eta})^2$ & \boldmath $\widetilde\gO(\poly \log (d \delta^{-2}))$ \\
        \bottomrule
    \end{tabular}
    \vspace{.1em} 
    \caption{Comparison of the approximate time complexity (\emph{cf.} Definition~\ref{def:complexity}) of different implementations of diffusion models. $\eta$ is a small parameter that controls the smooth approximation of the data distribution (\emph{cf.} Section~\ref{sec:algorithm_sde}).\vspace{-1.2em}}
    \label{tab:complexity}
\end{table}

\subsection{Contributions}

\begin{itemize}[leftmargin=1.5em]
    \item We propose parallelized inference algorithms for diffusion models in both the SDE and probability flow ODE implementations (PIADM-SDE/ODE) with exponential integrators, a shrinking step size scheme towards the data end, and the early stopping technique;
    \item We provide a rigorous convergence analysis of PIADM-SDE, showing that our parallelization strategy yields a diffusion model with  $\widetilde\gO(\poly \log d)$ approximate time complexity;
    \item We show that our strategy is also compatible with the probability flow ODE implementation, and PIADM-ODE could improve the space complexity from $\widetilde\gO(d^2)$ to $\widetilde\gO(d^{3/2})$ while maintaining the poly-logarithmic time complexity.
\end{itemize}

\section{Preliminaries}

In this section, we briefly recapitulate the framework of score-based diffusion models, define notations, and discuss related work.

\subsection{Diffusion Models}

In score-based diffusion models, one considers a diffusion process $(\vx_s)_{s\geq 0}$ in $\sR^d$ governed by the following stochastic differential equation (SDE):
\begin{equation}
    \dif \vx_s = \vbeta_s(\vx_s) \dif s + \msigma_s \dif \vw_s, \quad \text{with}\quad \vx_0 \sim p_0,
    \label{eq:diffusion}
\end{equation}
where $(\vw_s)_{s\geq 0}$ is a standard Brownian motion, and $p_0$ is the target distribution that we would like to sample from. The distribution of $\vx_s$ is denoted by $p_s$. 
Once the drift $\vbeta_s(\cdot)$, the diffusion coefficient $\msigma_s$, and a sufficiently large time horizon $T$ are specified,~\eqref{eq:diffusion} also corresponds to a backward process $(\cev{\vx}_t)_{0 \leq t \leq T}$ for another arbitrary diffusion coefficient $(\mupsilon_s)_{s\geq 0}$ \cite{anderson1982reverse}:
\begin{equation}
    \dif \cev{\vx}_t = \left[- \cev{\vbeta}_t(\cev{\vx}_t) + \dfrac{\cev{\msigma}_t \cev{\msigma}_t^\top + \cev{\mupsilon}_t \cev{\mupsilon}_t^\top}{2} \nabla \log \cev{p}_t(\cev{\vx}_t) \right]\dif t + \cev{\mupsilon}_t \dif \vw_t,
    \label{eq:diffusion_backward}
\end{equation}
where $\cev{\ast}_t$ denotes $\ast_{T-t}$, with $\cev{p}_0 = p_T$ and $\cev{p}_T = p_0$. 

For notational simplicity, we adopt a simple choice of the drift and the diffusion coefficients in what follows: $\vbeta_{t}(\vx) = - \frac{1}{2} \vx$, $\msigma_t = \mI_d$, and $\mupsilon = \upsilon \mI_d$,
under which \eqref{eq:diffusion} is an Ornstein-Uhlenbeck (OU) process converging exponentially to its stationary distribution, \emph{i.e.} $p_T\approx \widehat{p}_T := \mathcal{N}(0, \mI_d)$, and
\eqref{eq:diffusion} and \eqref{eq:diffusion_backward} reduce to the following form:
\begin{equation}
    \dif \vx_s = - \dfrac{1}{2} \vx_s \dif s + \dif \vw_s, \quad \text{and} \quad \dif \cev{\vx}_t = \left[\dfrac{1}{2} \cev{\vx}_t + \dfrac{1 + \upsilon^2}{2} \nabla \log \cev{p}_t(\cev{\vx}_t) \right]\dif t + \upsilon \dif \vw_t.
    \label{eq:diffusion_ou}
\end{equation}

In practice, the score function $\nabla \cev{p}_t(\cev{\vx_t})$ is often estimated by a neural network (NN) $\vs^\theta_t( \vx_t)$, where $\theta$ represents its parameters, by minimizing the denoising score-matching loss \cite{hyvarinen2005estimation, vincent2011connection}:
\begin{equation}
     \begin{aligned}
         \Ls(\theta) &:= \E_{\vx_t \sim p_t} \left[\left\| \nabla \log p_t(\vx_t) - \vs^\theta_t(\vx_t) \right\|^2 \right] \\
         &= \E_{\vx_0 \sim p_0} \left[ \E_{\vx_t\sim p_{t|0}(\vx_t|\vx_0)}\left[\left\| \dfrac{\vx_t - \vx_0 e^{-t/2}}{1 - e^{-t}} - \vs^\theta_t(\vx_t) \right\|^2 \right]\right],
     \end{aligned}
\end{equation}
and the backward process in~\eqref{eq:diffusion_ou} is approximated by the following SDE thereafter:
\begin{equation}
    \dif \vy_t = \left[\dfrac{1}{2} \vy_t + \dfrac{1 + \upsilon^2}{2} \vs^\theta_t(\vy_t) \right]\dif t + \upsilon \dif \vw_t,\quad\text{with}\quad \vy_0 \sim \gN(0, \mI_d).
    \label{eq:diffusion_ou_approx}
\end{equation}

\paragraph{Implementations.} Diffusion models admit multiple \emph{implementations} depending on the choice of the parameter $\upsilon$ in the backward process~\eqref{eq:diffusion_backward}. The SDE implementation with $\upsilon = 1$ is widely used in the literature for its simplicity and efficiency~\cite{song2020score}, while recent studies~\cite{chen2024probability} claim that the probability flow ODE implementation with $\upsilon = 0$ may exhibit better time complexity. We refer to~\cite{chen2024probability, cao2024exploring} for theoretical and~\cite{deveney2023closing, nie2023blessing} for empirical comparisons of different implementations.

\subsection{Parallel Sampling}
\label{sec:parallel}
Parallel sampling algorithms have been actively explored in the literature, including the parallel tempering method~\cite{geyer1991markov, hukushima1996exchange, liang2003use} and several recent studies~\cite{anari2023parallel, lee2023parallelising, yu2024parallelized}. For diffusion models, the idea of parallel sampling is based on the \emph{Picard iteration}~\cite{lindelof1894application, picard1898methodes} for solving nonlinear ODEs. Suppose we have an ODE $\dif \vx_t = \vf_t(\vx_t) \dif t$ and we would like to solve it for $t \in [0, T]$, then the Picard iteration is defined as follows:
\begin{equation}
    \vx_{t}^{(0)} \equiv \vx_0, \quad \text{and} \quad \vx_{t}^{(k+1)} := \vx_0 + \int_0^t \vf_s(\vx_s^{(k)}) \dif s, \quad \text{for}\ k \in [0:K-1].
    \label{eq:picard}
\end{equation}
Under assumptions on the Lipschitz continuity of $\vf_t$, the Picard iteration converges to the true solution exponentially fast, in the sense that $\|\|\vx_t^{(k)} - \vx_t\|\|_{L^\infty([0,T])} \leq \delta$ with $K = \gO(\log\delta^{-1})$ iterations. 
Unlike high-order ODE solvers, the Picard iteration is intrinsically parallelizable: for any $t \in [0, T]$, the computation of $\vx_t^{(k+1)}$ relies merely on the values of the most recent iteration $\vx_t^{(k)}$. With sufficient computational sources parallelizing the evaluations of $\vf$, the computational cost of solving the ODE no longer scales with $T$ but with the number of iterations $K$.

Recently, this idea has been applied to both the Langevin Monte Carlo (LMC) and the underdamped Langevin Monte Carlo (UMLC) contexts~\cite{anari2024fast}. Roughly speaking, it is proposed to simulate the Langevin diffusion process $\dif \vx_t = -\nabla V(\vx_t) \dif t + \dif \vw_t$ with the following iteration resembling~\eqref{eq:picard}:
\begin{equation}
    \vx_t^{(0)} \equiv \vx_0, \quad \text{and} \quad \vx_t^{(k+1)} := \vx_0 - \int_0^t \nabla V(\vx_{t}^{(k)}) \dif s + \vw_t,\quad \text{for}\ k \in [0:K-1],
    \label{eq:picard_lmc}
\end{equation}
where all iterations share a common Wiener process $(\vw_t)_{t\geq 0}$.

It is shown that for well-conditioned log-concave distributions, parallelized LMC would achieve an iteration depth of $K = \widetilde\gO(\poly \log d)$ that matches the indispensable time horizon $T = \widetilde\gO(\poly \log d)$ to achieve exponential ergodicity (\emph{cf.}~\cite[Theorem 13]{anari2024fast}). This promises a significant speedup in sampling high-dimensional distributions from the standard LMC of $T = \widetilde{\gO}(d)$ iterations, hindered by the $o(1/d)$ step size as imposed by the discretization error and now evaded by the parallelization.

\subsection{Approximate Time Complexity}

A similar situation is expected in diffusion models, where the application bottleneck is largely the inference process with sequential iterations and expensive evaluations of the learned score function $\vs^\theta_t(\cdot)$, which is often parametrized by large-scale NNs. Despite several unavoidable costs involving pre- and post-processing, data storage and retrieval, and arithmetic operations, we define the following notion of the \emph{approximate time complexity} of the inference process of diffusion models:
\begin{definition}[Approximate time complexity]
    For a specific implementation of diffusion models~\eqref{eq:diffusion_ou_approx}, we define the \emph{approximate time complexity} of the sampling process as the number of \emph{unparallelizable} evaluations of the learned NN-based score function $\vs^\theta_t(\cdot)$.
    \label{def:complexity}
\end{definition}

This definition coincides with the notion of \emph{the number of steps required to reach a certain accuracy} in~\cite{chen2023improved,chen2022sampling}, \emph{iteration complexity} in~\cite{benton2023linear,chen2024probability}, \emph{etc.} in the previous theoretical studies of diffusion models. We have adopted this notion in Table~\ref{tab:complexity} for a comparison of the current state-of-the-art results and our bounds in this work. We will use the notion of \emph{space complexity} likewise to denote the approximate required storage during the inference. Trivially, the space complexity of the sequential implementation is $\gO(d)$. Should no confusion occur, we omit the dependency of the complexities above on the accuracy threshold $\delta$, \emph{etc.}, during our discussion, as we focus on applications of diffusion models to high-dimensional data distributions, following the standard practice in the literature. 

\section{Main Results}

Inspired by the acceleration achieved by the parallel sampling technique in LMC and ULMC, we aim to accommodate parallel sampling into the theoretical analysis framework of diffusion models.
The benefit of the parallel sampling technique in this scenario has been recently confirmed by up to 14$\times$ acceleration achieved by the ParaDiGMS algorithm~\cite{shih2024parallel} and ParaTAA~\cite{tang2024accelerating}, where several practical compromises are made to mitigate GPU memory constraints and theoretical guarantees are still lacking.

In this section, we will propose {\bf P}arallelized {\bf I}nference {\bf A}lgorithms for {\bf D}iffusion {\bf M}odels with both the {\bf SDE} and probability flow {\bf ODE} implementations, namely the {\bf PIADM-SDE} (Algorithm~\ref{alg:sde}) and {\bf PIADM-ODE} (Algorithm~\ref{alg:ode}), and present theoretical guarantees of our algorithms, including the approximate time complexity and space complexity, for both implementations in Section~\ref{sec:sde} and Section~\ref{sec:ode}, respectively. Due to the large number of notations used in the presentation, we give an overview of notations in Appendix~\ref{app:notations} for readers' convenience.

\subsection{SDE Implementation}
\label{sec:sde}

We first focus on the approximation, parallelization strategies, and error analysis of diffusion models with the SDE implementation, \emph{i.e.} the forward and backward process~\eqref{eq:diffusion_ou} and its approximatation~\eqref{eq:diffusion_ou_approx} with $\upsilon = 1$. We will show that PIADM-SDE \emph{achieves an $\widetilde\gO(\poly \log d)$ approximate time complexity with $\widetilde\gO(d^2)$ space complexity}.

\subsubsection{Algorithm}
\label{sec:algorithm_sde}

\begin{figure}[t]
    \begin{tikzpicture}
        \draw[-{Latex[length=3mm]}] (1.4, 1) -- (11.1, 1) node[midway, above] {\textbf{Outer Iterations $n$:} $N = \gO(\log d)$ blocks};

        \node at (.2,1) {$\widehat q_0 \approx \gN(\vzero,\mI_d)$};
        \node at (12.1,1) {$\widehat q_{t_N}\approx p_{\rm data}$};
        
        \draw (0,0) -- (4.2,0);
        \draw[dotted] (4.2,0) -- (5.2,0);
        \draw (5.2,0) -- (8.0,0);
        \draw[dotted] (8.0,0) -- (9.9,0);
        \draw (9.9,0) -- (11.9,0);

        \foreach \x in {0, 2, 4, 5.6, 7.6, 10.1, 11.9, 12.1} {
            \draw (\x,0.1) -- (\x,-0.1);
        }

        \foreach \x/\t in {1/0, 3/1, 6.6/n-1, 11.1/N-1}{
            \node at (\x,.3) {\small $h_{\t}$};
        }
        \node at (12,.3) {\small $\eta$};

        \draw[decorate,decoration={brace,amplitude=8pt,mirror}]
        (0,0) -- (2,0) node[midway,yshift=-15pt]{$\gO(1)$};

        \draw[thick] (5.4,-.4) rectangle (7.8,.7);
        \draw (5.4,-.4) -- (.2,-1);
        \draw (7.8,-.4) -- (12.8,-1);
    
        \foreach \x/\k in {-2/0, -3/1, -4.2/K} {
            \draw (3,\x) -- (7.2,\x);
            \draw[dotted] (7.2,\x) -- (9.8,\x);
            \draw (9.8,\x) -- (12,\x);
            \foreach \y in {3, 5, 7, 10, 12} {
                \draw (\y,\x-.1) -- (\y,\x+.1);
            }
            \node at (2,\x) {\small $k=\k$};
        }

        \draw[decorate,decoration={brace,amplitude=8pt,mirror}]
        (3,-4.2) -- (5,-4.2) node[midway,yshift=-15pt]{$\widetilde\gO(d^{-1})$ or $\widetilde\gO(d^{-1/2})$};

        \foreach \x in {2, 7.5} {
            \draw (\x,-3.5) node {$\vdots$};
        }

        \draw[-{Latex[length=2mm]}] (3.4,-1.3) -- (11.4,-1.3) node[midway, above] {$M_n=\widetilde\gO(d)$ or $\widetilde\gO(\sqrt{d})$ parallalizable steps};

        \node at (3, -1.3) {$\widehat q_{t_n}$};
        \node at (12, -1.3) {$\widehat q_{t_{n+1}}$};

        \draw[-{Latex[length=2mm]}] (1.1, -2) -- (1.1, -4.2) node[midway, left, align=center] {\textbf{Inner}\\ \textbf{Iterations $k$:}\\ $K = \widetilde\gO(\log d)$\\ depth};

        \foreach \x/\t in {4/0, 6/1, 11/M-1} {
            \node at (\x,-1.7) {\small $\epsilon_{n,\t}$};
        }
        
        \foreach \x in {3, 5, 7, 10} {
            \draw[->, color=red] (\x,-2) .. controls (\x, -3) and (12, -2) .. (12,-3);
        }

        \foreach \x in {3, 5, 7} {
            \draw[->, color=blue] (\x,-2) .. controls  (\x, -3) and (10, -2) .. (10,-3);
        }

        \foreach \x in {3, 5} {
            \draw[->, color=green] (\x,-2) .. controls  (\x, -3) and (7, -2) .. (7,-3);
        }

        \draw[->, color=brown] (3,-2) .. controls (3, -3) and (5, -2) .. (5,-3);
    \end{tikzpicture}
    \caption{Illustration of PIADM-SDE/ODE. The outer iterations are divided into $\gO(\log d)$ blocks of $\gO(1)$ length. Within each block, the inner iterations are parallelized with $\widetilde\gO(d)$ steps for SDE (\emph{cf. Theorem~\ref{thm:sde}}), or $\widetilde\gO(\sqrt{d})$ for probability flow ODE implementation (\emph{cf. Theorem~\ref{thm:ode}}). The overall approximate time complexity is $KN = \widetilde\gO(\poly \log d)$. {\color{brown} brown}, {\color{green} green}, {\color{blue} blue}, and {\color{red} red} curves represent the computation graph at $t = t_n + \tau_{n,m}$ for $m = 1,2,M_n - 1,M_n$.}
    \label{fig:parallel_sampling}
\end{figure}

PIADM-SDE is summarized in Algorithm~\ref{alg:sde} and illustrated in Figure~\ref{fig:parallel_sampling}. The main idea behind our algorithm is the fact that~\eqref{eq:diffusion_ou_approx} can be efficiently solved by the Picard iteration within a period of $\gO(1)$ length, transferring $\widetilde\gO(d)$ sequential computations to a parallelizable iteration of depth $\widetilde\gO(\log d)$. In the following, we introduce the numerical discretization scheme of our algorithm and the implementation of the Picard iteration in detail. 

\paragraph{Step Size Scheme.}  

In our algorithm, the time horizon $T$ is first segmented into $N$ blocks of length $(h_n)_{n=0}^{N-1}$, with each $h_n \leq h := T/N = \Omega(1)$, forming a grid $(t_n)_{n=0}^N$ with $t_n = \sum_{j=1}^n h_j$. For any $n\in[0:N-1]$, the $n$-th block is further discretized into a grid $(\tau_{n,m})_{m=0}^{M_n}$ with $\tau_{n, 0} = 0$ and $\tau_{n, M_n} = h_n$. We denote the step size of the $m$-th step in the $n$-th block as $\epsilon_{n,m} = \tau_{n,m+1} - \tau_{n,m}$, and the total number of steps in the $n$-th block as $M_n$. 

For the first $N-1$ blocks, we simply use the unique discretization, \emph{i.e.} $h_n = h$, $\epsilon_{n,m} = \epsilon$, and $M_n = M := h/\epsilon$, for $n\in[0:N-2]$ and $m\in[0:M-1]$. Following~\cite{chen2023improved,benton2023linear}, to curb the potential blow-up of the score function as $t\to T$, 
which is shown by~\cite{benton2023linear} for $0\leq s < t < T$ to be of the order
\begin{equation*}
    \E \left[\int_{s}^{t} \|\nabla \log \cev p_\tau(\cev \vx_\tau) - \nabla \log \cev p_s(\cev \vx_s) \|^2 \dif \tau\right] \lesssim d \left(\dfrac{t - s}{T - t}\right)^2,
\end{equation*}
we apply early stopping at time $t_N = T - \eta$, where $\eta$ is chosen in a way such that the $\gO(\sqrt{\eta})$ 2-Wasserstein distance between $\cev{p}_T$ and its smoothed version $\cev{p}_{T-\eta}$ that we aim to sample from alternatively, is tolerable for the downstream tasks. We also impose the exponential decay of the step size towards the data end in the last block. To be specific, we let $h_{N-1} = h - \delta$, and discretize the interval $[t_{N-1}, t_N] = [(N-1)h, T-\eta]$ into 
a grid $(\tau_{N-1,m})_{m=0}^{M_{N-1}}$ with step sizes $(\epsilon_{N-1,m})_{m=0}^{M_N-1}$ satisfying
\begin{equation}
    \epsilon_{N-1, m} \leq \epsilon  \wedge \epsilon \left(h - \tau_{N-1, m+1}\right).
    \label{eq:step_size_decay}
\end{equation}
As shown in Lemma~\ref{lem:stochastic_localization}, this exponential decaying step size scheme towards the data end is crucial to bound the discretization error in the last block.

For the simplicity of notations, we introduce the following indexing function: for $\tau \in [t_n, t_{n+1}]$, we define $I_n(\tau)$ to be the unique integer such that $\sum_{j=1}^{I_n(\tau)}\epsilon_{n, j} \leq \tau < \sum_{j=1}^{I_n(\tau) + 1}\epsilon_{n, j}$. We also define a
piecewise function $g$ such that $g_n(\tau) = \sum_{j=1}^{I_n(\tau)}\epsilon_{n, j}$. It is easy to check that under the uniform discretization for $n \in [1:N-1]$, we have $I_n(\tau) = \lfloor \tau/\epsilon \rfloor$ and $g_n(\tau) = \lfloor \tau/\epsilon \rfloor \epsilon$. 

\paragraph{Exponential Integrator.} 

For each step $\tau \in [t_n + \tau_{n,m}, t_n + \tau_{n,m+1}]$, we use the following exponential integrator scheme~\cite{zhang2022fast}, as the numerical discretization of the SDE~\eqref{eq:diffusion_ou_approx}:
\begin{equation*}
    \widehat\vy_{t_n,\tau_{n, m+1}} = e^{\epsilon_{n,m} / 2} \widehat\vy_{t_n,\tau_{n, m}} + 2 \left(e^{\epsilon_{n,m} / 2} - 1\right) \vs^\theta_{t_n + \tau_{n, m}}(\widehat\vy_{t_n + \tau_{n, m}}) + \sqrt{e^{\epsilon_{n,m} } - 1} \vxi, 
\end{equation*}
where $\vxi \sim \gN(0, \mI_d)$. Lemma~\ref{lem:aux_sde} shows its equivalence to approximating~\eqref{eq:diffusion_ou_approx} as 
\begin{equation}
    \dif \widehat\vy_{t_n, \tau} = \left[\dfrac{1}{2} \widehat\vy_{t_n, \tau} + \vs^\theta_{t_n + \tau_{n, m}}(\widehat\vy_{t_n, \tau_{n,m}}) \right]\dif \tau + \dif \vw_{t_n + \tau}, \quad \text{for}\ \tau \in [\tau_{n,m}, \tau_{n,m+1}].
    \label{eq: yhat exp}
\end{equation}

\begin{remark}
    One could also implement a straightforward Euler-Maruyama scheme instead of the exponential integrator~\eqref{eqn: exponential integrator under picard iteration in algo}, where an additional high-order discretization error term would emerge~\cite[Theorem 1]{chen2023improved}, which we believe would not affect the overall $\widetilde\gO(\poly \log d)$ time complexity with parallel sampling.
\end{remark}

\paragraph{Picard Iteration.} Within each block, we apply Picard iteration of depth $K$. As shown by Lemma~\ref{lem:aux_sde}, the discretized scheme~\eqref{eqn: exponential integrator under picard iteration in algo} implements the following iteration for $k\in[0:K-1]$:
\begin{equation}
    \dif \widehat\vy_{t_n, \tau}^{(k+1)} = \Bigg[\dfrac{1}{2} \widehat\vy_{t_n, \tau}^{(k+1)} + \vs^\theta_{t_n+ g_n(\tau)}\Big(\widehat\vy_{t_n, g_n(\tau)}^{(k)}\Big) \Bigg]\dif \tau + \dif \vw_{t_n+\tau}, \quad \text{for}\ \tau\in[0, h_n].
    \label{eq:yhat picard}
\end{equation}
We denote the distribution of $\widehat\vy_{t_n,\tau}^{(K)}$ by $\widehat q_{t_n+\tau}$. As proved in Lemma~\ref{lem:picard_sde}, the iteration above would converge to~\eqref{eq: yhat exp} in each block exponentially fast, which given a sufficiently accurate learned score estimation $\vs^\theta_t$ should be close to the true backward SDE~\eqref{eq:diffusion_ou}. One should also notice that the Gaussians $\vxi_m$ are only sampled once and used for all iterations.

\IncMargin{1.5em}
\begin{algorithm}[!t]
    \caption{PIADM-SDE}
    \label{alg:sde}
    \Indm
    \KwIn{$\widehat \vy_0 \sim \widehat q_0 = \gN(0,\mI_d)$, 
    a discretization scheme ($T$, $(h_n)_{n=1}^N$ and $(\tau_{n,m})_{n\in[1:N], m\in[0:M]}$) satisfying~\eqref{eq:step_size_decay},
    the depth of iteration $K$, the learned NN-based  score function $\vs^\theta_t(\cdot)$.}
    \KwOut{A sample $\widehat \vy_{t_N} \sim \widehat q_{t_N} \approx \cev{p}_T$.}
    \Indp
    \For{$n = 0$ \KwTo $N-1$}{
        $\widehat\vy^{(0)}_{t_n, \tau_{n,m}} \leftarrow \widehat\vy_{t_n}$, $\vxi_m \sim \gN(0,\mI_d)$  for $m\in[0:M_n]$ \emph{in parallel}\;  
        \For{$k = 0$ \KwTo $K-1$}{
            $\widehat\vy^{(k)}_{t_n, 0} \leftarrow \widehat\vy_{t_n}$\;
            \For{$m = 0$ \KwTo $M_n$ in parallel}{
            \begingroup
                \setlength{\abovedisplayskip}{-8pt} 
                \setlength{\belowdisplayskip}{0pt} 
                \begin{equation}
                    \hspace{-15pt}
                    \begin{aligned}
                    &\widehat\vy^{(k+1)}_{t_n, \tau_{n,m}} \leftarrow  e^{\frac{\tau_{n,m}}{2}} \widehat\vy_{t_n,0}^{(k)} \\
                    &+ \textstyle\sum_{j = 0}^{m-1} e^{\frac{\tau_{n,m} - \tau_{n,j+1}}{2}} \left[2\left(e^{\epsilon_{n,j}} - 1\right) \vs^\theta_{t_n + \tau_{n,j}}(\widehat\vy^{(k)}_{t_n, \tau_{n,j}}) + \sqrt{e^{\epsilon_{n,j}} - 1} \vxi_j\right];
                    \end{aligned} 
                    \label{eqn: exponential integrator under picard iteration in algo}
                \end{equation}
            \endgroup
            }
        }
        $\widehat\vy_{t_{n+1}} \leftarrow \widehat\vy^{(K)}_{t_n,\tau_{n,M_n}}$\;
    }
\end{algorithm}

The parallelization for~\eqref{eqn: exponential integrator under picard iteration in algo} in Algorithm~\ref{alg:sde} should be understood as that for any $k\in[0:K-1]$, each $\vs^\theta_{t_n + \tau_{n,j}}(\widehat\vy^{(k)}_{t_n, \tau_{n,j}})$ for $j\in[0:M_n]$ is evaluated in parallel, with subsequent floating-point operations comparably negligible, resulting in the overall $\gO(NK)$ approximate time complexity.

\subsubsection{Assumptions}

Our theoretical analysis will be built on the following mild assumptions on the regularity of the data distribution and the numerical properties of the neural networks:
\begin{assumption}[{$L^2([0,t_N])$ $\delta$-accurate learned score}]
\label{ass:L2acc}
    The learned NN-based score $\vs_t^\theta$ is $\delta_2$-accurate in the sense of
    \begin{equation}
        \E_{\cev p} \left[\sum_{n=0}^{N-1} \sum_{m = 0}^{M_n-1}\epsilon_{n,m}\left\| 
                \vs^\theta_{t_n+\tau_{n,m}}\big(\cev \vx_{t_n + \tau_{n, m}} \big)  - \nabla \log \cev p_{t_n+\tau_{n,m}}\big(\cev \vx_{t_n + \tau_{n, m}} \big)
            \right\|^2\right]\leq \delta_2^2.
    \end{equation} 
\end{assumption}
\begin{assumption}[Regular and normalized data distribution]
\label{ass:pdata}
    The data distribution $p_0$ has finite second moments and is normalized such that $\cov_{p_0}(\vx_0) = \mI_d$.
\end{assumption}
\begin{assumption}[Bounded and Lipschitz learned NN-based score]
\label{ass:lipNN}
    The learned NN-based score function $\vs^\theta_t$ has bounded $C^1$ norm, \emph{i.e.} $\| \|\vs^\theta_t(\cdot)\|\|_{L^\infty([0,T])} \leq M_{\vs}$ with Lipschitz constant $L_{\vs}$. 
\end{assumption}

\begin{remark}
    Assumption~\ref{ass:L2acc} and the finite moment assumption in Assumption~\ref{ass:pdata} are standard assumptions across previous theoretical works on diffusion models~\cite{chen2022sampling, chen2023improved, chen2024probability}, while we adopt the normalization Assumption~\ref{ass:pdata} from~\cite{benton2023linear} to simplify true score function-related computations (\emph{cf.} Lemma~\ref{lem: initial bound for prob flow ode}). Assumption~\ref{ass:lipNN} can be easily satisfied by truncation, ensuring computational stability. Notice that the exponential integrator, one actually applies Picard iteration to $e^{-t/2} \vs_t^\theta$, a relaxation of Assumption~\ref{ass:L2acc} might be possible, which is left for future work.
\end{remark}

\subsubsection{Theoretical Guarantees}

The following theorem summarizes our theoretical analysis for PIADM-SDE (Algorithm~\ref{alg:sde}):
\begin{theorem}[Theoretical Guarantees for PIADM-SDE]
    Under Assumptions~\ref{ass:L2acc},~\ref{ass:pdata}, and~\ref{ass:lipNN}, given the following choices of the order of the parameters 
    \begin{gather*}
           T = \gO(\log (d \delta^{-2}) ),\quad h = \Theta(1),\quad N = \gO\left(\log (d \delta^{-2})\right),\\
        \epsilon = \Theta\left(d^{-1}\delta^{2}\log^{-1} (d \delta^{-2}) \right),\quad M = \gO\left(d\delta^{-2}\log (d \delta^{-2}) \right),\quad K = \widetilde\gO(\log (d \delta^{-2})),
    \end{gather*}
    and let $L_{\vs}^2 h_n e^{\frac{7}{2}h_n} \ll 1$, $\delta_2 \lesssim \delta$,  $T \lesssim \log \eta^{-1}$, the distribution $\widehat q_{t_N}$ that PIADM-SDE (Algorithm~\ref{alg:sde}) generates samples from satisfies the following error bound:
    \begin{equation*}
        \KL(p_\eta \| \widehat q_{t_N})\lesssim d e^{-T } + d \epsilon T + \delta_2^2 + d T e^{-K}  \lesssim \delta^2,
    \end{equation*}
    with a total of $KN = \widetilde{\gO}\left( \log^2 (d \delta^{-2}) \right)$ approximate time complexity and $dM = \widetilde\gO\left(d^2\delta^{-2}\right)$ space complexity for parallalizable $\delta$-accurate score function computations.
    \label{thm:sde}
\end{theorem}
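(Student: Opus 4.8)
The plan is to decompose the KL divergence $\KL(p_\eta\|\widehat q_{t_N})$ into four independently controllable sources of error, corresponding exactly to the four terms $de^{-T}+d\epsilon T+\delta_2^2+dTe^{-K}$ in the statement, and then optimize the parameters. First I would apply a generalized Girsanov-type change of measure (the ``generalized Girsanov theorem'' advertised in the abstract) to compare the law of the true backward SDE~\eqref{eq:diffusion_ou} run from its true initialization $\cev p_0=p_T$ with the law of the algorithmic process~\eqref{eq:yhat picard} run from $\gN(0,\mI_d)$. Standard practice (as in~\cite{chen2023improved,benton2023linear}) gives, via the data-processing inequality and the chain rule for KL along the block structure,
\begin{equation*}
    \KL(p_\eta\|\widehat q_{t_N}) \lesssim \underbrace{\KL\big(p_T\|\gN(0,\mI_d)\big)}_{\text{initialization}} + \underbrace{\E\!\left[\int_0^{t_N}\big\|\vb^{\mathrm{true}}_\tau-\vb^{\mathrm{alg}}_\tau\big\|^2\dif\tau\right]}_{\text{path error}},
\end{equation*}
where the drift mismatch $\vb^{\mathrm{true}}-\vb^{\mathrm{alg}}$ splits further into (i) the score estimation error, controlled by Assumption~\ref{ass:L2acc} to yield $\delta_2^2$; (ii) the discretization error of the exponential integrator, i.e. the difference between $\nabla\log\cev p_\tau(\cev\vx_\tau)$ evaluated at the continuous time/state and at the frozen grid point $t_n+g_n(\tau)$, which the moment bound quoted from~\cite{benton2023linear} together with the shrinking step-size scheme~\eqref{eq:step_size_decay} bounds by $O(d\epsilon T)$; and (iii) the Picard truncation error, the gap between $\widehat\vy^{(K)}$ and the fixed point $\widehat\vy^{(\infty)}$ of~\eqref{eq: yhat exp} within each block. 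The initialization term is the classical OU mixing estimate $\KL(p_T\|\gN(0,\mI_d))\lesssim de^{-T}$ under Assumption~\ref{ass:pdata}.

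For the Picard term I would invoke Lemma~\ref{lem:picard_sde}: within a block of length $h_n$ with $L_{\vs}^2 h_n e^{7h_n/2}\ll 1$, the Picard iteration is a contraction, so the $L^\infty$-in-time deviation of $\widehat\vy^{(k)}$ from the block fixed point decays like $(L_{\vs}^2 h_n e^{7h_n/2})^{k}$, hence geometrically; summing the induced drift error across all $N$ blocks and $K$ depth gives a contribution of order $dTe^{-K}$ once $K=\widetilde\gO(\log^2(d\delta^{-2}))$ absorbs the $N=\gO(\log(d\delta^{-2}))$ blocks and the per-block logarithmic factors. The last block needs separate care: Lemma~\ref{lem:stochastic_localization} (stochastic localization / the exponential step-size decay) is what keeps the discretization error finite on $[t_{N-1},T-\eta]$ despite the score blowing up like $(T-t)^{-1}$; I would feed its conclusion directly into term (ii). Finally, translating $\KL(\cev p_{T-\eta}\|\widehat q_{t_N})$ into the claimed $\KL(p_\eta\|\widehat q_{t_N})$ uses $\cev p_{T-\eta}=p_\eta$ by definition of the reversal, and the Wasserstein gap between $p_\eta$ and $p_0$ mentioned in the early-stopping discussion is handled downstream and not part of this bound.

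With all four terms in hand, the parameter calculus is routine: choosing $T=\Theta(\log(d\delta^{-2}))$ kills $de^{-T}$; $\epsilon=\Theta(d^{-1}\delta^2\log^{-1}(d\delta^{-2}))$ makes $d\epsilon T\lesssim\delta^2$; $\delta_2\lesssim\delta$ by assumption; and $K=\widetilde\gO(\log^2(d\delta^{-2}))$ makes $dTe^{-K}\lesssim\delta^2$. The step count per block is $M=h/\epsilon=\widetilde\gO(d\delta^{-2})$, all parallelizable, so the unparallelizable cost is $KN=\widetilde\gO(\log^3(d\delta^{-2}))$ and the space is $dM=\widetilde\gO(d^2\delta^{-2})$, as stated. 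I expect the main obstacle to be the interaction in term (ii)/(iii) between the Picard iteration and the nonuniform shrinking grid in the final block: one must verify that the contraction estimate of Lemma~\ref{lem:picard_sde} still holds with constants uniform over the geometrically small steps $\epsilon_{N-1,m}$, and that the discretization error accumulated over the $\Theta(\log(1/\delta))$ extra tiny steps is still only $O(d\epsilon T)$ rather than picking up an extra factor — this is precisely where the condition $L_{\vs}^2 h_n e^{7h_n/2}\ll 1$ and the form~\eqref{eq:step_size_decay} of the decay have to be used in tandem. A secondary technical point is making the generalized Girsanov argument rigorous when the reference and target diffusions have the piecewise-frozen drift of~\eqref{eq:yhat picard}, which is why a generalized rather than textbook version of the theorem is needed.
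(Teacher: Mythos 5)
Your proposal follows essentially the same route as the paper: the chain rule of KL over the $\gO(1)$-length blocks, a generalized Girsanov change of measure applied to the $K$-th Picard iterate (made rigorous via the adaptedness of the frozen drift, Lemma~\ref{lem:adapted}), a three-way triangle-inequality split of the drift mismatch into discretization ($A_{t_n}$, Lemma~\ref{lem:stochastic_localization}), score-estimation ($B_{t_n}$, Assumption~\ref{ass:L2acc}), and Picard-truncation errors, followed by the identical parameter calculus. The one detail you gloss over is that the paper quantifies the Picard error by the consecutive-iterate gap $\|\widehat\vy^{(K)}-\widehat\vy^{(K-1)}\|$ rather than the distance to a fixed point, and the base case $\|\widehat\vy^{(1)}-\widehat\vy^{(0)}\|$ must itself be bounded under the changed measure, which reintroduces $\int\|\vdelta_{t_n}\|^2$ and yields a self-referential inequality resolved by rearrangement under $L_{\vs}^2 h_n e^{\frac{7}{2}h_n}\ll 1$ (Lemmas~\ref{lem:disc} and~\ref{lem:picard_sde}).
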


\begin{remark}
We would like to make the following remarks on the result above:
\begin{itemize}[leftmargin=1.5em]
    \item The acceleration from $\widetilde\gO(d)$ to $\widetilde\gO(\poly \log d)$ is at the cost of a trade-off with extra memory cost of $M = \widetilde\gO(d)$ for computing and updating $\{\vs^\theta_{t_n + \tau_{n,j}}(\widehat\vy^{(k)}_{t_n, \tau_{n,m}})\}_{m\in[0:M_n]}$ simultaneously during each Picard iteration;
    \item Compared with log-concave sampling~\cite{anari2024fast}, $M$ being of order $\widetilde\gO(d)$ instead of $\widetilde\gO(\sqrt{d})$ therein is partly due to the time independence of the score function $\nabla \log p(\cdot)$ in general sampling tasks. Besides, the scaling $M = \widetilde\gO(d)$ agrees with the current state-of-the-art dependency~\cite{benton2023linear} for the SDE implementation of diffusion models;
    \item As mentioned above, the scale of the step size $\epsilon$ within one block is still confined to $\Theta(1/M) = \widetilde\Theta\left( 1/d\right)$. The block length $h$, despite being required to be small compared to $1/L_{\vs}$, is of order $\Theta(1)$, resulting in only $\Theta(\log d)$ blocks and thus $\widetilde{\gO}( \poly \log d )$ total iterations.
\end{itemize}
\end{remark}

\subsubsection{Proof Sketch}

The detailed proof of Theorem~\ref{thm:sde} is deferred to Section~\ref{app:proofsde}. The pipeline of the proof is to (a) first decompose the error $\KL(\cev p_{t_N} \| \widehat q_{t_N})$ into blockwise errors using the chain rule of KL divergence; (b) bound the error in each block by invoking Girsanov's theorem; (c) sum up the errors in all blocks.

The key technical challenge lies in Step (b). Different from all previous theoretical works~\cite{chen2022sampling, chen2023improved, chen2024probability}, the Picard iteration in our algorithm generates $K$ paths recursively in each block using the learned score $\vs_t^\theta$. And therefore the final path $(\widehat\vy_{t_n, \tau}^{(K)})_{\tau \in [0, h_n]}$ depends on all previous paths $(\widehat\vy_{t_n, \tau}^{(k)})_{\tau \in [0, h_n]}$ for $k\in[0:K-1]$,
ruling out a direct change of measure argument from the na\"ive application of Girsanov's theorem. 
To this end, we need a more sophisticated mathematical framework of stochastic processes, as given in Appendix~\ref{app:prelim}. We define the measurable space $(\Omega, \gF)$ with filtrations $(\gF_{t})_{t \geq 0}$ to specify the probability measures on $(\Omega, \gF)$ of each Wiener process, and resort to one of the most general forms of Girsanov's theorem (~\cite[Theorem 8.6.6]{oksendal2013stochastic}). For example, in the $n$-th block, we apply the following change of measure procedure:
\begin{enumerate}[leftmargin=2em,topsep=-3pt]
    \item Let $q|_{\gF_{t_n}}$ be the measure where $\vw_t(\omega)$ is the shared Wiener process in the Picard iteration~\eqref{eq:yhat picard} for any $k\in[0:K-1]$;
    \item Define another process $\dif \widetilde\vw_{t_n+\tau}(\omega) = \dif\vw_{t_n+\tau}(\omega) + \vdelta_{t_n}(\tau, \omega)d\tau$,
    where
    \begin{equation*}
        \vdelta_{t_n}(\tau, \omega) := \vs^\theta_{t_n+ g_n(\tau)}(\widehat\vy_{t_n, g_n(\tau)}^{(K-1)}(\omega)) - \nabla \log \cev{p}_{t_n+\tau}(\widehat\vy_{t_n+\tau}^{(K)}(\omega));
    \end{equation*}
    \item Invoke Girsanov's theorem, which yields that the Radon-Nikodym derivative of the measure $\cev p|_{\gF_{t_n}}$ with respect to $q|_{\gF_{t_n}}$ satisfies
    \begin{equation*}
        \log \dfrac{\dif \cev p |_{\gF_{t_n}}}{\dif q|_{\gF_{t_n}}}(\omega)= -\int_0^{h_n} \vdelta_{t_n}(\tau, \omega)^\top \dif \vw_{t_n+\tau}(\omega) - \dfrac{1}{2} \int_0^{h_n} \|\vdelta_{t_n}(\tau, \omega)\|^2 \dif \tau;
    \end{equation*}
    \item Conclude that $(\widetilde \vw_{t_n+\tau})_{\tau\geq 0}$ is a Wiener process under the measure $\cev p|_{\gF_{t_n}}$ and thus~\eqref{eq:yhat picard} at iteration $K$ satisfies the following SDE:
    \begin{equation*}
        \dif \widehat\vy_{t_n, \tau}^{(K)} (\omega) = \left[\dfrac{1}{2} \widehat\vy_{t_n, \tau}^{(K)}(\omega) + \nabla \log \cev{p}_{t_n + \tau}\big(\widehat\vy_{t_n, \tau}^{(K)}(\omega)\big) \right]\dif \tau + \dif \widetilde \vw_{t_n + \tau}(\omega),
    \end{equation*}
    \emph{i.e.} the true backward SDE~\eqref{eq:diffusion_ou} with the true score function for $\tau\in[t_n, t_{n+1}]$.
\end{enumerate}

One should notice that this change of measure argument will cause an additional term in the bound of the discrepancy between the first iteration $\widehat\vy_{t_n, \tau}^{(1)}$ and the initial condition $\widehat\vy_{t_n, \tau}^{(0)}$ in Lemma~\ref{lem:disc}. However, due to the exponential convergence of the Picard iteration, this term does not affect the overall error bound.

\subsection{Probability Flow ODE Implementation}
\label{sec:ode}

In this section, we will show that our parallelization strategy is also compatible with the probability ODE implementation of diffusion models, \emph{i.e.} the forward and backward process~\eqref{eq:diffusion_ou} and its approximatation~\eqref{eq:diffusion_ou_approx} with $\upsilon = 0$. We will demonstrate that PIADM-ODE (Algorithm~\ref{alg:ode}) further \emph{improves the space complexity from $\widetilde\gO(d^2)$ to $\widetilde\gO(d^{3/2})$ while maintaining the same $\widetilde\gO(\poly \log d)$ approximate time complexity}.

\subsubsection{Algorithm}
\label{sec:algorithm_ode}

Due to the space limit, we refer the readers to Section~\ref{app:algorithm_ode} and Algorithm~\ref{alg:ode} for the details of our parallelization of the probability flow ODE formulation of diffusion models. PIADM-ODE keeps the discretization scheme detailed in Section~\ref{sec:algorithm_sde} that divides the time horizon $T$ into $N$ blocks and uses exponential integrators for all updating rules. Notably, PIADM-ODE has the following distinctions compared with PIADM-SDE (Algorithm~\ref{alg:sde}): 
\begin{itemize}[leftmargin=1.5em]
    \item Instead of applying Picard iteration to the backward SDE as in~\eqref{eq: yhat exp}, we apply Picard iteration to the probability flow ODE as in~\eqref{eq:diffusion_ou_aux_ode} within each block, which does not require sampling i.i.d. Gaussians to simulate a Wiener process;
    \item The most significant difference is the adoption of an additional \emph{corrector step}~\cite{chen2024probability} after running the probability flow ODE with Picard iteration within one block. During the corrector step, one augments the state space with a Gaussian that represents the initial momentum and then
    simulates an underdamped Langevin dynamics for $\gO(1)$ time with the learned NN-based score function at the time of the block end;
    \item We then further parallelize the underdamped Langevin dynamics in the corrector step so that it can also be accomplished with $\gO(\log d)$ approximate time complexity, as a na\"ive implementation would result in $\widetilde\gO(\sqrt{d})$~\cite{anari2024fast}, which is incompatible with our desired poly-logarithmic guarantee.
\end{itemize}

\subsubsection{Assumptions}

Due to technicalities specific to this implementation, we need first to modify Assumption~\ref{ass:L2acc} and add assumption on the Lipschitzness of the true score functions $\nabla \log p_t$, which is a common practice in related literature~\cite{chen2023improved, chen2024probability}. Recent work on the probability flow ODE implementation~\cite{gao2024convergence, huang2024convergence} also adopts stronger assumptions compared to the SDE implementation. 
\begin{manualassumption}{3.1'}[{$L^\infty([0, t_N])$ $\delta$-accurate learned score}]
    For any $n\in[0:N - 1]$ and $m\in[0:M_n - 1]$, the learned NN-based score $\vs_{t_n, \tau_{n,m}}^\theta$ is $\delta_\infty$-accurate in the sense of
    \begin{equation*}
        \E_{\cev p_{t_n + \tau_{n, m}}} \left[ \left\| 
                \vs^\theta_{t_n+\tau_{n,m}}\big(\cev\vx_{t_n + \tau_{n, m}} \big)  - \nabla \log \cev p_{t_n+\tau_{n,m}}\big(\cev\vx_{t_n + \tau_{n, m}} \big)
            \right\|^2\right]\leq \delta_\infty^2.
    \end{equation*}
    \label{ass:Linftyacc}
\end{manualassumption}
\begin{assumption}[Bounded and Lipschitz true score]
    \label{ass:lipscore}
    The true score function $\nabla \log p_t$ has bounded $C^1$ norm, \emph{i.e.} $\|\|\nabla \log p_t(\cdot)\|\|_{L^\infty([0,T])} \leq M_{p}$ with Lipschitz constant $L_{p}$.
\end{assumption}
Further relaxations on Assumption~\ref{ass:lipscore} to time-dependent assumptions accommodating the blow-up to the data end (\emph{e.g.}~\cite[Assumption 1.5]{chen2023restoration}) are left for further work.

\subsubsection{Theoretical Guarantees}

Our results for PIADM-ODE are summarized in the following theorem:
\begin{theorem}[Theoretical Guarantees for PIADM-ODE]
    Under Assumptions~\ref{ass:Linftyacc},~\ref{ass:pdata},~\ref{ass:lipNN}, and~\ref{ass:lipscore}, given the following choices of the order of the parameters 
    \begin{gather*}
        T = \gO(\log(d\delta^{-2})), \quad h =\Theta(1),\quad N = \gO(\log(d\delta^{-2})),\\\epsilon=\Theta\left(d^{-1/2}\delta\log^{-1} (d^{-1/2}\delta^{-1})\right), \quad M = \gO(d^{1/2}\delta^{-1}\log (d^{1/2}\delta^{-1})),\quad K = \widetilde\gO(\log(d\delta^{-2})),
    \end{gather*}
    for the outer iteration and
    \begin{gather*}
         T^\dagger = \gO(1)\lesssim  L_p^{-1/2} \wedge L_{\vs}^{-1/2},\quad h^\dagger =\Theta(1),\quad N^\dagger = \gO( 1),\\  \epsilon^\dagger = \Theta(d^{-1/2} \delta ),\quad M^\dagger = \gO(d^{1/2} \delta^{-1}),\quad  K^\dagger = \gO(\log (d\delta^{-2})),
    \end{gather*}
    for the inner iteration during the corrector step, and let $L_{\vs}^2{h}^2e^{h} \vee
 L_{\vs}^2{h^\dagger}^2e^{h^\dagger}/\gamma \ll 1$, $\delta_\infty \lesssim \delta \log^{-1}(d \delta^{-2})$, and $\gamma \gtrsim L_p^{1/2}$, then the distribution $\widehat q_{t_N}$ that PIADM-ODE (Algorithm~\ref{alg:ode}) generates samples from satisfies the following error bound:
    \begin{equation*}
        \TV(p_{\eta}, \widehat q_{t_N})^2 \lesssim d e^{-T}  + d \epsilon^2T^2 + (T^2 + N^2)\delta_\infty^2 +  d N^2 e^{-K} \lesssim \delta^2,
    \end{equation*}
    with a total of $(K + K^\dagger N^\dagger)N  = \widetilde{\gO}\left( \log^2 (d \delta^{-2}) \right)$ approximate time complexity and $d(M \vee M^\dagger) = \widetilde\Theta\left(d^{3/2}\delta^{-1}\right)$ space complexity for parallalizable $\delta$-accurate score function computations.
    \label{thm:ode}
\end{theorem}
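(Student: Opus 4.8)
The plan is to run the blockwise architecture of Theorem~\ref{thm:sde} essentially unchanged — decompose the time horizon into $N=\gO(\log(d\delta^{-2}))$ outer blocks and, within each, bound a per-block discrepancy that is then summed — but with the interior of each block replaced by the predictor--corrector mechanism of \cite{chen2024probability}: the \emph{predictor} is the probability flow ODE~\eqref{eq:diffusion_ou_aux_ode} solved by Picard iteration, which controls a transport ($2$-Wasserstein, written $\gW_2$) error, and the \emph{corrector} is a \emph{parallelized} underdamped Langevin dynamics that converts this transport error into a total-variation error. Since $\TV$ does not obey a clean chain rule, I would track $\TV(\cev p_{t_n},\widehat q_{t_n})$ along the blocks: the deterministic ODE step is $\gW_2$-Lipschitz with an $e^{\gO(h)}$ constant and the ULMC corrector is a $\gW_2$-contraction with a matching gain, so the input error $\TV(\cev p_{t_n},\widehat q_{t_n})$ propagates stably and the block contributes only its own predictor and corrector errors, which accumulate into the four terms of the stated bound.

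For the predictor half of a single block I would (i) use that the exact probability flow ODE transports $\cev p_{t_n}$ onto $\cev p_{t_{n+1}}$; (ii) prove exponential convergence of the Picard iteration applied to the ODE — the deterministic analogue of Lemma~\ref{lem:picard_sde}, using Assumption~\ref{ass:lipNN} together with the block-length condition $L_{\vs}^2 h^2 e^{h}\ll 1$ — which after summation over blocks yields the $d N^2 e^{-K}$ term; and (iii) bound the exponential-integrator discretization error of the ODE. The crucial point here is that the probability-flow discretization is one order better than the SDE one (hence the $\epsilon^2$ rather than $\epsilon$ dependence), so under Assumptions~\ref{ass:lipNN} and~\ref{ass:lipscore} the per-block transport error in $\gW_2$ is $\gO(d\epsilon^2 h^2)+\gO(\delta_\infty^2)$; accumulating over $N\asymp T$ blocks produces $d\epsilon^2 T^2$ and the $T^2\delta_\infty^2$ contribution, while the initialization mismatch $\widehat q_0\approx\gN(0,\mI_d)$ versus $\cev p_0=p_T$ contributes $d e^{-T}$, controlled via Assumption~\ref{ass:pdata} and the moment bounds of Lemma~\ref{lem: initial bound for prob flow ode}.

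For the corrector half I would analyze the underdamped Langevin dynamics started from the predictor output augmented with an independent Gaussian momentum and run for $T^\dagger=\gO(1)$ time with the score frozen at the block endpoint $\vs^\theta_{t_{n+1}}$: under Assumption~\ref{ass:lipscore}, $\gamma\gtrsim L_p^{1/2}$, and $T^\dagger\lesssim L_p^{-1/2}\wedge L_{\vs}^{-1/2}$, the standard ULMC $\gW_2$-contraction damps the incoming transport error geometrically, and after $\gO(1)$ continuous time the law is $\TV$-close to the corrector's invariant measure, which is itself $\gO(\delta_\infty)$-close in $\TV$ to $\cev p_{t_{n+1}}$ by Assumption~\ref{ass:Linftyacc} — this is the step that turns the $\gW_2$-bound into a $\TV$-bound and generates the $N^2\delta_\infty^2$ term. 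The most delicate ingredient is then the \emph{parallelization} of this corrector: since a naive ULMC discretization would cost $\widetilde\gO(\sqrt d)$ sequential steps, I would re-apply the outer scheme \emph{inside} the corrector, splitting $[0,T^\dagger]$ into $N^\dagger=\gO(1)$ sub-blocks each with a Picard iteration of depth $K^\dagger=\gO(\log(d\delta^{-2}))$; because the corrector SDE has an autonomous drift, the generalized-Girsanov change-of-measure used in Step (b) of the Theorem~\ref{thm:sde} sketch applies essentially verbatim to the Picard-generated family of paths, and the resulting corrector discretization error $\gO(d(\epsilon^\dagger)^2)$ up to logs and corrector Picard error $\gO(d e^{-K^\dagger})$ are both $\lesssim\delta^2$ under the stated parameter choices, so they fold into the overall bound without affecting it.

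Finally I would sum the per-block $\TV$ errors, carry the $T^2$ and $N^2$ accumulation factors, and balance the parameters $T=\gO(\log(d\delta^{-2}))$, $\epsilon=\Theta(d^{-1/2}\delta\log^{-1})$, $M=\gO(d^{1/2}\delta^{-1}\log)$, $K=\widetilde\gO(\log(d\delta^{-2}))$ (with the corrector parameters $T^\dagger,N^\dagger,\epsilon^\dagger,M^\dagger,K^\dagger$ chosen as stated) so that each of $de^{-T}$, $d\epsilon^2T^2$, $(T^2+N^2)\delta_\infty^2$, $dN^2e^{-K}$ is $\lesssim\delta^2$; reading off the cost gives $(K+K^\dagger N^\dagger)N=\widetilde\gO(\poly\log(d\delta^{-2}))$ unparallelizable score evaluations and $d(M\vee M^\dagger)=\widetilde\Theta(d^{3/2}\delta^{-1})$ storage. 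I expect the main obstacle to be keeping the composition ``predictor error in $\gW_2$ $\to$ corrector converts to $\TV$'' uniform over all $N$ blocks while simultaneously taming the \emph{two nested layers} of Picard truncation (outer depth $K$ and corrector depth $K^\dagger$) and the score bias $\delta_\infty$ entering both the predictor and the corrector's invariant measure — that is, showing that ULMC's contraction is strong enough that the per-block errors do not compound super-polynomially in $N$, and that freezing the score at the block endpoint costs only the advertised $\gO(\delta_\infty)$-level bias.
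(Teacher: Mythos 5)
Your overall architecture matches the paper's: $N$ outer blocks, a Picard-parallelized probability flow ODE predictor controlled in $W_2$ with the improved $\gO(\epsilon^2)$ discretization (this is exactly Theorem~\ref{thm: wasserstein 2 bound on q tilde and p inverse} via Lemmas~\ref{lem: ode block error1} and~\ref{lem: bound D}), a parallelized ULMC corrector analyzed by the generalized Girsanov theorem, and the final parameter balancing and complexity count. However, the mechanism you give for the two load-bearing steps of the corrector is not the one that works under the stated assumptions. You claim that after $T^\dagger=\gO(1)$ the ULMC law is ``TV-close to the corrector's invariant measure'' and that ULMC is ``a $W_2$-contraction'' that ``damps the incoming transport error geometrically.'' Neither mixing to stationarity in $\gO(1)$ time nor $W_2$-contraction holds for ULMC targeting a general $\cev p_{t_{n+1}}$ satisfying only Assumptions~\ref{ass:pdata} and~\ref{ass:lipscore}; both would require strong log-concavity or a functional inequality that the paper deliberately avoids assuming. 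The actual $W_2\to\TV$ conversion is a \emph{short-time regularization} estimate (Lemma~\ref{lem:lemma9}, from~\cite[Lemma 9]{chen2024probability} and the Harnack-type inequality of~\cite[Corollary 4.7]{guillin2012degenerate}): two solutions of the \emph{same} true-score ULMC started at $W_2$-close initializations have $\TV$-close laws at time $T^\dagger$, with constant $L_p^{-1/4}(T^\dagger)^{-3/2}$; no convergence to equilibrium is involved, and indeed $T^\dagger\lesssim L_p^{-1/2}$ is chosen \emph{short} so that $W_2$ merely does not grow (a Gr\"onwall perturbation bound, Lemma~\ref{lem:E}), not so that it contracts. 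Relatedly, your plan to account for the score bias through ``the corrector's invariant measure'' being $\gO(\delta_\infty)$-close to $\cev p_{t_{n+1}}$ is not justified by an $L^2$/$L^\infty(\cev p)$ accuracy assumption on the score; the paper instead absorbs $\delta_\infty$ into a finite-horizon Girsanov KL bound between the learned-score, discretized, Picard-truncated dynamics and the true-score dynamics (Theorem~\ref{thm: bound on KL between pi and pi hat}), then passes to $\TV$ by Pinsker.

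The second, smaller gap is the block-to-block propagation. You propose to propagate the incoming $\TV(\cev p_{t_n},\widehat q_{t_n})$ ``stably'' because the predictor is $W_2$-Lipschitz and the corrector is a $W_2$-contraction; but a $\TV$ bound on the inputs cannot be pushed through a $W_2$-Lipschitz map, and a general $\TV\to W_2$ conversion is unavailable. The correct (and simpler) device is the data-processing inequality in $\TV$: the algorithmic processes and the interpolating processes (initialized at $\cev p_{t_n}$) follow \emph{identical} dynamics within the block, so $\TV(\widehat q_{t_{n},h_n},\widetilde q_{t_n,h_n})\le\TV(\widehat q_{t_n},\cev p_{t_n})$ and likewise through the corrector (equations~\eqref{eqn: eq 1 in last thm}--\eqref{eqn: eq 2 in last thm}), after which the per-block predictor and corrector errors add by the triangle inequality. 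With these two substitutions — regularization in place of mixing, and data processing in place of Wasserstein stability — your outline coincides with the paper's proof.
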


The reduction of space complexity by the probability flow ODE implementation is intuitively owing to the fact that the probability flow ODE process is a deterministic process in time rather than a stochastic process as in the SDE implementation, getting rid of the $\gO(\epsilon)$ term derived by It\^o's symmetry. This allows the discretization error to be bounded with $\gO(\epsilon^2)$ instead (\emph{cf.} Lemma~\ref{lem:stochastic_localization} and~\ref{lem: bound D}).

\subsubsection{Proof Sketch}

The details of the proof of Theorem~\ref{thm:ode} are provided in Section~\ref{app:proofode}. Along with the complexity benefits the deterministic nature of the probability flow ODE may bring, the analysis is technically more involved than that of Theorem~\ref{thm:sde} and requires an intricate interplay between statistical distances. Several major challenges and our corresponding solutions are summarized below:
\begin{itemize}[leftmargin=1.5em]
    \item The error of the parallelized algorithm within each block may now only be bounded by 2-Wasserstein distance (\emph{cf.} Theorem~\ref{thm: wasserstein 2 bound on q tilde and p inverse}) instead of any $f$-divergence that admits data processing inequality as in the SDE case by Girsanov's theorem. The additional corrector step exactly handles this issue and would intuitively translate 2-Wasserstein proximity to TV distance proximity (\emph{cf.} Lemma~\ref{lem:lemma9}), allowing the decomposition of the overall error into each block;
    \item For the corrector step, the underdamped Langevin dynamics as a second-order dynamics requires only $\gO(\sqrt{d})$ steps to converge, instead of $\gO(d)$ steps in its overdamped counterpart. We then adapt the parallelization technique mentioned in Section~\ref{sec:parallel} to conclude that it can be accomplished with $\gO(\log d)$ approximate time complexity (\emph{cf.} Theorem~\ref{thm: bound on KL between pi and pi hat}). The error caused by the approximation to the true score and numerical discretization within this step is bounded in KL divergence by invoking Girsanov's theorem(Theorem~\ref{thm:girsanov}) as in the proof of Theorem~\ref{thm:sde};
    \item Different from the SDE case, where the chain rule of KL divergence can easily decouple the initial distribution and the subsequent dynamics, we need several interpolating processes between the implementation and the true backward process in this case. The final guarantee is in TV distance as it connects with the KL divergence via Pinsker's inequality and admits data processing inequality.
    We refer the readers to Figure~\ref{fig:piadmode} for an overview of the proof pipeline, as well as the notations and intuitions of the auxiliary and interpolating processes appearing in the proof.
\end{itemize}

\section{Discussion and Conclusion}
\label{sec:discussion}

In this work, we have proposed novel parallelization strategies for the inference of diffusion models in both the SDE and probability flow ODE implementations. Our algorithms, namely PIADM-SDE and PIADM-ODE, are meticulously designed and rigorously proved to achieve $\widetilde\gO(\poly \log d)$ approximate time complexity and $\widetilde\gO(d^2)$ and $\widetilde\gO(d^{3/2})$ space complexity, respectively, marking the first inference algorithm of diffusion and probability flow based models with sub-linear approximate time complexity. Our algorithm intuitively divides the time horizon into several $\gO(1)$ blocks and applies Picard iteration within each block in parallel, transferring the time complexity into space complexity. Our analysis is built on a sophisticated mathematical framework of stochastic processes and provides deeper insights into the mathematical theory of diffusion models.

Our findings echo and corroborate the recent empirical work~\cite{shih2024parallel, chung2023parallel, tang2024accelerating, cao2024deep, selvam2024self} that parallel sampling techniques significantly accelerate the inference process of diffusion models. Theoretical exploration of the adaptive block window scheme therein presents an interesting future research potential. Possible future work also includes the investigation of how to apply our parallelization framework to other variants of diffusion models, such as the discrete \cite{austin2021structured, hoogeboom2021autoregressive, hoogeboom2021argmax, meng2022concrete, sun2022score, richemond2022categorical, lou2023discrete, floto2023diffusion, santos2023blackout, benton2024denoising, chen2024convergence, ren2024discrete} and multi-marginal \cite{albergo2023multimarginal} formulations. Although we anticipate implementing diffusion models in parallel may introduce engineering challenges, \emph{e.g.} scalability, hardware compatibility, memory bandwidth, \emph{etc.}, we believe that our theoretical contributions lay a solid foundation that not only supports
but also motivates the empirical development of parallel inference algorithms for diffusion models since advancements continue in GPU power and memory efficiency. 

\begin{ack}
    LY acknowledges the support of the National Science Foundation under Grant No. DMS-2011699 and DMS-2208163. GMR is supported by a Google Research Scholar Award.
\end{ack}

\bibliographystyle{unsrtnat}
\bibliography{neurips2024}

\newpage
\appendix
\section{Mathematical Background}

In this section, we will summarize used notations and rigorous mathematical framework of It\^o processes as necessary in the proofs. We will also present several technical lemmas for later reference.

\subsection{Notations}
\label{app:notations}

We adopt the following notations throughout the paper:
\begin{table}[h!]
    \centering
    \begin{tabular}{c|p{0.7\textwidth}}
        \toprule
        \bf Notation & \bf Description \\
        \midrule
        $[a:b]$ & The set $\{a, a+1, \ldots, b\}$ \\
        $\mI_d$ & Identity matrix in $\R^{d \times d}$ \\
        $\cev{\ast}_t$ & $\ast_{T-t}$ \\
        $\widehat\ast$ & Used to denote quantities produced by the algorithm\\
        $\widetilde\ast$ & Used to denote quantities related to the  auxiliary processes\\
        $\ast^\dagger$ & Used to denote quantities related to the corrector step\\
        $\|\cdot\|$ & The Euclidean norm of a vector \\
        $\lesssim$ or $\gtrsim$ & The inequality holds up to a constant factor \\
        $\ll$ & Absolute continuity (for measures)/ much less than (for quantities)\\
        $(\vx_t)_{t\geq 0}$ & The forward process of the diffusion model~\eqref{eq:diffusion_ou} \\
        $(\cev{\vx}_t)_{t \in [0, T]}$ & The backward process of the diffusion model~\eqref{eq:diffusion_ou} \\ 
        $(\vy_t)_{t \in [0, T]}$ & The approximate backward process of the diffusion model~\eqref{eq:diffusion_ou_approx} \\
        $\widehat\vy_{t_n, \tau_{n,m}}^{(k)}$ & The approximate value of the approximate process $\vy_{t}$ at time $t_n + \tau_{n,m}$ after $k$ iterations in the $(n+1)$-th block\\
        $\widehat\vy_{t_n}$ & The value of the approximate process $\vy_{t}$ at time $t_n$\\
        $\widehat q_{t_n}$ & The distribution of $\widehat\vy_{t_n}$\\
        $\vz_{t_1:t_2}$ & The path $(\vz_t)_{t\in[t_1, t_2]}$ of the process $\vz_t$ \\
        $D_{f}(\cdot \ \| \ \cdot)$ & The $f$-divergence between two distributions \\
        $\KL(\cdot \ \| \ \cdot)$ & The KL divergence between two distributions \\
        $TV(\cdot \ , \ \cdot)$ & The total variation distance between two distributions \\
        $W_2(\cdot,\cdot)$ & The 2-Wasserstein distance between two distributions\\
        \bottomrule
    \end{tabular}
    \caption{Summary of notations}
\end{table}

\subsection{Preliminaries}
\label{app:prelim}

\begin{theorem}[Properties of $f$-divergence]
    Suppose $p$ and $q$ are two probability measures on a common measurable space $(\Omega, \gF)$ with $p\ll q$. The $f$-divergence between $p$ and $q$ is defined as
    \begin{equation}
        D_f(p\|q) = \E_{X\sim q}\left[f\left(\frac{\dif p}{\dif q}\right)\right],
    \end{equation}
    where $\frac{\dif p}{\dif q}$ is the Radon-Nikodym derivative of $p$ with respect to $q$, and $f:\mathbb{R}^{+} \rightarrow \mathbb{R}$ is a convex function. In particular, $D_{f}(\cdot \ \| \ \cdot)$ coincides with the Kullback–Leibler (KL) divergence when $f(x) = x\log x$ and $D_{f}(\cdot \ \| \ \cdot) = \TV$ coincides with the total variation (TV) distance when $f(x) = \frac{1}{2}|x-1|$.
    
    For the $f$-divergence defined above, we have the following properties:
    \begin{enumerate}[leftmargin = 2em]
        \item (Data-processing inequality). Suppose $\gH$ is a sub-$\sigma$-algebra of $\gF$, the following inequality holds
        $$
        D_{f}\left(p|_{\gH}\ \| \ q|_{\gH}\right)\leq D_{f}(p \ \| \ q) ;
        $$
        for any $f$-divergence $D_f(\cdot \| \cdot)$. 
        \item (Chain rule).
        Suppose $X$ is a random variable generating a sub-$\sigma$-algebra $\gF_X$ of $\gF$, and $p(\cdot|X)\ll q(\cdot|X)$ holds for any value of $X$, then
        $$
            \KL(p\|q) = \KL(p|_{\gF_X}\|q|_{\gF_X}) + \E_{\gF_X}[\KL(p(\cdot|X)\|q(\cdot|X))].
        $$
    \end{enumerate}
    \label{thm:KL}
\end{theorem}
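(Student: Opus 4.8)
The plan is to establish both properties by reducing them to classical facts about Radon--Nikodym derivatives and conditional expectations; neither statement is deep, and the value of writing the proof out is mainly to pin down the measure-theoretic bookkeeping that the later Girsanov-based arguments rely on. Throughout I would write $Z := \frac{\dif p}{\dif q}$ for the Radon--Nikodym derivative on $(\Omega, \gF)$, which exists since $p \ll q$, so that by definition $D_f(p\|q) = \E_q[f(Z)]$, and I would only assume all the integrals that appear are well-defined with values in $(-\infty, +\infty]$ (so that additive decompositions are legitimate even when a divergence is infinite).

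For the data-processing inequality, the key observation is that the Radon--Nikodym derivative of the restricted measure $p|_{\gH}$ with respect to $q|_{\gH}$ is the conditional expectation $\E_q[Z \mid \gH]$: for any $A \in \gH$ one has $p(A) = \int_A Z \, \dif q = \int_A \E_q[Z\mid\gH]\,\dif q$, and $\E_q[Z\mid\gH]$ is $\gH$-measurable, which characterizes it as $\frac{\dif (p|_\gH)}{\dif (q|_\gH)}$. Since $f$ is convex, the conditional Jensen inequality gives $f(\E_q[Z\mid\gH]) \le \E_q[f(Z)\mid\gH]$ $q$-almost surely; taking $\E_q$ of both sides, and using that expectations under $q$ and under $q|_\gH$ agree on $\gH$-measurable integrands, yields $D_f(p|_\gH \| q|_\gH) = \E_q\big[f(\E_q[Z\mid\gH])\big] \le \E_q[f(Z)] = D_f(p\|q)$.

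For the chain rule I would use disintegration. Let $p_X, q_X$ denote the laws of $X$ under $p, q$ (equivalently $p|_{\gF_X}, q|_{\gF_X}$ read through $X$), and let $x \mapsto p(\cdot\mid X=x)$ and $x \mapsto q(\cdot\mid X=x)$ be regular conditional distributions, which exist in our setting since the path spaces in question are Polish. From $p\ll q$ together with the hypothesis $p(\cdot\mid X)\ll q(\cdot\mid X)$ one gets $p_X \ll q_X$ and the $q$-a.s.\ factorization $Z(\omega) = \frac{\dif p_X}{\dif q_X}(X(\omega)) \cdot \frac{\dif p(\cdot\mid X(\omega))}{\dif q(\cdot\mid X(\omega))}(\omega)$. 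Taking logarithms and integrating against $p$, the expectation splits: $\E_p\big[\log\tfrac{\dif p_X}{\dif q_X}(X)\big] = \KL(p_X\|q_X) = \KL(p|_{\gF_X}\|q|_{\gF_X})$, while $\E_p\big[\log\tfrac{\dif p(\cdot\mid X)}{\dif q(\cdot\mid X)}\big] = \E_{X\sim p}\big[\KL(p(\cdot\mid X)\|q(\cdot\mid X))\big]$ by the tower property, which is precisely the term $\E_{\gF_X}[\KL(p(\cdot|X)\|q(\cdot|X))]$ in the statement.

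The main technical point to handle carefully is the chain rule: it requires the existence of regular conditional probabilities, hence a standard Borel / Polish assumption on $\Omega$ (which holds for the Wiener path spaces used in this paper), together with a clean justification of the factorization of $Z$ and of interchanging logarithm, expectation, and conditioning — in particular checking that each term is well-defined (possibly $+\infty$) so the additive decomposition is valid, and that $p_X\ll q_X$ and $p(\cdot|X=x)\ll q(\cdot|X=x)$ for $p_X$-a.e.\ $x$ so that the conditional divergences make sense $p$-a.s. The data-processing inequality, by contrast, needs only the identification of the restricted Radon--Nikodym derivative with a conditional expectation and is otherwise immediate from convexity.
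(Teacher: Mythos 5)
Your proposed proof is correct: the identification $\frac{\dif (p|_{\gH})}{\dif (q|_{\gH})} = \E_q[\tfrac{\dif p}{\dif q}\mid \gH]$ plus conditional Jensen for the data-processing inequality, and the disintegration/factorization of the Radon--Nikodym derivative for the chain rule, are the standard arguments, and you correctly flag the only delicate points (existence of regular conditional probabilities on the Polish path spaces, and well-definedness of the additive decomposition when a term is infinite). Note that the paper states Theorem~\ref{thm:KL} as a background preliminary without proof, so there is no in-paper argument to compare against; your write-up would simply supply the standard justification.
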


In this paper, we consider a probability space $(\Omega, \gF, p)$ on which $(\vw_t(\omega))_{t \geq 0}$ is a Wiener process in $\R^d$. The Wiener process $(\vw_t(\omega))_{t \geq 0}$ generates the filtration $\{\gF_{t}\}_{t\geq 0}$ on the measurable space $(\Omega, \gF)$. For an It\^o process $\vz_t(\omega)$ with the following governing SDE:
\begin{equation*}
    \dif \vz_t(\omega) = \valpha(t, \omega) \dif t + \mSigma(t,\omega) \dif \vw_t(\omega),
\end{equation*}
for any time $t$, we denote the marginal distribution of $\vz_t$ by $p_t$, \emph{i.e.}
\begin{equation*}
    p_{t} := p\left( \vz_{t}^{-1}(\cdot) \right),\ \text{where}\ \vz_{t}: \Omega \to \R^m,\ \omega \mapsto \vz_t(\omega),
\end{equation*}
as well as the \emph{path measure} of the process $\vz_t$ in the sense of
\begin{equation*}
    p_{t_1:t_2} := p\left( \vz_{t_1:t_2}^{-1}(\cdot) \right),\ \text{where}\ \vz_{t_1:t_2}: \Omega \to \mathcal{C}([t_1, t_2], \R^m),\ \omega \mapsto (\vz_t(\omega))_{t\in[t_1,t_2]}.
\end{equation*}

For the sake of simplicity, we define the following class of functions:
\begin{definition}
    For any $0\leq t_1 < t_2$, we define $\gV(t_1, t_2)$ as the class of functions $f(t, \omega): [0, +\infty) \times \Omega \to \R$ such that 
    \begin{enumerate}[leftmargin = 2em]
        \item $f(t, \omega)$ is $\gB\times \gF$-measurable, where $\gB$ is the Borel $\sigma$-algebra on $\R^d$;
        \item $f(t, \omega)$ is $\gF_t$-adapted for all $t\geq 0$;
        \item The following \emph{Novikov condition} holds
        $$
            \E\left[\exp\int_{t_1}^{t_2} f^2(t,\omega) \dif t \right] < +\infty,
        $$
    \end{enumerate}
    and $\gV = \cap_{t>0} \gV(0, t)$. For vectors and matrices, we say it belongs to $\gV^n(t,\omega)$ or $\gV^{m\times n}(t,\omega)$ if each component of the vector or each entry of the matrix belongs to $\gV(t,\omega)$.
\end{definition}
\begin{remark}
    Novikov's condition appeared in the third requirement is often relaxed to the \emph{squared integrability} condition in the general definition of It\^o processes, which requires 
    $$
        \E\left[\int_{t_1}^{t_2} f^2(t,\omega) \dif t \right] < +\infty.
    $$
    Here, we adopt the more restricted condition in the spirit of its necessity for Girsanov's theorem to hold, as we shall see later.
\end{remark}

Similar to previous work \cite{chen2024probability}, here we can avoid checking Novikov's condition throughout our proofs below by using the approximation argument presented in \cite{chen2022sampling}. A review of Girsanov can be found in textbooks like in \cite{oksendal2013stochastic, le2016brownian}. We will present the following generalized version of Girsanov's theorem:
\begin{theorem}[{Girsanov's Theorem~\cite[Theorem 8.6.6]{oksendal2013stochastic}}]
    \label{thm:girsanov}
    Let $\valpha(t, \omega)\in \gV^m$, $\mSigma(t,\omega) \in \gV^{m\times n}$, and $(\vw_t(\omega))_{t\geq 0}$ be a Wiener process on the probability space $(\Omega, \gF, q)$. For $t\in[0, T]$, suppose $\vz_t(\omega)$ is an It\^o process with the following SDE:
    \begin{equation}
        \dif \vz_t(\omega) = \valpha(t, \omega) \dif t + \mSigma(t,\omega) \dif \vw_t(\omega),
        \label{eq:alpha}
    \end{equation}
    and there exist processes $\vdelta(t, \omega) \in \gV^n$ and $\vbeta(t, \omega) \in \gV^m$ such that 
    \begin{enumerate}[leftmargin = 2em]
        \item $\mSigma(t, \omega) \vdelta(t, \omega) = \valpha(t, \omega) - \vbeta(t, \omega)$;
        \item The process $M_t(\omega)$ as defined below is a martingale with respect to the filtration $\{\gF_t\}_{t\geq 0}$ and probability measure $q$: 
        \begin{equation*}
            M_t(\omega) = \exp\left(-\int_0^t \vdelta(s, \omega)^\top \dif \vw_s(\omega) - \dfrac{1}{2} \int_0^t \|\vdelta(s, \omega)\|^2 \dif s\right),
        \end{equation*}
    \end{enumerate}
    then there exists another probability measure $p$ on $(\Omega, \gF)$ such that
    \begin{enumerate}[leftmargin = 2em]
        \item $p \ll q$ with the Radon-Nikodym derivative $\dfrac{\dif p}{\dif q}(\omega) = M_T(\omega)$,
        \item The process $\widetilde\vw_t(\omega)$ as defined below is a Wiener process on $(\Omega, \gF, p)$:
        \begin{equation*}
            \widetilde\vw_t(\omega) = \vw_t(\omega) + \int_0^t \vdelta(s, \omega) \dif s,
        \end{equation*}
        \item Any continuous path in $\gC([t_1, t_2], \R^m)$ generated by the process $\vz_t$ satisfies the following SDE under the probability measure $p$:
        \begin{equation}
            \dif \widetilde\vz_t(\omega) = \vbeta(t, \omega) \dif t + \mSigma(t,\omega) \dif \widetilde\vw_t(\omega).
            \label{eq:beta}
        \end{equation}
    \end{enumerate}
\end{theorem}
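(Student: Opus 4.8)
The statement is essentially a restatement of \cite[Theorem 8.6.6]{oksendal2013stochastic}, so my plan is to reduce it to the classical Girsanov theorem and then track how the SDE for $\vz_t$ transforms. First I would define the candidate measure $p$ on $(\Omega, \gF)$ by setting $\frac{\dif p}{\dif q}(\omega) = M_T(\omega)$. Since $M_t(\omega)$ is assumed to be a martingale with $M_0 = 1$, we have $\E_q[M_T] = 1$, so $p$ is a bona fide probability measure and $p \ll q$ by construction; this gives the first conclusion. The martingale hypothesis is exactly what replaces the need to verify Novikov's condition directly (Novikov's condition is one sufficient criterion for $M_t$ to be a martingale rather than merely a supermartingale), so I would simply invoke it as given.

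Next, for the second conclusion, I would apply the classical Girsanov theorem (the Cameron–Martin–Girsanov change of measure) to the Wiener process $(\vw_t(\omega))_{t \geq 0}$ under $q$ with the drift $\vdelta(s,\omega) \in \gV^n$. The classical statement says precisely that under the new measure $p$ with density $M_T$, the process $\widetilde\vw_t(\omega) = \vw_t(\omega) + \int_0^t \vdelta(s,\omega)\,\dif s$ is a Wiener process on $(\Omega, \gF, p)$ — with the same dimension $n$ and adapted to the same filtration $\{\gF_t\}_{t \geq 0}$. This is where the hypotheses $\vdelta \in \gV^n$ and the martingale property of $M_t$ are consumed.

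For the third conclusion I would substitute. Starting from the given SDE $\dif\vz_t = \valpha(t,\omega)\,\dif t + \mSigma(t,\omega)\,\dif\vw_t$, I rewrite $\dif\vw_t = \dif\widetilde\vw_t - \vdelta(t,\omega)\,\dif t$, obtaining
\begin{equation*}
    \dif\vz_t(\omega) = \bigl[\valpha(t,\omega) - \mSigma(t,\omega)\vdelta(t,\omega)\bigr]\dif t + \mSigma(t,\omega)\,\dif\widetilde\vw_t(\omega).
\end{equation*}
By hypothesis~1, $\mSigma(t,\omega)\vdelta(t,\omega) = \valpha(t,\omega) - \vbeta(t,\omega)$, so the drift collapses to $\vbeta(t,\omega)$, yielding exactly \eqref{eq:beta}. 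Since the paths of $\vz_t$ as a function of $\omega$ are unchanged by the change of measure (only the probability weighting changes), any continuous path generated by $\vz_t$ satisfies this SDE under $p$; I would note that the identification of the law of the path measure is what makes this meaningful.

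The main obstacle — and really the only non-bookkeeping point — is justifying that $M_t$ is a genuine martingale and hence that $p$ is a probability measure of total mass one, which here is simply assumed as hypothesis~2. In the companion usage (as the remark after the definition of $\gV$ notes), one avoids checking Novikov's condition by the approximation/localization argument of \cite{chen2022sampling}: one works on a sequence of stopping times making $M_{t \wedge \tau_k}$ a bounded martingale, applies the above reduction on each piece, and passes to the limit. For the statement as phrased, though, nothing further is required beyond citing \cite[Theorem 8.6.6]{oksendal2013stochastic} and performing the substitution above; I would keep the proof to a few lines and defer the localization subtlety to where it is actually needed.
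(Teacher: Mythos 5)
Your proposal is correct and matches the paper's treatment: the paper does not prove this theorem but simply cites \cite[Theorem 8.6.6]{oksendal2013stochastic}, and your reduction — the martingale hypothesis making $p$ a probability measure, the classical Girsanov theorem giving that $\widetilde\vw_t$ is a Wiener process under $p$, and the substitution $\dif\vw_t = \dif\widetilde\vw_t - \vdelta\,\dif t$ combined with hypothesis~1 to collapse the drift to $\vbeta$ — is exactly the standard argument behind that reference. Your remark about deferring Novikov's condition to the approximation argument of \cite{chen2022sampling} also agrees with how the paper handles this point.
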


\begin{corollary}
    \label{cor:girsanov}
    Suppose the conditions in Theorem~\ref{thm:girsanov} hold, then for any $t_1, t_2 \in [0, T]$ with $t_1 < t_2$, the path measure of the SDE~\eqref{eq:beta} under the probability measure $p$ in the sense of $p_{t_1:t_2} = p\left( \vz_{t_1:t_2}^{-1}(\cdot) \right)$ is absolutely continuous with respect to the path measure of the SDE~\eqref{eq:alpha} in the sense of $q_{t_1:t_2} = q\left( \vz_{t_1:t_2}^{-1}(\cdot) \right)$. Moreover, the KL divergence between the two path measures is given by
    \begin{equation}
        \KL(p_{t_1:t_2} \| q_{t_1:t_2} ) = \KL(p_{t_1} \| q_{t_1}) + \E_{\omega \sim p|_{\gF_{t_1}}}\left[  \dfrac{1}{2} \int_{t_1}^{t_2} \|\vdelta(t, \omega)\|^2 \dif t \right]
    \end{equation}
\end{corollary}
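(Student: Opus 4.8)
The plan is to establish the two assertions --- absolute continuity and the KL identity --- by combining Girsanov's theorem (Theorem~\ref{thm:girsanov}) with the chain rule of the KL divergence (Theorem~\ref{thm:KL}). Absolute continuity is immediate: the map $\vz_{t_1:t_2}:\Omega\to\gC([t_1,t_2],\R^m)$ is $\gF$-measurable, so $p_{t_1:t_2}$ and $q_{t_1:t_2}$ are the pushforwards of $p$ and $q$ under it; since Theorem~\ref{thm:girsanov} gives $p\ll q$, every $q_{t_1:t_2}$-null Borel set pulls back to a $q$-null, hence $p$-null, set, and is therefore $p_{t_1:t_2}$-null.

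For the identity I would first peel off the initial condition. Taking $X=\vz_{t_1}$ --- whose generated $\sigma$-algebra sits inside that of the full path and whose law under $p$ (resp.\ $q$) is $p_{t_1}$ (resp.\ $q_{t_1}$) --- the chain rule (Theorem~\ref{thm:KL}, part~2) gives
\begin{equation*}
    \KL(p_{t_1:t_2}\|q_{t_1:t_2}) = \KL(p_{t_1}\|q_{t_1}) + \E_{z\sim p_{t_1}}\!\left[\KL\big(p_{t_1:t_2}(\cdot\mid\vz_{t_1}=z)\,\big\|\,q_{t_1:t_2}(\cdot\mid\vz_{t_1}=z)\big)\right].
\end{equation*}
Conditioned on $\vz_{t_1}$, the laws of $(\vz_t)_{t\in[t_1,t_2]}$ under $q$ and $p$ solve the SDEs \eqref{eq:alpha} and \eqref{eq:beta} from the same starting point with the common diffusion coefficient $\mSigma$, related by the drift shift $\mSigma\vdelta=\valpha-\vbeta$. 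Applying the time-shifted form of Theorem~\ref{thm:girsanov} on $[t_1,t_2]$ identifies the Radon--Nikodym derivative of the conditional path law under $p$ with respect to that under $q$ as $M_{t_2}/M_{t_1}=\exp\!\big(-\int_{t_1}^{t_2}\vdelta^\top\dif\vw_s-\tfrac12\int_{t_1}^{t_2}\|\vdelta\|^2\dif s\big)$. Substituting $\dif\vw_s=\dif\widetilde\vw_s-\vdelta(s,\omega)\dif s$ turns $\log(M_{t_2}/M_{t_1})$ into $-\int_{t_1}^{t_2}\vdelta^\top\dif\widetilde\vw_s+\tfrac12\int_{t_1}^{t_2}\|\vdelta\|^2\dif s$; since $\widetilde\vw$ is a $p$-Wiener process the stochastic-integral term has zero $p$-mean (using the $\gV$-integrability, or the truncation/approximation device of \cite{chen2022sampling} to legitimize the exchange), so the conditional path-KL equals $\E_{p}\big[\tfrac12\int_{t_1}^{t_2}\|\vdelta(t,\omega)\|^2\dif t\mid\vz_{t_1}\big]$. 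Plugging this back and removing the conditioning by the tower property yields the stated formula, with $\E_{\omega\sim p|_{\gF_{t_1}}}[\cdot]$ read as the full $p$-expectation of the path-dependent quadratic functional.

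The step I expect to be the main obstacle is identifying the conditional path-space density with the Girsanov exponential $M_{t_2}/M_{t_1}$: passing from $\gF_{t_2}$, on which Girsanov delivers the density directly, to the smaller $\sigma$-algebra generated by $\vz_{t_1:t_2}$ a priori loses information, and the equality survives precisely because the two measures share the diffusion coefficient $\mSigma$ and the coefficients $\valpha,\vbeta,\vdelta$ are adapted to the natural filtration of $\vz$ in all our applications, so that the density is already path-measurable. The remaining points --- measurability of the pushforward and integrability of the $\widetilde\vw$-integral --- are routine under the $\gV$-class hypotheses and the approximation argument already invoked in the paper.
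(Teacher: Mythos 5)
Your proposal is correct and follows essentially the same route as the paper's proof: the chain rule for KL divergence to peel off the law at $t_1$, Girsanov's theorem to identify the conditional Radon--Nikodym derivative with the exponential martingale, and the substitution $\dif\vw_t = \dif\widetilde\vw_t - \vdelta\,\dif t$ so that the stochastic integral has zero mean under $p$, leaving only the quadratic term. The one point you flag as a potential obstacle --- that the Girsanov density lives a priori on $\gF_{t_2}$ rather than on the path $\sigma$-algebra --- is indeed glossed over in the paper's own proof, and your justification (adaptedness of the coefficients to the natural filtration of $\vz$) is the right way to close it.
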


\begin{proof}
    First, by Theorem~\ref{thm:KL}, we have 
    \begin{equation*}
        \KL( p_{t_1:t_2} \| q_{t_1:t_2} ) = \KL(p|_{\gF_{t_1}} \| q|_{\gF_{t_1}} ) + \E_{\vz\sim p|_{\gF_{t_1}}}\left[ \KL\big(p(\widetilde\vz_{t_1:t_2}^{-1}(\cdot))|\widetilde\vz_{t_1} = \widetilde\vz) \| q(\widetilde\vz_{t_1:t_2}^{-1}(\cdot))| \widetilde\vz_{t_1} = \widetilde\vz)\big) \right].
    \end{equation*}

    From Girsanov's theorem (Theorem~\ref{thm:girsanov}), we have that the measure $p|_{\gF_{t_1}}$ is absolutely continuous with respect to $q|_{\gF_{t_1}}$, which allows us to compute the second term above as follows:
    \begin{equation*}
        \begin{aligned}
            &\KL\big(p(\widetilde\vz_{t_1:t_2}^{-1}(\cdot)|\widetilde\vz_{t_1} = \widetilde \vz) \| q(\widetilde\vz_{t_1:t_2}^{-1}(\cdot)|\widetilde\vz_{t_1} = \widetilde \vz)\big) 
            \\
            = \ &\E_{\widetilde\vz_{t_1:t_2}}\left[\log \dfrac{\dif p(\widetilde\vz_{t_1:t_2}^{-1}(\cdot)|\widetilde\vz_{t_1} = \vz)}{\dif q(\widetilde\vz_{t_1:t_2}^{-1}(\cdot)|\widetilde\vz_{t_1} = \vz)} \right] 
            = \E_{\omega\sim p|_{\gF_{t_1}}}\left[\log \dfrac{\dif p|_{\gF_{t_1}}}{\dif q|_{\gF_{t_1}}} \right]\\
            = \ &\E_{\omega\sim p|_{\gF_{t_1}}}\left[-\int_{t_1}^{t_2} \vdelta(t, \omega)^\top \dif \vw_t(\omega) - \dfrac{1}{2} \int_{t_1}^{t_2} \|\vdelta(t, \omega)\|^2 \dif t  \right]\\
            = \ &\E_{\omega\sim p|_{\gF_{t_1}}}\left[-\int_{t_1}^{t_2} \vdelta(t, \omega)^\top \left(\dif \widetilde\vw_t(\omega) - \vdelta(t, \omega) \dif t \right)- \dfrac{1}{2} \int_{t_1}^{t_2} \|\vdelta(t, \omega)\|^2 \dif t  \right]\\
            = \ &\E_{\omega\sim p|_{\gF_{t_1}}}\left[  \dfrac{1}{2} \int_{t_1}^{t_2} \|\vdelta(t, \omega)\|^2 \dif t \right],
        \end{aligned}
    \end{equation*}
    and therefore
    \begin{equation*}
        \KL( p_{t_1:t_2} \| q_{t_1:t_2} ) = \KL(p_{t_1} \| q_{t_1}) + \E_{\omega \sim p|_{\gF_{t_1}}}\left[  \dfrac{1}{2} \int_{t_1}^{t_2} \|\vdelta(t, \omega)\|^2 \dif t \right],
    \end{equation*}
    which completes the proof.
\end{proof}

\subsection{Helper Lemmas}
\begin{lemma}[{\cite[Lemma 2]{benton2023linear}}]
    For the backward process~\eqref{eq:diffusion_ou}, we have for $0\leq s < t < T$,
    \begin{equation*}
        \dfrac{\dif}{\dif t}\left(\E\left[\|\nabla \log \cev p_t(\cev \vx_t) - \nabla \log \cev p_{s}(\cev \vx_s) \|^2\right]\right) \leq \dfrac{1}{2} \E\left[\| \nabla \log \cev p_s (\cev \vx_s)\|^2\right] +  \E\left[\| \nabla^2 \log \cev p_t (\cev \vx_t)\|_F^2\right].
    \end{equation*}
    \label{lem:backward}
\end{lemma}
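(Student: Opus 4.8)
The plan is to work with the time-reversal of the forward Ornstein--Uhlenbeck process, realized by the backward SDE~\eqref{eq:diffusion_ou} with $\upsilon=1$ (for which $(\cev{\vx}_t)$ has the law of $(\vx_{T-t})$, so that the joint law of $(\cev{\vx}_s,\cev{\vx}_t)$ is the one relevant to the statement), and to extract a strikingly simple SDE for the score evaluated along this process. Abbreviate $J_t:=\nabla\log\cev p_t(\cev{\vx}_t)$ and $H_t:=\nabla^2\log\cev p_t(\cev{\vx}_t)$. The heart of the proof is the identity
\[
\dif J_t = -\tfrac12\,J_t\,\dif t + H_t\,\dif\vw_t ,
\]
i.e.\ $e^{t/2}J_t$ is a (local) martingale for the backward filtration $(\gF_t)$; granting this, the rest is a short second-moment estimate.

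To obtain the SDE for $J_t$, I would first differentiate the forward Fokker--Planck equation $\partial_t p_t=\tfrac d2 p_t+\tfrac12\vx\cdot\nabla p_t+\tfrac12\Delta p_t$; a short computation yields
\[
\partial_t\nabla\log p_t=\tfrac12\nabla\log p_t+\tfrac12(\nabla^2\log p_t)\vx+(\nabla^2\log p_t)(\nabla\log p_t)+\tfrac12\nabla\Delta\log p_t ,
\]
hence $\partial_t\nabla\log\cev p_t=-(\partial_s\nabla\log p_s)\big|_{s=T-t}$. Then I would apply It\^o's formula to $(t,\vx)\mapsto\nabla\log\cev p_t(\vx)$ along $\dif\cev{\vx}_t=[\tfrac12\cev{\vx}_t+\nabla\log\cev p_t(\cev{\vx}_t)]\dif t+\dif\vw_t$: the drift is the sum of the explicit time-derivative $\partial_t\nabla\log\cev p_t(\cev{\vx}_t)$, the transport term $H_t(\tfrac12\cev{\vx}_t+J_t)$, and the It\^o correction $\tfrac12\nabla\Delta\log\cev p_t(\cev{\vx}_t)$. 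Adding the three, the $\tfrac12 H_t\cev{\vx}_t$ terms cancel, the $H_t J_t$ terms cancel, and the third-derivative terms $\nabla\Delta\log\cev p_t(\cev{\vx}_t)$ cancel as well, so the drift collapses to $-\tfrac12 J_t$ while the martingale part is $H_t\,\dif\vw_t$. (The stationary Gaussian case $p_0=\gN(0,\mI_d)$, where $J_t=-\cev{\vx}_t$ and $H_t=-\mI_d$, is a quick consistency check.)

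Given the SDE, fix $s$ and regard $\|J_t-J_s\|^2$ as a function of $t\ge s$. By It\^o's formula, using that $J_s$ is $\gF_s\subseteq\gF_t$-measurable (so the stochastic integral with adapted integrand $(J_t-J_s)^\top H_t$ is a martingale and vanishes in expectation, modulo the usual localization invoked in the paper) and the It\^o correction $\tr(H_tH_t^\top)=\|H_t\|_F^2$, one gets
\[
\frac{\dif}{\dif t}\,\E\big[\|J_t-J_s\|^2\big]=-\E\big[\|J_t\|^2\big]+\E\big[J_s^\top J_t\big]+\E\big[\|H_t\|_F^2\big] .
\]
Bounding the cross term by Young's inequality, $J_s^\top J_t\le\tfrac12\|J_s\|^2+\tfrac12\|J_t\|^2$, the right-hand side is at most $-\tfrac12\E\|J_t\|^2+\tfrac12\E\|J_s\|^2+\E\|H_t\|_F^2$, and discarding the non-positive term $-\tfrac12\E\|J_t\|^2$ yields exactly the asserted inequality.

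The step I expect to be the main obstacle is the derivation of the clean SDE for $J_t$: it requires running the It\^o computation carefully and, crucially, noticing that the third-order term $\nabla\Delta\log\cev p_t$ enters $\partial_t\nabla\log\cev p_t$ and the It\^o correction with opposite signs and hence cancels rather than surviving as an uncontrollable quantity. This cancellation, like that of the $H_t\cev{\vx}_t$ terms, is special to the $\upsilon=1$ (time-reversal) coupling, which is why the lemma is naturally phrased for the backward process of~\eqref{eq:diffusion_ou}; after that step everything else is routine.
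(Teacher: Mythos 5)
Your proposal is correct, and the computations check out: the Fokker--Planck differentiation, the It\^o drift cancellation giving $\dif J_t=-\tfrac12 J_t\,\dif t+H_t\,\dif\vw_t$, and the final second-moment estimate with Young's inequality all hold (I verified the cancellation of the $\tfrac12 H_t\cev\vx_t$, $H_tJ_t$, and $\tfrac12\nabla\Delta\log\cev p_t$ terms explicitly, and your Gaussian sanity check is apt). Note that the paper itself gives no proof of this lemma --- it is imported verbatim from \cite[Lemma 2]{benton2023linear} --- so the relevant comparison is with that reference, which obtains the same martingale structure for $e^{t/2}\nabla\log\cev p_t(\cev\vx_t)$ but derives it through the stochastic-localization / posterior-mean representation $\sigma_t^2\nabla\log p_t(\vx_t)=\E[e^{-t/2}\vx_0\mid\vx_t]-\vx_t$ and the Doob-martingale property of $\E[\vx_0\mid\gF_t]$, rather than by differentiating the Fokker--Planck equation. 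Your PDE-plus-It\^o route is a legitimate self-contained alternative; what it costs is that you must assume enough smoothness and integrability of $p_t$ to justify the third-derivative manipulations and the vanishing expectation of the stochastic integral (the localization you gesture at), whereas the conditional-expectation argument gets the martingale property with essentially no regularity beyond finite second moments. Either way the inequality follows, so the proposal stands.
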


\begin{lemma}[{\cite[Lemma 3]{benton2023linear}}]
    For the forward process~\eqref{eq:diffusion_ou}, we have for $0\leq t < T$,
    \begin{equation*}
        \E\left[\nabla \log p_t(\vx_t)\right]\leq d \sigma_t^{-2}, \ \text{and}\ \E\left[\|\nabla^2 \log p_t(\vx_t)\|_F^2\right]\leq d\sigma_t^{-4} + 2\dfrac{\dif}{\dif t}\left(\sigma_t^{-4} \E\left[\tr \mSigma_t\right]\right),
    \end{equation*}
    where the posterior covariance matrix $\mSigma_t:=\cov_{p_{0|t}}(\vx_0)$ and $\sigma_t^2 = 1 - e^{-t}$. Moreover, the posterior covariance matrix $\mSigma_t$ satisfies
    \begin{equation*}
        \E\left[\tr \mSigma_t\right] \lesssim d\wedge d\sigma_t^2.
    \end{equation*}
    \label{lem:forward}
\end{lemma}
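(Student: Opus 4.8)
The plan is to reconstruct the proof of \cite[Lemma 3]{benton2023linear} from Tweedie-type identities together with the stochastic-localization (I-MMSE) representation of the posterior mean as a martingale. Write the forward transition kernel of~\eqref{eq:diffusion_ou} as $p_{t\mid 0}(\cdot\mid\vx_0)=\gN(m_t\vx_0,\sigma_t^2\mI_d)$ with $m_t=e^{-t/2}$ and $\sigma_t^2=1-e^{-t}$ (so that $m_t^2+\sigma_t^2=1$), and represent $\vx_t=m_t\vx_0+\sigma_t\vz$ with $\vz\sim\gN(0,\mI_d)$ independent of $\vx_0$. Here $\mSigma_t:=\cov_{p_{0|t}}(\vx_0\mid\vx_t)$ is the (random) posterior covariance, so that $\E[\tr\mSigma_t]=\E_{\vx_t}\tr\cov_{p_{0|t}}(\vx_0\mid\vx_t)$ is the expected posterior covariance.

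The first step is to record the two exact identities obtained by differentiating $\log p_t$ under the integral sign: Tweedie's formula gives $\nabla\log p_t(\vx_t)=-\sigma_t^{-1}\,\E[\vz\mid\vx_t]$, and differentiating once more gives $\nabla^2\log p_t(\vx_t)=-\sigma_t^{-2}\mI_d+m_t^2\sigma_t^{-4}\,\mSigma_t$. The first bound is then immediate from the conditional Jensen inequality: $\E\|\nabla\log p_t(\vx_t)\|^2=\sigma_t^{-2}\,\E\|\E[\vz\mid\vx_t]\|^2\le\sigma_t^{-2}\,\E\|\vz\|^2=d\,\sigma_t^{-2}$.

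For the Hessian bound, I would expand the Frobenius norm of the second identity and take expectations:
\[
\E\|\nabla^2\log p_t(\vx_t)\|_F^2=\frac{d}{\sigma_t^4}-\frac{2m_t^2}{\sigma_t^6}\,\E[\tr\mSigma_t]+\frac{m_t^4}{\sigma_t^8}\,\E\|\mSigma_t\|_F^2 ,
\]
in which the middle term, being a negative multiple of the trace of a PSD matrix, is $\le 0$ and may be discarded when seeking an upper bound. The heart of the argument is to express the last term as a time derivative of $\E[\tr\mSigma_t]$. Regarding $\vx_t$ as a Gaussian-channel observation of $\vx_0$ at signal-to-noise ratio $\gamma_t=m_t^2/\sigma_t^2$, the posterior mean is a martingale in $\gamma$ whose Doob--Meyer bracket has instantaneous rate $\mSigma_\gamma\mSigma_\gamma^\top$; combining the It\^o isometry with the MMSE decomposition $\E\|\E[\vx_0\mid\vx_t]\|^2=\E\|\vx_0\|^2-\E[\tr\mSigma_\gamma]$ yields the I-MMSE identity $\tfrac{\dif}{\dif\gamma}\E[\tr\mSigma_\gamma]=-\E\|\mSigma_\gamma\|_F^2$. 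Since $\gamma_t$ is smooth and strictly decreasing with $\dot\gamma_t=-m_t^2\sigma_t^{-4}$, the chain rule gives $\E\|\mSigma_t\|_F^2=\sigma_t^4 m_t^{-2}\,\tfrac{\dif}{\dif t}\E[\tr\mSigma_t]$. Substituting this and using $m_t^2=1-\sigma_t^2$ to split off $\tfrac{\dif}{\dif t}(\sigma_t^{-4}\E[\tr\mSigma_t])$, the right-hand side takes the form $d\sigma_t^{-4}$ plus a constant multiple of $\tfrac{\dif}{\dif t}(\sigma_t^{-4}\E[\tr\mSigma_t])$, together with a manifestly nonpositive remainder of the shape $-\sigma_t^{-2}\tfrac{\dif}{\dif t}\E[\tr\mSigma_t]$ (nonpositive because $\E[\tr\mSigma_t]$, as an MMSE, is nondecreasing in $t$); discarding the remainder gives the stated inequality.

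For $\E[\tr\mSigma_t]\lesssim d\wedge d\sigma_t^2$, two observations suffice. On one hand $\mSigma_t\preceq\E[\vx_0\vx_0^\top\mid\vx_t]$ gives $\E[\tr\mSigma_t]\le\E\|\vx_0\|^2\lesssim d$ under the (suitably scaled) finite-second-moment hypothesis. On the other hand, taking the trace of the second identity, averaging over $\vx_t$, and using the integration-by-parts identity $\E[\Delta\log p_t(\vx_t)]=-\E\|\nabla\log p_t(\vx_t)\|^2\le 0$ yields $\E[\tr\mSigma_t]=\sigma_t^2 m_t^{-2}\big(d-\sigma_t^2\,\E\|\nabla\log p_t(\vx_t)\|^2\big)\le\sigma_t^2 m_t^{-2}d$, which is $\lesssim d\sigma_t^2$ while $m_t^2=e^{-t}$ is bounded below; combining the two estimates over the two ranges of $t$ gives the claim. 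The main obstacle is the I-MMSE step: making rigorous the martingale/optimal-filtering representation of the posterior mean and the differentiation under the integral sign — equivalently, that $t\mapsto\E[\tr\mSigma_t]$ is absolutely continuous with the stated derivative — which is where the integrability and tail-decay hypotheses are actually used; once that identity is in hand, matching the precise constant in front of $\tfrac{\dif}{\dif t}(\sigma_t^{-4}\E[\tr\mSigma_t])$ is routine bookkeeping.
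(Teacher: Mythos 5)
This lemma is imported verbatim from \cite[Lemma 3]{benton2023linear}; the paper gives no proof of its own, so there is nothing in-paper to compare against. Your reconstruction follows the same stochastic-localization/I-MMSE route as the cited source, and its main ingredients are sound: Tweedie's formula $\nabla\log p_t(\vx_t)=-\sigma_t^{-1}\E[\vz\mid\vx_t]$ plus conditional Jensen for the gradient bound; the exact Hessian identity $\nabla^2\log p_t(\vx_t)=-\sigma_t^{-2}\mI_d+m_t^2\sigma_t^{-4}\mSigma_t$; the derivative-of-MMSE identity $\tfrac{\dif}{\dif\gamma}\E[\tr\mSigma_\gamma]=-\E\|\mSigma_\gamma\|_F^2$ combined with $\dot\gamma_t=-m_t^2\sigma_t^{-4}$; and, for the last claim, the trace of the Hessian identity together with $\E[\Delta\log p_t]=-\E\|\nabla\log p_t\|^2$ and the normalization $\cov_{p_0}(\vx_0)=\mI_d$, split over small and large $t$. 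All of that is correct.

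The one genuine misstep is your decision to discard the cross term $-2m_t^2\sigma_t^{-6}\E[\tr\mSigma_t]$ "when seeking an upper bound." That term is exactly $\E[\tr\mSigma_t]\cdot\tfrac{\dif}{\dif t}\sigma_t^{-4}$ (since $\tfrac{\dif}{\dif t}\sigma_t^2=m_t^2$), i.e., it is the missing half of the product rule needed to assemble $\tfrac{\dif}{\dif t}\bigl(\sigma_t^{-4}\E[\tr\mSigma_t]\bigr)$. Keeping it, the identity closes exactly:
\begin{equation*}
\E\|\nabla^2\log p_t\|_F^2 \;=\; d\sigma_t^{-4}+\dfrac{\dif}{\dif t}\Bigl(\sigma_t^{-4}\E[\tr\mSigma_t]\Bigr)-\sigma_t^{-2}\,\dfrac{\dif}{\dif t}\E[\tr\mSigma_t],
\end{equation*}
and the last term is the nonpositive remainder you describe. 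If instead you drop the cross term first, you are left with $\sigma_t^{-4}\tfrac{\dif}{\dif t}\E[\tr\mSigma_t]=\tfrac{\dif}{\dif t}\bigl(\sigma_t^{-4}\E[\tr\mSigma_t]\bigr)+2m_t^2\sigma_t^{-6}\E[\tr\mSigma_t]$, whose second piece is \emph{positive} and cannot be absorbed while preserving the stated constant in front of $d\sigma_t^{-4}$. Since the lemma asserts exact constants (not $\lesssim$), this is not pure bookkeeping — though the fix is simply to not discard the term. Relatedly, passing from coefficient $1$ to the stated coefficient $2$ on the derivative term requires knowing the sign of $\tfrac{\dif}{\dif t}\bigl(\sigma_t^{-4}\E[\tr\mSigma_t]\bigr)$, which is not obvious; in any case, the downstream use in Lemma~\ref{lem:stochastic_localization} only needs the telescoping structure of the derivative term, so the constant is immaterial there.
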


\begin{lemma}
    For any $n\in[0:N-1]$ and $\tau \in [0, h_n]$, under the assumption $\cov_{p_0}(\vx_0) = \mI_d$, we have
    \begin{equation}
        \E\left[\|\cev{\vx}_{t_n}\|^2\right] \leq 2d,
        \label{eqn: moment bound}
    \end{equation}
    and
    \begin{equation}
        \E\left[\left\|\cev{\vx}_{t_n} - \cev{\vx}_{t_n+\tau}\right\|^2\right] \leq 3d.
        \label{eqn: initial bound for prob flow ode}
    \end{equation}
    \label{lem: initial bound for prob flow ode}
\end{lemma}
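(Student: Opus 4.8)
The plan is to reduce both estimates to elementary second-moment computations on the forward Ornstein--Uhlenbeck process~\eqref{eq:diffusion_ou}, whose explicit solution is
\begin{equation*}
    \vx_s = e^{-s/2}\vx_0 + \sqrt{1 - e^{-s}}\,\vg, \qquad \vg \sim \gN(\vzero, \mI_d)\ \text{independent of}\ \vx_0 .
\end{equation*}
I would first record two facts. First, $\cev{\vx}_{t_n} = \vx_{T - t_n}$ and $\cev{\vx}_{t_n + \tau} = \vx_{T - t_n - \tau}$, and because of the early stopping at $t_N = T - \eta$ the relevant forward times satisfy $T - t_n \geq T - (t_n + \tau) \geq \eta > 0$ for all $n \in [0:N-1]$ and $\tau \in [0, h_n]$ (the $n$-th block lies inside $[0, t_N]$). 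Second, Assumption~\ref{ass:pdata} gives $\E[\|\vx_0\|^2] = \tr\cov_{p_0}(\vx_0) + \|\E_{p_0}[\vx_0]\|^2 \leq 2d$.

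For~\eqref{eqn: moment bound}: since $\vg$ is independent of $\vx_0$ with $\E[\vg] = \vzero$, the explicit solution gives $\E[\|\vx_s\|^2] = e^{-s}\E[\|\vx_0\|^2] + (1 - e^{-s})\,d$, a convex combination of $\E[\|\vx_0\|^2] \leq 2d$ and $d$. Hence $\E[\|\vx_s\|^2] \leq 2d$ for every $s \geq 0$; taking $s = T - t_n$ yields~\eqref{eqn: moment bound}.

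For~\eqref{eqn: initial bound for prob flow ode}: with $\Delta := \tau \in [0, h_n]$, the Markov property of~\eqref{eq:diffusion_ou} lets me write $\cev{\vx}_{t_n} = \vx_{T - t_n} = e^{-\Delta/2}\,\vx_{T - t_n - \tau} + \sqrt{1 - e^{-\Delta}}\,\vg'$ with $\vg' \sim \gN(\vzero, \mI_d)$ independent of $\vx_{T - t_n - \tau} = \cev{\vx}_{t_n + \tau}$. Expanding the squared norm, the cross term vanishes and
\begin{equation*}
    \E\big[\|\cev{\vx}_{t_n} - \cev{\vx}_{t_n + \tau}\|^2\big] = \big(1 - e^{-\Delta/2}\big)^2\,\E\big[\|\cev{\vx}_{t_n + \tau}\|^2\big] + \big(1 - e^{-\Delta}\big)\,d \;\leq\; 2d + d = 3d ,
\end{equation*}
where I used $(1 - e^{-\Delta/2})^2 \leq 1$, $1 - e^{-\Delta} \leq 1$, and the bound $\E[\|\cev{\vx}_{t_n + \tau}\|^2] \leq 2d$ already established (it is~\eqref{eqn: moment bound} applied at time $t_n + \tau$, admissible since $t_n + \tau \leq t_N$).

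This lemma poses no real obstacle: the argument is purely computational. The only two points that need a line of care are (i) that all the forward times involved are strictly positive, which is exactly what the early-stopping convention $t_N = T - \eta$ guarantees, and (ii) the mean term $\|\E_{p_0}[\vx_0]\|^2$ inside $\E[\|\vx_0\|^2]$, which is the sole place Assumption~\ref{ass:pdata} enters. Should sharper constants ever be wanted, replacing $(1 - e^{-\Delta/2})^2 \leq 1$ and $1 - e^{-\Delta} \leq 1$ by $(1 - e^{-\Delta/2})^2 \leq \Delta^2/4$ and $1 - e^{-\Delta} \leq \Delta$ improves~\eqref{eqn: initial bound for prob flow ode} to a bound of order $d\,h_n$, but the stated constants $2d$ and $3d$ are all the later arguments require.
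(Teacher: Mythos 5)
Your proof is correct. For the first bound~\eqref{eqn: moment bound} you argue exactly as the paper does: write out the explicit OU marginal, observe that $\E[\|\vx_s\|^2]$ is a convex combination of $\E[\|\vx_0\|^2]$ and $d$, and conclude. For the second bound~\eqref{eqn: initial bound for prob flow ode} you take a genuinely different route. The paper conditions on $\vx_0$, computes the conditional laws of $\vx_{T-t_n}$ and $\vx_{T-t_n-\tau}$ separately, and bounds the variance of their difference by the sum of the two conditional variances plus the squared gap of the conditional means (silently discarding the positive cross-covariance of the two times). You instead use the Markov transition kernel of the forward OU process from time $T-t_n-\tau$ to $T-t_n$, which gives the \emph{exact} decomposition $\cev{\vx}_{t_n} = e^{-\tau/2}\cev{\vx}_{t_n+\tau} + \sqrt{1-e^{-\tau}}\,\vg'$ with $\vg'$ independent of $\cev{\vx}_{t_n+\tau}$; the difference then has an exact second moment $(1-e^{-\tau/2})^2\E[\|\cev{\vx}_{t_n+\tau}\|^2] + (1-e^{-\tau})d$, and you close the argument by recycling part (i). Your version is tighter and arguably cleaner — it needs no covariance-dropping step and, as you note, immediately yields the sharper $\gO(d\,\tau)$ bound if wanted. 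One shared caveat rather than a gap: both your proof and the paper's invoke $\E[\|\vx_0\|^2]\leq 2d$, which under Assumption~\ref{ass:pdata} as literally stated (only $\cov_{p_0}(\vx_0)=\mI_d$) additionally requires $\|\E_{p_0}[\vx_0]\|^2\leq d$; the intended reading is presumably that the normalization also centers the data, and your explicit display of the mean term at least makes this dependence visible where the paper leaves it implicit.
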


\begin{proof}

    Conditioned on $\vx_0$, we have that 
    $$\cev{\vx}_{t_n} = \vx_{T-t_n} \sim \mathcal{N}\left(e^{-\frac{1}{2}(T-t_n)}\vx_0, (1-e^{-(T-t_n)})\mI_d\right),$$ 
    and 
    $$\cev{\vx}_{t_n+\tau} = \vx_{T-t_n-\tau} \sim \mathcal{N}\left(e^{-\frac{1}{2}(T-t_n-\tau)}\vx_0, (1-e^{-(T-t_n-\tau)})\mI_d\right)$$ 
    for any $\tau \in [0, h_n]$. Therefore, we have
    \begin{equation*}
        \begin{aligned}
                \E\left[\|\cev{\vx}_{t_n}\|^2\right] &= \E\left[\E\left[\|\vx_{T-t_n}\|^2\big|\vx_0 \right]\right]\\
            &\leq \E\left[\E\left[\|\vx_{T-t_n} - e^{-\frac{1}{2}(T-t_n) }\vx_0\|^2\big|\vx_0\right] + \|e^{-\frac{1}{2}(T-t_n) }\vx_0\|^2\right]\\
            &\leq d(1-e^{-(T-t_n)}) + e^{-(T-t_n)} \E[\|\vx_0\|^2]\leq 2 d .
        \end{aligned}
    \end{equation*}
    
    Taking the difference between them then implies that for any $\tau \in [0, h_n]$,
    \begin{equation*}
        \begin{aligned}
                \E\left[\left\|\cev{\vx}_{t_n} - \cev{\vx}_{t_n+\tau}\right\|^2\right] &= \E\left[\E\left[\left\|\vx_{T-t_n} - \vx_{T-t_n-\tau}\right\|^2 | \vx_0\right]\right]\\
            &\leq d(2-e^{-(T-t_n)}-e^{-(T-t_n-\tau)})\\
            &+ \left(e^{-\frac{1}{2}(T-t_n)} - e^{-\frac{1}{2}(T-t_n-\tau)}\right)^2\mathbb{E}[\|\vx_0\|^2]\\
            &\leq 2d + e^{-(T-t_n-\tau)}(1-e^{-\frac{1}{2}\tau})^2\mathbb{E}[\|\vx_0\|^2] \leq 3d.
        \end{aligned}
    \end{equation*}
\end{proof}

\begin{lemma}[{Lemma 9 in~\cite{chen2023improved}}]
    For $\widehat q_0\sim \gN(0,\mI_d)$ and $\cev p_0 = p_T$ is the distribution of the solution to the forward process~\eqref{eq:diffusion_ou}, we have
    \begin{equation*}
         \TV(\cev p_0, \widehat q_0)^2 \leq \KL(\cev p_0 \| \widehat q_0) \lesssim d e^{-T}.
    \end{equation*}
    \label{lem:exp-T}
\end{lemma}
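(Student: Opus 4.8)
The statement has two parts. The first inequality, $\TV(\cev p_0,\widehat q_0)^2 \le \KL(\cev p_0\|\widehat q_0)$, is just Pinsker's inequality (which in fact gives the sharper constant $1/2$), so the real content is the bound $\KL(p_T\,\|\,\gN(\vzero,\mI_d))\lesssim de^{-T}$, recalling that $\cev p_0 = p_T$ and $\widehat q_0 = \gN(\vzero,\mI_d)$. The plan is to exploit the explicit Gaussian transition kernel of the OU forward process~\eqref{eq:diffusion_ou} together with joint convexity of the KL divergence, rather than a functional inequality, so that only a second moment of $p_0$ is needed.

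First I would record that, conditionally on $\vx_0$, the solution of~\eqref{eq:diffusion_ou} at time $T$ satisfies $\vx_T\mid\vx_0\sim\gN(e^{-T/2}\vx_0,(1-e^{-T})\mI_d)$; hence $p_T=\int\gN(e^{-T/2}\vx_0,(1-e^{-T})\mI_d)\,p_0(\dif\vx_0)$ is a Gaussian mixture, while the target $\gN(\vzero,\mI_d)$ is the same mixture of a constant kernel. Joint convexity of $(\mu,\nu)\mapsto\KL(\mu\|\nu)$ then yields
$$\KL\!\left(p_T\,\|\,\gN(\vzero,\mI_d)\right)\le\int\KL\!\left(\gN\!\left(e^{-T/2}\vx_0,(1-e^{-T})\mI_d\right)\,\Big\|\,\gN(\vzero,\mI_d)\right)p_0(\dif\vx_0).$$
Next I would substitute the closed form $\KL(\gN(\mu,\sigma^2\mI_d)\|\gN(\vzero,\mI_d))=\tfrac12\!\left(d\sigma^2-d\log\sigma^2-d+\|\mu\|^2\right)$ with $\sigma^2=1-e^{-T}$ and $\|\mu\|^2=e^{-T}\|\vx_0\|^2$, and use the elementary estimate $-\log(1-e^{-T})\le 2e^{-T}$ (valid once $T\ge\log2$, hence certainly in our regime $T=\Theta(\log(d\delta^{-2}))$) to bound the integrand by $\tfrac12 e^{-T}\!\left(d+\|\vx_0\|^2\right)$. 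Taking the expectation over $\vx_0\sim p_0$ and using $\E_{p_0}\|\vx_0\|^2=\|\E_{p_0}\vx_0\|^2+\tr\cov_{p_0}(\vx_0)\lesssim d$ under Assumption~\ref{ass:pdata} gives $\KL(p_T\|\gN(\vzero,\mI_d))\lesssim de^{-T}$; combining with Pinsker finishes the proof.

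I do not expect a genuine obstacle: the only points requiring a little care are the logarithmic estimate $-\log(1-e^{-T})\le 2e^{-T}$ (which needs $T$ bounded away from $0$) and the bound $\E_{p_0}\|\vx_0\|^2\lesssim d$, which relies on the normalization in Assumption~\ref{ass:pdata} together with the standing convention — used throughout, e.g.\ in Lemma~\ref{lem: initial bound for prob flow ode} — that the data mean contributes at most $\gO(d)$. A tempting alternative, namely exponential decay of relative entropy along the OU flow via the Gaussian log-Sobolev inequality ($\KL(p_T\|\gN(\vzero,\mI_d))\le e^{-T}\KL(p_0\|\gN(\vzero,\mI_d))$), is weaker here because it requires $\KL(p_0\|\gN(\vzero,\mI_d))<\infty$, which can fail for singular data distributions; the convexity route above avoids this and is the one I would follow.
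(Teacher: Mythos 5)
The paper does not actually prove this lemma; it is imported verbatim as Lemma 9 of the cited reference, so there is no in-paper argument to compare against. Your derivation is correct and is essentially the standard proof of this fact: write $p_T$ as the mixture $\int \gN(e^{-T/2}\vx_0,(1-e^{-T})\mI_d)\,p_0(\dif\vx_0)$ and $\gN(\vzero,\mI_d)$ as the same mixture of the constant kernel, apply joint convexity of KL (equivalently, the chain rule $\KL(p_T\|\gamma_d)\leq \E_{p_0}[\KL(p_{T|0}(\cdot|\vx_0)\|\gamma_d)]$), and evaluate the Gaussian KL in closed form. The algebra checks out: $d\sigma^2-d=-de^{-T}$, $-d\log(1-e^{-T})\leq 2de^{-T}$ for $T\geq\log 2$ (harmless given $T=\Theta(\log(d\delta^{-2}))$ and the $\lesssim$ in the statement), and $\|\mu\|^2=e^{-T}\|\vx_0\|^2$, giving $\KL\lesssim e^{-T}(d+\E_{p_0}\|\vx_0\|^2)$. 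The only point of substance is the final step $\E_{p_0}\|\vx_0\|^2\lesssim d$: Assumption~\ref{ass:pdata} fixes $\cov_{p_0}(\vx_0)=\mI_d$ but says nothing explicit about the mean, so strictly one needs $\|\E_{p_0}\vx_0\|^2\lesssim d$ as an additional (standard) normalization; you flag this, and the paper itself relies on the same implicit convention in Lemma~\ref{lem: initial bound for prob flow ode}, so your proof is consistent with the paper's usage. Your closing remark is also apt: the convexity route avoids the finiteness requirement $\KL(p_0\|\gamma_d)<\infty$ that the log-Sobolev contraction argument would impose, which matters precisely because $p_0$ may be singular.
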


\section{Details of SDE Implementation}
\label{app:proofsde}

In this section, we will present the missing proofs for Theorem~\ref{thm:sde}. For readers' convenience, we reiterate the backward process~\eqref{eq:diffusion_ou}
\begin{equation}
    \dif \cev{\vx}_t = \left[\dfrac{1}{2} \cev{\vx}_t + \nabla \log \cev{p}_t(\cev{\vx}_t) \right]\dif t + \dif \vw_t,\quad\text{with}\quad \cev{\vx}_0 \sim p_T,
    \label{eq:diffuion_ou_2}
\end{equation}
and its approximate version~\eqref{eq:diffusion_ou_approx} with the learned score function
\begin{equation*}
    \dif \vy_t = \left[\dfrac{1}{2} \vy_t + \vs^\theta_t(\vy_t) \right]\dif t + \dif \vw_t,\quad\text{with}\quad \vy_0 \sim \gN(0, \mI_d).
\end{equation*}
The filtration $\gF_{t}$ refers to the filtration of the SDE~\eqref{eq:diffuion_ou_2} up to time $t$.

\subsection{Auxiliary Process}

We would like first to consider the errors that Algorithm~\ref{alg:sde} may cause within one block of update. To this end, we consider the following auxiliary process for $\tau\in[0, h_n]$ conditioned on the filtration $\gF_{t_n}$ at time $t_n$:
\begin{definition}[Auxiliary Process]
    For any $n\in[0:N-1]$, we define the auxiliary process $(\widehat\vy_{t_n, \tau}^{(k)})_{\tau \in [0, h_n]}$ as the solution to the following SDE recursively for $k\in[0:K-1]$: 
    \begin{equation}
        \label{eq:diffusion_ou_aux}
        \dif \widehat\vy_{t_n, \tau}^{(k+1)}(\omega) = \Bigg[\dfrac{1}{2} \widehat\vy_{t_n, \tau}^{(k+1)}(\omega) + \vs^\theta_{t_n+ g_n(\tau)}\Big(\widehat\vy_{t_n, g_n(\tau)}^{(k)}(\omega)\Big) \Bigg]\dif \tau + \dif \vw_{t_n+\tau}(\omega),
    \end{equation}
    with the initial condition
    \begin{equation}
        \widehat\vy_{t_n,\tau}^{(0)}(\omega) \equiv \widehat\vy_{t_n}(\omega)\ \text{for}\ \tau\in[0, h_n],
        \quad \text{and}\quad 
        \widehat\vy_{t_n,0}^{(k)}(\omega)\equiv \widehat\vy_{t_n}(\omega)\ \text{for}\ k\in[1:K]
        \label{eq:yhat_init}
    \end{equation}
    where $\widehat\vy_{t_n}(\omega)=\widehat\vy_{t_{n-1}, \tau_{n-1, M_{n-1}}}^{(K)}(\omega)$ if $n\in[1:N-1]$ and $\widehat\vy_{t_0}(\omega) \sim \gN(0,\mI_d)$.
\end{definition}

The iteration should be perceived as a deterministic procedure to each event $\omega\in\Omega$, \emph{i.e.} each realization of the Wiener process $(\vw_t)_{t\geq 0}$. The following lemma clarifies this fact and proves the well-definedness and parallelability of the iteration in~\eqref{eq:diffusion_ou_aux}.

\begin{lemma}
\label{lem:adapted}
    The auxiliary process $(\widehat\vy_{t_n,\tau}^{(k)}(\omega))_{\tau \in [0, h_n]}$ is $\gF_{t_n+\tau}$-adapted for any $k\in [0:K]$ and $n\in[0:N-1]$. 
\end{lemma}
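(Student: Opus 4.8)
The plan is to prove the claim by induction on the Picard iteration index $k$, and for each fixed $k$, the measurability in $\tau$ follows from the standard theory of It\^o SDEs driven by the Wiener process $(\vw_t)_{t\ge 0}$ whose filtration is $(\gF_t)_{t\ge 0}$. First I would fix $n\in[0:N-1]$ and assume for the inductive step that $\widehat\vy_{t_n}$ is $\gF_{t_n}$-measurable — this is immediate for $n=0$ since $\widehat\vy_{t_0}\sim\gN(0,\mI_d)$ is a constant random variable (or an independent Gaussian adjoined to the probability space), and for $n\ge1$ it holds because $\widehat\vy_{t_n}=\widehat\vy_{t_{n-1},\tau_{n-1,M_{n-1}}}^{(K)}$, which by the result for block $n-1$ is $\gF_{t_{n-1}+h_{n-1}}=\gF_{t_n}$-measurable. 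So within a block we may treat $\widehat\vy_{t_n}$ as an $\gF_{t_n}$-measurable initial datum.

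The core of the argument is an inner induction on $k\in[0:K]$. The base case $k=0$ is trivial: $\widehat\vy_{t_n,\tau}^{(0)}(\omega)\equiv\widehat\vy_{t_n}(\omega)$ for all $\tau\in[0,h_n]$, which is $\gF_{t_n}$-measurable, hence a fortiori $\gF_{t_n+\tau}$-adapted. For the inductive step, suppose $(\widehat\vy_{t_n,\tau}^{(k)}(\omega))_{\tau\in[0,h_n]}$ is $\gF_{t_n+\tau}$-adapted. Then the drift coefficient appearing in~\eqref{eq:diffusion_ou_aux}, namely $\vb(\tau,\omega,\vx):=\tfrac12\vx+\vs^\theta_{t_n+g_n(\tau)}(\widehat\vy_{t_n,g_n(\tau)}^{(k)}(\omega))$, is for each fixed $\tau$ an $\gF_{t_n+g_n(\tau)}\subseteq\gF_{t_n+\tau}$-measurable (in $\omega$) and continuous (in $\vx$) function — here I would invoke Assumption~\ref{ass:lipNN} that $\vs^\theta_t$ is bounded and Lipschitz, so the process $\tau\mapsto\vs^\theta_{t_n+g_n(\tau)}(\widehat\vy_{t_n,g_n(\tau)}^{(k)}(\omega))$ is a bounded, $(\gF_{t_n+\tau})$-adapted, piecewise-continuous (in fact piecewise-constant in its time argument $g_n(\tau)$) process. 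Consequently the linear SDE~\eqref{eq:diffusion_ou_aux} for $\widehat\vy_{t_n,\cdot}^{(k+1)}$ with this adapted driving term and the shared Brownian motion $(\vw_{t_n+\tau})_{\tau\ge0}$ admits a unique strong solution, which is therefore $(\gF_{t_n+\tau})$-adapted. Concretely, one can even write the solution in closed form via the integrating factor $e^{\tau/2}$ (this is exactly the exponential-integrator formula~\eqref{eqn: exponential integrator under picard iteration in algo}): $\widehat\vy_{t_n,\tau}^{(k+1)}=e^{\tau/2}\widehat\vy_{t_n}+\int_0^\tau e^{(\tau-\sigma)/2}\vs^\theta_{t_n+g_n(\sigma)}(\widehat\vy_{t_n,g_n(\sigma)}^{(k)})\,\dif\sigma+\int_0^\tau e^{(\tau-\sigma)/2}\,\dif\vw_{t_n+\sigma}$, and each term on the right is manifestly $\gF_{t_n+\tau}$-measurable — the ordinary Lebesgue integral of an adapted integrand and the It\^o integral of a deterministic (hence adapted) integrand against $\vw$. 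This closes the inner induction, and taking $k=K$ gives the claim for block $n$; the outer induction on $n$ then propagates the initialization hypothesis.

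The main obstacle — really the only subtlety worth flagging — is making sure the recursive structure does not secretly require information from the future: one must check that the Picard update at iteration $k+1$ only ever consults $\widehat\vy_{t_n,g_n(\tau)}^{(k)}$ with $g_n(\tau)\le\tau$, so that the drift at time $\tau$ is $\gF_{t_n+\tau}$-measurable rather than merely $\gF_{t_n+h_n}$-measurable. This is guaranteed by the definition of $g_n$ as a floor-type function satisfying $g_n(\tau)\le\tau$, so the dependence is genuinely backward-looking and the adaptedness is preserved through the iteration; I would state this explicitly. A secondary (routine) point is that the piecewise-constant-in-time nature of the drift means one should either invoke an existence/uniqueness theorem for SDEs with merely measurable (not necessarily continuous) time-dependence, or — cleaner — just exhibit the explicit integrating-factor solution above and verify its adaptedness term by term, which sidesteps any regularity concern entirely. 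Everything else is bookkeeping.
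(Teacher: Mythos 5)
Your proposal is correct and follows essentially the same route as the paper's proof: an induction on $k$ with the base case $\widehat\vy_{t_n,\tau}^{(0)}\equiv\widehat\vy_{t_n}$ being $\gF_{t_n}$-adapted, and the inductive step relying on $g_n(\tau)\leq\tau$ to make the drift adapted so that the SDE admits a unique strong, adapted solution. The extra details you supply (the explicit integrating-factor formula and the outer induction over blocks) are harmless elaborations of the same argument.
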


\begin{proof}
    Since the initialization $\widehat\vy_{t_n,\tau}^{(0)}(\omega) \equiv \widehat\vy_{t_n}(\omega)$ for $\tau \in [0, h_n]$, where $\widehat\vy_{t_n}(\omega)$ is $\gF_{t_n}$-adapted, it is obvious that $\widehat\vy_{t_n,\tau}^{(0)}(\omega)$ is $\gF_{t_n+\tau}$-adapted. Now suppose that $(\widehat\vy_{t_n,\tau}^{(k)}(\omega))_{\tau \in [0, h_n]}$ is $\gF_{t_n+\tau}$-adapted, since $g_n(\tau)\leq \tau$, we have the following It\^o integral well-defined and $\gF_{t_n+\tau}$-adapted:
    \begin{equation*}
        \int_0^{\tau} \vs^\theta_{t_n+ g_n(\tau')}\Big(\widehat\vy_{t_n, g_n(\tau')}^{(k)}(\omega)\Big) \dif \tau',
    \end{equation*}
    and therefore~\eqref{eq:diffusion_ou_aux} has a unique strong solution $(\widehat\vy_{t_n,\tau}^{(k+1)}(\omega))_{\tau \in [0, h_n]}$ that is also $\gF_{t_n+\tau}$-adapted. The lemma follows by induction.
\end{proof}

\begin{lemma}[Equivalence between~\eqref{eqn: exponential integrator under picard iteration in algo} and~\eqref{eq:diffusion_ou_aux}]
\label{lem:aux_sde}
For any $n \in [0:N-1]$, the update rule~\eqref{eqn: exponential integrator under picard iteration in algo} in Algorithm~\ref{alg:sde} is equivalent to the exact solution of the auxiliary process~\eqref{eq:diffusion_ou_aux} for any $k \in [0:K-1]$ and $\tau \in [0,h_n]$.
\end{lemma}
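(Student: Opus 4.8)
The plan is to prove the claim by a double induction: an outer induction on the inner-iteration index $k$, and within each step an inner induction (telescoping) on the time index $m$. Fix a block $n\in[0:N-1]$. The base case $k=0$ is trivial: both~\eqref{eqn: exponential integrator under picard iteration in algo} (with empty sum) and the solution of~\eqref{eq:diffusion_ou_aux} equal the constant $\widehat\vy_{t_n}$ by the initialization~\eqref{eq:yhat_init}. For the inductive step, suppose $(\widehat\vy_{t_n,\tau}^{(k)}(\omega))_{\tau\in[0,h_n]}$ is already given and, by Lemma~\ref{lem:adapted}, its restriction to the grid points $\tau_{n,j}$ is $\gF_{t_n+\tau_{n,j}}$-measurable. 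Since $g_n(\tau)\equiv\tau_{n,m}$ on each subinterval $[\tau_{n,m},\tau_{n,m+1}]$, on that subinterval the SDE~\eqref{eq:diffusion_ou_aux} for $\widehat\vy_{t_n,\tau}^{(k+1)}$ becomes the \emph{linear} SDE $\dif\widehat\vy_{t_n,\tau}^{(k+1)} = \big[\tfrac12\widehat\vy_{t_n,\tau}^{(k+1)} + \vc_{n,m}\big]\dif\tau + \dif\vw_{t_n+\tau}$ with the $\tau$-constant forcing $\vc_{n,m} := \vs^\theta_{t_n+\tau_{n,m}}(\widehat\vy_{t_n,\tau_{n,m}}^{(k)})$, which by the induction hypothesis on $k$ is a fixed $\gF_{t_n+\tau_{n,m}}$-measurable random variable.

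Next I would solve this linear SDE in closed form via the integrating factor $e^{-\tau/2}$ (variation of constants): for $\tau\in[\tau_{n,m},\tau_{n,m+1}]$,
\[
\widehat\vy_{t_n,\tau}^{(k+1)} = e^{\frac{\tau-\tau_{n,m}}{2}}\,\widehat\vy_{t_n,\tau_{n,m}}^{(k+1)} + 2\big(e^{\frac{\tau-\tau_{n,m}}{2}}-1\big)\vc_{n,m} + \int_{\tau_{n,m}}^{\tau} e^{\frac{\tau-s}{2}}\,\dif\vw_{t_n+s},
\]
which at $\tau=\tau_{n,m+1}$ is precisely the one-step exponential integrator, the stochastic integral over the step being a centered Gaussian of covariance $(e^{\epsilon_{n,m}}-1)\mI_d$; write it as $\sqrt{e^{\epsilon_{n,m}}-1}\,\vxi_m(\omega)$. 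Telescoping this one-step recursion from $m=0$ (where $\widehat\vy_{t_n,0}^{(k+1)}=\widehat\vy_{t_n}=\widehat\vy_{t_n,0}^{(k)}$ by~\eqref{eq:yhat_init}) and using $\sum_{j=0}^{m-1}\epsilon_{n,j}=\tau_{n,m}$ so that the compounded prefactors $\prod e^{\epsilon_{n,i}/2}$ collapse to $e^{(\tau_{n,m}-\tau_{n,j+1})/2}$, I recover exactly the closed-form update~\eqref{eqn: exponential integrator under picard iteration in algo}; evaluating the displayed formula at a general $\tau$ (with the sum running to $j=I_n(\tau)-1$ and a final partial step starting at $g_n(\tau)$) gives the statement for all $\tau\in[0,h_n]$. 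This closes the induction on $k$ and finishes the lemma.

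The one point needing genuine care — as opposed to the routine linear-SDE algebra above — is the identification of the algorithm's i.i.d. Gaussians $\{\vxi_m\}$ with the step stochastic integrals $(e^{\epsilon_{n,m}}-1)^{-1/2}\int_{\tau_{n,m}}^{\tau_{n,m+1}}e^{(\tau_{n,m+1}-s)/2}\dif\vw_{t_n+s}$: by the It\^o isometry and the independence of Brownian increments these are $\gN(0,\mI_d)$, mutually independent, and independent of $\gF_{t_n}$, and — the feature that makes the shared-$\omega$ construction legitimate and is essential to the later Girsanov argument — they are the \emph{same} random variables for every $k$, since they depend only on the path $(\vw_{t_n+\tau}(\omega))_\tau$ and not on which iterate is being updated. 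Everything else reduces to the Duhamel formula for a scalar-drift linear SDE plus the telescoping of exponentials, which I would present as a single short display.
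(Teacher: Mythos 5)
Your proposal is correct and follows essentially the same route as the paper's proof: both apply the integrating factor $e^{-\tau/2}$ (Duhamel/variation of constants) to the linear SDE with piecewise-constant forcing and identify the step stochastic integrals as the Gaussians $\vxi_j$ via It\^o isometry — the only difference is that you solve subinterval-by-subinterval and telescope, whereas the paper integrates from $0$ to $\tau$ in one shot and splits the resulting integral over the grid, which is the same computation. Your derived coefficient $2(e^{\epsilon_{n,j}/2}-1)$ matches what the paper's own proof produces (the factor $e^{\epsilon_{n,j}}$ inside the drift term of the algorithm box appears to be a typo there), and your added remark that the $\vxi_j$ are the same random variables for every $k$ is exactly the point the paper makes in the surrounding text.
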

\begin{proof}

    The dependency on $\omega$ will be omitted in the proof below.

    Rewriting~\eqref{eq:diffusion_ou_aux} and multiplying $e^{-\frac{\tau}{2}}$ on both sides yield
    \begin{equation*}
    \begin{aligned}
    \dif\left[e^{-\frac{\tau}{2}}\widehat\vy_{t_n, \tau}^{(k+1)}\right] &= e^{-\frac{\tau}{2}}\left[\dif\widehat\vy_{t_n,\tau}^{(k+1)} - \frac{1}{2}\widehat\vy_{t_n,\tau}^{(k+1)}\dif \tau\right]= e^{-\frac{\tau}{2}}\left[\vs^\theta_{t_n+g_n(\tau)}\Big(\widehat\vy_{t_n,g_n(\tau)}^{(k)}\Big)\dif \tau + \dif \vw_{t_n+\tau}\right].
    \end{aligned}
    \end{equation*}
    Integrating on both sides from $0$ to $\tau$ implies
    \begin{equation*}
    \begin{aligned}
    e^{-\frac{\tau}{2}}\widehat\vy^{(k+1)}_{t_n,\tau}- \widehat\vy^{(k+1)}_{t_n,0} 
    &= \int_{0}^{\tau}e^{-\frac{\tau'}{2}}\Bigg(\vs^\theta_{t_n+g_n(\tau')}\Big(\widehat\vy_{t_n,g_n(\tau')}^{(k)}\Big)\dif \tau' + \dif \vw_{t_n+\tau'}\Bigg)\\
    &=\sum_{m=0}^{M_n}
        \int_{\tau\wedge t_{n,m}}^{\tau \wedge \tau_{n, m+1}}e^{-\frac{\tau'}{2}} \vs^\theta_{t_n+\tau_{n,m}}\Big(\widehat\vy_{t_n,\tau_{n,m}}^{(k)}\Big)\dif \tau' 
        + \int_{0}^{\tau}e^{-\frac{\tau'}{2}} \dif \vw_{t_n+\tau'}\\
    &=\sum_{m=0}^{M_n}2\left(e^{-\frac{\tau\wedge \tau_{n,m}}{2}} - e^{-\frac{\tau \wedge \tau_{n,m+1}}{2}}\right)\vs^{\theta}_{t_n+\tau_{n,j}}\Big(\widehat\vy^{(k)}_{t_n,\tau_{n,m}}\Big) + \int_{0}^{\tau}e^{-\frac{\tau'}{2}} \dif \vw_{t_n+\tau'},
    \end{aligned}   
    \end{equation*}
    and then multiplying $e^{\frac{\tau}{2}}$ on both sides above yields
    \begin{equation*}
        \begin{aligned}
            \widehat\vy^{(k+1)}_{t_n,\tau} &= e^{\frac{\tau}{2}} \widehat\vy^{(k+1)}_{t_n,0} 
            + \sum_{m=0}^{M_n}2
            \left(e^{\frac{\tau \wedge \tau_{n, m+1} - \tau \wedge \tau_{n,m}}{2}} - 1 \right)e^{\frac{0 \vee (\tau - \tau_{n, m+1})}{2}}             \vs^{\theta}_{t_n+\tau_{n,m}}\Big(\widehat\vy^{(k)}_{t_n,\tau_{n,m}}\Big)\\
            & + \sum_{m=0}^{M_n}\int_{\tau \wedge \tau_{n,m}}^{\tau \wedge \tau_{n,m+1}}e^{\frac{\tau - \tau'}{2}} \dif \vw_{t_n+\tau'},
        \end{aligned}
    \end{equation*}
    where, by It\^o isometry, we have
    $$\int_{\tau \wedge \tau_{n,m}}^{\tau \wedge \tau_{n,m+1}}e^{\frac{\tau - \tau'}{2}} \dif \vw_{t_n+\tau'} \sim \gN\left( \vzero,
    \left(e^{\tau \wedge \tau_{n, m+1} - \tau \wedge \tau_{n,m}} - 1 \right)e^{0 \vee (\tau - \tau_{n, m+1})}  
    \mI_d \right)$$ for $\tau > \tau_{n,m}$ and equals to $\vzero$ otherwise. Plugging in $\tau = \tau_{j, m}$ gives us~\eqref{eqn: exponential integrator under picard iteration in algo}, as desired. 
\end{proof}

\subsection{Errors within Block}
\label{app:error_block_sde}

We shall invoke Girsanov's theorem (Theorem~\ref{thm:girsanov}) in the procedure as detailed below:
\begin{enumerate}[leftmargin = 2em]
    \item Setting~\eqref{eq:alpha} in~Theorem~\ref{thm:girsanov} as the auxiliary process~\eqref{eq:diffusion_ou_aux} at iteration $K$, where $\vw_t(\omega)$ is a Wiener process under the measure $q|_{\gF_{t_n}}$;
    \item Defining another process $ \widetilde\vw_{t_n+\tau}(\omega) $ governed by the following SDE:
    \begin{equation*}
    \dif \widetilde\vw_{t_n+\tau}(\omega) = \dif\vw_{t_n+\tau}(\omega) + \vdelta_{t_n}(\tau, \omega)d\tau,
    \end{equation*}
    where
    \begin{equation}
        \vdelta_{t_n}(\tau, \omega) := \vs^\theta_{t_n+ g_n(\tau)}(\widehat\vy_{t_n, g_n(\tau)}^{(K-1)}(\omega)) - \nabla \log \cev{p}_{t_n+\tau}(\widehat\vy_{t_n+\tau}^{(K)}(\omega)),
        \label{eq:transform}
    \end{equation}
    and computing the Radon-Nikodym derivative of the measure $\cev p|_{\gF_{t_n}}$ with respect to $q|_{\gF_{t_n}}$ as
    \begin{equation*}
        \dfrac{\dif \cev p |_{\gF_{t_n}}}{\dif q|_{\gF_{t_n}}}(\omega) := \exp\left(-\int_0^{h_n} \vdelta_{t_n}(\tau, \omega)^\top \dif \vw_{t_n+\tau}(\omega) - \dfrac{1}{2} \int_0^{h_n} \|\vdelta_{t_n}(\tau, \omega)\|^2 \dif \tau\right),
    \end{equation*}
    \item Concluding that~\eqref{eq:diffusion_ou_aux} at iteration $K$ under the measure $q|_{\gF_{t_n}}$ satisfies the following SDE:
    \begin{equation*}
        \dif \widehat\vy_{t_n, \tau}^{(K)} (\omega) = \left[\dfrac{1}{2} \widehat\vy_{t_n, \tau}^{(K)}(\omega) + \nabla \log \cev{p}_{t_n + \tau}\big(\widehat\vy_{t_n, \tau}^{(K)}(\omega)\big) \right]\dif \tau + \dif \widetilde \vw_{t_n + \tau}(\omega),
    \end{equation*}
    with $(\widetilde \vw_{t_n+\tau})_{\tau\geq 0}$ being a Wiener process under the measure $\cev p|_{\gF_{t_n}}$. If we replace $\widehat\vy_{t_n, g_n(\tau)}^{(K)} (\omega)$ by $\cev \vx_{t_n+\tau}(\omega)$, one should notice~\eqref{eq:diffusion_ou_t_n} is immediately the original backward SDE~\eqref{eq:diffusion_ou} with the true score function on $t\in[t_n, t_{n+1}]$:
    \begin{equation}
        \dif \cev{\vx}_{t_n+\tau}(\omega) = \left[\dfrac{1}{2} \cev{\vx}_{t_n+\tau}(\omega) + \nabla \log \cev{p}_{t_n+\tau}(\cev{\vx}_{t_n+\tau}(\omega)) \right]\dif \tau + \dif \widetilde \vw_{t_n+\tau}(\omega).
        \label{eq:diffusion_ou_t_n}
    \end{equation}
\end{enumerate}

\begin{remark}
    The applicability of Girsanov's theorem here relies on the $\gF_\tau$-adaptivity of $\vs^\theta_{t_n+ g_n(\tau)}\Big(\widehat\vy_{t_n, g_n(\tau)}^{(K-1)}(\omega) \big)$ established by Lemma~\ref{lem:adapted}. One should notice the change of measure procedure above depends on the number of iterations $K$, and different $K$ would lead to different transform~\eqref{eq:transform}.
\end{remark}

Then Corollary~\ref{cor:girsanov} provides the following computation
\begin{equation}
    \begin{aligned}
        \KL(\cev{p}_{t_{n+1}} \| \widehat q_{t_{n+1}}) &\leq \KL(\cev{p}_{t_n:t_{n+1}} \| \widehat q_{t_n:t_{n+1}}) \\
        &= \KL(\cev{p}_{t_n} \| \widehat q_{t_n}) + \E_{\omega\sim q|_{\gF_{t_n}}}\left[\dfrac{1}{2} \int_0^{h_n} \|\vdelta_{t_n}(\tau, \omega) \|^2 \dif \tau \right],
    \end{aligned}
\label{eq:klut_n}
\end{equation}
where the first inequality is by the data-processing inequality (Theorem~\ref{thm:KL}). Now, the problem remaining is to bound the discrepancy quantified by 
\begin{equation}
\label{eq:ut_n_triangle}
    \begin{aligned}
        &\int_0^{h_n} \|\vdelta_{t_n}(\tau, \omega)\|^2 \dif \tau\\
        = \ &\int_0^{h_n} \left\|\vs^\theta_{t_n+ g_n(\tau)}(\widehat\vy_{t_n, g_n(\tau)}^{(K-1)}(\omega)) - \nabla \log \cev{p}_{t_n+\tau}(\widehat\vy_{t_n, \tau}^{(K)}(\omega))\right\|^2 \dif \tau \\
        \leq \ &3 \Bigg(
             \underbrace{
             \int_0^{h_n}\left\| 
                \nabla \log \cev p_{t_n+g_n(\tau)}\big(\widehat\vy_{t_n, g_n(\tau)}^{(K)}(\omega)\big) - \nabla \log \cev{p}_{t_n+\tau}\big(\widehat\vy_{t_n, \tau}^{(K)}(\omega)\big) 
            \right\|^2 \dif \tau}_{:=A_{t_n}(\omega)}\\
        &\ +\underbrace{
            \int_0^{h_n} \left\| 
                \vs^\theta_{t_n+ g_n(\tau)}\big(\widehat\vy_{t_n, g_n(\tau)}^{(K)}(\omega)\big)  - \nabla \log \cev p_{t_n+g_n(\tau)}\big(\widehat\vy_{t_n, g_n(\tau)}^{(K)}(\omega)\big)
            \right\|^2 \dif \tau}_{:=B_{t_n}(\omega)} \\
        &\ +
            \int_0^{h_n} \left\|
                \vs^\theta_{t_n+ g_n(\tau)}\big(\widehat\vy_{t_n, g_n(\tau)}^{(K)}(\omega)\big) - \vs^\theta_{t_n+ g_n(\tau)}\big(\widehat\vy_{t_n, g_n(\tau)}^{(K-1)}(\omega)\big)
            \right\|^2 \dif \tau
        \Bigg).
    \end{aligned}
\end{equation}

Before we continue our proof, we would like first to provide the following lemma bounding the behavior of the auxiliary process~\eqref{eq:diffusion_ou_aux} when $k=0$ for $\tau\in[0,h_n]$. 
\begin{lemma}
\label{lem:disc}
    For any $n\in[0:N - 1]$, suppose the initialization $\widehat\vy_{t_n}$ in~\eqref{eq:yhat_init} of the auxiliary process~\eqref{eq:diffusion_ou_aux} follows the distribution of $\cev{\vx}_{t_n} \sim \cev{p}_{t_n}$, then the following estimate holds
    \begin{equation}
    \begin{aligned}
        &\sup_{\tau \in [0,h_n]}\E_{\omega \sim \cev p|_{\gF_{t_n}}}\left[\|\widehat\vy_{t_n, \tau}^{(1)}(\omega) -  \widehat\vy_{t_n, \tau}^{(0)}(\omega)\|^2\right]\\
    \leq \ &h_n e^{\frac{7}{2}h_n}\left(M_{\vs}^2 + 2d\right) + 3e^{\frac{7}{2}h_n}\E_{\omega \sim \cev p|_{\gF_{t_n}}}\left[A_{t_n}(\omega) + B_{t_n}(\omega)\right] \\
    + \ &3e^{\frac{7}{2}h_n}h_n L_{\vs}^2\sup_{\tau\in[0, h_n]}\E_{\omega \sim \cev p|_{\gF_{t_n}}}\left[\left\|\widehat\vy_{t_n, \tau}^{(K)}(\omega) - \widehat\vy_{t_n, \tau}^{(K-1)}(\omega)\right\|^2\right].    
    \end{aligned}
    \end{equation}
\end{lemma}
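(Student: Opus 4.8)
The plan is to control the one–step displacement $\bm{D}_\tau := \widehat\vy_{t_n,\tau}^{(1)} - \widehat\vy_{t_n,\tau}^{(0)}$. Because the initialization~\eqref{eq:yhat_init} forces $\widehat\vy_{t_n,g_n(\tau)}^{(0)} \equiv \widehat\vy_{t_n}$, the auxiliary SDE~\eqref{eq:diffusion_ou_aux} at $k=0$ reads $\dif\widehat\vy_{t_n,\tau}^{(1)} = \big[\frac{1}{2}\widehat\vy_{t_n,\tau}^{(1)} + \vs^\theta_{t_n+g_n(\tau)}(\widehat\vy_{t_n})\big]\dif\tau + \dif\vw_{t_n+\tau}$ with $\widehat\vy_{t_n,0}^{(1)} = \widehat\vy_{t_n}$, so $\bm{D}_0 = 0$ and $\dif\bm{D}_\tau = \big[\frac{1}{2}\bm{D}_\tau + \frac{1}{2}\widehat\vy_{t_n} + \vs^\theta_{t_n+g_n(\tau)}(\widehat\vy_{t_n})\big]\dif\tau + \dif\vw_{t_n+\tau}$. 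The target expectation is taken under $\cev p|_{\gF_{t_n}}$ — the measure from the change–of–measure step of Section~\ref{app:error_block_sde}, under which $\vw$ is \emph{not} Brownian — so I would first invoke Girsanov's theorem (Theorem~\ref{thm:girsanov}, whose adaptedness hypothesis is supplied by Lemma~\ref{lem:adapted}) to substitute $\dif\vw_{t_n+\tau} = \dif\widetilde\vw_{t_n+\tau} - \vdelta_{t_n}(\tau,\omega)\dif\tau$ with $\widetilde\vw$ a $\cev p|_{\gF_{t_n}}$-Brownian motion. Then $\bm{D}_\tau$ obeys the linear SDE $\dif\bm{D}_\tau = \big[\frac{1}{2}\bm{D}_\tau + \bm{G}_\tau\big]\dif\tau + \dif\widetilde\vw_{t_n+\tau}$ with $\bm{G}_\tau := \frac{1}{2}\widehat\vy_{t_n} + \vs^\theta_{t_n+g_n(\tau)}(\widehat\vy_{t_n}) - \vdelta_{t_n}(\tau,\omega)$; the appearance of $\vdelta_{t_n}$ here is precisely the ``additional term'' flagged in the proof sketch of Theorem~\ref{thm:sde}.

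Next I would decompose $\bm{G}_\tau$ by telescoping. Recalling $\vdelta_{t_n}(\tau,\omega) = \vs^\theta_{t_n+g_n(\tau)}(\widehat\vy_{t_n,g_n(\tau)}^{(K-1)}) - \nabla\log\cev p_{t_n+\tau}(\widehat\vy_{t_n,\tau}^{(K)})$, I insert $\pm\vs^\theta_{t_n+g_n(\tau)}(\widehat\vy_{t_n,g_n(\tau)}^{(K)})$ and $\pm\nabla\log\cev p_{t_n+g_n(\tau)}(\widehat\vy_{t_n,g_n(\tau)}^{(K)})$ to write $\bm{G}_\tau$ as a sum of five terms: (i) $\frac{1}{2}\widehat\vy_{t_n}$; (ii) $\vs^\theta_{t_n+g_n(\tau)}(\widehat\vy_{t_n})$; (iii) $\vs^\theta_{t_n+g_n(\tau)}(\widehat\vy_{t_n,g_n(\tau)}^{(K)}) - \vs^\theta_{t_n+g_n(\tau)}(\widehat\vy_{t_n,g_n(\tau)}^{(K-1)})$; (iv) the integrand of $A_{t_n}$; (v) the integrand of $B_{t_n}$. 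Using $\|a_1+\cdots+a_5\|^2 \le 5(\|a_1\|^2+\cdots+\|a_5\|^2)$, Assumption~\ref{ass:lipNN} ($\vs^\theta$ bounded by $M_{\vs}$, $L_{\vs}$-Lipschitz), and the fact that $g_n(\tau)$ runs only over grid points so that $\int_0^{h_n}\|\widehat\vy_{t_n,g_n(\tau)}^{(K)} - \widehat\vy_{t_n,g_n(\tau)}^{(K-1)}\|^2\dif\tau \le h_n\sup_{\tau\in[0,h_n]}\|\widehat\vy_{t_n,\tau}^{(K)} - \widehat\vy_{t_n,\tau}^{(K-1)}\|^2$, this gives $\E_{\cev p|_{\gF_{t_n}}}\big[\int_0^{h_n}\|\bm{G}_\tau\|^2\dif\tau\big] \lesssim h_n(M_{\vs}^2 + d) + \E[A_{t_n} + B_{t_n}] + h_n L_{\vs}^2\sup_\tau\E\|\widehat\vy_{t_n,\tau}^{(K)} - \widehat\vy_{t_n,\tau}^{(K-1)}\|^2$, where the $d$ comes from $\E_{\cev p|_{\gF_{t_n}}}\|\widehat\vy_{t_n}\|^2 = \E\|\cev\vx_{t_n}\|^2 \le 2d$ (Lemma~\ref{lem: initial bound for prob flow ode}, under Assumption~\ref{ass:pdata}).

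Finally, since $\bm{D}_0 = 0$, variation of parameters yields $\bm{D}_\tau = \int_0^\tau e^{(\tau-s)/2}\bm{G}_s\dif s + \int_0^\tau e^{(\tau-s)/2}\dif\widetilde\vw_{t_n+s}$. Bounding the drift integral by Cauchy--Schwarz against the weight $e^{(\tau-s)/2}\dif s$ and the stochastic integral by It\^o's isometry (legitimate because $\widetilde\vw$ is $\cev p|_{\gF_{t_n}}$-Brownian), then taking $\E_{\cev p|_{\gF_{t_n}}}$ and carefully collecting the constants — which is where the explicit $e^{\frac{7}{2}h_n}$ prefactor emerges — produces $\sup_{\tau\in[0,h_n]}\E_{\cev p|_{\gF_{t_n}}}\|\bm{D}_\tau\|^2 \lesssim e^{\frac{7}{2}h_n}\big(h_n(M_{\vs}^2 + 2d) + \E[A_{t_n}+B_{t_n}] + h_n L_{\vs}^2\sup_\tau\E\|\widehat\vy_{t_n,\tau}^{(K)}-\widehat\vy_{t_n,\tau}^{(K-1)}\|^2\big)$, which is the claimed estimate in the stated form.

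The main obstacle is the interplay between the change-of-measure bookkeeping of the first step and the telescoping of the second: one must verify that the Girsanov drift correction $\vdelta_{t_n}$ splits \emph{exactly} into the quantities $A_{t_n}$, $B_{t_n}$ and the consecutive-iterate gap $\widehat\vy_{t_n,\cdot}^{(K)} - \widehat\vy_{t_n,\cdot}^{(K-1)}$, so that the extra term introduced relative to a naive no-change-of-measure estimate is ultimately harmless by the exponential contraction of the Picard iteration (Lemma~\ref{lem:picard_sde}); and that the hypotheses of Girsanov's theorem are genuinely met, which is precisely where the $\gF_{t_n+\tau}$-adaptedness from Lemma~\ref{lem:adapted} is used.
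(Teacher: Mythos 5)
Your proposal is correct and follows essentially the same route as the paper: the same difference process, the same Girsanov substitution $\dif\vw = \dif\widetilde\vw - \vdelta_{t_n}\dif\tau$, the same three-way splitting of $\vdelta_{t_n}$ into the integrands of $A_{t_n}$, $B_{t_n}$ and the $L_{\vs}$-Lipschitz gap between the $K$-th and $(K-1)$-th iterates, and the same moment bound $\E\|\widehat\vy_{t_n}\|^2\leq 2d$. The only (cosmetic) difference is that you close the estimate via Duhamel plus Cauchy--Schwarz and It\^o isometry, whereas the paper applies It\^o's lemma to $\|\vz_{t_n,\tau}\|^2$, AM--GM, and Gr\"onwall to obtain the exact prefactors $e^{\frac{7}{2}h_n}$ and $3$; your route recovers the bound only up to absolute constants, which is all that is needed downstream.
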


\begin{proof}
    Let $\vz_{t_n, \tau} = \widehat\vy_{t_n, \tau}^{(1)} -  \widehat\vy_{t_n, \tau}^{(0)}$. For $k=0$, we can rewrite~\eqref{eq:diffusion_ou_aux} as 
    \begin{equation*}
        \dif \vz_{t_n, \tau}= \Bigg[\dfrac{1}{2} \left(\vz_{t_n, \tau} + \widehat\vy_{t_n, \tau}^{(0)}\right)+ \vs^\theta_{t_n+ g_n(\tau)}\Big(\widehat\vy_{t_n, g_n(\tau)}^{(0)}\Big) \Bigg]\dif \tau + \dif \vw_{t_n+\tau},
    \end{equation*}
    By applying It\^o's lemma and plugging in the expression of $\vw_{t_n+\tau}$ given by Theorem~\ref{thm:girsanov}, we have
    \begin{equation}
    \begin{aligned}
    \dif \|\vz_{t_n, \tau}\|^2 =& \Bigg[\|\vz_{t_n, \tau}\|^2 + \vz_{t_n, \tau}^\top \widehat\vy_{t_n, \tau}^{(0)}+ 2\vz_{t_n, \tau}^\top \vs^\theta_{t_n+ g_n(\tau)}\Big(\widehat\vy_{t_n, g_n(\tau)}^{(0)}\Big) + d\Bigg]\dif \tau\\ 
    &+ 2\vz_{t_n, \tau}^\top \big(
        \dif \widetilde\vw_{t_n+\tau}(\omega) - \vdelta_{t_n}(\tau, \omega)d\tau
        \big),   
    \end{aligned}
    \label{eq:sqvz}
    \end{equation}

    By integrating from $0$ to $\tau$ and taking the expectation on both sides of~\eqref{eq:sqvz}, we obtain that
    \begin{equation*}
        \begin{aligned}
            &\E_{\omega \sim \cev p|_{\gF_{t_n}}}\left[\|\vz_{t_n, \tau}\|^2\right] \\
            = \ &\E_{\omega \sim \cev p|_{\gF_{t_n}}}\left[\int_0^\tau 
            \Bigg(
                \|\vz_{t_n, \tau'}\|^2 + \vz_{t_n, \tau'}^\top \widehat\vy_{t_n, \tau'}^{(0)}+ 2\vz_{t_n, \tau'}^\top \vs^\theta_{t_n+ g_n(\tau')}\Big(\widehat\vy_{t_n, g_n(\tau')}^{(0)}\Big) + d
            \Bigg)\dif \tau'\right]\\
            + \ &2\E_{\omega \sim \cev p|_{\gF_{t_n}}}\left[\int_0^\tau \vz_{t_n, \tau'}^\top 
            \Big(
                \dif \widetilde\vw_{t_n+\tau'}(\omega) - \vdelta_{t_n}(\tau', \omega)\dif\tau'
            \Big)\right],
        \end{aligned}
    \end{equation*}
    and by AM-GM, we further have
    \begin{equation*}
        \begin{aligned}
            &\E_{\omega \sim \cev p|_{\gF_{t_n}}}\left[\|\vz_{t_n, \tau}\|^2\right] \\
            \leq \ &\E_{\omega \sim \cev p|_{\gF_{t_n}}}\left[\int_0^\tau \Bigg[\dfrac{7}{2}\|\vz_{t_n, \tau'}\|^2 + \dfrac{1}{2} \left\|\widehat\vy_{t_n, \tau'}^{(0)}\right\|^2+ \left\|\vs^\theta_{t_n+ g_n(\tau')}\Big(\widehat\vy_{t_n, g_n(\tau')}^{(0)}\Big)\right\|^2 + d + \|\vdelta_{t_n}(\tau, \omega)\|^2 \Bigg]\dif \tau'\right]\\
            \leq \ &\int_0^\tau \E_{\omega \sim \cev p|_{\gF_{t_n}}}\left[ 
                \dfrac{7}{2}\|\vz_{t_n, \tau'}\|^2
                +  \|\vdelta_{t_n}(\tau, \omega)\|^2
            \right]\dif \tau' + \left(\dfrac{1}{2} \E\left[\left\|\widehat\vy_{t_n,\tau}^{(0)}\right\|^2\right]+ M_{\vs}^2 + d\right)\tau,
        \end{aligned}
    \end{equation*}
    where $\vdelta_{t_n}(\tau, \omega)$ is defined in~\eqref{eq:transform}. Similar to ~\eqref{eq:ut_n_triangle}, we may use triangle inequality to upper bound $\|\vdelta_{t_n}(\tau, \omega)\|^2$, which implies that for any $\tau \in [0,h_n]$
    \begin{equation*}
        \begin{aligned}
            &\E_{\omega \sim \cev p|_{\gF_{t_n}}}\left[\|\vz_{t_n, \tau}\|^2\right] \\
            \leq \ &\dfrac{7}{2}\int_0^\tau \E_{\omega \sim \cev p|_{\gF_{t_n}}}\left[ 
                \|\vz_{t_n, \tau'}\|^2
            \right]\dif \tau' + \left(\dfrac{1}{2} \E\left[\left\|\widehat\vy_{t_n,\tau}^{(0)}\right\|^2\right]+ M_{\vs}^2 + d\right)\tau\\
            + \ &3\E_{\omega \sim \cev p|_{\gF_{t_n}}}\left[\int_0^\tau\left\|\vs^\theta_{t_n+ g_n(\tau)}(\widehat\vy_{t_n, g_n(\tau)}^{(K-1)}(\omega)) - \vs^\theta_{t_n+ g_n(\tau)}(\widehat\vy_{t_n, g_n(\tau)}^{(K)}(\omega))\right\|^2 \dif \tau'\right] \\
            + \ &3\E_{\omega \sim \cev p|_{\gF_{t_n}}}\left[\int_0^\tau\left\|\vs^\theta_{t_n+ g_n(\tau)}(\widehat\vy_{t_n, g_n(\tau)}^{(K)}(\omega)) - \nabla \log \cev{p}_{t_n, g_n(\tau)}(\widehat\vy_{t_n, g_n(\tau)}^{(K)}(\omega))\right\|^2\dif \tau'\right]\\
            + \ &3\E_{\omega \sim \cev p|_{\gF_{t_n}}}\left[\int_0^\tau \left\|\nabla \log \cev{p}_{t_n+g_n(\tau)}(\widehat\vy_{t_n, g_n(\tau)}^{(K)}(\omega)) - \nabla \log \cev{p}_{t_n+\tau}(\widehat\vy_{t_n+\tau}^{(K)}(\omega))\right\|^2\dif \tau'\right]\\
            \leq \ &\dfrac{7}{2}\int_0^\tau \E_{\omega \sim \cev p|_{\gF_{t_n}}}\left[\|\vz_{t_n, \tau'}\|^2
            \right]\dif \tau' + \left(\dfrac{1}{2} \E\left[\left\|\widehat\vy_{t_n,\tau}^{(0)}\right\|^2\right]+ M_{\vs}^2 + d\right)\tau \\
            + \ &3L_{\vs}^2\int_0^\tau\E_{\omega \sim \cev p|_{\gF_{t_n}}}\left[\left\|\widehat\vy_{t_n, g_n(\tau')}^{(K)}(\omega) - \widehat\vy_{t_n, g_n(\tau')}^{(K-1)}(\omega)\right\|^2\right]\dif \tau' + 3\E_{\omega \sim \cev p|_{\gF_{t_n}}}\left[A_{t_n}(\omega) + B_{t_n}(\omega)\right], 
        \end{aligned}
    \end{equation*}
    where in the second inequality above, we have used the fact that $s^\theta_{t}(\cdot)$ is $L_{\vs}$-Lipschitz for any $t$. By Gr\"onwall's inequality, we have that for any $\tau \in [0,h_n]$
    \begin{equation}
    \begin{aligned}
    \E_{\omega \sim \cev p|_{\gF_{t_n}}}\left[\|\vz_{t_n, \tau}\|^2\right] &\leq e^{\frac{7}{2}\tau}\left[\left(\dfrac{1}{2}\E\left[\left\|\widehat\vy_{t_n,\tau}^{(0)}\right\|^2\right]+ M_{\vs}^2 + d\right)\tau\right]\\
    &+ 3e^{\frac{7}{2}\tau}\E_{\omega \sim \cev p|_{\gF_{t_n}}}\left[A_{t_n}(\omega) + B_{t_n}(\omega)\right] \\
    &+ 3e^{\frac{7}{2}\tau}L_{\vs}^2\int_0^\tau\E_{\omega \sim \cev p|_{\gF_{t_n}}}\left[\left\|\widehat\vy_{t_n, g_n(\tau')}^{(K)}(\omega) - \widehat\vy_{t_n, g_n(\tau')}^{(K-1)}(\omega)\right\|^2\right]\dif\tau'. 
    \end{aligned}
    \label{eqn: z_t_n bound}
    \end{equation}

    By assumption, $\widehat\vy_{t_n,\tau}^{(0)} = \widehat\vy_{t_n}$ follows the distribution of $\cev{\vx}_{t_n} \sim \cev{p}_{t_n}$, which allows us to bound the second moment of $\widehat\vy_{t_n}$ for any $n\in[0:N]$ by Lemma~\ref{lem: initial bound for prob flow ode}:
    \begin{equation*}
        \E\left[\|\widehat\vy_{t_n}\|^2\right] = \E\left[\|\cev{\vx}_{t_n}\|^2\right] \leq 2 d.
    \end{equation*}
    Substituting~\eqref{eqn: moment bound} into~\eqref{eqn: z_t_n bound} then yields that for any $\tau \in [0, h_n]$
    \begin{equation*}
        \begin{aligned}
        \E_{\omega \sim q|_{\gF_{t_n}}}\left[\|\vz_{t_n, \tau}\|^2\right] 
        &\leq \tau e^{\frac{7}{2}\tau}\left( M_{\vs}^2 + 2d\right) + 3e^{\frac{7}{2}\tau}\E_{\omega \sim \cev p|_{\gF_{t_n}}}\left[A_{t_n}(\omega) + B_{t_n}(\omega)\right] \\
        &+ 3\tau e^{\frac{7}{2}\tau}L_{\vs}^2\sup_{\tau'\in[0, h_n]}\E_{\omega \sim \cev p|_{\gF_{t_n}}}\left[\left\|\widehat\vy_{t_n, \tau'}^{(K)}(\omega) - \widehat\vy_{t_n, \tau'}^{(K-1)}(\omega)\right\|^2\right].
        \end{aligned}
    \end{equation*}
    Taking supremum with respect to $\tau \in [0, h_n]$ on both sides above completes our proof. 
\end{proof}

As utilized in the proof of the existence of solutions of SDEs, the following lemma demonstrates the exponential convergence of the iteration defined in~\eqref{eq:diffusion_ou_aux}.

\begin{lemma}[Exponential convergence of Picard iteration in PIADM-SDE]
\label{lem:picard_sde}
    For any $n\in[0,N]$, suppose the initialization $\widehat\vy_{t_n}$ in~\eqref{eq:yhat_init} of the auxiliary process~\eqref{eq:diffusion_ou_aux} follows the distribution of $\cev{\vx}_{t_n} \sim \cev{p}_{t_n}$, then the two ending terms $\widehat\vy_{t_n,\tau}^{(K)}$ and $\widehat\vy_{t_n,\tau}^{(K-1)}$ of the sequence $\{\widehat\vy_{t_n,\tau}^{(k)}\}_{k\in [0:K-1]}$ satisfy the following exponential convergence rate 
    \begin{equation*}
        \begin{aligned}
            &\sup_{\tau \in [0, h_n]}\E_{\omega \sim \cev p|_{\gF_{t_n}}}\left[\left\|\widehat\vy_{t_n,\tau}^{(K)}(\omega) - \widehat\vy_{t_n,\tau}^{(K-1)}(\omega) \right\|_{2}^2\right] \\
            \leq \ &\dfrac{\left(L_{\vs}^2 h_n e^{2h_n}\right)^{K-1} h e^{\frac{7}{2} h_n}\left(M_{\vs}^2 + 2d\right)}{1 - 3\left(L_{\vs}^2 h_n e^{2h_n}\right)^{K-1}e^{\frac{7}{2}h_n}h_nL_{\vs}^2} +\dfrac{3\left(L_{\vs}^2 h_n e^{2h_n}\right)^{K-1} e^{\frac{7}{2} h_n}\E_{\omega \sim \cev p|_{\gF_{t_n}}}\left[A_{t_n}(\omega)+B_{t_n}(\omega)\right]}{1 - 3\left(L_{\vs}^2 h_n e^{2h_n}\right)^{K-1}e^{\frac{7}{2}h_n}h_nL_{\vs}^2}.
        \end{aligned}
    \end{equation*}
\end{lemma}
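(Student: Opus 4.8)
The plan is the classical Picard‑contraction argument applied to the \emph{differences} of consecutive iterates, exploiting crucially that every iterate $\widehat\vy_{t_n,\cdot}^{(k)}$ in~\eqref{eq:diffusion_ou_aux} is driven by the \emph{same} realization of $(\vw_{t_n+\tau})_\tau$. First I would set $\vz_{t_n,\tau}^{(k)}:=\widehat\vy_{t_n,\tau}^{(k+1)}(\omega)-\widehat\vy_{t_n,\tau}^{(k)}(\omega)$ and subtract the two copies of~\eqref{eq:diffusion_ou_aux}. For $k\in[1:K-1]$ the Brownian increments cancel and, using the initialization~\eqref{eq:yhat_init}, one is left with the pathwise ODE
\begin{equation*}
    \dif\vz_{t_n,\tau}^{(k)}=\Bigl[\tfrac12\vz_{t_n,\tau}^{(k)}+\vs^\theta_{t_n+g_n(\tau)}\bigl(\widehat\vy_{t_n,g_n(\tau)}^{(k)}\bigr)-\vs^\theta_{t_n+g_n(\tau)}\bigl(\widehat\vy_{t_n,g_n(\tau)}^{(k-1)}\bigr)\Bigr]\dif\tau,\qquad \vz_{t_n,0}^{(k)}=\vzero .
\end{equation*}
Differentiating $\|\vz_{t_n,\tau}^{(k)}\|^2$, applying $2\va^\top\vb\le\|\va\|^2+\|\vb\|^2$ together with the $L_{\vs}$‑Lipschitzness of $\vs^\theta$ (Assumption~\ref{ass:lipNN}), and noting $\widehat\vy_{t_n,g_n(\tau)}^{(k)}-\widehat\vy_{t_n,g_n(\tau)}^{(k-1)}=\vz_{t_n,g_n(\tau)}^{(k-1)}$, I obtain $\tfrac{\dif}{\dif\tau}\|\vz_{t_n,\tau}^{(k)}\|^2\le 2\|\vz_{t_n,\tau}^{(k)}\|^2+L_{\vs}^2\|\vz_{t_n,g_n(\tau)}^{(k-1)}\|^2$. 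Integrating from $0$, taking expectations under $\cev p|_{\gF_{t_n}}$, bounding $\|\vz_{t_n,g_n(\tau)}^{(k-1)}\|^2$ by its supremum over $[0,h_n]$ (valid since $g_n(\tau)\in[0,h_n]$), and invoking Gr\"onwall's inequality yields the one‑step contraction
\begin{equation*}
    \sup_{\tau\in[0,h_n]}\E_{\omega\sim\cev p|_{\gF_{t_n}}}\bigl[\|\vz_{t_n,\tau}^{(k)}\|^2\bigr]\le L_{\vs}^2 h_n e^{2h_n}\sup_{\tau\in[0,h_n]}\E_{\omega\sim\cev p|_{\gF_{t_n}}}\bigl[\|\vz_{t_n,\tau}^{(k-1)}\|^2\bigr]
\end{equation*}
for every $k\in[1:K-1]$; the case $k=0$ is \emph{not} of this form, because $\widehat\vy_{t_n,\cdot}^{(0)}$ is constant so $\vz^{(0)}$ carries a genuine Brownian term — and that base case is precisely what Lemma~\ref{lem:disc} controls.

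Iterating the contraction $K-1$ times descends from $\vz^{(K-1)}=\widehat\vy^{(K)}-\widehat\vy^{(K-1)}$ down to $\vz^{(0)}=\widehat\vy^{(1)}-\widehat\vy^{(0)}$, giving $\sup_\tau\E[\|\widehat\vy_{t_n,\tau}^{(K)}-\widehat\vy_{t_n,\tau}^{(K-1)}\|^2]\le(L_{\vs}^2 h_n e^{2h_n})^{K-1}\sup_\tau\E[\|\widehat\vy_{t_n,\tau}^{(1)}-\widehat\vy_{t_n,\tau}^{(0)}\|^2]$. Substituting the bound from Lemma~\ref{lem:disc} for the right‑hand factor then produces a \emph{self‑referential} inequality for $U:=\sup_\tau\E[\|\widehat\vy_{t_n,\tau}^{(K)}-\widehat\vy_{t_n,\tau}^{(K-1)}\|^2]$, namely $U\le(L_{\vs}^2 h_n e^{2h_n})^{K-1}\bigl[h_n e^{\frac72 h_n}(M_{\vs}^2+2d)+3e^{\frac72 h_n}\E[A_{t_n}+B_{t_n}]+3e^{\frac72 h_n}h_n L_{\vs}^2\,U\bigr]$, the last term reproducing $U$ because the Girsanov change of measure behind Lemma~\ref{lem:disc} turns $\vw_{t_n+\tau}$ into a drifted process whose drift $\vdelta_{t_n}$ involves iteration $K$. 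Solving this linear inequality for $U$, and relaxing $h_n\le h$ in the numerator, gives exactly the claimed estimate.

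The argument is otherwise routine; the one point requiring care is this self‑referential closing step — one must recognize that Lemma~\ref{lem:disc} and the present conclusion have to be solved \emph{simultaneously}, and verify that the resulting denominator $1-3(L_{\vs}^2 h_n e^{2h_n})^{K-1}e^{\frac72 h_n}h_n L_{\vs}^2$ is strictly positive, which is guaranteed by the hypothesis $L_{\vs}^2 h_n e^{\frac72 h_n}\ll 1$ carried over from Theorem~\ref{thm:sde} (so that the bracketed coefficient is far below $1$ and the division is legitimate).
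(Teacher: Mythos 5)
Your proposal is correct and follows essentially the same route as the paper's proof: pathwise cancellation of the shared Brownian increments for $k\ge 1$, a Gr\"onwall argument yielding the one-step contraction factor $L_{\vs}^2 h_n e^{2h_n}$, iteration down to the base case $\widehat\vy^{(1)}-\widehat\vy^{(0)}$ controlled by Lemma~\ref{lem:disc}, and then solving the resulting self-referential linear inequality for $\sup_\tau\E[\|\widehat\vy^{(K)}-\widehat\vy^{(K-1)}\|^2]$ under $L_{\vs}^2 h_n e^{\frac72 h_n}\ll 1$. You also correctly identify the one subtle point — that the Girsanov drift in Lemma~\ref{lem:disc} reintroduces the $K$-th iterate, forcing the simultaneous solve — which is exactly how the paper closes the argument.
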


\begin{proof}
    For each $\omega\in\Omega$ conditioned on the filtration $\gF_{t_n}$, subtracting~\eqref{eq:diffusion_ou_aux} from the process as defined by
    \begin{equation}
        \dif \widehat\vy_{t_n,\tau}^{(k)}(\omega) = \left[\dfrac{1}{2} \widehat\vy_{t_n,\tau}^{(k)}(\omega) + \vs^\theta_{t_n+ g_n(\tau)}\Big(\widehat\vy_{t_n, g_n(\tau)}^{(k-1)}(\omega)\Big) \right]\dif \tau + \dif \vw_{t_n+\tau}(\omega),
        \label{eq:vyt_n}
    \end{equation}
    we have
    \begin{equation*}
        \begin{aligned}
             &\dif \left(\widehat\vy_{t_n,\tau}^{(k+1)}(\omega) - \widehat\vy_{t_n,\tau}^{(k)}(\omega) \right) \\
            = \ &\left[\dfrac{1}{2} \left(\widehat\vy_{t_n,\tau}^{(k+1)}(\omega) - \widehat\vy_{t_n,\tau}^{(k)}(\omega)\right) + \vs^\theta_{t_n+ g_n(\tau)}\big(\widehat\vy_{t_n, g_n(\tau)}^{(k)}(\omega)\big) -  \vs^\theta_{t_n+g_n(\tau)}\big(\widehat\vy_{t_n, g_n(\tau)}^{(k-1)}(\omega)\big) \right]\dif \tau,
        \end{aligned}
    \end{equation*}
    where the diffusion term $\dif \vw_{t_n+\tau}$ cancels each other out. Now we may use the formula above to compute derivative $\frac{\dif}{\dif\tau'}\left\|\widehat\vy_{t_n,\tau'}^{(k+1)}(\omega) - \widehat\vy_{t_n,\tau'}^{(k)}(\omega) \right\|^2$ explicitly, integrate it from $\tau'=0$ to $\tau$, and obtain the following inequality
    \begin{equation*}
        \begin{aligned}
            &\left\|\widehat\vy_{t_n,\tau}^{(k+1)}(\omega) - \widehat\vy_{t_n,\tau}^{(k)}(\omega)\right\|^2 \\
            = \ &\int_0^\tau  2\left(\widehat\vy_{t_n,\tau'}^{(k+1)}(\omega) - \widehat\vy_{t_n,\tau'}^{(k)}(\omega)\right)^\top \left(\vs^\theta_{t_n+ g_n(\tau')}\big(\widehat\vy_{t_n, g_n(\tau')}^{(k)}(\omega)\big) -  \vs^\theta_{t_n+g_n(\tau')}\big(\widehat\vy_{t_n, g_n(\tau')}^{(k-1)}(\omega)\big)\right) \dif \tau' \\
            + \ &\int_0^\tau \left\|\widehat\vy_{t_n,\tau'}^{(k+1)}(\omega) - \widehat\vy_{t_n,\tau'}^{(k)}(\omega)\right\|^2\dif \tau'\\
            \leq \ & 2\int_0^\tau\left\|\widehat\vy_{t_n,\tau'}^{(k+1)}(\omega) - \widehat\vy_{t_n,\tau'}^{(k)}(\omega)\right\|^2 \dif \tau' \\
            + \ &\int_0^\tau\left\| \vs^\theta_{t_n+ g_n(\tau')}\big(\widehat\vy_{t_n, g_n(\tau')}^{(k)}(\omega)\big) -  \vs^\theta_{t_n+g_n(\tau')}\big(\widehat\vy_{t_n, g_n(\tau')}^{(k-1)}(\omega)\big)\right\|^2 \dif \tau'\\
            \leq \ &2 \int_0^\tau\left\|\widehat\vy_{t_n,\tau'}^{(k+1)}(\omega) - \widehat\vy_{t_n,\tau'}^{(k)}(\omega)\right\|^2 \dif \tau' + L_{\vs}^2\int_0^\tau\left\| \widehat\vy_{t_n, g_n(\tau')}^{(k)}(\omega) -  \widehat\vy_{t_n, g_n(\tau')}^{(k-1)}(\omega)\right\|^2 \dif \tau'.
        \end{aligned}
    \end{equation*}
    By Gr\"onwall's inequality, we have
    \begin{equation}
    \begin{aligned}
    \left\|\widehat\vy_{t_n,\tau}^{(k+1)}(\omega) - \widehat\vy_{t_n,\tau}^{(k)}(\omega) \right\|^2
    &\leq L_{\vs}^2 e^{2\tau} \int_0^\tau \left\| \widehat\vy_{t_n, g_n(\tau')}^{(k)}(\omega) -  \widehat\vy_{t_n, g_n(\tau')}^{(k-1)}(\omega)\right\|^2 \dif \tau'. 
    \end{aligned}
    \label{eq:gronwall}
    \end{equation}
    Taking expectation on both sides above further implies that for any $\tau \in [0, h_n]$,
    \begin{equation}
        \begin{aligned}
        &\E_{\omega \sim \cev p|_{\gF_{t_n}}}\left[\left\|\widehat\vy_{t_n,\tau}^{(k+1)}(\omega) - \widehat\vy_{t_n,\tau}^{(k)}(\omega) \right\|^2\right]\\
        \leq \ &L_{\vs}^2 e^{2\tau} \int_0^\tau \E_{\omega \sim \cev p|_{\gF_{t_n}}}\left[\left\| \widehat\vy_{t_n, g_n(\tau')}^{(k)}(\omega) -  \widehat\vy_{t_n, g_n(\tau')}^{(k-1)}(\omega)\right\|^2\right] \dif \tau' \\
        \leq \ &L_{\vs}^2\tau e^{2\tau} \sup_{\tau' \in [0,\tau]}\E_{\omega \sim \cev p|_{\gF_{t_n}}}\left[\left\|\widehat\vy_{t_n,\tau'}^{(k)}(\omega) - \widehat\vy_{t_n,\tau'}^{(k-1)}(\omega) \right\|^2\right].
        \end{aligned}
    \label{eq:gronwall_expectation}
    \end{equation}
    Furthermore, we take supremum over $\tau\in[0, h_n]$ on both sides above and iterate~\eqref{eq:gronwall} over $k \in \mathbb{N}$, which indicates
    \begin{equation}
        \begin{aligned}
        &\sup_{\tau\in[0, h_n]}\E_{\omega \sim \cev p|_{\gF_{t_n}}}\left[\left\|\widehat\vy_{t_n,\tau}^{(k+1)}(\omega) - \widehat\vy_{t_n,\tau}^{(k)}(\omega)\right\|^2 \right]\\
        \leq \ &L_{\vs}^2 h_n e^{2h_n} \sup_{\tau \in [0, h_n]}\E_{\omega \sim \cev p|_{\gF_{t_n}}}\left[\left\|\widehat\vy_{t_n,\tau}^{(k)}(\omega) - \widehat\vy_{t_n,\tau'}^{(k-1)}(\omega) \right\|^2\right]\\
        \leq \ &\left(L_{\vs}^2 h_n e^{2h_n}\right)^k  \sup_{\tau\in[0, h_n]}\E\left[\left\| \widehat\vy_{t_n, \tau}^{(1)}(\omega) -  \widehat\vy_{t_n, \tau}^{(0)}(\omega)\right\|^2\right]\\
        \leq & \left(L_{\vs}^2 h_n e^{2h_n}\right)^k h e^{\frac{7}{2} h_n}\left(M_{\vs}^2 + 2d\right) + 3\left(L_{\vs}^2 h_n e^{2h_n}\right)^ke^{\frac{7}{2} h_n}\E_{\omega \sim \cev p|_{\gF_{t_n}}}\left[A_{t_n}(\omega) + B_{t_n}(\omega)\right] \\
        + \ &3\left(L_{\vs}^2 h_n e^{2h_n}\right)^ke^{\frac{7}{2}h_n}h_nL_{\vs}^2\sup_{\tau\in[0, h_n]}\E_{\omega \sim \cev p|_{\gF_{t_n}}}\left[\left\|\widehat\vy_{t_n, \tau}^{(K)}(\omega) - \widehat\vy_{t_n, \tau}^{(K-1)}(\omega)\right\|^2\right], 
        \end{aligned}
    \label{eqn: intermediate exp convergence}
    \end{equation}
    where the last inequality follows from Lemma~\ref{lem:disc}. By rearranging the inequality above, setting $k=K-1$ and using the assumption that $L_{\vs}^2h_n e^{2h_n} \ll 1$, we obtain
    \begin{equation}
    \begin{aligned}
    &\sup_{\tau\in[0, h_n]}\E_{\omega \sim \cev p|_{\gF_{t_n}}}\left[\left\|\widehat\vy_{t_n, \tau}^{(K)}(\omega) - \widehat\vy_{t_n, \tau}^{(K-1)}(\omega)\right\|^2\right]\\
    \leq & \dfrac{\left(L_{\vs}^2 h_n e^{2h_n}\right)^{K-1} h e^{\frac{7}{2} h_n}\left(M_{\vs}^2 + 2d\right) + 3\left(L_{\vs}^2 h_n e^{2h_n}\right)^{K-1}e^{\frac{7}{2} h_n}\E_{\omega \sim \cev p|_{\gF_{t_n}}}\left[A_{t_n}(\omega) + B_{t_n}(\omega)\right]}{1 - 3\left(L_{\vs}^2 h_n e^{2h_n}\right)^{K-1}e^{\frac{7}{2}h_n}h_nL_{\vs}^2},
    \end{aligned}    
    \end{equation}
    as desired.
\end{proof}

The following lemma from~\cite{benton2023linear} bounds the expectation of the term $A_{t_n}(\omega)$ in~\eqref{eq:ut_n_triangle}:
\begin{lemma}[{\cite[Section 3.1]{benton2023linear}}]
    We have 
    \begin{equation*}
        \E_{\omega \sim \cev p|_{\gF_{t_n}}}\left[  A_{t_n} (\omega)  \right] \lesssim  \epsilon d h_{n}, \quad \text{for}\ n\in[0:N-2],\ \text{and}\ \E_{\omega \sim \cev p|_{\gF_{t_n}}}\left[  A_{t_{N-1}} (\omega)  \right] \lesssim  \epsilon d \log \eta^{-1},
    \end{equation*}
    where $\eta$ is the parameter for early stopping.
\label{lem:stochastic_localization}
\end{lemma}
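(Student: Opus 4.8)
The plan is to reduce $\E_{\omega\sim\cev p|_{\gF_{t_n}}}[A_{t_n}(\omega)]$ to a regularity estimate along the \emph{true} backward process and then apply Lemmas~\ref{lem:backward} and~\ref{lem:forward} blockwise, essentially reproducing~\cite[Section 3.1]{benton2023linear}. Recall $A_{t_n}$ is evaluated with the block initialized at $\widehat\vy_{t_n}\sim\cev\vx_{t_n}\sim\cev p_{t_n}$, as in Lemmas~\ref{lem:disc} and~\ref{lem:picard_sde}. By the Girsanov change-of-measure argument of Section~\ref{app:error_block_sde}, under $\cev p|_{\gF_{t_n}}$ the process $(\widehat\vy_{t_n,\tau}^{(K)})_{\tau\in[0,h_n]}$ solves the true backward SDE~\eqref{eq:diffusion_ou_t_n} started from $\cev\vx_{t_n}$, hence has the same law as $(\cev\vx_{t_n+\tau})_{\tau\in[0,h_n]}$; in particular the marginals of $\widehat\vy_{t_n,\tau}^{(K)}$ and $\widehat\vy_{t_n,g_n(\tau)}^{(K)}$ are $\cev p_{t_n+\tau}$ and $\cev p_{t_n+g_n(\tau)}$. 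Substituting into the definition of $A_{t_n}$, and using $g_n(\tau)=\tau_{n,m}$ on $[\tau_{n,m},\tau_{n,m+1}]$, yields
\begin{equation*}
    \E_{\omega\sim\cev p|_{\gF_{t_n}}}\left[A_{t_n}(\omega)\right]=\sum_{m=0}^{M_n-1}\int_{\tau_{n,m}}^{\tau_{n,m+1}}\E\left[\left\|\nabla\log\cev p_{t_n+\tau_{n,m}}(\cev\vx_{t_n+\tau_{n,m}})-\nabla\log\cev p_{t_n+\tau}(\cev\vx_{t_n+\tau})\right\|^2\right]\dif\tau .
\end{equation*}

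Next I would bound each summand. Integrating the differential inequality of Lemma~\ref{lem:backward} in $t$ from $t=t_n+\tau_{n,m}$ (where both sides vanish) to $t=t_n+\tau$, then integrating over $\tau\in[\tau_{n,m},\tau_{n,m+1}]$, and passing to forward quantities ($\cev p_u=p_{T-u}$, $\cev\vx_u=\vx_{T-u}$, $\sigma_t^2=1-e^{-t}$) via Lemma~\ref{lem:forward} ($\E[\|\nabla\log p_t\|^2]\lesssim d\sigma_t^{-2}$, $\E[\|\nabla^2\log p_t\|_F^2]\lesssim d\sigma_t^{-4}+2\tfrac{\dif}{\dif t}(\sigma_t^{-4}\E[\tr\mSigma_t])$, $\E[\tr\mSigma_t]\lesssim d\wedge d\sigma_t^2$), the $m$-th summand is bounded, up to constants, by
\begin{equation*}
    \epsilon_{n,m}^2\,d\,\sigma_{T-t_n-\tau_{n,m+1}}^{-4}+\epsilon_{n,m}\Big(\sigma_{T-t_n-\tau_{n,m}}^{-4}\E[\tr\mSigma_{T-t_n-\tau_{n,m}}]-\sigma_{T-t_n-\tau_{n,m+1}}^{-4}\E[\tr\mSigma_{T-t_n-\tau_{n,m+1}}]\Big).
\end{equation*}
Summed over $m$, the first piece contributes $\lesssim\epsilon\,d\int_0^{h_n}\sigma_{T-t_n-\tau}^{-4}\dif\tau$ after $\epsilon_{n,m}\le\epsilon$; the second piece telescopes exactly when the step is constant (and is treated by Abel summation together with~\eqref{eq:step_size_decay} otherwise), with $\E[\tr\mSigma_t]\lesssim d\sigma_t^2$ turning it into a boundary term $\lesssim\epsilon\,d\,\sigma_{T-t_n}^{-2}$.

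Finally I would split into the two regimes. For $n\in[0:N-2]$ the block stays at distance $T-t_n-\tau\ge T-t_{N-1}=\Omega(1)$ from the terminal time, so $\sigma_{T-t_n-\tau}^{-2},\sigma_{T-t_n-\tau}^{-4}=\gO(1)$, the step size is the constant $\epsilon$, and both pieces are $\lesssim\epsilon d h_n$. For the last block $n=N-1$ the distance $T-t_{N-1}-\tau$ decreases to $\eta$, so $\sigma_{T-t_{N-1}-\tau}^{-2}\asymp(T-t_{N-1}-\tau)^{-1}$ may blow up; here the decaying step size~\eqref{eq:step_size_decay}, $\epsilon_{N-1,m}\lesssim\epsilon(h-\tau_{N-1,m+1})$ with $h-\tau_{N-1,m+1}$ comparable to $T-t_{N-1}-\tau$, cancels one power of the distance, turning each piece into $\lesssim\epsilon d\,(T-t_{N-1}-\tau)^{-1}$, whose integral over the block is $\lesssim\epsilon d\log(h_{N-1}/\eta)\lesssim\epsilon d\log\eta^{-1}$. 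This gives both claimed bounds, matching~\cite[Section 3.1]{benton2023linear}.

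The main obstacle is the last-block estimate: one must verify that~\eqref{eq:step_size_decay} is calibrated to the distance to the \emph{terminal} time $T$ (not merely to the block endpoint $t_N=T-\eta$), so that the $\sim1/\eta$ growth of $\sigma_t^{-2}$ and $\sigma_t^{-4}$ near the data end is absorbed into a single logarithm, and that the $\tfrac{\dif}{\dif t}(\sigma_t^{-4}\E[\tr\mSigma_t])$ contribution under a non-uniform step size still collapses (via Abel summation and $\E[\tr\mSigma_t]\lesssim d\wedge d\sigma_t^2$) to a harmless boundary term. Away from the data end the computation is routine.
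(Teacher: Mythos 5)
Your proposal is correct and follows essentially the same route as the paper's proof: reduce $\E[A_{t_n}]$ to the true backward process via the Girsanov identification, integrate the differential inequality of Lemma~\ref{lem:backward} with the bounds of Lemma~\ref{lem:forward}, handle the covariance term by Abel summation into a boundary term, and split into the constant-step blocks versus the last block where the decaying step size~\eqref{eq:step_size_decay} (calibrated to $h-\tau = T - t_{N-1} - \tau$ since $T = Nh$) converts the $1/(T-t)$ blow-up into the stated logarithm. No gaps.
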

\begin{proof}
    Notice that
    \begin{equation*}
        \begin{aligned}
            &\E_{\omega \sim \cev p|_{\gF_{t_n}}}\left[  A_{t_n} (\omega)  \right] \\
            = & \E_{\omega \sim \cev p|_{\gF_{t_n}}}\left[\int_0^{h_n}\left\| 
                \nabla \log \cev p_{t_n+g_n(\tau)}\big(\widehat\vy_{t_n, g_n(\tau)}^{(K)}(\omega)\big) - \nabla \log \cev{p}_{t_n+\tau}\big(\widehat\vy_{t_n, g_n(\tau)}^{(K)}(\omega)\big) 
            \right\|^2 \dif \tau \right]\\
            =& \E_{\omega \sim \cev p|_{\gF_{t_n}}}\left[
            \sum_{m=0}^{M_n} \int_{\tau_{n,m}}^{\tau_{n,m+1}} \left\| 
                \nabla \log \cev p_{t_n+\tau_{n,m}}\big(\widehat\vy_{t_n, \tau_{n,m}}^{(K)}(\omega)\big) - \nabla \log \cev{p}_{t_n+\tau}\big(\widehat\vy_{t_n, \tau}^{(K)}(\omega)\big) 
            \right\|^2 \dif \tau\right],\\
            =& 
                \sum_{m=0}^{M_n} \int_{\tau_{n,m}}^{\tau_{n,m+1}} \E_{\omega \sim \cev p|_{\gF_{t_n}}}\left[\left\| 
                    \nabla \log \cev p_{t_n+\tau_{n,m}}\big(\cev\vx_{t_n+\tau}(\omega)\big) - \nabla \log \cev{p}_{t_n+\tau}\big(\cev\vx_{t_n+\tau}(\omega)\big) 
                \right\|^2 \right]\dif \tau,
        \end{aligned}
    \end{equation*}
    where for the last equality, we use the fact that the process $\widehat\vy_{t_n, \tau}^{(K)}(\omega)$ follows the backward SDE with the true score function under the measure $\cev p$. In the following, we drop the superscript $\omega \sim \cev p|_{\gF_{t_n}}$ of the expectation for simplicity.
    
    By Lemma~\ref{lem:backward} and~\ref{lem:forward}, we have
    \begin{equation*}
        \begin{aligned}
                &\E\left[\left\| 
                    \nabla \log \cev p_{t_n+\tau_{n,m}}\big(\cev\vx_{t_n+\tau}(\omega)\big) - \nabla \log \cev{p}_{t_n+\tau}\big(\cev\vx_{t_n+\tau}(\omega)\big) 
                \right\|^2\right] \\
            \leq & \int_0^{\tau} \left(\dfrac{1}{2} \E\left[\|\nabla \log \cev p_{t_n+\tau_{n,m}}\big(\cev\vx_{t_n+\tau_{n,m}}(\omega)\big)\|^2\right] + \E\left[\|\nabla^2 \log \cev p_{t_n+\tau'}\big(\cev\vx_{t_n+\tau'}(\omega)\big)\|_F^2\right]\right)\dif \tau'  \\
            \leq & \int_0^{\tau}\left(\dfrac{1}{2} d \cev \sigma_{\tau'}^{-2} + d \cev \sigma_{\tau'}^{-4}\right)\dif \tau' + \left(\cev \sigma_{t_n+\tau_{n,m}}^{-4} \E\left[\tr\cev\mSigma_{t_n+\tau_{n,m}}\right] - \cev \sigma_{t_n + \tau}^{-4} \E\left[\tr\cev\mSigma_{t_n+\tau}\right]\right),
        \end{aligned}
    \end{equation*}
    Now noticing that 
    \begin{equation*}
        \cev \sigma_t^2 = \sigma_{T-t}^2 \lesssim T-t,
    \end{equation*}
    we further have 
    \begin{equation*}
        \begin{aligned}
                &\int_{\tau_{n,m}}^{\tau_{n,m+1}} \E\left[\left\| 
                    \nabla \log \cev p_{t_n+\tau_{n,m}}\big(\cev\vx_{t_n+\tau}(\omega)\big) - \nabla \log \cev{p}_{t_n+\tau}\big(\cev\vx_{t_n+\tau}(\omega)\big) 
                \right\|^2\right] \dif \tau\\
             \lesssim& \int_{\tau_{n,m}}^{\tau_{n,m+1}} \int_0^{\tau'} \dfrac{d}{(T -t_n- \tau_{n, m+1})^2} \dif \tau' \dif \tau+ \dfrac{\epsilon_{n,m}\left(\E\left[\tr\cev\mSigma_{t_n+\tau_{n,m}}\right] - \E\left[\tr\cev\mSigma_{t_n+\tau_{n,m+1}}\right]\right)}{(T-t_n - \tau_{n,m})^2} \\
            \lesssim& d \dfrac{\epsilon_{n,m}^2}{(T-t_n - \tau_{n,m+1})^2} + \dfrac{\epsilon\left(\E\left[\tr\cev\mSigma_{t_n+\tau_{n,m}}\right] - \E\left[\tr\cev\mSigma_{t_n+\tau_{n,m+1}}\right]\right)}{T -t_n- \tau_{n,m}} ,
        \end{aligned}
    \end{equation*}
    and thus 
    \begin{equation*}
        \begin{aligned}
            &\sum_{m=0}^{M_n} \int_{\tau_{n,m}}^{\tau_{n,m+1}} \E\left[\left\| 
            \nabla \log \cev p_{t_n+\tau_{n,m}}\big(\cev\vx_{t_n+\tau}(\omega)\big) - \nabla \log \cev{p}_{t_n+\tau}\big(\cev\vx_{t_n+\tau}(\omega)\big) 
            \right\|^2 \right]\dif \tau\\
            \lesssim & d \sum_{m=0}^{M_n} \dfrac{\epsilon_{n,m}^2}{(T-t_n - \tau_{n,m+1})^2} + \sum_{m=0}^{M_n} \dfrac{\epsilon}{T -t_n- \tau_{n,m}} \left(\E\left[\tr\cev\mSigma_{t_n+\tau_{n,m}}\right] - \E\left[\tr\cev\mSigma_{t_n+\tau_{n,m+1}}\right]\right)\\
            \leq & d \epsilon^2 M_n + \dfrac{\epsilon\E\left[\tr\cev\mSigma_{t_n+\tau_{n,0}}\right]}{T-t_n - \tau_{n,0}}  + \sum_{m=0}^{M_n}\dfrac{\epsilon\epsilon_{n,m}\E\left[\tr\cev\mSigma_{t_n+\tau_{n,m}}\right] }{(T - t_n - \tau_{n,m+1})(T - t_n - \tau_{n,m})}\\
            \leq & d \epsilon^2 M_n + \epsilon d + d \epsilon^2  M_n \lesssim d\epsilon^2 M_n.
        \end{aligned}
    \end{equation*}

    For $n\in[0, N-2]$, we have $M_{n} \epsilon = h_{n}$ and thus $\E_{\omega \sim \cev p|_{\gF_{t_n}}}\left[A_{t_n}(\omega)\right]\lesssim \epsilon d h_n$, and for $n=N-1$, we have 
    \begin{equation*}
        M_N \lesssim \int_\eta^h \dfrac{1}{\epsilon \tau} \dif \tau = \log \eta^{-1} \epsilon^{-1}
    \end{equation*}
    and thus $\E_{\omega \sim \cev p|_{\gF_{t_n}}}\left[A_{t_{N-1}}(\omega)\right]\lesssim \epsilon^2 d M_n \lesssim \epsilon d \log \eta^{-1}$.
\end{proof}

\subsection{Overall Error Bound}

\begin{proof}[Proof of Theorem~\ref{thm:sde}]

We first continue the computation in~\eqref{eq:klut_n} and~\eqref{eq:ut_n_triangle}:
\begin{equation*}
    \begin{aligned}
        &\KL(\cev{p}_{t_{n+1}} \| \widehat q_{t_{n+1}}) \leq \KL(\cev{p}_{t_n} \| \widehat q_{t_n}) + \E_{\omega \sim \cev p|_{\gF_{t_n}}}\left[\dfrac{1}{2} \int_0^{h_n} \|\vdelta_{t_n}(\tau, \omega) \|^2 \dif \tau \right]\\
        \leq & \KL(\cev{p}_{t_n} \| \widehat q_{t_n})  + 3 
            \E_{\omega\sim  \cev p|_{\gF_{t_n}}}\left[A_{t_n}(\omega) + B_{t_n}(\omega)\right]\\
        +& 3 \E_{\omega\sim  \cev p|_{\gF_{t_n}}}\left[\int_0^{h_n} \left\|
                \vs^\theta_{t_n+ g_n(\tau)}\big(\widehat\vy_{t_n, g_n(\tau)}^{(K)}(\omega)\big) - \vs^\theta_{t_n+ g_n(\tau)}\big(\widehat\vy_{t_n, g_n(\tau)}^{(K-1)}(\omega)\big)
            \right\|^2 \dif \tau \right]
        \\
        \leq & \KL(\cev{p}_{t_n} \| \widehat q_{t_n}) + 3 \E_{\omega \sim \cev p|_{\gF_{t_n}}}\left[A_{t_n}(\omega) + B_{t_n}(\omega)+ L_{\vs}^2 \int_0^{h_n} \left\|
                \widehat\vy_{t_n, g_n(\tau)}^{(K)}(\omega) - \widehat\vy_{t_n, g_n(\tau)}^{(K-1)}(\omega)
            \right\|^2 \dif \tau \right]\\
        \leq & \KL(\cev{p}_{t_n} \| \widehat q_{t_n}) + 3 \E_{\omega \sim \cev p|_{\gF_{t_n}}}\left[A_{t_n}(\omega) + B_{t_n}(\omega)+ h_n L_{\vs}^2 \sup_{\tau \in [0, h_n]} \left\|
            \widehat\vy_{t_n, \tau}^{(K)}(\omega) - \widehat\vy_{t_n, \tau}^{(K-1)}(\omega)
        \right\|^2 \right].
    \end{aligned}
\end{equation*}

Then plugging in the result of Lemma~\ref{lem:picard_sde}, we have
\begin{equation*}
    \begin{aligned}
        &\KL(\cev{p}_{t_{n+1}} \| \widehat q_{t_{n+1}}) \\
        \leq & \KL(\cev{p}_{t_n} \| \widehat q_{t_n}) + 3 \E_{\omega \sim \cev p|_{\gF_{t_n}}}\left[A_{t_n}(\omega) + B_{t_n}(\omega)\right] + 3h_n L_{\vs}^2  \dfrac{\left(L_{\vs}^2 h_n e^{2h_n}\right)^{K-1} h e^{\frac{7}{2} h_n}\left(M_{\vs}^2 + 2d\right)}{1 - 3\left(L_{\vs}^2 h_n e^{2h_n}\right)^{K-1} e^{\frac{7}{2}h_n}h_nL_{\vs}^2} 
        \\
        +& h_n L_{\vs}^2\dfrac{9\left(L_{\vs}^2 h_n e^{2h_n}\right)^{K-1} e^{\frac{7}{2} h_n}\E_{\omega \sim \cev p|_{\gF_{t_n}}}\left[A_{t_n}(\omega) + B_{t_n}(\omega)\right]}{1 - 3\left(L_{\vs}^2 h_n e^{2h_n}\right)^{K-1}e^{\frac{7}{2}h_n}h_nL_{\vs}^2}\\
        \lesssim & \KL(\cev{p}_{t_n} \| \widehat q_{t_n}) +  \dfrac{1 + e^{-K} h_n e^{h_n}}{1 - e^{-K} h_n e^{h_n}}\E_{\omega \sim \cev p|_{\gF_{t_n}}}\left[ A_{t_n}(\omega) + B_{t_n}(\omega)\right] + e^{-K} h_n^2 e^{h_n} d\\
        \lesssim & \KL(\cev{p}_{t_n} \| \widehat q_{t_n}) + \E_{\omega \sim \cev p|_{\gF_{t_n}}}\left[A_{t_n}(\omega)+ B_{t_n}(\omega)\right] + e^{-K} h_n^2 e^{h_n} d,
    \end{aligned}
\end{equation*}
where we used the assumption that $L_{\vs}^2 h_n e^{\frac{7}{2}h_n} \ll 1$.

The term $ \sum_{n=0}^{N-1}\E_{\omega \sim \cev p|_{\gF_{t_n}}} [B_{t_n}(\omega)]$ is bounded by Assumption~\ref{ass:L2acc} as
\begin{equation*}
    \begin{aligned}
        &\sum_{n=0}^{N-1}\E_{\omega \sim \cev p|_{\gF_{t_n}}} [B_{t_n}(\omega)]\\
        \leq &\E_{\omega \sim \cev p|_{\gF_{t_n}}} \left[\sum_{n=0}^{N-1}  \int_0^{h_n} \left\| 
                \vs^\theta_{t_n+ g_n(\tau)}\big(\widehat\vy_{t_n, g_n(\tau)}^{(K)}(\omega)\big)  - \nabla \log \cev p_{t_n+g_n(\tau)}\big(\widehat\vy_{t_n, g_n(\tau)}^{(K)}(\omega)\big)
            \right\|^2 \dif \tau \right]\\
        = &  \E_{\omega \sim \cev p|_{\gF_{t_n}}} \left[\sum_{n=0}^{N-1} \sum_{m = 0}^{M_n-1}\epsilon_{n,m} \left\| 
                \vs^\theta_{t_n+ \tau_{n,m}}\big(\widehat\vy_{t_n, \tau_{n,m}}^{(K)}(\omega)\big)  - \nabla \log \cev p_{t_n+\tau_{n,m}}\big(\widehat\vy_{t_n, \tau_{n,m}}^{(K)}(\omega)\big)
            \right\|^2\right]\\
        = &  \E_{\omega \sim \cev p|_{\gF_{t_n}}} \left[\sum_{n=0}^{N-1} \sum_{m = 0}^{M_n-1}\epsilon_{n,m} \left\| 
            \vs^\theta_{t_n+ \tau_{n,m}}\big(\cev\vx_{t_n+\tau}(\omega)\big)  - \nabla \log \cev p_{t_n+\tau_{n,m}}\big(\cev\vx_{t_n+\tau}(\omega)\big)
        \right\|^2\right]\leq \delta_2^2,
    \end{aligned}
\end{equation*}
where the last equality is because the process $\widehat\vy_{t_n, \tau}^{(K)}(\omega)$ under measure $\cev p$ follows the backward SDE~\eqref{eq:diffusion_ou_t_n}.

Thus, by Theorem~\ref{thm:KL} and plugging in the iteration relations above 
\begin{equation*}
    \begin{aligned}
             &\KL(p_{\eta} \| \widehat q_{t_N}) = \KL(\cev p_{t_N} \| \widehat q_{t_N})\\
        \leq &\KL(\cev p_0 \| \widehat q_0) +  \sum_{n=0}^{N-1} \left(\E_{\omega \sim \cev p|_{\gF_{t_n}}}\left[A_{t_n}(\omega)+ B_{t_n}(\omega)\right] + e^{-K} h_n^2 e^{h_n} d\right)\\
        \leq &\KL(\cev p_0 \| \widehat q_0) +  \sum_{n=0}^{N-2}\epsilon d h_{n} + \epsilon d \log \eta^{-1}  + \sum_{n=0}^{N-1}\E_{\omega \sim \cev p|_{\gF_{t_n}}} [B_{t_n}(\omega)] + e^{-K}h_n^2 e^{h_n} d N \\
        \leq & d e^{-T } + \epsilon d (T + \log \eta^{-1}) + \delta_2^2 + e^{-K} d T 
        \leq d e^{-T } + \epsilon d T + \delta^2 + e^{-K} d T,
    \end{aligned}
\end{equation*}
as $T \gtrsim \log \eta^{-1}$, $h_n \lesssim 1$, and $\delta_2 \lesssim \delta$, and then it is straightforward to see that the following choices of parameters 
\begin{gather*}
    T = \gO(\log (d \delta^{-2}) ),\quad h = \Theta(1),\quad N = \gO\left(\log (d \delta^{-2})\right),\\
    \epsilon = \Theta\left(d^{-1}\delta^{2}\log^{-1} (d \delta^{-2}) \right),\quad M = \gO\left(d\delta^{-2}\log (d \delta^{-2}) \right),\\
    K = \widetilde\gO(\log (d \delta^{-2})),
\end{gather*}
would yield an overall error of $\gO(\delta^2)$.
\end{proof}

\section{Details of Probability Flow ODE Implementation}
\label{app:proofode}

In this section, we provide the details of the parallelized algorithm for the probability flow ODE formulation of diffusion models. We first introduce the algorithm and define the necessary notations, then discuss the error analysis during the predictor and corrector steps, respectively, and finally provide the proof of Theorem~\ref{thm:ode}.

\subsection{Algorithm}
\label{app:algorithm_ode}

\begin{algorithm}[p] 
    \caption{PIADM-ODE}
    \label{alg:ode}
    \Indm
    \KwIn{$\widehat \vy_0 \sim \widehat q_0 = \gN(0,\mI_d)$, 
    a discretization scheme ($T$, $(h_n)_{n=1}^N$ and $(\tau_{n,m})_{n\in[1:N], m\in[0:M_n]}$) satisfying~\eqref{eq:step_size_decay}, parameters for the corrector step ($T^\dagger$,$N^\dagger$, $h^\dagger$, $M^\dagger$, $\epsilon^\dagger$),
    the depth of iteration $K$ and $K^\dagger$, the learned NN-based score $\vs^\theta_t(\cdot)$.}
    \KwOut{A sample $\widehat \vy_T \sim \widehat q_T \approx \cev{p}_T$.}
    \Indp
    \For{$n = 0$ \KwTo $N-1$}{
        $\triangleright$ {\tt Predictor Step (Section~\ref{sec:predictor})}\\
        $\widehat\vy^{(0)}_{t_n, \tau_{n,m}} \leftarrow \widehat\vy_{t_n}$ for $m\in[0:M_n]$\;
        \For{$k = 1$ \KwTo $K$}{
            $\widehat\vy^{(k)}_{t_n, 0} \leftarrow \widehat\vy_{t_n}$\;
            \For{$m = 1$ \KwTo $M_n$ in parallel}{
                \begin{equation}
                \hspace{-6em}
                    \begin{aligned}
                    \widehat\vy^{(k)}_{t_n, \tau_{n,m}} \leftarrow &\frac{1}{2}e^{\frac{\tau_{n,m}}{2}} \widehat\vy_{t_n,0}^{(k-1)}\\
                    &+ \frac{1}{2}\textstyle\sum_{j = 0}^{m-1} e^{\frac{\tau_{n,m} - \tau_{n,j+1}}{2}}\left(e^{\epsilon_{n,j}} - 1\right) \vs^\theta_{t_n + \tau_{n,j}}(\widehat\vy^{(k-1)}_{t_n, \tau_{n,j}}) 
                    \end{aligned}
                    \label{eq:exp_int_alg_ode}
                \end{equation}
            }
        }
        $\triangleright$ {\tt Corrector Step (Section~\ref{sec:corrector})}\\
        $\widehat \vu_{t_n,0}^{(0)}\leftarrow \widehat\vy^{(K)}_{t_n,h_n}$ and $\widehat \vv_{t_n,0}^{(0)} \sim \gN(0, \mI_d)$\;
        \For{$n^\dagger = 0$ \KwTo $N^\dagger - 1$}{
            $(\widehat\vu_{t_n,n^\dagger h^\dagger,m^\dagger \epsilon^\dagger}^{(0)}, \widehat\vv_{t_n,n^\dagger h^\dagger,m^\dagger \epsilon^\dagger}^{(0)})\leftarrow (\widehat \vu_{t_n, n^\dagger h^\dagger}, \widehat \vv_{t_n, n^\dagger h^\dagger})$ for $m^\dagger\in[0:M^\dagger]$\;
            \For{$k^\dagger = 1$ \KwTo $K^\dagger$}{
                $(\widehat\vu_{t_n,n^\dagger h^\dagger,0}^{(k^\dagger)}, \widehat\vv_{t_n,n^\dagger h^\dagger,0}^{(k^\dagger)})\leftarrow (\widehat \vu_{t_n, n^\dagger h^\dagger}, \widehat \vv_{t_n, n^\dagger h^\dagger})$\;
                $\vxi_{j^\dagger} \sim \gN\left(\vzero, 2\gamma(1 + \gamma^{-2})(1-e^{-\gamma \epsilon^\dagger})^2 e^{- 2\gamma (  (M^\dagger - j^\dagger + 1) \epsilon^\dagger)} \mI_d\right) $ for $j^\dagger\in[0:M^\dagger]$\;
                \For{$m^\dagger = 1$ \KwTo $M^\dagger$ in parallel}{\begin{equation}
                    \hspace{-2em}
                        \begin{aligned}
                            &\left[\begin{matrix}
                                \widehat\vu_{t_n,n^\dagger h^\dagger,m^\dagger \epsilon^\dagger}^{(k^\dagger)}\\
                                \widehat\vv_{t_n,n^\dagger h^\dagger,m^\dagger \epsilon^\dagger}^{(k^\dagger)}
                            \end{matrix}\right] \leftarrow 
                            \mG(m^\dagger \epsilon^\dagger) \left[\begin{matrix}
                                \widehat\vu_{t_n,n^\dagger h^\dagger,0}^{(k^\dagger-1)}\\
                                \widehat\vv_{t_n,n^\dagger h^\dagger,0}^{(k^\dagger-1)}
                            \end{matrix}\right] \\
                            &\ + \sum_{j^\dagger=0}^{m-1}
                            \mG((m^\dagger - j^\dagger - 1)\epsilon^\dagger)
                                \left( \mI_d- \mG(\epsilon^\dagger) \right)
                                \left[\begin{matrix}
                                    \vzero \\
                                    \vs^\theta_{t_{n+1}}(\widehat\vu_{t_n,n^\dagger h^\dagger,j^\dagger \epsilon^\dagger}^{(k^\dagger-1)})
                                \end{matrix}\right]\\
                            &\ + \sum_{j^\dagger=0}^{m-1}
                            \mG((m^\dagger - j^\dagger - 1)\epsilon^\dagger)
                                \left[\begin{matrix}
                                    \vzero \\
                                    \vxi_{j^\dagger}
                                \end{matrix}\right];
                        \end{aligned}
                        \label{eq:corrector_step_update}
                    \end{equation}
                }
            }
            $(\widehat\vu_{t_n, (n^\dagger+1)h^\dagger}, \widehat\vv_{t_n, (n^\dagger+1)h^\dagger}) \leftarrow (\widehat\vu_{t_n,n^\dagger h^\dagger, h^\dagger}^{(K^\dagger)}, \widehat\vv_{t_n,n^\dagger h^\dagger, h^\dagger}^{(K^\dagger)})$\;
        }
        $\widehat \vy_{t_{n+1}}\leftarrow \widehat{\vu}_{t_n, T^\dagger}$\;
    }
\end{algorithm}

In the parallelized inference algorithm for diffusion models in the probability flow ODE formulation, we adopt the same discretization scheme as in Section~\ref{sec:algorithm_sde} and the exponential integrator for all updating rules. For each block, we first run a \emph{predictor step}, which consists of running the probability flow ODE in parallel. Then we run a \emph{corrector step}, which runs an underdamped Langevin dynamics in parallel to correct the distribution of the samples. The algorithm is summarized In Algorithm~\ref{alg:ode}.

\paragraph{Parallelized Predictor Step} The parallelization strategies in the predictor step are similar to those in the SDE algorithm (Algorithm~\ref{alg:sde}). The only difference here is that instead of applying Picard iteration to the backward SDE as in~\eqref{eq: yhat exp}, we apply Picard iteration to the probability flow ODE as in~\eqref{eq:diffusion_ou_aux_ode}, which does not require i.i.d. samples from standard Gaussian distribution. As shown in Lemma~\ref{lem:aux_ode}, the update rule in the predictor step~\eqref{eq:exp_int_alg_ode} in Algorithm~\ref{alg:ode} is equivalent to running the auxiliary predictor process~\eqref{eq:diffusion_ou_aux_ode}. The auxiliary predictor process takes in the result from the previous corrector step (or the initialization if $n=0$) and outputs $\widehat\vy_{t_n, h_n}^{(K)}$ as the initialization for the next corrector step.

\paragraph{Parallelized Corrector Step} The parallelization of the underdamped Langevin dynamics is similar to that mentioned in Section~\ref{sec:parallel}. Given a sample resulting from the predictor step, we initialize the auxiliary corrector process (Definition~\ref{def:uvhat})
which is an underdamped Langevin dynamics with the initialization $\widehat\vu_{t_n, 0} = \vy_{t_{n}, h_{n}}^{(K)}$ and the augmented variable $\widehat\vv_{t_n, 0}  \sim \gN(0,\mI_d)$ representing the momentum. 

We run the underdamped Langevin dynamics for time $T^\dagger$, which is set to be of order $\Omega(1)$ so that it is large enough to correct the distribution of the samples (\emph{cf.} Lemma~\ref{lem:lemma9}) while being comparably short to ensure numerical stability (\emph{cf.} Theorem~\ref{thm: bound on KL between pi and pi hat}). Following a similar strategy as in Section~\ref{sec:parallel} and in Algorithm~\ref{alg:sde}, we further divide the time horizon $T^\dagger$ into $N^\dagger$ blocks with step size $h^\dagger$, and for each block the block length $h^\dagger$ into $M^\dagger$ steps with step size $\epsilon^\dagger$. Within each block, we run the underdamped Langevin dynamics in parallel for $K^\dagger$ iterations. As shown in Lemma~\ref{lem:aux_ode_corrector}, the update rule in the corrector step~\eqref{eq:corrector_step_update} in Algorithm~\ref{alg:ode} is equivalent to running the auxiliary corrector process~\eqref{eq:uvhat}.

\begin{figure}[!t]
    \begin{tikzpicture}
        
        \node (A) at (0, 6) {$\widehat q_{t_n} = \widehat q_{t_n, 0}$};
        \node (B) at (3.5, 6) {$\widehat q_{t_n, h_n} = \widehat \pi_{t_n, 0}^{\widehat \vu}$};
        \node (C) at (10, 6) {$ \widehat \pi_{t_n, T^\dagger}^{\widehat \vu} = \widehat q_{t_{n+1}}$};

        \draw[->] (A) -- (B) node[midway, above] {$\widehat \vy_{t_n, \tau}^{(k)}$};
        \draw[->] (B) -- (C) node[midway, above] {$\left(\widehat \vu_{t_n, n^\dagger h^\dagger, \tau^\dagger}^{(k)}, \widehat \vv_{t_n, n^\dagger h^\dagger, \tau^\dagger}^{(k)}\right)$};

        \node (D) at (0, 1.5) {$\cev p_{t_n}$};
        \node (E) at (3.5, 3) {$\widetilde q_{t_n, h_n} =  \pi_{t_n, 0}^{\vu} = \widetilde \pi_{t_n, 0}^{\widetilde \vu}$};
        \node (F) at (10, 4) {$\widetilde \pi_{t_n, T^\dagger}^{\widetilde \vu}$};
        \node (G) at (10, 2) {$\pi_{t_n, T^\dagger}^{\vu}$};

        \draw[->] (D) to[out=50, in=180] node[midway, left] {$\widetilde\vy_{t_n, \tau}^{(k)}$} (E) ;
        \draw[->] (E) to[out=20,in=180] node[midway, above] {$\left(\widetilde \vu_{t_n, n^\dagger h^\dagger, \tau^\dagger}^{(k)}, \widetilde \vv_{t_n, n^\dagger h^\dagger, \tau^\dagger}^{(k)}\right)$} (F);
        \draw[->] (E) to[out=-20,in=180] node[midway, below] {$\left(\vu_{t_n, n^\dagger h^\dagger+ \tau^\dagger}, \vv_{t_n, n^\dagger h^\dagger+ \tau^\dagger}\right)$} (G);

        \node (I) at (3.5, 0) {$\cev p_{t_{n+1}} =  \pi_{t_n, 0}^{*,\vu^*}$};
        \node (J) at (10, 0) {$\pi_{t_n, T^\dagger}^{*,\vu^*} = \cev p_{t_{n+1}} $};

        \draw[->] (D) to[in=180, out=-50] node[midway, left] {$\cev{\vx}_{t_n+\tau}$} (I) ;
        \draw[->] (I) -- node[midway, below] {$\left(\vu^*_{t_n, n^\dagger h^\dagger+ \tau^\dagger}, \vv^*_{t_n, n^\dagger h^\dagger+ \tau^\dagger}\right)$} (J);

        \draw[<->, dashed] (E) -- (I) node[midway, left, align=center]{
            $W_2(\widetilde q_{t_n, h_n} , \cev p_{t_{n+1}})$\\
            (Theorem~\ref{thm: wasserstein 2 bound on q tilde and p inverse})
        };

        \draw[<->, dashed] (F) -- (G) node[midway, right, align=center]{
            $\KL(\pi_{t_n, T^\dagger}\| \widetilde\pi_{t_n, T^\dagger})$\\
            (Theorem~\ref{thm: bound on KL between pi and pi hat})
        };

        \draw[<->, dashed] (G) -- (J) node[midway, right, align=center]{
            $\TV(\pi_{t_n, T^\dagger}, \cev{p}_{t_{n+1}}) $\\
            (Lemma~\ref{lem:lemma9})
        };

        \draw[<->, dashed] (C) -- (F) node[midway, right, align=center]{
            $\TV(\widehat\pi^{\widehat\vu}_{t_n, T^\dagger}, \widetilde\pi^{\widetilde{\vu}}_{t_n,T^\dagger}) $\\
            (Theorem~\ref{thm:KL})
        };
    \end{tikzpicture}
    \caption{Illustration of the proof pipeline of Theorem~\ref{thm:ode} for PIADM-ODE within the $n$-th block.}
    \label{fig:piadmode}
\end{figure}

In the following subsections, we proceed to provide theoretical guarantees for the algorithm. 

\subsection{Parallelized Predictor Step}
\label{sec:predictor}

\begin{definition}[Auxiliary Predictor Process]
    For any $n\in[0:N-1]$, we define the auxiliary predictor process $(\widehat\vy_{t_n, \tau}^{(k)})_{\tau \in [0, h_n]}$ as the solution to the following ODE recursively for $k\in[0:K-1]$: 
    \begin{equation}
        \label{eq:diffusion_ou_aux_ode}
        \dif \widehat\vy_{t_n, \tau}^{(k+1)} = \Bigg[\dfrac{1}{2} \widehat\vy_{t_n, \tau}^{(k+1)} + \dfrac{1}{2} \vs^\theta_{t_n+ g_n(\tau)}\Big(\widehat\vy_{t_n, g_n(\tau)}^{(k)}\Big) \Bigg]\dif \tau,
    \end{equation}
    with the initial condition
    \begin{equation}
        \widehat\vy_{t_n,\tau}^{(0)}\equiv \widehat\vy_{t_n}\ \text{for}\ \tau\in[0, h_n],
        \quad \text{and}\quad 
        \vy_{t_n,0}^{(k)}\equiv \widehat\vy_{t_n}\ \text{for}\ k\in[1:K]
        \label{eq:yhat_init_ode}
    \end{equation}
    where $\widehat\vy_{t_n}=\widehat\vu_{t_{n-1},N^\dagger h^\dagger}$ if $n\in[1:N-1]$ and $\widehat\vy_{t_0} \sim \gN(0,\mI_d)$. We will also denote the probability distribution of $ \widehat\vy_{t_n, \tau}^{(K)}$ as $\widehat q_{t_n, \tau}$.
\end{definition}

\begin{definition}[Interpolating Process]
    For any $n\in[0:N-1]$, we define the interpolating process $(\widetilde \vy_{t_n,\tau}^{(k)})_{\tau\in[0, h_n]}$ as the solution to the following ODE recursively for $k\in[0:K-1]$:
    \begin{equation}
    \label{eq:prob_flow_ode_parallel_interpolating}
        \dif \widetilde{\vy}_{t_n, \tau}^{(k+1)} = \Bigg[\dfrac{1}{2}\widetilde{\vy}_{t_n, \tau}^{(k+1)} + \dfrac{1}{2}\vs^\theta_{t_n+ g_n(\tau)}\Big(\widetilde{\vy}_{t_n, g_n(\tau)}^{(k)}\Big) \Bigg]\dif \tau,
    \end{equation}
    with initial condition 
    \begin{equation*}
        \widetilde{\vy}_{t_n,\tau}^{(0)} \equiv \widetilde{\vy}_{t_n,0}^{(0)}  \quad \text{for} \ \tau \in [0,h_n], \ \text{and} \ \widetilde{\vy}_{t_n,0}^{(k)} \equiv \widetilde{\vy}_{t_n,0}^{(0)} \quad \text{for} \ k\in[1:K],
    \end{equation*}
    where $\widetilde{\vy}_{t_n,0}^{(0)} \sim \cev p_{t_n}$. We will also denote the probability distribution of $\widetilde \vy_{t_n, \tau}^{(K)}$ as $\widetilde q_{t_n, \tau}$.
\end{definition}

Similar to the equivalence between~\eqref{eqn: exponential integrator under picard iteration in algo}
 and~\eqref{eq:diffusion_ou_aux}, we have the following lemma:
\begin{lemma}[Equivalence between~\eqref{eq:exp_int_alg_ode} and~\eqref{eq:diffusion_ou_aux_ode}]
\label{lem:aux_ode}
For any $n \in [0:N-1]$, the update rule~\eqref{eq:exp_int_alg_ode} in Algorithm~\ref{alg:ode} is equivalent to the exact solution of~\eqref{eq:diffusion_ou_aux_ode}
for any $k \in [0:K-1]$ and $\tau \in [0,h_n]$. 
\end{lemma}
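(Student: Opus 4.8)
The plan is to follow the route of the proof of Lemma~\ref{lem:aux_sde} essentially verbatim; it is in fact strictly simpler here, because the auxiliary predictor process~\eqref{eq:diffusion_ou_aux_ode} is a \emph{deterministic} linear ODE, so there is no It\^o correction, no stochastic integral, and no appeal to It\^o isometry. Fix $n\in[0:N-1]$ and $k\in[0:K-1]$, and regard the block initialization $\widehat\vy_{t_n}$ as a fixed vector (its randomness, inherited from the previous corrector step, is irrelevant to the pathwise identity being claimed). The first step is to apply the integrating factor $e^{-\tau/2}$ to~\eqref{eq:diffusion_ou_aux_ode}, which yields
\begin{equation*}
\dif\!\left(e^{-\frac{\tau}{2}}\widehat\vy_{t_n,\tau}^{(k+1)}\right)
= e^{-\frac{\tau}{2}}\!\left[\dif\widehat\vy_{t_n,\tau}^{(k+1)} - \frac{1}{2}\widehat\vy_{t_n,\tau}^{(k+1)}\dif\tau\right]
= \frac{1}{2}\, e^{-\frac{\tau}{2}}\, \vs^\theta_{t_n+g_n(\tau)}\!\Big(\widehat\vy_{t_n,g_n(\tau)}^{(k)}\Big)\dif\tau .
\end{equation*}

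Next I would integrate this identity from $0$ to $\tau$ and split the integral over the grid cells $[\tau_{n,j},\tau_{n,j+1})$, on each of which $g_n\equiv\tau_{n,j}$, so that the frozen factor $\vs^\theta_{t_n+\tau_{n,j}}\big(\widehat\vy^{(k)}_{t_n,\tau_{n,j}}\big)$ pulls out of the integral; the only computation needed is $\int_a^b e^{-\tau'/2}\dif\tau' = 2\big(e^{-a/2}-e^{-b/2}\big)$. Multiplying through by $e^{\tau/2}$ then produces, for every $\tau\in[0,h_n]$, a closed form for $\widehat\vy_{t_n,\tau}^{(k+1)}$ in terms of $\widehat\vy_{t_n,0}^{(k+1)}$ and the values $\vs^\theta_{t_n+\tau_{n,j}}(\widehat\vy^{(k)}_{t_n,\tau_{n,j}})$, with coefficients built from $e^{\frac{\tau\wedge\tau_{n,j+1}-\tau\wedge\tau_{n,j}}{2}}-1$ and $e^{\frac{0\vee(\tau-\tau_{n,j+1})}{2}}$ — exactly the expression appearing midway through the proof of Lemma~\ref{lem:aux_sde}, with the Wiener-integral summand absent.

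Finally I would specialize this closed form to $\tau=\tau_{n,m}$: the $j\ge m$ terms vanish since $\tau_{n,m}\wedge\tau_{n,j}=\tau_{n,m}\wedge\tau_{n,j+1}=\tau_{n,m}$, and for $j<m$ the coefficient collapses, via the same elementary rewriting of exponentials as in Lemma~\ref{lem:aux_sde} (e.g.\ $e^{(\tau_{n,m}-\tau_{n,j})/2}-e^{(\tau_{n,m}-\tau_{n,j+1})/2}=e^{(\tau_{n,m}-\tau_{n,j+1})/2}(e^{\epsilon_{n,j}/2}-1)$), to the coefficient appearing in~\eqref{eq:exp_int_alg_ode}; together with the initial conditions~\eqref{eq:yhat_init_ode} (which supply $\widehat\vy_{t_n,0}^{(k+1)}=\widehat\vy_{t_n}$) this gives the claimed equivalence in the forward direction after the index shift $k\mapsto k+1$. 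The converse is immediate: \eqref{eq:diffusion_ou_aux_ode} is a linear ODE with bounded, piecewise-constant-in-time forcing, hence has a unique solution, so the closed-form update and the ODE describe the same process. I expect no genuine obstacle: the only care needed is the bookkeeping of the truncated grid points $\tau\wedge\tau_{n,j}$, together with the observation that the $\tfrac{1}{2}$ multiplying $\vs^\theta$ in~\eqref{eq:diffusion_ou_aux_ode} — the signature of the probability-flow ODE as opposed to the SDE — is exactly what halves the update coefficients relative to the SDE scheme~\eqref{eqn: exponential integrator under picard iteration in algo}.
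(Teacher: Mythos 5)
Your proposal is correct and follows essentially the same route as the paper's proof: integrating factor $e^{-\tau/2}$, integration over $[0,\tau]$ split across the grid cells where $g_n$ is constant, the elementary integral $\int_a^b e^{-\tau'/2}\dif\tau'=2(e^{-a/2}-e^{-b/2})$, multiplication by $e^{\tau/2}$, and evaluation at $\tau=\tau_{n,m}$. Your derived coefficient $e^{\epsilon_{n,j}/2}-1$ agrees with what the paper's own computation produces (the factor $\tfrac12(e^{\epsilon_{n,j}}-1)$ printed in~\eqref{eq:exp_int_alg_ode} appears to be a typo in the algorithm box, matching only to first order in $\epsilon_{n,j}$).
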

\begin{proof}

Rewriting~\eqref{eq:diffusion_ou_aux_ode} and multiplying $e^{-\frac{\tau}{2}}$ on both sides yield
\begin{equation*}
    \dif\left[e^{-\frac{\tau}{2}}\widehat\vy_{t_n, \tau}^{(k+1)}\right] = e^{-\frac{\tau}{2}}\left[\dif\widehat\vy_{t_n,\tau}^{(k+1)} - \frac{1}{2}\widehat\vy_{t_n,\tau}^{(k+1)}\dif \tau\right]= \frac{e^{-\frac{\tau}{2}}}{2}\vs^\theta_{t_n+g_n(\tau)}\Big(\widehat\vy_{t_n,g_n(\tau)}^{(k)}\Big)\dif \tau 
\end{equation*}
Integrating on both sides from $0$ to $\tau$ implies
\begin{equation*}
\begin{aligned}
&e^{-\frac{\tau}{2}}\widehat\vy^{(k+1)}_{t_n,\tau}- \widehat\vy^{(k+1)}_{t_n,0} 
= \int_{0}^{\tau}\frac{e^{-\frac{\tau}{2}}}{2}\vs^\theta_{t_n+g_n(\tau')}\Big(\widehat\vy_{t_n,g_n(\tau')}^{(k)}\Big)\dif \tau'\\
=&\dfrac{1}{2}\sum_{m=0}^{M_n}\int_{\tau\wedge t_{n,m}}^{\tau \wedge \tau_{n, m+1}}e^{-\frac{\tau'}{2}} \vs^\theta_{t_n+\tau_{n,m}}\Big(\vy_{t_n,\tau_{n,m}}^{(k)}\Big)\dif \tau'\\
=&\sum_{m=0}^{M_n}\left(e^{-\frac{\tau\wedge \tau_{n,m}}{2}} - e^{-\frac{\tau \wedge \tau_{n,m+1}}{2}}\right)\vs^{\theta}_{t_n+\tau_{n,j}}\Big(\widehat\vy^{(k)}_{t_n,\tau_{n,m}}\Big),
\end{aligned}   
\end{equation*}
and then multiplying $e^{\frac{\tau}{2}}$ on both sides above yields
\begin{equation*}
    \begin{aligned}
        \widehat\vy^{(k+1)}_{t_n,\tau} = e^{\frac{\tau}{2}} \widehat\vy^{(k+1)}_{t_n,0} 
        &+ \sum_{m=0}^{M_n}
        \left(e^{\frac{\tau \wedge \tau_{n, m+1} - \tau \wedge \tau_{n,m}}{2}} - 1 \right)e^{\frac{0 \vee (\tau - \tau_{n, m+1})}{2}} 
        \vs^{\theta}_{t_n+\tau_{n,m}}\Big(\widehat\vy^{(k)}_{t_n,\tau_{n,m}}\Big).
    \end{aligned}
\end{equation*}
Plugging in $\tau = \tau_{n, m}$ gives us \eqref{eq:exp_int_alg_ode}, as desired. 
\end{proof}

\begin{lemma}[Error between the interpolating process and the true process]
\label{lem: ode block error1}
Under the Picard iteration, we have that the ending process $\{\widehat\vy_{t_n,\tau}^{(K)}\}_{\tau \in [0,h_n]}$ satisfies the following exponential convergence rate 
\begin{equation*}
    \sup_{\tau\in[0, h_n]}\E\left[\left\|\widetilde{\vy}_{t_n,\tau}^{(K)} - \cev{\vx}_{t_n+\tau}\right\|^2\right] \leq 
        3d \left(\dfrac{ h_n^2 e^{h_n+ \frac{3}{2}}L_{\vs}^2}{2}\right)^{K}  
        + \frac{e^{h_n+ \frac{3}{2}}h_n/2}{1-h_n^2 e^{h_n+ \frac{3}{2}}L_{\vs}^2/2}
        \Big(h_n \delta_\infty^2 + \E[D_{t_n}]\Big),
 \end{equation*}
where
\begin{equation*}
    D_{t_n}:=\int_0^{h_n}\left\|\vs^\theta_{t_n+ g_n(\tau')}\big(\cev{\vx}_{t_n+ g_n(\tau')}\big) -  \vs^\theta_{t_n+ \tau'}(\cev{\vx}_{t_n+ \tau'})\right\|^2\dif \tau'.
\end{equation*}
\end{lemma}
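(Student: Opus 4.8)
The plan is to run a Picard-contraction estimate for the predictor iteration \eqref{eq:prob_flow_ode_parallel_interpolating} against the true probability-flow ODE, exploiting that the latter is deterministic so that (unlike the SDE analogue, Lemma~\ref{lem:picard_sde}) no stochastic-calculus machinery is needed. I would couple the interpolating family with the true trajectory by choosing the common initial value $\widetilde{\vy}_{t_n,0}^{(0)} = \cev{\vx}_{t_n}\sim\cev{p}_{t_n}$; recall that the true backward process in this implementation solves $\dif\cev{\vx}_{t_n+\tau} = \big[\tfrac12\cev{\vx}_{t_n+\tau} + \tfrac12\nabla\log\cev{p}_{t_n+\tau}(\cev{\vx}_{t_n+\tau})\big]\dif\tau$ and hence keeps the marginals $\cev{\vx}_{t_n+\tau}\sim\cev p_{t_n+\tau}$. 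Since \eqref{eq:prob_flow_ode_parallel_interpolating} freezes $\widetilde{\vy}_{t_n,0}^{(k)}\equiv\widetilde{\vy}_{t_n,0}^{(0)}$, the difference $\vz_\tau^{(k)} := \widetilde{\vy}_{t_n,\tau}^{(k)} - \cev{\vx}_{t_n+\tau}$ satisfies $\vz_0^{(k)} = \vzero$ for all $k$ and obeys the linear ODE $\tfrac{\dif}{\dif\tau}\vz_\tau^{(k)} = \tfrac12\vz_\tau^{(k)} + \tfrac12\vr_\tau^{(k)}$ with forcing $\vr_\tau^{(k)} := \vs^\theta_{t_n+g_n(\tau)}\big(\widetilde{\vy}_{t_n,g_n(\tau)}^{(k-1)}\big) - \nabla\log\cev{p}_{t_n+\tau}(\cev{\vx}_{t_n+\tau})$.

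By variation of constants $\vz_\tau^{(k)} = \tfrac12\int_0^\tau e^{(\tau-\tau')/2}\vr_{\tau'}^{(k)}\dif\tau'$, so Cauchy--Schwarz together with $\int_0^\tau e^{(\tau-\tau')/2}\dif\tau' = 2(e^{\tau/2}-1)\le h_n e^{h_n/2}$ (using $e^x-1\le xe^x$) yields $\|\vz_\tau^{(k)}\|^2 \le \tfrac14 h_n e^{h_n}\int_0^{h_n}\|\vr_{\tau'}^{(k)}\|^2\dif\tau'$ for every $\tau\in[0,h_n]$. Next I would decompose the forcing by inserting $\vs^\theta_{t_n+g_n(\tau')}(\cev{\vx}_{t_n+g_n(\tau')})$ and $\vs^\theta_{t_n+\tau'}(\cev{\vx}_{t_n+\tau'})$: the first piece $\vs^\theta_{t_n+g_n(\tau')}(\widetilde{\vy}_{t_n,g_n(\tau')}^{(k-1)}) - \vs^\theta_{t_n+g_n(\tau')}(\cev{\vx}_{t_n+g_n(\tau')})$ is bounded by $L_{\vs}\|\vz_{g_n(\tau')}^{(k-1)}\|$ via Assumption~\ref{ass:lipNN}; the squared norm of the second piece $\vs^\theta_{t_n+g_n(\tau')}(\cev{\vx}_{t_n+g_n(\tau')}) - \vs^\theta_{t_n+\tau'}(\cev{\vx}_{t_n+\tau'})$ integrates over $\tau'\in[0,h_n]$ to exactly $D_{t_n}$; and the expected squared norm of the third piece $\vs^\theta_{t_n+\tau'}(\cev{\vx}_{t_n+\tau'}) - \nabla\log\cev{p}_{t_n+\tau'}(\cev{\vx}_{t_n+\tau'})$ is at most $\delta_\infty^2$ by Assumption~\ref{ass:Linftyacc} (using $\cev{\vx}_{t_n+\tau'}\sim\cev p_{t_n+\tau'}$). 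Applying $\|\va+\vb+\vc\|^2\le 3(\|\va\|^2+\|\vb\|^2+\|\vc\|^2)$, taking expectations, using $\int_0^{h_n}\E\|\vz_{g_n(\tau')}^{(k-1)}\|^2\dif\tau' \le h_n\sup_{\tau}\E\|\vz_\tau^{(k-1)}\|^2$, and then the supremum over $\tau$ gives, with $a_k := \sup_{\tau\in[0,h_n]}\E\|\vz_\tau^{(k)}\|^2$, the one-step recursion $a_k \le \tfrac34 L_{\vs}^2 h_n^2 e^{h_n}\,a_{k-1} + \tfrac34 h_n e^{h_n}\big(h_n\delta_\infty^2 + \E[D_{t_n}]\big)$, and absorbing $3\le e^{3/2}$ into the constants replaces $\tfrac34$ by $\tfrac12 e^{3/2}$.

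It then remains to unroll the recursion. The base case is $a_0 = \sup_{\tau}\E\|\cev{\vx}_{t_n} - \cev{\vx}_{t_n+\tau}\|^2 \le 3d$ by Lemma~\ref{lem: initial bound for prob flow ode}. Writing $c := \tfrac12 L_{\vs}^2 h_n^2 e^{h_n+3/2}$, which is $<1$ under the standing hypothesis $L_{\vs}^2 h_n^2 e^{h_n}\ll 1$, and iterating $a_k \le c\,a_{k-1} + b$ with $b := \tfrac12 h_n e^{h_n+3/2}(h_n\delta_\infty^2 + \E[D_{t_n}])$ gives $a_K \le 3d\,c^K + b\sum_{j=0}^{K-1}c^j \le 3d\,c^K + \tfrac{b}{1-c}$, which is precisely the asserted bound.

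I expect the main (though essentially routine) obstacle to be the bookkeeping in the three-way split of $\vr^{(k)}$: the middle term must be arranged to match exactly the one-step oscillation of the learned score $\vs^\theta$ along the true trajectory that defines $D_{t_n}$, with the remaining gap to $\nabla\log\cev{p}$ being the pointwise score-estimation error controlled by Assumption~\ref{ass:Linftyacc}; and one must pass to $\sup_\tau$ before iterating, since the Lipschitz term is evaluated at the lagged argument $g_n(\tau')$ rather than at $\tau'$. Everything else — the variation-of-constants identity, the Grönwall-free integration, and matching the exponential constants — uses only elementary estimates.
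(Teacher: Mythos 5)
Your proof is correct in structure and reproduces the paper's constants exactly, but it diverges from the paper's argument in two ways worth noting. First, the mechanics: you integrate the linear ODE for $\vz_\tau^{(k)}$ by variation of constants and then apply Cauchy--Schwarz, whereas the paper differentiates $\|\widetilde{\vy}_{t_n,\tau}^{(k+1)}-\cev{\vx}_{t_n+\tau}\|^2$, applies AM--GM with weights $(1+\tfrac{3}{2h_n})$ and $\tfrac{h_n}{2}$, and invokes Gr\"onwall; these are interchangeable and both land on the recursion $a_k\le \tfrac12 L_{\vs}^2h_n^2e^{h_n+3/2}a_{k-1}+\tfrac12 h_ne^{h_n+3/2}(h_n\delta_\infty^2+\E[D_{t_n}])$ with base case $a_0\le 3d$ from Lemma~\ref{lem: initial bound for prob flow ode}. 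Second, and more substantively, your three-way split of the forcing differs from the paper's four-way split. The paper inserts $\vs^\theta_{t_n+g_n(\tau')}(\cev\vx_{t_n+g_n(\tau')})$ and then $\nabla\log\cev p_{t_n+g_n(\tau')}(\cev\vx_{t_n+g_n(\tau')})$, so Assumption~\ref{ass:Linftyacc} is invoked only at the grid times $t_n+g_n(\tau')=t_n+\tau_{n,m}$ where it is actually stated, and the residual is the oscillation of the \emph{true} score $\nabla\log\cev p$ between $g_n(\tau')$ and $\tau'$ --- which is exactly the quantity that Lemma~\ref{lem: bound D} subsequently bounds by $d\epsilon^2 h_n$. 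You instead route through $\vs^\theta_{t_n+\tau'}(\cev\vx_{t_n+\tau'})$, which (i) makes your middle term literally equal to the $D_{t_n}$ written in the lemma statement (an oscillation of the \emph{learned} score), but (ii) forces you to apply the $\delta_\infty$-accuracy at arbitrary off-grid times $t_n+\tau'$, which Assumption~\ref{ass:Linftyacc} as stated does not provide. The lemma statement's definition of $D_{t_n}$ with $\vs^\theta$ appears to be a typo in the paper (its own proof and Lemma~\ref{lem: bound D} both treat $D_{t_n}$ as the true-score oscillation), so you have faithfully proved the literal statement at the cost of a slightly stronger accuracy assumption; to stay within the stated hypotheses and to connect with Lemma~\ref{lem: bound D}, you should adopt the paper's split through the true score at the lagged grid time.
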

\begin{proof}

Recall that the backward true process $\{\cev{\vx}_{t_n+\tau}\}_{\tau \in [0, h_n]}$ satisfies the following backward SDE within one block
    \begin{equation}
        \label{eq:prob_flow_ode_true}
        \dif \cev{\vx}_{t_n+\tau} = \left[\dfrac{1}{2}\cev{\vx}_{t_n+\tau} + \dfrac{1}{2}\nabla \log \cev{p}_{t_n+\tau}(\cev{\vx}_{t_n+\tau}) \right]\dif \tau.
    \end{equation}

By subtracting~\eqref{eq:prob_flow_ode_true} from~\eqref{eq:prob_flow_ode_parallel_interpolating}, we obtain that
    \begin{equation}
        \begin{aligned}
            \frac{\dif}{\dif \tau} \left(\widetilde{\vy}_{t_n,\tau}^{(k+1)} - \cev{\vx}_{t_n+\tau}\right) &= \dfrac{1}{2}\left[\widetilde{\vy}_{t_n, \tau}^{(k+1)} - \cev{\vx}_{t_n+\tau}\right]\\
            &+ \dfrac{1}{2}\left[\vs^\theta_{t_n+ g_n(\tau)}\big(\widetilde{\vy}_{t_n, g_n(\tau)}^{(k)}\big) -  \vs^\theta_{t_n+ g_n(\tau)}(\cev{\vx}_{t_n+g_n(\tau)}) \right] \\
            &+ \dfrac{1}{2}\left[\vs^\theta_{t_n+ g_n(\tau)}(\cev{\vx}_{t_n+ g_n(\tau)}) -  \nabla \log \cev{p}_{t_n+ g_n(\tau)}(\cev{\vx}_{t_n+ g_n(\tau)}) \right]\\
            &+ \dfrac{1}{2}\left[\nabla \log \cev{p}_{t_n+ g_n(\tau)}(\cev{\vx}_{t_n+ g_n(\tau)}) -  \nabla \log \cev{p}_{t_n+\tau}(\cev{\vx}_{t_n+\tau}) \right].
        \end{aligned}
    \end{equation}
    
Then by
\begin{equation*}
    \dif \left\|\widetilde{\vy}_{t_n,\tau'}^{(k+1)} - \cev{\vx}_{t_n+\tau'} \right\|^2 = 2\left(\widetilde{\vy}_{t_n,\tau'}^{(k+1)} - \cev{\vx}_{t_n+\tau'}\right)^\top \dif \left(\widetilde{\vy}_{t_n,\tau'}^{(k+1)} - \cev{\vx}_{t_n+\tau'}\right),
\end{equation*}
and integrating for $\tau' \in [0, h_n]$, we have
\begin{equation*}
    \begin{aligned}
            &\left\|\widetilde{\vy}_{t_n,\tau}^{(k+1)} - \cev{\vx}_{t_n+\tau}\right\|^2 \\
        =& \int_0^\tau  \left(\widetilde{\vy}_{t_n,\tau'}^{(k+1)} - \cev{\vx}_{t_n+\tau'}\right)^\top \left(\vs^\theta_{t_n+ g_n(\tau')}\big(\widetilde{\vy}_{t_n, g_n(\tau')}^{(k)}\big) -  \vs^\theta_{t_n+ g_n(\tau')}(\cev{\vx}_{t_n+ g_n(\tau')})\right) \dif \tau' \\
            +& \int_0^\tau  \left(\widetilde{\vy}_{t_n,\tau'}^{(k+1)} - \cev{\vx}_{t_n+\tau'}\right)^\top \left(\vs^\theta_{t_n+ g_n(\tau')}(\cev{\vx}_{t_n+ g_n(\tau')}) -  \nabla \log \cev{p}_{t_n+ g_n(\tau')}(\cev{\vx}_{t_n+ g_n(\tau')})\right) \dif \tau' \\
            +& \int_0^\tau  \left(\widetilde{\vy}_{t_n,\tau'}^{(k+1)} - \cev{\vx}_{t_n+\tau'}\right)^\top \left(\nabla \log \cev{p}_{t_n+ g_n(\tau')}(\cev{\vx}_{t_n+ g_n(\tau')}) -  \nabla \log \cev{p}_{t_n+\tau'}(\cev{\vx}_{t_n+\tau'})\right) \dif \tau' \\
            +& \int_0^\tau \left\|\widetilde{\vy}_{t_n,\tau'}^{(k+1)} - \cev{\vx}_{t_n+\tau'}\right\|^2\dif \tau'.
    \end{aligned}
\end{equation*}

Using AM-GM inequality and taking expectations on both sides, we 
further upper bound the summation above as 
\begin{equation*}
    \begin{aligned}
            &\E\left[\left\|\widetilde{\vy}_{t_n,\tau}^{(k+1)} - \cev{\vx}_{t_n+\tau}\right\|^2\right] \\
        \leq &\left(1+ \dfrac{3}{2h_n}\right) \int_0^\tau \E\left[\left\|\widetilde{\vy}_{t_n,\tau'}^{(k+1)} - \cev{\vx}_{t_n+\tau'}\right\|^2\right]\dif \tau' \\
            +& \dfrac{h_n}{2}\int_0^\tau \E\left[\left\|\vs^\theta_{t_n+ g_n(\tau')}\big(\widetilde{\vy}_{t_n, g_n(\tau')}^{(k)}\big) -  \vs^\theta_{t_n+ g_n(\tau')}(\cev{\vx}_{t_n+ g_n(\tau')})\right\|^2 \right]\dif \tau'\\
            +& \dfrac{h_n}{2}\int_0^\tau \E\left[\left\|\vs^\theta_{t_n+ g_n(\tau')}(\cev{\vx}_{t_n+ g_n(\tau')}) -  \nabla \log \cev{p}_{t_n+ g_n(\tau')}(\cev{\vx}_{t_n+ g_n(\tau')})\right\|^2\right]\dif \tau'\\
            +& \dfrac{h_n}{2}\E\bigg[\underbrace{\int_0^\tau\left\|\nabla \log \cev{p}_{t_n+ g_n(\tau')}(\cev{\vx}_{t_n+ g_n(\tau')}) -  \nabla \log \cev{p}_{t_n+\tau'}(\cev{\vx}_{t_n+\tau'})\right\|^2\dif \tau'}_{\leq D_{t_n}}\bigg]\\
        \leq &\left(1+ \dfrac{3}{2h_n}\right) \int_0^\tau \E\left[\left\|\widetilde{\vy}_{t_n,\tau'}^{(k+1)} - \cev{\vx}_{t_n+\tau'}\right\|^2\right]\dif \tau' + \dfrac{h_n}{2}\left(\tau \delta_\infty^2+\E\left[D_{t_n}\right]\right)\\
            +& \dfrac{L_{\vs}^2h_n}{2}\int_{0}^{\tau} \E\left[\left\|\widetilde{\vy}_{t_n,g_n(\tau')}^{(k)} - \cev{\vx}_{t_n+g_n(\tau')}\right\|^2\right]\dif \tau',
    \end{aligned}    
\end{equation*}
where the last equality is by Assumption~\ref{ass:Linftyacc}.
    
Applying Gr\"onwall's inequality, we have
\begin{equation}
    \begin{aligned}
            &\E\left[\left\|\widetilde{\vy}_{t_n,\tau}^{(k+1)} - \cev{\vx}_{t_n+\tau}\right\|^2\right] \\
        \leq&  \dfrac{e^{\left(1+ \frac{3}{2h_n}\right)\tau}L_{\vs}^2h_n }{2} \int_0^\tau \E\left[\left\|\widetilde{\vy}_{t_n,g_n(\tau')}^{(k)} - \cev{\vx}_{t_n+g_n(\tau')}\right\|^2\right] \dif \tau'+ \dfrac{e^{\left(1+ \frac{3}{2h_n}\right)\tau} h_n}{2} \Big(\tau \delta_\infty^2 + \E[D_{t_n}]\Big)\\
        \leq& \dfrac{\tau e^{\left(1+ \frac{3}{2h_n}\right)\tau}L_{\vs}^2h_n }{2} \sup_{\tau' \in [0,\tau]}\E\left[\left\|\widetilde{\vy}_{t_n,\tau'}^{(k)} - \cev{\vx}_{t_n+\tau'}\right\|^2\right]+ \dfrac{e^{\left(1+ \frac{3}{2h_n}\right)\tau}h_n}{2} \Big(\tau \delta_\infty^2 + \E[D_{t_n}]\Big),
    \end{aligned}
    \label{eq:gronwall_expectation_prob_flow_ode}
\end{equation}
and by taking supremum
\begin{equation}
    \begin{aligned}  
            &\sup_{\tau\in[0, h_n]}\E\left[\left\|\widetilde{\vy}_{t_n,\tau}^{(k+1)} - \cev{\vx}_{t_n+\tau}\right\|^2\right] \\
        \leq& \dfrac{ h_n^2 e^{h_n+ \frac{3}{2}}L_{\vs}^2}{2} \sup_{\tau' \in [0,\tau]}\E\left[\left\|\widetilde{\vy}_{t_n,\tau'}^{(k)} - \cev{\vx}_{t_n+\tau'}\right\|^2\right]+ \dfrac{e^{h_n+ \frac{3}{2}}h_n}{2} \Big(h_n \delta_\infty^2 + \E[D_{t_n}]\Big)
    \end{aligned}
    \label{eqn: intermediate exp convergence prob flow ode}
\end{equation}

Given that constant $h_n$ is sufficiently small, which ensures $L_{\vs}^2 h_n e^{\frac{5}{2}h_n} \ll 1$, iterating the above inequality for $k\in[0:K-1]$ gives us that 
\begin{equation*}
    \begin{aligned}
        &\sup_{\tau\in[0, h_n]}\E\left[\left\|\widetilde{\vy}_{t_n,\tau}^{(K)} - \cev{\vx}_{t_n+\tau}\right\|^2\right]\\
        \leq& \left(\dfrac{ h_n^2 e^{h_n+ \frac{3}{2}}L_{\vs}^2}{2}\right)^{K} 
        \sup_{\tau \in [0, h_n]}\E\left[\left\|\widetilde{\vy}_{t_n,\tau}^{(0)} - \cev{\vx}_{t_n+\tau}\right\|^2\right]
        + \frac{e^{h_n+ \frac{3}{2}}h_n/2}{1-h_n^2 e^{h_n+ \frac{3}{2}}L_{\vs}^2/2}\Big(h_n \delta_\infty^2 + \E[D_{t_n}]\Big), 
    \end{aligned}
\end{equation*}

Notice that by Lemma~\ref{lem: initial bound for prob flow ode}, we have
\begin{equation*}
    \E\left[\left\|\widehat\vy_{t_n,\tau}^{(0)} - \cev{\vx}_{t_n+\tau}\right\|^2\right] = \E\left[\left\|\cev{\vx}_{t_n} - \cev{\vx}_{t_n+\tau}\right\|^2\right] \leq 3d,
\end{equation*}
substituting which into~\eqref{eqn: intermediate exp convergence prob flow ode} then gives us that 
\begin{equation*}     
        \sup_{\tau\in[0, h_n]}\E\left[\left\|\widetilde{\vy}_{t_n,\tau}^{(K)} - \cev{\vx}_{t_n+\tau}\right\|^2\right] \leq 
        3d \left(\dfrac{ h_n^2 e^{h_n+ \frac{3}{2}}L_{\vs}^2}{2}\right)^{K}  
        + \frac{e^{h_n+ \frac{3}{2}}h_n/2}{1-h_n^2 e^{h_n+ \frac{3}{2}}L_{\vs}^2/2}
        \Big(h_n \delta_\infty^2 + \E[D_{t_n}]\Big),
\end{equation*}
as desired. 
\end{proof}

Now it remains to bound $C_{t_n}$ and $D_{t_n}$ in Lemma~\ref{lem: ode block error1}. We first bound $D_{t_n}$ using the following lemma:
\begin{lemma}
    For any $n\in[0:N-1]$, we have that
    \begin{equation*}
        \E\left[D_{t_n}\right] \lesssim d \epsilon^2 h_n.
    \end{equation*}
    \label{lem: bound D}
\end{lemma}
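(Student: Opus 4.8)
The plan is to exploit the deterministic probability flow ODE coupling of the backward process $\cev{\vx}$ that is used throughout the $n$-th block of PIADM-ODE: this coupling converts the time-gap $\tau'-g_n(\tau')\le\epsilon$ into a \emph{quadratic} $\gO(\epsilon^2)$ factor, which is exactly where the ODE implementation gains an extra power of $\epsilon$ over the corresponding SDE estimate of $A_{t_n}$ in Lemma~\ref{lem:stochastic_localization} (a Brownian coupling would only yield an $\gO(\epsilon d)$ displacement, whereas the ODE flow gives $\gO(\epsilon^2 d)$). I would carry this out in three steps.

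\emph{Step 1 (split into a time displacement and a space displacement).} Fixing $\tau'\in[0,h_n]$, I would use that the bounded $C^1$-norm of $\vs^\theta$ (Assumption~\ref{ass:lipNN}) makes $(t,\vx)\mapsto\vs^\theta_t(\vx)$ Lipschitz, so that
\begin{equation*}
    \left\|\vs^\theta_{t_n+g_n(\tau')}\big(\cev{\vx}_{t_n+g_n(\tau')}\big) - \vs^\theta_{t_n+\tau'}\big(\cev{\vx}_{t_n+\tau'}\big)\right\|^2 \lesssim L_{\vs}^2\Big(\big|\tau'-g_n(\tau')\big|^2 + \big\|\cev{\vx}_{t_n+g_n(\tau')} - \cev{\vx}_{t_n+\tau'}\big\|^2\Big),
\end{equation*}
where $|\tau'-g_n(\tau')| = \tau'-\tau_{n,I_n(\tau')}\le\epsilon_{n,I_n(\tau')}\le\epsilon$ holds in every block, including the last one by the shrinking step-size rule~\eqref{eq:step_size_decay}.

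\emph{Step 2 (quadratic displacement bound via the ODE flow).} Within the $n$-th block $\cev{\vx}$ solves the probability flow ODE~\eqref{eq:prob_flow_ode_true}, so
\begin{equation*}
    \cev{\vx}_{t_n+\tau'} - \cev{\vx}_{t_n+g_n(\tau')} = \int_{g_n(\tau')}^{\tau'}\Big(\tfrac12\cev{\vx}_{t_n+s} + \tfrac12\nabla\log\cev{p}_{t_n+s}(\cev{\vx}_{t_n+s})\Big)\dif s,
\end{equation*}
and Jensen's inequality gives $\|\cev{\vx}_{t_n+\tau'} - \cev{\vx}_{t_n+g_n(\tau')}\|^2 \le \tfrac{\epsilon}{2}\int_{g_n(\tau')}^{\tau'}\big(\|\cev{\vx}_{t_n+s}\|^2 + \|\nabla\log\cev{p}_{t_n+s}(\cev{\vx}_{t_n+s})\|^2\big)\dif s$. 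Taking expectations and using $\E\|\cev{\vx}_{t_n+s}\|^2\le 2d$ (as in Lemma~\ref{lem: initial bound for prob flow ode}) together with the uniform bound $\|\nabla\log p_t(\cdot)\|_{L^\infty}\le M_p$ from Assumption~\ref{ass:lipscore}, I would conclude $\E\|\cev{\vx}_{t_n+\tau'} - \cev{\vx}_{t_n+g_n(\tau')}\|^2 \lesssim \epsilon^2(d+M_p^2)\lesssim d\epsilon^2$.

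\emph{Step 3 (integrate).} Combining Steps 1 and 2 and integrating over $\tau'\in[0,h_n]$ yields
\begin{equation*}
    \E[D_{t_n}] = \int_0^{h_n}\E\left\|\vs^\theta_{t_n+g_n(\tau')}\big(\cev{\vx}_{t_n+g_n(\tau')}\big) - \vs^\theta_{t_n+\tau'}\big(\cev{\vx}_{t_n+\tau'}\big)\right\|^2\dif\tau' \lesssim L_{\vs}^2\,d\epsilon^2 h_n \lesssim d\epsilon^2 h_n,
\end{equation*}
since $L_{\vs}=\gO(1)$ (indeed $L_{\vs}$ is taken small enough that $L_{\vs}^2 h^2 e^h\ll 1$). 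The main obstacle is Step 2: the improvement relies on (i) using the deterministic ODE coupling of $\cev{\vx}$ rather than a Brownian one, and (ii) controlling $\E\|\nabla\log\cev{p}_{t_n+s}(\cev{\vx}_{t_n+s})\|^2$ \emph{uniformly in time}, which near the data end in the last block cannot be done through Lemma~\ref{lem:forward} alone (there $\sigma_{T-t}^{-2}$ blows up) and requires the uniform $L^\infty$ control of the true score granted by Assumption~\ref{ass:lipscore}.
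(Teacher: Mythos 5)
There is a genuine gap in Step 1. Your decomposition requires Lipschitz continuity of the score \emph{in the time variable} at a fixed spatial point, i.e.\ a bound on $\|\vs^\theta_{t_n+g_n(\tau')}(\vx)-\vs^\theta_{t_n+\tau'}(\vx)\|$ (or, for the quantity the paper actually bounds, $\|\nabla\log\cev p_{t_n+g_n(\tau')}(\vx)-\nabla\log\cev p_{t_n+\tau'}(\vx)\|$) by $L|\tau'-g_n(\tau')|$. Neither Assumption~\ref{ass:lipNN} nor Assumption~\ref{ass:lipscore} provides this: both give only spatial Lipschitzness and an $L^\infty$-in-time bound on the sup-norm, not regularity in $t$. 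This missing time-regularity is precisely the nontrivial content of the lemma. The paper does not decompose into time and space displacements at all; it invokes \cite[Corollary 1]{chen2024probability}, which directly controls $\E\|\nabla\log\cev p_{t+s}(\cev\vx_{t+s})-\nabla\log\cev p_t(\cev\vx_t)\|^2$ along the backward ODE trajectory, yielding $\lesssim d\,\epsilon_{n,m}^2$ for the first $N-1$ blocks and $\lesssim d\,\epsilon_{n,m}^2/(T-\tau_{n,m})$ in the last block. The $1/(T-t)$ blow-up in the last block is real (the score's time variation degenerates near the data end, as quantified before~\eqref{eq:step_size_decay}), and the paper must combine it with the shrinking step-size rule $\epsilon_{N-1,m}\le\epsilon(h-\tau_{N-1,m+1})$ to recover $d\epsilon^2 h_{N-1}$. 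Your argument, by asserting a uniform-in-time Lipschitz constant, would make the shrinking schedule unnecessary, which is a sign that the asserted regularity cannot hold uniformly with dimension-free constants in the regime where the lemma is hardest.

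Your Step 2 is the sound part of the proposal and correctly isolates the mechanism behind the $\epsilon^2$ (versus $\epsilon$) gain: the deterministic ODE flow gives $\E\|\cev\vx_{t_n+\tau'}-\cev\vx_{t_n+g_n(\tau')}\|^2\lesssim\epsilon^2(d+M_p^2)$ via Cauchy--Schwarz, whereas a Brownian increment would contribute $\gO(d\epsilon)$. Combined with the spatial Lipschitzness $L_p$, this handles the term $\|\nabla\log\cev p_{t}(\cev\vx_{t+s})-\nabla\log\cev p_{t}(\cev\vx_{t})\|$; what remains unproved is the pure time-variation term $\|\nabla\log\cev p_{t+s}(\vx)-\nabla\log\cev p_{t}(\vx)\|$ at fixed $\vx$, which requires an argument about the evolution of $p_t$ itself (heat-flow/Fokker--Planck estimates of the kind underlying \cite[Corollary 1]{chen2024probability}), not an appeal to the stated assumptions. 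As a side note, the paper's proof silently replaces $\vs^\theta$ by $\nabla\log\cev p$ in $D_{t_n}$ (the definition in Lemma~\ref{lem: ode block error1} appears to contain a typo); if you insist on bounding the $\vs^\theta$ version literally, you would additionally need time-regularity of the \emph{learned} network, which is even further from the assumptions.
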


\begin{proof}
    For any $n\in[0:N-2]$, we have $T - t_{n+1} \gtrsim \gO(1)$ and thus by~\cite[Corollary 1]{chen2024probability} that 
    \begin{equation*}
        \E\left[\left\|\nabla \log \cev{p}_{t_{N-1}+ \tau_{n,m}}(\cev{\vx}_{t_{N-1}+ \tau_{n,m}}) -  \nabla \log \cev{p}_{t_{N-1}+ \tau'}(\cev{\vx}_{t_{N-1}+ \tau'})\right\|^2\right] \lesssim d \epsilon_{n,m}^2,
    \end{equation*}
    for any $\tau' \in [\tau_{n,m}, \tau_{n,m+1}]$,
    and thus 
    \begin{equation*}
        \begin{aligned}
            &\E\left[D_{t_n}\right] = \int_0^{h_n}\E\left[\left\|\nabla \log \cev{p}_{t_n+ g_n(\tau')}(\cev{\vx}_{t_n+ g_n(\tau')}) -  \nabla \log \cev{p}_{t_n+ \tau'}(\cev{\vx}_{t_n+ \tau'})\right\|^2\right]\dif \tau'\\
            =& \sum_{m=0}^{M_n} \int_{\tau_{n,m}}^{\tau_{n,m+1}}\E\left[\left\|\nabla \log \cev{p}_{t_n+ \tau_{n,m}}(\cev{\vx}_{t_n+ \tau_{n,m}}) -  \nabla \log \cev{p}_{t_n+ \tau'}(\cev{\vx}_{t_n+ \tau'})\right\|^2\right]\dif \tau'\\
            \lesssim & \sum_{m=0}^{M_n} d \epsilon_{n,m}^2\epsilon_{n,m} \leq d \epsilon^2 h_n.
        \end{aligned}
    \end{equation*}

    For $n = N-1$, notice that by the step size schedule (\emph{cf.} Section~\ref{sec:algorithm_sde}) and suppose $\epsilon \leq 1/2$, we have
    \begin{equation*}
        \dfrac{T - \tau}{2} \leq T - g_n(\tau) \leq T - \tau, 
    \end{equation*}
    and then again~\cite[Corollary 1]{chen2024probability} states
    \begin{equation*}
        \E\left[\left\|\nabla \log \cev{p}_{t_n+ \epsilon_{n,m}}(\cev{\vx}_{t_n+ \epsilon_{n,m}}) -  \nabla \log \cev{p}_{t_n+ \tau'}(\cev{\vx}_{t_n+ \tau'})\right\|^2\right] \lesssim \dfrac{d \epsilon_{n,m}^2}{T - \tau_{n,m}},
    \end{equation*}
    and thus 
    \begin{equation*}
        \begin{aligned}
            &\E\left[D_{t_{N-1}}\right]= \int_0^{h_{N-1}}\E\left[\left\|\nabla \log \cev{p}_{t_{N-1}+ g_n(\tau')}(\cev{\vx}_{t_{N-1}+ g_n(\tau')}) -  \nabla \log \cev{p}_{t_{N-1}+ \tau'}(\cev{\vx}_{t_{N-1}+ \tau'})\right\|^2\right]\dif \tau\\
            =& \sum_{m=0}^{M_{N-1}} \int_{\tau_{n,m}}^{\tau_{n,m+1}}\E\left[\left\|\nabla \log \cev{p}_{t_{N-1}+ \tau_{n,m}}(\cev{\vx}_{t_{N-1}+ g_n(\tau')}) -  \nabla \log \cev{p}_{t_{N-1}+ \tau'}(\cev{\vx}_{t_{N-1}+ \tau'})\right\|^2\right]\dif \tau'\\
            \lesssim& \sum_{m=0}^{M_{N-1}} \dfrac{d \epsilon_{n,m}^2}{T - \tau_{n,m}} \epsilon_{n,m} 
            \leq \sum_{m=0}^{M_{N-1}} d \epsilon_{n,m}^2\epsilon \lesssim \int_{\delta_\infty}^{T - t_{N-1}} d \tau \dif \tau \lesssim d \epsilon^2 h_{N-1}.
        \end{aligned}
    \end{equation*}
\end{proof}

\begin{remark}
    The above lemma is able to achieve a better dependency on $\epsilon$ compared to Lemma~\ref{lem:stochastic_localization}, because the backward process $(\cev{\vx}_{t})_{t \in [0, T]}$ is now a deterministic process in the probability flow ODE formulation, instead of a stochastic process as in the SDE formulation as in Lemma~\ref{lem:stochastic_localization}. Thus, intuitively applying Cauchy-Schwarz rather than It\^o symmetry gives us a $\gO(\epsilon^2)$-dependency rather than $\gO(\epsilon)$-dependency.
\end{remark}

\begin{theorem}
    Under Assumptions~\ref{ass:Linftyacc},~\ref{ass:pdata},~\ref{ass:lipNN}, and~\ref{ass:lipscore}, then the distribution $\widetilde q_{t_n, h_n}$ that the parallelized predictor step generates samples from satisfies the following error bound:
    \begin{equation*}
        W_2(\widetilde q_{t_n, h_n}, \cev p_{t_{n+1}})^2 \lesssim d e^{-K} + h_n^2 \delta_\infty^2 + d \epsilon^2 h_n^2,
    \end{equation*}
    for $n\in[0:N-1]$.
    \label{thm: wasserstein 2 bound on q tilde and p inverse}
\end{theorem}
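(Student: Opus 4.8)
The plan is to obtain the theorem as an immediate consequence of Lemma~\ref{lem: ode block error1} and Lemma~\ref{lem: bound D}, followed by a one-line coupling argument. First I would start from the estimate in Lemma~\ref{lem: ode block error1}, which—working under the synchronous coupling used inside that lemma's proof, in which the interpolating process and the true backward process start from the \emph{same} point $\widetilde\vy_{t_n,0}^{(0)}=\cev\vx_{t_n}\sim\cev p_{t_n}$—controls
\[
\sup_{\tau\in[0,h_n]}\E\left[\left\|\widetilde\vy_{t_n,\tau}^{(K)}-\cev\vx_{t_n+\tau}\right\|^2\right]
\le 3d\left(\frac{h_n^2 e^{h_n+3/2}L_{\vs}^2}{2}\right)^{K}
+\frac{e^{h_n+3/2}h_n/2}{1-h_n^2 e^{h_n+3/2}L_{\vs}^2/2}\Big(h_n\delta_\infty^2+\E[D_{t_n}]\Big),
\]
and then substitute $\E[D_{t_n}]\lesssim d\epsilon^2 h_n$ from Lemma~\ref{lem: bound D}.

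The next step is routine simplification using the standing hypotheses $h_n\le h=\Theta(1)$ and $L_{\vs}^2 h^2 e^{h}\ll 1$ (hence $L_{\vs}^2 h_n^2 e^{h_n}\ll 1$). The latter forces the geometric ratio $h_n^2 e^{h_n+3/2}L_{\vs}^2/2\le e^{-1}$, so the first term is $\lesssim d e^{-K}$; it also keeps the denominator $1-h_n^2 e^{h_n+3/2}L_{\vs}^2/2\ge \frac12$, and since $e^{h_n+3/2}\le e^{h+3/2}=\gO(1)$ the prefactor of the second term is $\lesssim h_n$. Thus the second term is $\lesssim h_n\big(h_n\delta_\infty^2+d\epsilon^2 h_n\big)=h_n^2\delta_\infty^2+d\epsilon^2 h_n^2$, and altogether
\[
\sup_{\tau\in[0,h_n]}\E\left[\left\|\widetilde\vy_{t_n,\tau}^{(K)}-\cev\vx_{t_n+\tau}\right\|^2\right]\lesssim d e^{-K}+h_n^2\delta_\infty^2+d\epsilon^2 h_n^2 .
\]

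Finally I would pass from this $L^2$ estimate to the $2$-Wasserstein bound. Both the interpolating Picard recursion~\eqref{eq:prob_flow_ode_parallel_interpolating} and the true probability-flow ODE~\eqref{eq:prob_flow_ode_true} are deterministic given the initial state, so sampling the common start $\cev\vx_{t_n}\sim\cev p_{t_n}$ yields a genuine coupling of $\widetilde q_{t_n,h_n}$, the law of $\widetilde\vy_{t_n,h_n}^{(K)}$, and $\cev p_{t_{n+1}}$, the law of $\cev\vx_{t_n+h_n}$. Therefore
\[
W_2(\widetilde q_{t_n,h_n},\cev p_{t_{n+1}})^2\le \E\left[\left\|\widetilde\vy_{t_n,h_n}^{(K)}-\cev\vx_{t_n+h_n}\right\|^2\right]\le \sup_{\tau\in[0,h_n]}\E\left[\left\|\widetilde\vy_{t_n,\tau}^{(K)}-\cev\vx_{t_n+\tau}\right\|^2\right],
\]
and the previous display concludes the proof. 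There is no genuine obstacle at this level: the analytic content (the Gr\"onwall/Picard contraction and the $\gO(\epsilon^2)$ score-regularity estimate) is already carried out in Lemmas~\ref{lem: ode block error1} and~\ref{lem: bound D}; the only point needing care is that the hypothesis $L_{\vs}^2 h^2 e^h\ll 1$ is used simultaneously to make the Picard error decay like $e^{-K}$ and to keep the amplification factor multiplying $\delta_\infty^2$ and $d\epsilon^2 h_n$ bounded.
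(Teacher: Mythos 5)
Your proposal is correct and follows essentially the same route as the paper's proof: both invoke Lemma~\ref{lem: ode block error1} for the $L^2$ bound under the synchronous coupling, substitute the bound $\E[D_{t_n}]\lesssim d\epsilon^2 h_n$ from Lemma~\ref{lem: bound D}, simplify with $h_n^2 e^{h_n}L_{\vs}^2\ll 1$, and conclude via the coupling characterization of $W_2$. Your explicit remark on the dual role of the smallness assumption (driving the $e^{-K}$ decay and keeping the amplification factor bounded) is a helpful clarification of what the paper leaves implicit.
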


\begin{proof}
    By the definition of 2-Wasserstein distance, we have for any coupling of $\widetilde{\vy}_{t_n,h_n}^{(K)}$ and $\cev{\vx}_{t_n+h_n}$,
    \begin{equation*}
        W_2(\widetilde q_{t_n, h_n}, \cev p_{t_{n+1}})^2 \leq \E\left[\left\|\widetilde{\vy}_{t_n,h_n}^{(K)} - \cev{\vx}_{t_n+h_n}\right\|^2\right],
    \end{equation*}
    and therefore
    \begin{equation*}
        \begin{aligned}
                &W_2(\widetilde q_{t_n, h_n}, \cev p_{t_{n+1}})^2 \leq \E\left[\left\|\widetilde{\vy}_{t_n,h_n}^{(K)} - \cev{\vx}_{t_n+h_n}\right\|^2\right]\leq \sup_{\tau\in[0, h_n]}\E\left[\left\|\widetilde{\vy}_{t_n,\tau}^{(K)} - \cev{\vx}_{t_n+\tau}\right\|^2\right] \\
            \leq & 3d \left(\dfrac{ h_n^2 e^{h_n+ \frac{3}{2}}L_{\vs}^2}{2}\right)^{K}  
            + \frac{e^{h_n+ \frac{3}{2}}h_n/2}{1-h_n^2 e^{h_n+ \frac{3}{2}}L_{\vs}^2/2}
            \Big(h_n \delta_\infty^2 + \E[D_{t_n}]\Big)\\
            \lesssim & d e^{-K} + h_n^2 \delta_\infty^2 + d \epsilon^2 h_n^2,
        \end{aligned}
    \end{equation*}
    where for the second to last inequality we used Lemma~\ref{lem: ode block error1}, the last inequality is due to Lemma~\ref{lem: bound D} and the assumption $h_n^2 e^{h_n}L_{\vs}^2 \ll 1$.
\end{proof}

\subsection{Parallelized Corrector Step}
\label{sec:corrector}

After each predictor step, we run the corrector step for $\gO(1)$ time to reduce the error. Particularly, we apply the Parallelized underdamped Langevin dynamics algorithm~\cite{anari2024fast} to the corrector step, which yields $\gO(1)$ approximate time complexity compared to the ordinary implementation of the ULMC dynamics as in~\cite{chen2024probability}. In the following, we will drop the dependency on $\omega$ for notational simplicity, and we refer readers to Appendix~\ref{app:prelim} and~\ref{app:error_block_sde} to review the change of measure arguments and the application of Girsanov's theorem~\ref{thm:girsanov}.
We will also use a general notation $\ast^\dagger$ to distinguish the time in the backward process and the inner time in the corrector step of the $n$-th block.

We first define the true underdamped Langevin dynamics $(\vu_{t_n, t^\dagger}, \vv_{t_n, t^\dagger})_{t \geq 0 }$:
\begin{equation}
    \begin{cases}
        \dif\vu_{t_n, t^\dagger} = \vv_{t_n, t^\dagger} \dif t^\dagger \\
        \dif\vv_{t_n, t^\dagger} = -\gamma \vv_{t_n, t^\dagger} \dif t^\dagger - \nabla \log \cev p_{t_{n+1}}(\vu_{t_n, t^\dagger})\dif t^\dagger + \sqrt{2\gamma} \dif\vb_{t_n, t^\dagger},
    \end{cases}
    \label{eq:uv}
\end{equation}
with initial condition $\vu_{t_n, 0}\equiv \widetilde{\vy}_{t_{n}, h_{n}}^{(K^\dagger)}$ from the predictor step and $\vv_{t_n, 0} \sim \gN(0,\mI_d)$, where $(\vb_{t_n, t^\dagger})_{t\geq 0}$ is a Wiener process. 
We may also write the system of SDEs above in the following matrix form:
\begin{equation*}
    \dif 
        \left[\begin{matrix}
            \vu_{t_n, t^\dagger}\\
            \vv_{t_n, t^\dagger} 
        \end{matrix}\right] 
        =
    \left[\left[\begin{matrix}
        \vzero & \mI_d\\
        \vzero & -\gamma \mI_d
    \end{matrix}\right]
    \left[\begin{matrix}
            \vu_{t_n, t^\dagger}\\
            \vv_{t_n, t^\dagger} 
        \end{matrix}\right]
        -
    \left[\begin{matrix}
            \vzero\\
            \nabla \log \cev p_{t_{n+1}}(\vu_{t_n, t^\dagger})
        \end{matrix}\right]
    \right] \dif t^\dagger
        +
    \left[\begin{matrix}
        \vzero & \vzero\\
        \vzero & \sqrt{2\gamma} \mI_d
    \end{matrix}\right]
    \dif \left[\begin{matrix}
             \vb'_{t_n, t^\dagger}\\
             \vb_{t_n, t^\dagger} 
        \end{matrix}\right].
\end{equation*}

We run this underdamped Langevin dynamics until the pre-determined time horizon $T^\dagger$. We also define the joint probability distribution of $(\vu_{t_n, t^\dagger}, \vv_{t_n, t^\dagger})$ at time $t$ as $\pi_{t_n, t^\dagger}(\vu_{t_n, t^\dagger}, \vv_{t_n, t^\dagger})$ and its marginal on $\vu_{t_n, t^\dagger}$ as $\pi_{t_n, t^\dagger}^{\vu}(\vu_{t_n, t^\dagger})$.

Similar to the parallelizing strategy in Section~\ref{sec:algorithm_sde}, we discretize the time interval $[0, T^\dagger]$ into $N^\dagger$ blocks with length $h^\dagger = T^\dagger / N^\dagger$. Within the $n$-th block, we further divide the block $[n^\dagger h^\dagger, (n+1)h^\dagger]$ into $M^\dagger$ steps, each with step size $\epsilon^\dagger = h^\dagger / M^\dagger$.

\begin{definition}[Auxiliary corrector process]
    For any $n^\dagger \in [0:N^\dagger - 1]$, we define the auxiliary corrector process $(\widehat \vu_{t_n,n^\dagger h^\dagger, \tau^\dagger}^{(k^\dagger)})_{\tau^\dagger \in [0, h^\dagger]}$ as the solution to the following SDE recursively for $k^\dagger \in[0:K^\dagger-1]$: 
    \begin{equation}
        \begin{cases}
            \dif\widehat\vu_{t_n, n^\dagger h^\dagger,\tau^\dagger}^{(k+1)} = \widehat\vv_{t_n, n^\dagger h^\dagger,\tau^\dagger}^{(k+1)}\dif \tau^\dagger, \\
            \dif\widehat\vv_{t_n,n^\dagger h^\dagger,\tau^\dagger}^{(k+1)} = - \gamma \widehat\vv_{t_n,n^\dagger h^\dagger,\tau^\dagger}^{(k+1)} \dif \tau^\dagger - \vs_{t_{n+1}} \big(\widehat\vu^{(k^\dagger)}_{t_n,n^\dagger h^\dagger,g_n(\tau^\dagger)}\big)\dif\tau^\dagger + \sqrt{2\gamma} \dif\vb_{t_n,n^\dagger h^\dagger+\tau^\dagger}   
        \end{cases}
        \label{eq:uvhat}
    \end{equation}
    with the initial condition
    \begin{equation}
        \begin{cases}
            \widehat\vu_{t_n,n^\dagger h^\dagger,\tau^\dagger}^{(0)} \equiv \widehat\vu_{t_n,n^\dagger h^\dagger}\\
            \widehat\vv_{t_n,n^\dagger h^\dagger,\tau^\dagger}^{(0)} = \widehat\vv_{t_n,n^\dagger h^\dagger}
        \end{cases} \text{for}\ \tau^\dagger\in[0, h^\dagger],
        \ \text{and}\ 
        \begin{cases}
            \widehat\vu_{t_n,n^\dagger h^\dagger,\tau^\dagger}^{(k^\dagger)} \equiv \widehat\vu_{t_n,n^\dagger h^\dagger}\\
            \widehat\vv_{t_n,n^\dagger h^\dagger,0}^{(k^\dagger)} \equiv \widehat\vv_{t_n,n^\dagger h^\dagger}
        \end{cases} \text{for}\ k\in[1:K^\dagger],
        \label{eq:uvhat_init}
    \end{equation}
    where $$\widehat\vu_{t_n,n^\dagger h^\dagger}:=\widehat\vu_{t_n, (n^\dagger-1) h^\dagger, h^\dagger }^{(K^\dagger)}, \quad \widehat\vv_{t_n,n^\dagger h^\dagger}:=\widehat\vv_{t_n, (n^\dagger-1) h^\dagger, h^\dagger }^{(K^\dagger)}$$ for $n^\dagger\in[1:N^\dagger-1]$, and $$\widehat\vu_{t_n, 0} = \vy_{t_{n}, h_{n}}^{(K)}, \quad \widehat\vv_{t_n, 0}  \sim \gN(0,\mI_d).$$ We define the joint probability distribution of $(\widehat\vu_{t_n, t^\dagger}, \widehat\vv_{t_n, t^\dagger})$ at time $t$ as $\widehat\pi_{t_n, t^\dagger}(\widehat\vu_{t_n, t^\dagger}, \widehat\vv_{t_n, t^\dagger})$ and its marginal on $\widehat\vu_{t_n, t^\dagger}$ as $\widehat\pi_{t_n, t^\dagger}^{\widehat\vu}(\widehat\vu_{t_n, t^\dagger})$. We will also denote the resulting probability distribution of $\widehat\pi_{t_n, T^\dagger}^{\widehat\vu}$ as $\widehat q_{t_{n+1}}$.
    \label{def:uvhat}
\end{definition}

\begin{lemma}[Equivalence between~\eqref{eq:corrector_step_update} and~\eqref{eq:uvhat}]
    \label{lem:aux_ode_corrector}
    For any $n^\dagger \in [0:N^\dagger-1]$, the update rule in Algorithm~\ref{alg:ode} is equivalent to the exact solution of the auxiliary process~\eqref{eq:uvhat} for any $k^\dagger \in [0:K^\dagger-1]$ and $\tau^\dagger \in [0,h^\dagger]$.
    \end{lemma}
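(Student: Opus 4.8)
The plan is to follow the template of the proofs of Lemma~\ref{lem:aux_sde} and Lemma~\ref{lem:aux_ode}, carried out in the augmented phase space of the underdamped dynamics. Fix a block $n^\dagger$ and an iteration index, and write $\widehat\vX_{\tau^\dagger}$ for the $(k^\dagger+1)$-th iterate $\big(\widehat\vu_{t_n,n^\dagger h^\dagger,\tau^\dagger}^{(k^\dagger+1)},\ \widehat\vv_{t_n,n^\dagger h^\dagger,\tau^\dagger}^{(k^\dagger+1)}\big)^\top$, so that the system~\eqref{eq:uvhat} becomes the linear SDE
\begin{equation*}
    \dif \widehat\vX_{\tau^\dagger} = \mA\,\widehat\vX_{\tau^\dagger}\,\dif\tau^\dagger \;-\; \begin{bmatrix}\vzero\\ \vs^\theta_{t_{n+1}}\!\big(\widehat\vu_{t_n,n^\dagger h^\dagger,g_n(\tau^\dagger)}^{(k^\dagger)}\big)\end{bmatrix}\dif\tau^\dagger \;+\; \begin{bmatrix}\vzero & \vzero\\ \vzero & \sqrt{2\gamma}\,\mI_d\end{bmatrix}\dif \vb_{t_n,n^\dagger h^\dagger+\tau^\dagger},
\end{equation*}
where $\mA := \begin{bmatrix}\vzero & \mI_d\\ \vzero & -\gamma\mI_d\end{bmatrix}$ and $\mG(t) := e^{\mA t}$ is the transition matrix of the linear part, i.e.\ the matrix $\mG$ appearing in Algorithm~\ref{alg:ode}. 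The crucial structural feature, exactly as in Lemma~\ref{lem:aux_sde} and Lemma~\ref{lem:aux_ode}, is that the drift perturbation depends on $\tau^\dagger$ only through $g_n(\tau^\dagger)$, hence is piecewise constant on the cells $[j^\dagger\epsilon^\dagger,(j^\dagger+1)\epsilon^\dagger]$, so the SDE can be integrated exactly cell by cell.

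First I would apply the integrating factor $\mG(-\tau^\dagger)$: since $\mG(-\tau^\dagger)$ is deterministic and of finite variation, the $\mA\widehat\vX_{\tau^\dagger}$ terms cancel and one is left with a pure drift/noise differential; integrating from $0$ to $\tau^\dagger$ and multiplying back by $\mG(\tau^\dagger)$ gives the variation-of-constants representation $\widehat\vX_{\tau^\dagger} = \mG(\tau^\dagger)\widehat\vX_{0} - \int_0^{\tau^\dagger}\mG(\tau^\dagger-s)\big(\vzero,\ \vs^\theta_{t_{n+1}}(\widehat\vu^{(k^\dagger)}_{t_n,n^\dagger h^\dagger,g_n(s)})\big)^\top\dif s + \int_0^{\tau^\dagger}\mG(\tau^\dagger-s)\big(\vzero,\ \sqrt{2\gamma}\,\dif\vb_{t_n,n^\dagger h^\dagger+s}\big)^\top$. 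Then I set $\tau^\dagger = m^\dagger\epsilon^\dagger$ and split both integrals over the uniform cells $[j^\dagger\epsilon^\dagger,(j^\dagger+1)\epsilon^\dagger]$, $j^\dagger\in[0:m^\dagger-1]$. On the $j^\dagger$-th cell $g_n(s)\equiv j^\dagger\epsilon^\dagger$, so $\vs^\theta_{t_{n+1}}(\widehat\vu^{(k^\dagger)}_{t_n,n^\dagger h^\dagger,j^\dagger\epsilon^\dagger})$ factors out of the drift integral, and by the semigroup property $\int_{j^\dagger\epsilon^\dagger}^{(j^\dagger+1)\epsilon^\dagger}\mG(m^\dagger\epsilon^\dagger-s)\dif s = \mG\big((m^\dagger-j^\dagger-1)\epsilon^\dagger\big)\int_0^{\epsilon^\dagger}\mG(u)\dif u$; evaluating $\int_0^{\epsilon^\dagger}\mG(u)\dif u$ against the momentum block $(\vzero,\vw)^\top$ using the definition of $\mG$ (a short $\mA$-calculus identity) produces exactly the coefficient $\mG\big((m^\dagger-j^\dagger-1)\epsilon^\dagger\big)\big(\mI_{2d}-\mG(\epsilon^\dagger)\big)$ appearing in~\eqref{eq:corrector_step_update}. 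For the stochastic part, the $j^\dagger$-th summand equals $\mG\big((m^\dagger-j^\dagger-1)\epsilon^\dagger\big)\vzeta_{j^\dagger}$ with $\vzeta_{j^\dagger} := \int_{j^\dagger\epsilon^\dagger}^{(j^\dagger+1)\epsilon^\dagger}\mG\big((j^\dagger+1)\epsilon^\dagger-s\big)\big(\vzero,\ \sqrt{2\gamma}\,\dif\vb_s\big)^\top$; by It\^o isometry $\vzeta_{j^\dagger}$ is a centered Gaussian whose covariance is precisely the one prescribed for $(\vzero,\vxi_{j^\dagger})^\top$ in Algorithm~\ref{alg:ode}, and, since $\vzeta_{j^\dagger}$ is built from the common Wiener process $\vb$, it does not depend on $k^\dagger$ — this is the rigorous content of ``$\vxi_{j^\dagger}$ is sampled once and reused across iterations.'' Assembling the three pieces, noting that the initializations~\eqref{eq:uvhat_init} coincide with the algorithm's $\widehat\vu^{(k^\dagger)}_{t_n,n^\dagger h^\dagger,0},\widehat\vv^{(k^\dagger)}_{t_n,n^\dagger h^\dagger,0}$, and invoking the $\gF$-adaptivity of $g_n(\cdot)$ to guarantee the It\^o integrals are well defined (as in Lemma~\ref{lem:adapted}), reproduces~\eqref{eq:corrector_step_update} verbatim; an induction on $k^\dagger\in[0:K^\dagger-1]$, together with the pass-through $\widehat\vu_{t_n,(n^\dagger+1)h^\dagger}\leftarrow\widehat\vu^{(K^\dagger)}_{t_n,n^\dagger h^\dagger,h^\dagger}$ between blocks, finishes the equivalence.

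The main obstacle I expect is the covariance bookkeeping for the augmented, $2d$-dimensional stochastic integral: because $\mA$ is non-normal and the noise enters only through the momentum block, $\mG(\tau^\dagger-s)(\vzero,\sqrt{2\gamma}\,\dif\vb_s)^\top$ feeds a correlated increment into the position block as well, and one must verify by It\^o isometry that the per-cell covariance of $\vzeta_{j^\dagger}$, once transported by $\mG\big((m^\dagger-j^\dagger-1)\epsilon^\dagger\big)$, matches exactly the covariance written for $\vxi_{j^\dagger}$ in Algorithm~\ref{alg:ode}. This is the phase-space analogue of the Gaussian identification at the end of the proof of Lemma~\ref{lem:aux_sde}. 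The remaining ingredients — the variation-of-constants formula, splitting the integral over cells, freezing the score on each cell, and the $\int_0^{\epsilon^\dagger}\mG(u)\dif u$ identity — are routine and directly parallel the corresponding steps in the proof of Lemma~\ref{lem:aux_ode}.
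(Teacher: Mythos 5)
Your proposal matches the paper's proof essentially step for step: the paper likewise writes~\eqref{eq:uvhat} in matrix form, applies the integrating factor $\mG(-\tau^\dagger)$ with $\mG(t)=e^{\mA t}$, integrates to get the variation-of-constants representation, splits the drift and noise integrals over the cells on which the score is frozen, uses the semigroup property of $\mG$ to extract the coefficient $\mG((m^\dagger-j^\dagger-1)\epsilon^\dagger)\bigl(\mI-\mG(\epsilon^\dagger)\bigr)$, and identifies the per-cell stochastic integral as the Gaussian $\vxi_{j^\dagger}$ via It\^o isometry. The one point you flag as the "main obstacle" — transporting the momentum-block noise covariance through the non-normal $\mG$ — is exactly the final It\^o-isometry computation in the paper, so the proposal is the same argument.
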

    \begin{proof}
    
        Without loss of generality, we will prove the lemma for $m^\dagger = M^\dagger$. The proof for $m^\dagger \in[0:M^\dagger-1]$ can be done similarly.
    
        We first rewrite~\eqref{eq:corrector_step_update} into the matrix form:
        \begin{equation}
            \begin{aligned}
                \dif 
                \left[\begin{matrix}
                    \widetilde\vu_{t_n,n^\dagger h^\dagger,\tau^\dagger}^{(k^\dagger)}\\
                    \widetilde\vv_{t_n,n^\dagger h^\dagger,\tau^\dagger}^{(k^\dagger)} 
                \end{matrix}\right] 
                =&
                \left[\left[\begin{matrix}
                    \vzero & \mI_d\\
                    \vzero & -\gamma \mI_d
                \end{matrix}\right]
                \left[\begin{matrix}
                        \widetilde\vu_{t_n,n^\dagger h^\dagger,\tau^\dagger}^{(k^\dagger)}\\
                        \widetilde\vv_{t_n,n^\dagger h^\dagger,\tau^\dagger}^{(k^\dagger)} 
                    \end{matrix}\right]
                    -
                \left[\begin{matrix}
                        \vzero\\
                        \vs_{t_{n+1}} \big(\widetilde\vu^{(k^\dagger)}_{t_n,n^\dagger h^\dagger,g_n(\tau^\dagger)}\big)
                \end{matrix}\right]
                \right] \dif \tau^\dagger
                    \\
                    &+
                \left[\begin{matrix}
                    \vzero & \vzero\\
                    \vzero & \sqrt{2\gamma}\mI_d
                \end{matrix}\right]
                \dif \left[\begin{matrix}
                        \vb'_{t_n,n^\dagger h^\dagger+\tau^\dagger}\\
                        \vb_{t_n,n^\dagger h^\dagger+\tau^\dagger} 
                    \end{matrix}\right].
            \end{aligned}
            \label{eq:uvtilde_matrix}
        \end{equation}
    
        Define the time-dependent matrix $\mG(\cdot)$ as
        \begin{equation}
            \mG(t^\dagger) :=  \left[\begin{matrix}
                \mI_d & \frac{1 - e^{-\gamma t^\dagger}}{\gamma}\mI_d\\
                \vzero & e^{-\gamma t^\dagger} \mI_d 
            \end{matrix}\right] = \exp\left(\left(
                \begin{matrix}
                    \vzero & \mI_d \\
                    \vzero & -\gamma \mI_d
                \end{matrix}
            \right) t^\dagger\right),
            \label{eq:G}
        \end{equation}
        satisfying that 
        \begin{equation*}
            \dfrac{\dif}{\dif t^\dagger}\mG(t^\dagger) = \left[\begin{matrix}
                \vzero & \mI_d\\
                \vzero & -\gamma \mI_d
            \end{matrix}\right]\mG(t^\dagger) = \mG(t^\dagger)\left[\begin{matrix}
                \vzero & \mI_d\\
                \vzero & -\gamma \mI_d
            \end{matrix}\right].
        \end{equation*}
    
        Now we multiply $\mG(-\tau^\dagger)$ on both sides of~\eqref{eq:uvtilde_matrix} to obtain:
        \begin{equation*}
            \begin{aligned}
                \dif \left(\mG(-\tau^\dagger)\left[
                    \begin{matrix}
                        \widetilde\vu_{t_n,n^\dagger h^\dagger,\tau^\dagger}^{(k^\dagger)}\\
                        \widetilde\vv_{t_n,n^\dagger h^\dagger,\tau^\dagger}^{(k^\dagger)} 
                    \end{matrix}
                \right]
                \right)
                = 
                &- \mG(-\tau^\dagger)\left[\begin{matrix}
                    \vzero\\
                    \vs_{t_{n+1}} \big(\widetilde\vu^{(k^\dagger)}_{t_n,n^\dagger h^\dagger,g_n(\tau^\dagger)}\big)
                \end{matrix}\right]\dif \tau^\dagger\\
                &+ 
                \mG(-\tau^\dagger)
                \left[\begin{matrix}
                    \vzero & \vzero\\
                    \vzero & \sqrt{2\gamma}\mI_d
                \end{matrix}\right]
                \dif \left[\begin{matrix}
                        \vb'_{t_n,n^\dagger h^\dagger+\tau^\dagger}\\
                        \vb_{t_n,n^\dagger h^\dagger+\tau^\dagger} 
                    \end{matrix}\right].
            \end{aligned}
        \end{equation*}
    
        Integrating on both sides from $0$ to $h^\dagger$ and multiplying $\mG(h^\dagger)$ on both sides, we have
        \begin{equation*}
            \begin{aligned}
                &\left[
                    \begin{matrix}
                        \widetilde\vu_{t_n,n^\dagger h^\dagger,\tau}^{(k^\dagger)}\\
                        \widetilde\vv_{t_n,n^\dagger h^\dagger,\tau}^{(k^\dagger)} 
                    \end{matrix}
                \right]
                - 
                \mG(h^\dagger)\left[
                    \begin{matrix}
                        \widetilde\vu_{t_n,n^\dagger h^\dagger,0}^{(k^\dagger)}\\
                        \widetilde\vv_{t_n,n^\dagger h^\dagger,0}^{(k^\dagger)} 
                    \end{matrix}
                \right]\\
                =&
                - \int_0^{h^\dagger}\mG(h^\dagger-{\tau^\dagger}')\left[\begin{matrix}
                    \vzero\\
                    \vs_{t_{n+1}} \big(\widetilde\vu^{(k^\dagger)}_{t_n,n^\dagger h^\dagger,g({\tau^\dagger}')}\big)
                \end{matrix}\right]\dif {\tau^\dagger}'\\
                +& 
                \int_0^{h^\dagger}\mG(h^\dagger-{\tau^\dagger}')
                \left[\begin{matrix}
                    \vzero & \vzero\\
                    \vzero & \sqrt{2\gamma}\mI_d
                \end{matrix}\right]
                \dif \left[\begin{matrix}
                        \vb'_{t_n,n^\dagger h^\dagger+{\tau^\dagger}'}\\
                        \vb_{t_n,n^\dagger h^\dagger+{\tau^\dagger}'} 
                    \end{matrix}\right]\\
                =&
                -\sum_{m^\dagger = 0}^{M^\dagger - 1}\int_{m^\dagger \epsilon^\dagger}^{(m^\dagger+1)\epsilon^\dagger}\mG(h^\dagger-{\tau^\dagger}')\dif {\tau^\dagger}'\left[\begin{matrix}
                    \vzero\\
                    \vs_{t_{n+1}} \big(\widetilde\vu^{(k^\dagger)}_{t_n,n^\dagger h^\dagger,m^\dagger \epsilon^\dagger}\big)
                \end{matrix}\right]\\
                +&
                \sum_{m^\dagger = 0}^{M^\dagger - 1}\int_{m^\dagger \epsilon^\dagger}^{(m^\dagger+1) \epsilon^\dagger}\mG(h^\dagger-{\tau^\dagger}')
                \left[\begin{matrix}
                    \vzero & \vzero\\
                    \vzero & \sqrt{2\gamma}\mI_d
                \end{matrix}\right]
                \dif \left[\begin{matrix}
                        \vb'_{t_n,n^\dagger h^\dagger+{\tau^\dagger}'}\\
                        \vb_{t_n,n^\dagger h^\dagger+{\tau^\dagger}'} 
                    \end{matrix}\right]\\
                =&
                -\sum_{m^\dagger = 0}^{M^\dagger - 1}\left(
                    \mG(\epsilon^\dagger) - \mI_d
                \right) \mG((M^\dagger - m^\dagger -1) \epsilon^\dagger)\left[\begin{matrix}
                    \vzero\\
                    \vs_{t_{n+1}} \big(\widetilde\vu^{(k^\dagger)}_{t_n,n^\dagger h^\dagger,m^\dagger \epsilon^\dagger}\big)
                \end{matrix}\right]\\
                +&
                \sum_{m^\dagger = 0}^{M^\dagger - 1}\int_{m^\dagger \epsilon^\dagger}^{(m^\dagger+1) \epsilon^\dagger}\mG(h^\dagger-{\tau^\dagger}')
                \left[\begin{matrix}
                    \vzero & \vzero\\
                    \vzero & \sqrt{2\gamma}\mI_d
                \end{matrix}\right]
                \dif \left[\begin{matrix}
                        \vb'_{t_n,n^\dagger h^\dagger+{\tau^\dagger}'}\\
                        \vb_{t_n,n^\dagger h^\dagger+{\tau^\dagger}'} 
                    \end{matrix}\right].
            \end{aligned}
        \end{equation*}
        By It\^o isometry, we have
        \begin{equation*}
            \begin{aligned}
                &\int_{m^\dagger \epsilon^\dagger}^{(m^\dagger+1) \epsilon^\dagger}\mG(h^\dagger - {\tau^\dagger}')
                \left[\begin{matrix}
                    \vzero & \vzero\\
                    \vzero & \sqrt{2\gamma}\mI_d
                \end{matrix}\right]
                \dif \left[\begin{matrix}
                        \vb'_{t_n,n^\dagger h^\dagger+{\tau^\dagger}'}\\
                        \vb_{t_n,n^\dagger h^\dagger+{\tau^\dagger}'} 
                    \end{matrix}\right]\\
                \sim& \gN\Bigg(
                    \vzero, \left[
                        \begin{matrix}
                            \vzero & \vzero\\
                            \vzero & \sqrt{2\gamma \mI_d}
                        \end{matrix}
                    \right]
                    \mG((M^\dagger - m^\dagger -1) \epsilon^\dagger)^\top
                    \left(
                        \mG(\epsilon^\dagger) - \mI_d
                    \right)^\top\\
                    &\quad\quad\quad\quad\quad\quad\quad\quad\quad\quad\quad\quad
                    \left(
                        \mG(\epsilon^\dagger) - \mI_d
                    \right)
                    \mG((M^\dagger - m^\dagger -1)  \epsilon^\dagger)
                    \left[
                        \begin{matrix}
                            \vzero & \vzero\\
                            \vzero & \sqrt{2\gamma \mI_d}
                        \end{matrix}
                    \right]
                \Bigg)\\
                \sim& \left[
                    \begin{matrix}
                        \vzero\\
                        \gN\left(\vzero, 2\gamma (1+\gamma^{-2})(1-e^{-\gamma \epsilon^\dagger})^2 e^{- 2\gamma ( M^\dagger - m^\dagger + 1) \epsilon^\dagger)} \mI_d \right)
                    \end{matrix}
                \right],
            \end{aligned}
        \end{equation*} 
        as desired
    \end{proof}

\begin{definition}[Interpolating corrector process]
    For any $n^\dagger \in [0:N^\dagger - 1]$, we define the interpolating corrector process $(\widehat \vu_{t_n,n^\dagger h^\dagger, \tau^\dagger}^{(k^\dagger)})_{\tau^\dagger \in [0, h^\dagger]}$ as the solution to the following SDE recursively for $k^\dagger \in[0:K^\dagger-1]$: 
    \begin{equation}
        \begin{cases}
            \dif\widetilde\vu_{t_n, n^\dagger h^\dagger,\tau^\dagger}^{(k+1)} = \widetilde\vv_{t_n, n^\dagger h^\dagger,\tau^\dagger}^{(k+1)}\dif \tau^\dagger, \\
            \dif\widetilde\vv_{t_n,n^\dagger h^\dagger,\tau^\dagger}^{(k+1)} = - \gamma \widetilde\vv_{t_n,n^\dagger h^\dagger,\tau^\dagger}^{(k+1)} \dif \tau^\dagger - \vs_{t_{n+1}} \big(\widetilde\vu^{(k^\dagger)}_{t_n,n^\dagger h^\dagger,g_n(\tau^\dagger)}\big)\dif\tau^\dagger + \sqrt{2\gamma} \dif\vb_{t_n,n^\dagger h^\dagger+\tau^\dagger}   
        \end{cases}
        \label{eq:uvtilde}
    \end{equation}
    with the initial condition
    \begin{equation}
        \begin{cases}
            \widetilde\vu_{t_n,n^\dagger h^\dagger,\tau^\dagger}^{(0)} \equiv \widetilde\vu_{t_n,n^\dagger h^\dagger}\\
            \widetilde\vv_{t_n,n^\dagger h^\dagger,\tau^\dagger}^{(0)} = \widetilde\vv_{t_n,n^\dagger h^\dagger}
        \end{cases} \text{for}\ \tau^\dagger\in[0, h^\dagger],
        \ \text{and}\ 
        \begin{cases}
            \widetilde\vu_{t_n,n^\dagger h^\dagger,\tau^\dagger}^{(k^\dagger)} \equiv \widetilde\vu_{t_n,n^\dagger h^\dagger}\\
            \widetilde\vv_{t_n,n^\dagger h^\dagger,0}^{(k^\dagger)} \equiv \widetilde\vv_{t_n,n^\dagger h^\dagger}
        \end{cases} \text{for}\ k\in[1:K^\dagger],
        \label{eq:uvtilde_init}
    \end{equation}
    where $$\widetilde\vu_{t_n,n^\dagger h^\dagger}:=\widetilde\vu_{(n^\dagger-1) h^\dagger, h^\dagger }^{(K^\dagger)}, \quad \widetilde\vv_{t_n,n^\dagger h^\dagger}:=\widetilde\vv_{(n^\dagger-1) h^\dagger, h^\dagger }^{(K^\dagger)}$$ for $n^\dagger\in[1:N^\dagger-1]$, and $$\widetilde\vu_{t_n, 0} = \widetilde \vy_{t_{n}, h_{n}}^{(K)}, \quad \widetilde\vv_{t_n, 0}  \sim \gN(0,\mI_d).$$ We define the joint probability distribution of $(\widetilde\vu_{t_n, t^\dagger}, \widetilde\vv_{t_n, t^\dagger})$ at time $t$ as $\widetilde\pi_{t_n, t^\dagger}(\widetilde\vu_{t_n, t^\dagger}, \widetilde\vv_{t_n, t^\dagger})$ and its marginal on $\widetilde\vu_{t_n, t^\dagger}$ as $\widetilde\pi_{t_n, t^\dagger}^{\widetilde\vu}(\widetilde\vu_{t_n, t^\dagger})$.
    \label{def:uvtilde}
\end{definition}

We invoke Girsanov's theorem (Theorem~\ref{thm:girsanov}) again by the following procedure
\begin{enumerate}[leftmargin = 2em]
    \item Setting~\eqref{eq:alpha} as the auxiliary process~\eqref{eq:uvtilde} at iteration $K^\dagger$, where $\vb_{t_n, t^\dagger}(\omega)$ is a Wiener process under the measure $Q$;
    \item Defining another process $\widetilde{\vb}_{t_n,n^\dagger h^\dagger+\tau^\dagger}$ governed by the following SDE:
    \begin{equation}
        \dif\widetilde{\vb}_{t_n,n^\dagger h^\dagger+\tau^\dagger}= \dif\vb_{t_n,n^\dagger h^\dagger+\tau^\dagger} - \vphi_{t_n,n^\dagger h^\dagger}(\tau^\dagger)\dif\tau^\dagger,
        \label{eq:b_btilde}
    \end{equation}
    where
    \begin{equation}
        \vphi_{t_n,n^\dagger h^\dagger}(\tau^\dagger)= \frac{1}{\sqrt{2\gamma}}\left(\vs_{t_{n+1}}(\widetilde\vu^{(K^\dagger-1)}_{t_n,n^\dagger h^\dagger,\flrepsc{\tau^\dagger}})-\nabla \log \cev p_{t_{n+1}}(\widetilde\vu^{(K^\dagger)}_{t_n,n^\dagger h^\dagger,\tau^\dagger})\right)
        \label{eq:phi}
    \end{equation}
    and computing the Radon-Nikodym derivative of the measure $P$ with respect to $Q$ as
    \begin{equation}
        \frac{\dif P}{\dif Q} = \exp\left(\int_0^{h^\dagger} \vphi_{t_n,n^\dagger h^\dagger}(\tau^\dagger)^\top \dif \vb_{t_n,n^\dagger h^\dagger+\tau^\dagger} - \dfrac{1}{2} \int_0^{h^\dagger} \|\vphi_{nh}(\tau^\dagger)\|^2 \dif \tau^\dagger\right)    ;
        \label{eq:P/Q}
    \end{equation}
    \item Concluding that~\eqref{eq:uvtilde} at iteration $K^\dagger$ under the measure $Q$
    satisfies the following SDE:
    \begin{equation}
        \begin{cases}
            \dif\widetilde\vu_{n^\dagger h^\dagger,\tau^\dagger}^{(K^\dagger)} = \widetilde\vv_{n^\dagger h^\dagger,\tau^\dagger}^{(K^\dagger)}\dif \tau^\dagger \\
            \dif\widetilde\vv_{t_n,n^\dagger h^\dagger,\tau^\dagger}^{(K^\dagger)} = - \gamma \widetilde\vv_{t_n,n^\dagger h^\dagger,\tau^\dagger}^{(K^\dagger)} \dif \tau^\dagger - \nabla \log\cev p_{t_{n+1}}(\widetilde\vu_{n^\dagger h^\dagger,\tau^\dagger}^{(K^\dagger)})\dif\tau^\dagger + \sqrt{2\gamma}\dif\widetilde\vb_{t_n,n^\dagger h^\dagger+\tau^\dagger} ,
        \end{cases}
    \label{eq:uvtilde_btilde}
    \end{equation}
    with $(\widetilde \vb_{t_n,n^\dagger h^\dagger+\tau^\dagger})_{\tau^\dagger\geq 0}$ being a Wiener process under the measure $P$. If we replace $(\widetilde\vu_{n^\dagger h^\dagger,\tau^\dagger}^{(K^\dagger)}, \widetilde\vv_{n^\dagger h^\dagger,\tau^\dagger}^{(K^\dagger)})$ by $(\vu_{t_n,n^\dagger h^\dagger+\tau^\dagger}, \vv_{t_n,n^\dagger h^\dagger+\tau^\dagger})$, one should notice~\eqref{eq:uvtilde_btilde} is immediately the original backward SDE~\eqref{eq:uv} with the true score function on $t\in[n^\dagger h^\dagger, (n+1) h^\dagger]$:
    \begin{equation}
        \begin{cases}
            \dif{\vu}_{t_n,n^\dagger h^\dagger + \tau^\dagger} = {\vv}_{t_n,n^\dagger h^\dagger + \tau^\dagger}\dif \tau^\dagger \\
            \dif{\vv}_{t_n,n^\dagger h^\dagger + \tau^\dagger} = -\gamma{\vv}_{t_n,n^\dagger h^\dagger + \tau^\dagger} \dif \tau^\dagger - \nabla \log \cev p_{t_{n+1}}({\vu}_{t_n,n^\dagger h^\dagger + \tau^\dagger})\dif\tau^\dagger + \sqrt{2\gamma}\dif\widetilde\vb_{t_n,n^\dagger h^\dagger+\tau^\dagger} .
        \end{cases}
        \label{eq:uv_btilde}
    \end{equation}
    We further define the joint probability distribution of $(\vu_{t_n, t^\dagger}, \vv_{t_n, t^\dagger})$ at time $t$ as $\pi_{t_n, t^\dagger}(\vu_{t_n, t^\dagger}, \vv_{t_n, t^\dagger})$ and its marginal on $\vu_{t_n, t^\dagger}$ as $\pi_{t_n, t^\dagger}^{\vu}(\vu_{t_n, t^\dagger})$.
\end{enumerate}

\begin{remark}
    The application of Girsanov's theorem~\ref{thm:girsanov} is by writing the system of SDEs in the matrix form.
\end{remark}

\begin{definition}[Stationary process]
    Under the $P$-measure that is defined by the Radon-Nikodym derivative~\eqref{eq:P/Q},
    we may define a stationary underdamped Langevin process for $n^\dagger \in [0:N^\dagger - 1]$ and $\tau^\dagger \in [0,h^\dagger]$ as
    \begin{equation}
        \begin{cases}
            \dif\vu_{t_n, n^\dagger h^\dagger+\tau^\dagger}^\ast = \vv_{t_n, n^\dagger h^\dagger+\tau^\dagger}^\ast\dif \tau^\dagger, \\
            \dif\vv_{t_n,n^\dagger h^\dagger+\tau^\dagger}^\ast = -\gamma\vv_{t_n,n^\dagger h^\dagger+\tau^\dagger}^\ast\dif \tau^\dagger - \nabla \log \cev p_{t_{n+1}}(\vu_{n^\dagger h^\dagger+\tau^\dagger}^\ast)\dif\tau^\dagger + \sqrt{2\gamma}\dif\widetilde\vb_{t_n,n^\dagger h^\dagger+\tau^\dagger},
        \end{cases}
        \label{eq:uvast}
    \end{equation}
    with the initial condition $\vu^\ast_{t_n,n^\dagger h^\dagger} \sim \cev{p}_{t_{n+1}}$ and $\vv^\ast_{t_n,n^\dagger h^\dagger} \sim \gN(0,\mI_d)$. We define the joint probability distribution of $(\vu^\ast_{t_n, t^\dagger}, \vv^\ast_{t_n, t^\dagger})$ at time $t$ as $\pi^\ast_{t_n, t^\dagger}(\vu^\ast_{t_n, t^\dagger}, \vv^\ast_{t_n, t^\dagger})$ and its marginal on $\vu^\ast_{t_n, t^\dagger}$ as $\pi_{t_n, t^\dagger}^{\ast,\vu^\ast}(\vu^\ast_{t_n, t^\dagger})$.
\end{definition}

Thus, from Corollary~\ref{cor:girsanov}, we have 
that
\begin{equation}
    \begin{aligned}
        &\KL(\pi_{t_n,n^\dagger h^\dagger}\| \widetilde\pi_{t_n,n^\dagger h^\dagger}) \\
    \leq &\KL(\pi_{t_n, (n-1) h^\dagger}\| \widetilde\pi_{t_n,(n-1)h^\dagger}) + \sum_{n=0}^{N^\dagger - 1} \KL(\pi_{t_n,n^\dagger h^\dagger:(n+1)h^\dagger}\| \widetilde\pi_{t_n,n^\dagger h^\dagger:(n+1)h^\dagger}) \\
    \leq &\KL(\pi_{t_n, (n-1) h^\dagger}\| \widetilde\pi_{t_n,(n-1)h^\dagger}) \\
    +& \frac{1}{4\gamma}\E_{P}\left[\int_{0}^{h^\dagger}\left\|\vs_{t_{n+1}}(\widetilde\vu^{(K^\dagger-1)}_{t_n,n^\dagger h^\dagger,\flrepsc{\tau^\dagger}})-\nabla \log \cev p_{t_{n+1}}(\widetilde\vu^{(K^\dagger)}_{t_n,n^\dagger h^\dagger,\tau^\dagger})\right\|^2\dif\tau^\dagger\right].
    \end{aligned}
    \label{eq:KL_pi}
\end{equation}

By triangle inequality, we have
\begin{equation}
    \begin{aligned}
        &\int_0^{h^\dagger}\left\|\vs_{t_{n+1}}(\widetilde\vu^{(K^\dagger-1)}_{t_n,n^\dagger h^\dagger,\flrepsc{\tau^\dagger}})-\nabla \log \cev p_{t_{n+1}}(\widetilde\vu^{(K^\dagger)}_{t_n,n^\dagger h^\dagger,\tau^\dagger})\right\|^2\dif \tau^\dagger \\
        \leq & 5\int_0^{h^\dagger}\left\|\vs_{t_{n+1}}(\widetilde\vu^{(K^\dagger-1)}_{t_n,n^\dagger h^\dagger,\flrepsc{\tau^\dagger}})-\vs_{t_{n+1}}(\widetilde\vu^{(K^\dagger)}_{t_n,n^\dagger h^\dagger,\flrepsc{\tau^\dagger}})\right\|^2\dif \tau^\dagger
        \\
        +& 5\int_0^{h^\dagger}\left\|\vs_{t_{n+1}}(\widetilde\vu^{(K^\dagger)}_{t_n,n^\dagger h^\dagger,\flrepsc{\tau^\dagger}})-\vs_{t_{n+1}}(\vu^\ast_{t_n,n^\dagger h^\dagger+\flrepsc{\tau^\dagger}})\right\|^2\dif \tau^\dagger\\
        +& 5\int_0^{h^\dagger}\left\|\vs_{t_{n+1}}(\vu^\ast_{t_n,n^\dagger h^\dagger,\flrepsc{\tau^\dagger}})-\nabla \log \cev p_{t_{n+1}}(\vu^\ast_{t_n,n^\dagger h^\dagger+\flrepsc{\tau^\dagger}})\right\|^2\dif \tau^\dagger \\
        +& 5\int_0^{h^\dagger}\left\|\nabla \log \cev p_{t_{n+1}}(\vu^\ast_{t_n,n^\dagger h^\dagger+\flrepsc{\tau^\dagger}}) - \nabla \log \cev p_{t_{n+1}}(\vu^\ast_{t_n,n^\dagger h^\dagger,\tau^\dagger})\right\|^2\dif \tau^\dagger\\
        +& 5\int_0^{h^\dagger}\left\|\nabla \log \cev p_{t_{n+1}}(\vu^\ast_{t_n,n^\dagger h^\dagger+\tau^\dagger}) - \nabla \log \cev p_{t_{n+1}}(\widetilde\vu^{(K^\dagger)}_{t_n,n^\dagger h^\dagger,\tau^\dagger})\right\|^2\dif \tau^\dagger\\
        \leq \ &5L_{\vs}^2\int_0^{h^\dagger}\left\|\widetilde\vu^{(K^\dagger-1)}_{t_n,n^\dagger h^\dagger,\flrepsc{\tau^\dagger}}-\widetilde\vu^{(K^\dagger)}_{t_n,n^\dagger h^\dagger,\flrepsc{\tau^\dagger}}\right\|^2\dif \tau^\dagger\\
        +& 5 \underbrace{\left(L_{\vs}^2\int_0^{h^\dagger}\left\|\widetilde\vu^{(K^\dagger)}_{t_n,n^\dagger h^\dagger,\flrepsc{\tau^\dagger}} - \vu^\ast_{t_n,n^\dagger h^\dagger+\flrepsc{\tau^\dagger}}\right\|^2\dif \tau^\dagger+ L_{p}^2\int_0^{h^\dagger}\left\|\widetilde\vu^{(K^\dagger)}_{t_n,n^\dagger h^\dagger,\tau^\dagger} -\vu^\ast_{t_n,n^\dagger h^\dagger+\tau^\dagger}\right\|^2\dif \tau^\dagger\right)}_{:= E_{t_n, n^\dagger h^\dagger}}\\
        +& 5 h^\dagger \delta_\infty^2 + 5L_{p}^2\underbrace{\int_0^{h^\dagger}\left\|\vu^\ast_{t_n,n^\dagger h^\dagger+\flrepsc{\tau^\dagger}} - \vu^\ast_{t_n,n^\dagger h^\dagger+\tau^\dagger}\right\|^2\dif \tau^\dagger}_{:= F_{t_n, n^\dagger h^\dagger}},
    \end{aligned}
    \label{eq:triangle_5}
\end{equation}
where we used the Lipschitz continuity of the learned score function (Assumption~\ref{ass:lipNN}) and the true score function (Assumption~\ref{ass:lipscore}), and the $\delta_\infty$-accuracy of the learned score function at each time step (Assumption~\ref{ass:Linftyacc}).

Now we proceed to bound the terms in the error decomposition~\eqref{eq:triangle_5}. We first bound the $F_{t_n, n^\dagger h^\dagger}$ term by the following lemma:
\begin{lemma}
    For any $n\in[0:N-1]$ and $\tau^\dagger\in[0, h^\dagger]$, we have
    \begin{equation*}
        \E_{P}\left[\left\|\vu^\ast_{t_n,n^\dagger h^\dagger+\flrepsc{\tau^\dagger}} - \vu^\ast_{t_n,n^\dagger h^\dagger+\tau^\dagger}\right\|^2\right] \leq d{\epsilon^\dagger}^2,
    \end{equation*}
    and therefore 
    \begin{equation*}
        \E_{P}\left[F_{t_n, n^\dagger h^\dagger} \right] \leq dh^\dagger {\epsilon^\dagger}^2.
    \end{equation*}
    \label{lem:F}
\end{lemma}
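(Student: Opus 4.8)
The plan is to bound the displacement of the stationary underdamped Langevin process $(\vu^\ast_{t_n, t^\dagger}, \vv^\ast_{t_n, t^\dagger})$ between a grid point $n^\dagger h^\dagger + \flrepsc{\tau^\dagger}$ and the true inner time $n^\dagger h^\dagger + \tau^\dagger$, using the fact that under the $P$-measure this process is the exact (continuous-time) underdamped Langevin dynamics with the true score $\nabla \log \cev p_{t_{n+1}}$ started from its stationary distribution $\cev p_{t_{n+1}} \otimes \gN(0, \mI_d)$. First I would write, from the first line of~\eqref{eq:uvast}, that $\vu^\ast_{t_n, n^\dagger h^\dagger + \tau^\dagger} - \vu^\ast_{t_n, n^\dagger h^\dagger + \flrepsc{\tau^\dagger}} = \int_{\flrepsc{\tau^\dagger}}^{\tau^\dagger} \vv^\ast_{t_n, n^\dagger h^\dagger + r}\,\dif r$, so that by Cauchy--Schwarz
\begin{equation*}
    \E_P\left[\left\|\vu^\ast_{t_n, n^\dagger h^\dagger + \flrepsc{\tau^\dagger}} - \vu^\ast_{t_n, n^\dagger h^\dagger + \tau^\dagger}\right\|^2\right] \leq \left(\tau^\dagger - \flrepsc{\tau^\dagger}\right) \int_{\flrepsc{\tau^\dagger}}^{\tau^\dagger} \E_P\left[\left\|\vv^\ast_{t_n, n^\dagger h^\dagger + r}\right\|^2\right] \dif r \leq \epsilon^\dagger \int_{\flrepsc{\tau^\dagger}}^{\tau^\dagger} \E_P\left[\left\|\vv^\ast_{t_n, n^\dagger h^\dagger + r}\right\|^2\right] \dif r.
\end{equation*}

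The key observation is that the velocity marginal of the stationary process stays stationary: since $(\vu^\ast, \vv^\ast)$ is initialized at the invariant measure $\cev p_{t_{n+1}} \otimes \gN(0,\mI_d)$ of the underdamped Langevin dynamics~\eqref{eq:uvast}, the law of $\vv^\ast_{t_n, n^\dagger h^\dagger + r}$ is $\gN(0, \mI_d)$ for every $r$, whence $\E_P[\|\vv^\ast_{t_n, n^\dagger h^\dagger + r}\|^2] = d$. (If one prefers not to invoke stationarity of the discretized-then-corrected process directly, one can instead note that $\widetilde\vb$ is a $P$-Wiener process by Girsanov, so~\eqref{eq:uvast} really is the continuous Langevin SDE, and appeal to the standard fact that the Gaussian momentum marginal is preserved.) Plugging this in gives $\E_P[\|\vu^\ast_{t_n, n^\dagger h^\dagger + \flrepsc{\tau^\dagger}} - \vu^\ast_{t_n, n^\dagger h^\dagger + \tau^\dagger}\|^2] \leq d\,\epsilon^\dagger (\tau^\dagger - \flrepsc{\tau^\dagger}) \leq d\,{\epsilon^\dagger}^2$, which is the first claim.

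For the second claim I would simply integrate the pointwise bound over $\tau^\dagger \in [0, h^\dagger]$: by definition $F_{t_n, n^\dagger h^\dagger} = \int_0^{h^\dagger} \|\vu^\ast_{t_n, n^\dagger h^\dagger + \flrepsc{\tau^\dagger}} - \vu^\ast_{t_n, n^\dagger h^\dagger + \tau^\dagger}\|^2 \dif \tau^\dagger$, so by Tonelli and the first bound,
\begin{equation*}
    \E_P\left[F_{t_n, n^\dagger h^\dagger}\right] = \int_0^{h^\dagger} \E_P\left[\left\|\vu^\ast_{t_n, n^\dagger h^\dagger + \flrepsc{\tau^\dagger}} - \vu^\ast_{t_n, n^\dagger h^\dagger + \tau^\dagger}\right\|^2\right] \dif \tau^\dagger \leq \int_0^{h^\dagger} d\,{\epsilon^\dagger}^2 \dif \tau^\dagger = d\, h^\dagger\, {\epsilon^\dagger}^2.
\end{equation*}

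The only delicate point — the ``main obstacle'' — is justifying that the velocity coordinate of $(\vu^\ast, \vv^\ast)$ has exactly Gaussian $\gN(0,\mI_d)$ marginals for all $\tau^\dagger$, which requires knowing that the joint initialization $\cev p_{t_{n+1}} \otimes \gN(0, \mI_d)$ is genuinely invariant for the underdamped Langevin dynamics with drift $-\nabla \log \cev p_{t_{n+1}}$, and that $\widetilde\vb$ is a Wiener process under $P$ (the latter already supplied by the Girsanov setup preceding the lemma, Theorem~\ref{thm:girsanov}). Once that is in hand, everything else is a one-line Cauchy--Schwarz estimate; a slightly cruder route avoiding exact stationarity — bounding $\sup_{r \in [0, h^\dagger]} \E_P[\|\vv^\ast_{t_n, n^\dagger h^\dagger + r}\|^2]$ by a constant multiple of $d$ via a Grönwall argument on the second-moment ODE — would also suffice and would only change the implied constant, but the stationarity argument gives the clean constant $1$ claimed in the statement.
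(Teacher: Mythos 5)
Your proof is correct and follows essentially the same route as the paper's: write the increment as $\int_{\flrepsc{\tau^\dagger}}^{\tau^\dagger}\vv^\ast\,\dif r$, apply Cauchy--Schwarz, use that $\vv^\ast_{t_n,n^\dagger h^\dagger+r}\sim\gN(0,\mI_d)$ so its second moment is $d$, and integrate over $\tau^\dagger$ for the bound on $F_{t_n,n^\dagger h^\dagger}$. Your extra discussion of why the momentum marginal stays Gaussian only makes explicit what the paper asserts without comment.
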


\begin{proof}
    By the definition of $(\vu^\ast_{t_n.n^\dagger h^\dagger+ \tau}, \vv^\ast_{t_n,n^\dagger h^\dagger+ \tau})$ as the stationary underdamped Langevin dynamics~\eqref{eq:uvast}, we have
    \begin{equation*}
        \begin{aligned}
            \E_{P}\left[\left\|\vu^\ast_{t_n,n^\dagger h^\dagger+\flrepsc{\tau^\dagger}} - \vu^\ast_{t_n,n^\dagger h^\dagger+\tau^\dagger}\right\|^2\right] &= \E_{P}\left[\left\|\int_{\flrepsc{\tau^\dagger}}^{\tau^\dagger}\vv^\ast_{t_n,n^\dagger h^\dagger+{\tau^\dagger}'}\dif{\tau^\dagger}'\right\|^2\right]\\
            &\leq \epsilon^\dagger\int_{\flrepsc{\tau^\dagger}}^{\tau^\dagger}\E_{P}\left[\left\|\vv^\ast_{t_n,n^\dagger h^\dagger+{\tau^\dagger}'}\right\|^2\right]\dif{\tau^\dagger}' \leq d{\epsilon^\dagger}^2,
        \end{aligned}
    \end{equation*}
    where the first inequality follows from Cauchy-Schwarz inequality and the last inequality is by the fact that 
    \begin{equation*}
        \vv^\ast_{t_n,n^\dagger h^\dagger+{\tau^\dagger}'} \sim \gN(0, \mI_d), \quad \text{for any}\ {\tau^\dagger}'\in[0, h^\dagger].
    \end{equation*}

    Consequently, we have 
    \begin{equation*}
        \E_{P}\left[F_{t_n, n^\dagger h^\dagger} \right] = \int_0^{h^\dagger}\E_{P}\left[\left\|\vu^\ast_{t_n,n^\dagger h^\dagger+\flrepsc{\tau^\dagger}} - \vu^\ast_{t_n,n^\dagger h^\dagger+\tau^\dagger}\right\|^2\right]\dif \tau^\dagger \leq dh^\dagger {\epsilon^\dagger}^2.
    \end{equation*}
\end{proof}

The term $E_{t_n, n^\dagger h^\dagger}$ can be bounded with the following lemma:
\begin{lemma}
    For any $n^\dagger\in[0:N^\dagger-1]$, suppose that $\gamma \lesssim L_p^{-1/2}$ and $T^\dagger \lesssim L_p^{-1/2}$, then we have the following inequality for any $\tau^\dagger \in [0, h^\dagger]$
    \begin{equation*}
        \E_{P}\left[\left\|\widetilde\vu^{(K^\dagger)}_{t_n,n^\dagger h^\dagger,\tau^\dagger} -\vu^\ast_{t_n,n^\dagger h^\dagger+\tau^\dagger}\right\|^2\right] \lesssim W_2^2(\widetilde q_{t_n, h_n}, \cev p_{t_{n+1}}),
    \end{equation*}
    and therefore
    \begin{equation*}
        \E_{P}\left[E_{t_n, n^\dagger h^\dagger}\right] \lesssim h^\dagger (L_{\vs}^2 + L_{p}^2)W_2^2(\widetilde q_{t_n, h_n}, \cev p_{t_{n+1}}).
    \end{equation*}
    \label{lem:E}
\end{lemma}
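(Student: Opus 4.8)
The plan is to control the pathwise deviation between the interpolating corrector process $(\widetilde\vu^{(K^\dagger)}, \widetilde\vv^{(K^\dagger)})$ and the stationary process $(\vu^\ast,\vv^\ast)$ by a Gr\"onwall-type argument in the energy $\Phi_{\tau^\dagger} := \E_P[\|\widetilde\vu^{(K^\dagger)}_{t_n,n^\dagger h^\dagger,\tau^\dagger} - \vu^\ast_{t_n,n^\dagger h^\dagger+\tau^\dagger}\|^2 + \|\widetilde\vv^{(K^\dagger)}_{t_n,n^\dagger h^\dagger,\tau^\dagger} - \vv^\ast_{t_n,n^\dagger h^\dagger+\tau^\dagger}\|^2]$, or a suitable twisted Lyapunov variant of it. Both processes satisfy essentially the same second-order SDE~\eqref{eq:uvtilde_btilde} and~\eqref{eq:uvast} driven by the \emph{same} Wiener process $\widetilde\vb$ under $P$; subtracting them makes the Brownian term cancel, so the difference solves a deterministic-in-noise linear ODE with a forcing term coming only from (i) the discretization lag $\nabla\log\cev p_{t_{n+1}}(\widetilde\vu^{(K^\dagger)}_{\tau^\dagger}) - \nabla\log\cev p_{t_{n+1}}(\widetilde\vu^{(K^\dagger)}_{\flrepsc{\tau^\dagger}})$ and (ii) the mismatch in initial conditions $\widetilde\vu_{t_n,0} = \widetilde\vy^{(K)}_{t_n,h_n}$ vs. $\vu^\ast_{t_n,0}\sim\cev p_{t_{n+1}}$, coupled through the optimal $W_2$-coupling. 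Using $L_p$-Lipschitzness of $\nabla\log\cev p_{t_{n+1}}$ (Assumption~\ref{ass:lipscore}) together with the contraction properties of the underdamped Langevin semigroup for $\gamma\gtrsim L_p^{1/2}$, and the smallness conditions $\gamma \lesssim L_p^{-1/2}$ (sic, as stated), $T^\dagger\lesssim L_p^{-1/2}$, the Gr\"onwall factor over the whole corrector horizon stays $\gO(1)$, so $\Phi_{\tau^\dagger} \lesssim \Phi_0 \lesssim W_2^2(\widetilde q_{t_n,h_n},\cev p_{t_{n+1}})$, where the last bound uses that we may take $(\widetilde\vu_{t_n,0},\vu^\ast_{t_n,0})$ to be an optimal $W_2$-coupling of $\widetilde q_{t_n,h_n}$ and $\cev p_{t_{n+1}}$ (and identical momenta).

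The block-to-block bookkeeping is handled by induction on $n^\dagger$: at the start of block $n^\dagger$ the input $(\widetilde\vu_{t_n,n^\dagger h^\dagger},\vu^\ast_{t_n,n^\dagger h^\dagger})$ has $W_2^2$-distance at most $C \cdot W_2^2(\widetilde q_{t_n,h_n},\cev p_{t_{n+1}})$ (the discretization and Picard error contributions within a block can be absorbed, or tracked separately and summed — since $N^\dagger = \gO(1)$ the accumulated constant stays bounded), and this propagates through the per-block Gr\"onwall estimate with an $\gO(1)$ multiplier. For the Picard inner iterations I would invoke the exponential convergence of the parallelized underdamped scheme analogous to Lemma~\ref{lem:picard_sde}, so replacing $\widetilde\vu^{(K^\dagger)}$ by $\widetilde\vu^{(K^\dagger-1)}$ in the forcing term costs only an $e^{-K^\dagger}$-type correction, negligible under the parameter choices. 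Finally, integrating the pointwise bound over $\tau^\dagger\in[0,h^\dagger]$ and adding the two contributions (the one with $L_{\vs}^2$ from the lagged argument $\flrepsc{\tau^\dagger}$ and the one with $L_p^2$ from $\tau^\dagger$) in the definition of $E_{t_n,n^\dagger h^\dagger}$ gives
\[
\E_P\!\left[E_{t_n,n^\dagger h^\dagger}\right] \lesssim h^\dagger\,(L_{\vs}^2 + L_p^2)\,W_2^2(\widetilde q_{t_n,h_n},\cev p_{t_{n+1}}),
\]
using additionally that $\|\widetilde\vu^{(K^\dagger)}_{\flrepsc{\tau^\dagger}} - \widetilde\vu^{(K^\dagger)}_{\tau^\dagger}\|$ is itself $\gO(\sqrt{d}\,\epsilon^\dagger)$-small by the momentum-boundedness argument of Lemma~\ref{lem:F}, which contributes only lower-order $d(\epsilon^\dagger)^2 h^\dagger$ terms that are dominated.

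The main obstacle I expect is establishing the uniform-in-horizon Gr\"onwall/contraction constant for the kinetic (underdamped) dynamics: a naive energy estimate on $\|\widetilde\vu - \vu^\ast\|^2 + \|\widetilde\vv - \vv^\ast\|^2$ is not monotone because the $\widetilde\vv$-equation has the anti-damping cross term, so one must use the standard twisted Lyapunov function $\|\Delta\vu\|^2 + \|\Delta\vu + \gamma^{-1}\Delta\vv\|^2$ (or the Gaussian-weighted norm adapted to the Hessian bound $L_p$) to get genuine dissipativity, and then carefully check that the \emph{a priori} second-moment bounds on $\vv^\ast$ (which are $\gO(d)$, from stationarity of the Gaussian momentum marginal as in Lemma~\ref{lem:F}) feed the forcing terms correctly. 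Matching the smallness hypotheses exactly as stated ($\gamma\gtrsim L_p^{1/2}$, $T^\dagger\lesssim L_p^{-1/2}$, $L_{\vs}^2 {h^\dagger}^2 e^{h^\dagger}/\gamma \ll 1$) is what keeps all the exponential Gr\"onwall factors at $\gO(1)$, and verifying this bookkeeping is the delicate part rather than any single inequality.
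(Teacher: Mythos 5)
Your overall strategy---synchronously couple the corrector process with the stationary process $\vu^\ast$ under the measure $P$, run a Gr\"onwall/contraction estimate for the kinetic dynamics, and seed it with the optimal $W_2$-coupling of $\widetilde q_{t_n,h_n}$ and $\cev p_{t_{n+1}}$---is the same as the paper's, and the paper indeed outsources exactly the contraction step you worry about (the twisted Lyapunov function for underdamped Langevin) to \cite[Lemma 10]{chen2024probability}, applying it once within the block and once more to propagate $W_2^2(\pi_{t_n,n^\dagger h^\dagger},\cev p_{t_{n+1}})\leq W_2^2(\pi_{t_n,0},\cev p_{t_{n+1}})$ back to the start of the corrector phase. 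So the hard analytic ingredient you flag is real, but it is a citation in the paper, not something re-derived.

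The genuine gap is your item (i): you keep a ``discretization lag'' forcing term $\nabla\log\cev p_{t_{n+1}}(\widetilde\vu^{(K^\dagger)}_{\tau^\dagger})-\nabla\log\cev p_{t_{n+1}}(\widetilde\vu^{(K^\dagger)}_{\flrepsc{\tau^\dagger}})$ in the difference SDE. Under $P$ this term does not exist: the whole point of the Girsanov transform~\eqref{eq:phi} is that the drift of $\widetilde\vu^{(K^\dagger)}$ becomes the \emph{exact, continuous-time} true score $\nabla\log\cev p_{t_{n+1}}(\widetilde\vu^{(K^\dagger)}_{\tau^\dagger})$, i.e.~\eqref{eq:uvtilde_btilde} is literally the same SDE as~\eqref{eq:uvast}, driven by the same $\widetilde\vb$, and the two processes differ \emph{only} in their initial conditions. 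All discretization and score-approximation error has been moved into the Radon--Nikodym derivative and is accounted for elsewhere (in the KL computation of Theorem~\ref{thm: bound on KL between pi and pi hat}). Because you retain this phantom forcing term, your argument (and the auxiliary step bounding $\|\widetilde\vu^{(K^\dagger)}_{\flrepsc{\tau^\dagger}}-\widetilde\vu^{(K^\dagger)}_{\tau^\dagger}\|$ via the momenta) produces extra additive contributions of order $d(\epsilon^\dagger)^2 h^\dagger$, so you would only prove a weaker statement than the lemma, which asserts a bound by $W_2^2(\widetilde q_{t_n,h_n},\cev p_{t_{n+1}})$ alone with no discretization remainder. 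Relatedly, the time-increment bound is unnecessary even for assembling $E_{t_n,n^\dagger h^\dagger}$: in its first integral both $\widetilde\vu^{(K^\dagger)}$ and $\vu^\ast$ are evaluated at the \emph{same} lagged time $\flrepsc{\tau^\dagger}$, so the uniform-in-$\tau^\dagger$ pointwise bound applies directly and the conclusion follows by integrating over $[0,h^\dagger]$. Your blockwise induction with ``absorbed'' discretization errors is likewise superfluous once one recognizes that under $P$ the concatenated process is an exact solution of the true underdamped dynamics on all of $[0,T^\dagger]$.
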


\begin{proof}
    Recall that under the measure $P$, $\widetilde\vu^{(K^\dagger)}_{t_n,n^\dagger h^\dagger,\tau^\dagger}$ follows the dynamics of $\vu_{t_n,n^\dagger h^\dagger,\tau^\dagger}$~\eqref{eq:uv_btilde} for $\tau^\dagger\in[0, h^\dagger]$, which coincides with that of $\vu^\ast_{t_n,n^\dagger h^\dagger+\tau^\dagger}$. As the only difference between the two processes $\vu_{t_n,n^\dagger h^\dagger,\tau^\dagger}$ and $\vu^\ast_{t_n,n^\dagger h^\dagger+\tau^\dagger}$ is the initial condition, we can invoke Lemma 10 proved in~\cite{chen2024probability} to deduce that 
    \begin{equation*}
        \E_{P}\left[\left\|\widetilde\vu^{(K^\dagger)}_{t_n,n^\dagger h^\dagger,\tau^\dagger} -\vu^\ast_{t_n,n^\dagger h^\dagger+\tau^\dagger}\right\|^2\right] \lesssim W_2^2(\pi_{t_n,n^\dagger h^\dagger}, \cev p_{t_{n+1}}),
    \end{equation*}
    where the assumption that $\gamma \lesssim L_p^{-1/2}$ and $T^\dagger \lesssim L_p^{-1/2}$ is required.

    Now notice that $\vu^\ast_{t_n,n^\dagger h^\dagger+\tau^\dagger}$ and $\vu_{t_n,n^\dagger h^\dagger,\tau^\dagger}$ also follow the same dynamics with the true score function for $\tau^\dagger\in[0, n^\dagger h^\dagger]$, for any coupling of $\vu^\ast_{t_n,n^\dagger h^\dagger}$ and $\vu_{t_n,n^\dagger h^\dagger}$, we have
    \begin{equation*}
        \begin{aligned}
            &W_2^2(\pi_{t_n,n^\dagger h^\dagger}, \cev p_{t_{n+1}}) \leq \E\left[\|\vu_{t_n,n^\dagger h^\dagger} - \vu^\ast_{t_n,n^\dagger h^\dagger}\|^2\right]\\
            \leq& W_2^2(\pi_{t_n, 0}, \cev p_{t_{n+1}}) = W_2^2(\widetilde q_{t_n, h_n}, \cev p_{t_{n+1}}) ,
        \end{aligned}
    \end{equation*}
    where the last equality is again by~\cite[Lemma 10]{chen2024probability}.

    Therefore, we have
    \begin{equation*}
        \begin{aligned}
            &\E_{P}\left[E_{t_n, n^\dagger h^\dagger}\right] \\
            = & \int_0^{h^\dagger}\E_{P}\left[L_{\vs}^2\left\|\widetilde\vu^{(K^\dagger)}_{t_n,n^\dagger h^\dagger,\flrepsc{\tau^\dagger}} - \vu^\ast_{t_n,n^\dagger h^\dagger+\flrepsc{\tau^\dagger}}\right\|^2 + L_{p}^2\left\|\widetilde\vu^{(K^\dagger)}_{t_n,n^\dagger h^\dagger,\tau^\dagger} -\vu^\ast_{t_n,n^\dagger h^\dagger+\tau^\dagger}\right\|^2 \right]\dif \tau^\dagger\\
            \leq & h^\dagger (L_{\vs}^2 + L_{p}^2)W_2^2(\widetilde q_{t_n, h_n}, \cev p_{t_{n+1}}).
        \end{aligned}
    \end{equation*}
\end{proof}

Now, we provide lemmas that are used to bound the first term in~\eqref{eq:triangle_5}. 
\begin{lemma}
For any $n^\dagger\in[0:N^\dagger-1]$, we have the following estimate:
\begin{equation*}
    \begin{aligned}
        &\sup_{\tau^\dagger \in [0,h^\dagger]}\E_{P}\left[\left\|\widetilde\vu^{(1)}_{t_n,n^\dagger h^\dagger,\tau^\dagger}-\widetilde\vu^{(0)}_{t_n,n^\dagger h^\dagger,\tau^\dagger}\right\|^2\right]\\
        \leq& \dfrac{5L_{\vs}^2h^\dagger e^{(3+\gamma)h^\dagger}}{2\gamma} \sup_{\tau^\dagger \in [0, h^\dagger]} \E_P\left[\left\|\widetilde\vu^{(K^\dagger-1)}_{t_n,n^\dagger h^\dagger,\tau^\dagger}-\widetilde\vu^{(K^\dagger)}_{t_n,n^\dagger h^\dagger,\tau^\dagger}\right\|^2
        \right]\\
            +& \dfrac{5{h^\dagger} e^{(3+\gamma)h^\dagger}}{2\gamma} \E_P\left[ E_{t_n, n^\dagger h^\dagger}+ h^\dagger  \delta_\infty^2 + L_{p}^2F_{t_n, n^\dagger h^\dagger}\right]+ {h^\dagger}^2 e^{(3+\gamma)h^\dagger}\left(3\gamma d+M_{\vs}^2\right) + h^\dagger e^{2h^\dagger}d.
    \end{aligned}
\end{equation*}
\label{lem:u1-u0}
\end{lemma}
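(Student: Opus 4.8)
The plan is to mirror the argument used for Lemma~\ref{lem:disc} in the SDE case, now adapted to the coupled position--momentum system of the interpolating corrector process~\eqref{eq:uvtilde} at its first Picard step. Write $\vz^{\vu}_{\tau^\dagger} := \widetilde\vu^{(1)}_{t_n,n^\dagger h^\dagger,\tau^\dagger} - \widetilde\vu^{(0)}_{t_n,n^\dagger h^\dagger,\tau^\dagger}$ and $\vz^{\vv}_{\tau^\dagger} := \widetilde\vv^{(1)}_{t_n,n^\dagger h^\dagger,\tau^\dagger} - \widetilde\vv^{(0)}_{t_n,n^\dagger h^\dagger,\tau^\dagger}$ for the position and momentum gaps. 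Because the zeroth iterates are constant in $\tau^\dagger$ and equal to the block-initial values $\widetilde\vu_{t_n,n^\dagger h^\dagger}$, $\widetilde\vv_{t_n,n^\dagger h^\dagger}$, one has $\widetilde\vv^{(1)}_{t_n,n^\dagger h^\dagger,\tau^\dagger} = \vz^{\vv}_{\tau^\dagger} + \widetilde\vv_{t_n,n^\dagger h^\dagger}$; setting $k^\dagger = 0$ in~\eqref{eq:uvtilde} and substituting $\dif\vb_{t_n,n^\dagger h^\dagger+\tau^\dagger} = \dif\widetilde\vb_{t_n,n^\dagger h^\dagger+\tau^\dagger} + \vphi_{t_n,n^\dagger h^\dagger}(\tau^\dagger)\dif\tau^\dagger$ from the change of measure~\eqref{eq:b_btilde} (so that $\widetilde\vb$ is a Wiener process under $P$, by Theorem~\ref{thm:girsanov}) yields the closed system
\begin{equation*}
\begin{cases}
\dif\vz^{\vu}_{\tau^\dagger} = \big(\vz^{\vv}_{\tau^\dagger} + \widetilde\vv_{t_n,n^\dagger h^\dagger}\big)\dif\tau^\dagger,\\
\dif\vz^{\vv}_{\tau^\dagger} = \Big[-\gamma\vz^{\vv}_{\tau^\dagger} - \gamma\widetilde\vv_{t_n,n^\dagger h^\dagger} - \vs_{t_{n+1}}\big(\widetilde\vu_{t_n,n^\dagger h^\dagger}\big) + \sqrt{2\gamma}\,\vphi_{t_n,n^\dagger h^\dagger}(\tau^\dagger)\Big]\dif\tau^\dagger + \sqrt{2\gamma}\,\dif\widetilde\vb_{t_n,n^\dagger h^\dagger+\tau^\dagger}.
\end{cases}
\end{equation*}

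Next I would apply It\^o's lemma to $\|\vz^{\vu}_{\tau^\dagger}\|^2$ and $\|\vz^{\vv}_{\tau^\dagger}\|^2$, integrate over $[0,\tau^\dagger]$ and take $\E_P$; the $\widetilde\vb$-martingale term has zero expectation, and the diffusion contributes the $2\gamma d$ term in the momentum equation. Young's inequality on the cross terms turns the right-hand side into $(3+\gamma)$ times $\E_P[\|\vz^{\vu}_{\tau^\dagger}\|^2 + \|\vz^{\vv}_{\tau^\dagger}\|^2]$ plus forcing terms: a contribution $\lesssim \gamma d$ from the block-initial momentum $\widetilde\vv_{t_n,n^\dagger h^\dagger}$ (using that it is $\gN(0,\mI_d)$ for $n^\dagger=0$ and carries a second moment of order $d$ through the blocks), a contribution $\leq M_{\vs}^2$ from $\|\vs_{t_{n+1}}(\widetilde\vu_{t_n,n^\dagger h^\dagger})\|^2$ via Assumption~\ref{ass:lipNN}, and the term $2\|\vphi_{t_n,n^\dagger h^\dagger}(\tau^\dagger)\|^2 = \frac{1}{\gamma}\big\|\vs_{t_{n+1}}(\widetilde\vu^{(K^\dagger-1)})-\nabla\log\cev p_{t_{n+1}}(\widetilde\vu^{(K^\dagger)})\big\|^2$ evaluated at $\flrepsc{\tau^\dagger}$ and $\tau^\dagger$ respectively. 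For this last term I would apply verbatim the five-way triangle decomposition of~\eqref{eq:triangle_5}, inserting $\vs_{t_{n+1}}(\widetilde\vu^{(K^\dagger)})$, $\vs_{t_{n+1}}(\vu^\ast)$, $\nabla\log\cev p_{t_{n+1}}(\vu^\ast)$ at $\flrepsc{\tau^\dagger}$ and $\nabla\log\cev p_{t_{n+1}}(\vu^\ast)$ at $\tau^\dagger$, and bounding the five pieces (after integrating in $\tau^\dagger$ over $[0,h^\dagger]$) by $L_{\vs}^2\,\|\widetilde\vu^{(K^\dagger-1)}-\widetilde\vu^{(K^\dagger)}\|^2$, the two pieces comprising $E_{t_n, n^\dagger h^\dagger}$, the score-accuracy term $h^\dagger\delta_\infty^2$ (Assumption~\ref{ass:Linftyacc}), and $L_p^2 F_{t_n, n^\dagger h^\dagger}$ (Assumption~\ref{ass:lipscore}).

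Finally I would invoke Gr\"onwall's inequality on the resulting integral inequality for $\E_P[\|\vz^{\vu}_{\tau^\dagger}\|^2 + \|\vz^{\vv}_{\tau^\dagger}\|^2]$, producing the prefactor $e^{(3+\gamma)h^\dagger}$; collecting the forcing terms (each carrying a factor $\leq h^\dagger$ from the outer integration, and an extra $1/2\gamma$ from the normalization of $\|\vphi\|^2$) and taking $\sup_{\tau^\dagger\in[0,h^\dagger]}$ then gives all terms in the claimed bound except the last one. The lighter $h^\dagger e^{2h^\dagger}d$ term I would obtain separately, by estimating the position increment directly through $\vz^{\vu}_{\tau^\dagger} = \int_0^{\tau^\dagger}\widetilde\vv^{(1)}_s\,\dif s$ with Cauchy--Schwarz, together with a crude $\E_P[\|\widetilde\vv^{(1)}_s\|^2]\lesssim d$ estimate from the explicit Ornstein--Uhlenbeck representation of $\widetilde\vv^{(1)}$, which only invokes the exponential rate $2$ of the position subsystem rather than $3+\gamma$.

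The main obstacle is the structural feature that distinguishes this from Lemma~\ref{lem:disc}: the corrector dynamics is a second-order, hypoelliptic system in which the friction $-\gamma\vv$ contracts only the momentum, while the position evolves through the non-dissipative coupling $\dif\vu = \vv\,\dif\tau^\dagger$, so $\vz^{\vu}$ cannot be controlled in isolation and the pair $(\vz^{\vu},\vz^{\vv})$ must be propagated through a single Gr\"onwall argument; care is needed so that the friction cancellation $-2\gamma\|\vz^{\vv}\|^2$ against the Young-inequality remainders does not silently drop the $\gamma$-dependence recorded in the statement. A secondary, bookkeeping difficulty is that $\vphi$ --- hence the measure $P$ --- depends on both $\widetilde\vu^{(K^\dagger)}$ and $\widetilde\vu^{(K^\dagger-1)}$, so the bound is self-referential through the $\sup_{\tau^\dagger}\E_P[\|\widetilde\vu^{(K^\dagger-1)}-\widetilde\vu^{(K^\dagger)}\|^2]$ term; this is harmless because, exactly as in the passage from Lemma~\ref{lem:disc} to Lemma~\ref{lem:picard_sde}, the exponential contraction of the Picard iteration will later absorb it.
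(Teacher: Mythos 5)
Your proposal is correct in substance and uses the same ingredients as the paper: the difference system for the position and momentum gaps at $k=0$, the substitution $\dif\vb = \dif\widetilde\vb + \vphi\,\dif\tau^\dagger$ so that the martingale part vanishes under $\E_P$, Young's inequality, Gr\"onwall, and the five-way decomposition~\eqref{eq:triangle_5} applied to $\|\vphi\|^2$. The one organizational difference is that the paper does \emph{not} run a single Gr\"onwall on the sum $\|\vz^{\vu}\|^2+\|\vz^{\vv}\|^2$: it first bounds $\sup_{\tau^\dagger}\E_P\|\vz^{\vu}\|^2 \le h^\dagger e^{2h^\dagger}\sup_{\tau^\dagger}\E_P\|\vz^{\vv}\|^2 + h^\dagger e^{2h^\dagger}d$ (Gr\"onwall at rate $2$ on the position equation alone, with the block-initial momentum $\widetilde\vv^{(0)}$ contributing the $d$), and then bounds $\sup_{\tau^\dagger}\E_P\|\vz^{\vv}\|^2$ by a second Gr\"onwall at rate $1+\gamma$ on the momentum equation. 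This two-stage structure is exactly what produces the stated constants: the prefactor $e^{(3+\gamma)h^\dagger}$ is the product $e^{2h^\dagger}\cdot e^{(1+\gamma)h^\dagger}$, the ${h^\dagger}^2$ arises from the composition of the two $h^\dagger$ factors, and the last term $h^\dagger e^{2h^\dagger}d$ stays with the lighter exponent precisely because it never passes through the momentum Gr\"onwall. Your single coupled Gr\"onwall would yield the same collection of terms up to constants (which is all the downstream argument needs), but not literally the stated prefactors, and your proposed side-route for the last term --- a ``crude $\E_P\|\widetilde\vv^{(1)}\|^2\lesssim d$'' from the OU representation --- is the weakest step: under $P$ the driving noise is $\widetilde\vb$ plus the $\vphi$-drift, so that second moment is not crudely $\gO(d)$ without re-invoking the very $E$, $F$, $\delta_\infty$ terms you are trying to isolate; the paper avoids this entirely by keeping the $\int\|\widetilde\vv^{(0)}\|^2$ forcing confined to the position equation.
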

\begin{proof}
Let $\vmu_{t_n,n^\dagger h^\dagger,\tau^\dagger} := \widetilde\vu^{(1)}_{t_n,n^\dagger h^\dagger,\tau^\dagger} - \widetilde\vu^{(0)}_{t_n,n^\dagger h^\dagger,\tau^\dagger}$ and $\vnu_{t_n,n^\dagger h^\dagger,\tau^\dagger}:= \widetilde\vv^{(1)}_{t_n,n^\dagger h^\dagger,\tau^\dagger} - \widetilde\vv^{(0)}_{t_n,n^\dagger h^\dagger,\tau^\dagger}$. Then for $k=0$, we may rewrite~\eqref{eq:uvtilde} as follows
\begin{equation}
    \begin{cases}
        \dif\vmu_{t_n,n^\dagger h^\dagger,\tau^\dagger} = \left(\vnu_{t_n,n^\dagger h^\dagger,\tau^\dagger} + \widetilde\vv^{(0)}_{t_n,n^\dagger h^\dagger,\tau^\dagger}\right)\dif\tau^\dagger\\
        \dif\vnu_{t_n,n^\dagger h^\dagger,\tau^\dagger} = -\gamma(\vnu_{t_n,n^\dagger h^\dagger,\tau^\dagger} + \widetilde\vv^{(0)}_{t_n,n^\dagger h^\dagger,\tau^\dagger})\dif\tau^\dagger - \vs_{t_{n+1}}(\widetilde\vu^{(0)}_{t_n,n^\dagger h^\dagger,\tau^\dagger})\dif\tau^\dagger + \sqrt{2\gamma}\dif\vb_{t_n,n^\dagger h^\dagger+\tau^\dagger}
    \end{cases}
    \label{eqn: ULMC initial in terms of difference}
\end{equation}
On the one hand, by using the first equation in~\eqref{eqn: ULMC initial in terms of difference}, we may compute the derivative
\begin{equation*}
    \frac{\dif}{\dif {\tau^\dagger}'}\left\|\vmu_{t_n,n^\dagger h^\dagger,{\tau^\dagger}'}\right\|^2 = 2\vmu_{t_n,n^\dagger h^\dagger,{\tau^\dagger}'}^\top\left(\vnu_{t_n,n^\dagger h^\dagger,{\tau^\dagger}'}+\widetilde\vv^{(0)}_{t_n,n^\dagger h^\dagger,{\tau^\dagger}'}\right)
\end{equation*} 
and integrate it for ${\tau^\dagger}'\in[0, \tau^\dagger]$, which yields
\begin{equation*}
    \begin{aligned}
        &\left\|\vmu_{t_n,n^\dagger h^\dagger,\tau^\dagger}\right\|^2 = 2\int_{0}^{\tau^\dagger}\vmu_{t_n,n^\dagger h^\dagger,{\tau^\dagger}'}^\top(\vnu_{t_n,n^\dagger h^\dagger,{\tau^\dagger}'}+\widetilde\vv^{(0)}_{t_n,n^\dagger h^\dagger,{\tau^\dagger}'})\dif{\tau^\dagger}' \\
        \leq& 2\int_{0}^{\tau^\dagger}\left\|\vmu_{t_n,n^\dagger h^\dagger,{\tau^\dagger}'}\right\|^2\dif{\tau^\dagger}' + \int_{0}^{\tau^\dagger}\left\|\vnu_{t_n,n^\dagger h^\dagger,{\tau^\dagger}'}\right\|^2\dif{\tau^\dagger}' + \int_{0}^{\tau^\dagger}\left\|\widetilde\vv^{(0)}_{t_n,n^\dagger h^\dagger,{\tau^\dagger}'}\right\|^2\dif{\tau^\dagger}'  .
    \end{aligned}
\end{equation*}

Applying Gronwall's inequality, we have
\begin{equation*}
    \begin{aligned}
        \left\|\vmu_{t_n,n^\dagger h^\dagger,\tau^\dagger}\right\|^2 \leq e^{2\tau^\dagger}\left(\int_{0}^{\tau^\dagger}\left\|\vnu_{t_n,n^\dagger h^\dagger,{\tau^\dagger}'}\right\|^2\dif{\tau^\dagger}' + \int_{0}^{\tau^\dagger}\left\|\widetilde\vv^{(0)}_{t_n,n^\dagger h^\dagger,{\tau^\dagger}'}\right\|^2\dif{\tau^\dagger}'\right).
    \end{aligned}
\end{equation*}

We then take expectation with respect to the path measure $P$ and then the supremum with respect to $\tau^\dagger \in [0,h^\dagger]$, implying that
\begin{equation}
    \begin{aligned}
        &\sup_{\tau^\dagger \in [0,h^\dagger]}\E_{P}\left[\left\|\vmu_{t_n,n^\dagger h^\dagger,\tau^\dagger}\right\|^2\right]\\
        \leq \ &\sup_{\tau^\dagger \in [0,h^\dagger]}\left(e^{2\tau^\dagger}\int_{0}^{\tau^\dagger}\E_{P}\left[\left\|\vnu_{t_n,n^\dagger h^\dagger,{\tau^\dagger}'}\right\|^2\right]\dif{\tau^\dagger}' + e^{2\tau^\dagger}\int_{0}^{\tau^\dagger}\E_{P}\left[\left\|\widetilde\vv^{(0)}_{t_n,n^\dagger h^\dagger,{\tau^\dagger}'}\right\|^2\right]\dif{\tau^\dagger}'\right) \\
        \leq \ &h^\dagger e^{2h^\dagger}\sup_{\tau^\dagger \in [0,h^\dagger]}\E_{P}\left[\left\|\vnu_{t_n,n^\dagger h^\dagger,{\tau^\dagger}'}\right\|^2\right]+ h^\dagger e^{2h^\dagger}d.
    \end{aligned}
    \label{eqn: ULMC corrector lem 1 eq 1}
\end{equation}

On the other hand, by applying It\^o's lemma and plugging in the expression of $\vb_{t_n,n^\dagger h^\dagger+\tau^\dagger}$ given by~\eqref{eq:b_btilde}, we have
\begin{equation}
    \begin{aligned}
    &\dif \|\vnu_{t_n,n^\dagger h^\dagger, \tau^\dagger}\|^2 \\
    =& -\Bigg[2\gamma \|\vnu_{t_n,n^\dagger h^\dagger, \tau^\dagger}\|^2 + 2\gamma\vnu_{t_n,n^\dagger h^\dagger, \tau^\dagger}^\top\widetilde\vv^{(0)}_{t_n,n^\dagger h^\dagger,\tau^\dagger}+ 2\vnu_{t_n,n^\dagger h^\dagger, \tau^\dagger}^\top \vs_{t_{n+1}}\Big(\widetilde\vu^{(0)}_{t_n,n^\dagger h^\dagger,\tau^\dagger}\Big) - 2\gamma d\Bigg]\dif \tau^\dagger\\ 
    +& 2\vnu_{t_n,n^\dagger h^\dagger,\tau^\dagger}^\top \sqrt{2\gamma} \big(
        \dif \widetilde\vb_{t_n,n^\dagger h^\dagger+\tau^\dagger} + \vphi_{t_n,n^\dagger h^\dagger}(\tau^\dagger)\dif\tau^\dagger
        \big),   
    \end{aligned}
\end{equation}

Then similarly, we may compute the derivative of $\|\vnu_{t_n,n^\dagger h^\dagger, \tau^\dagger}\|^2$, integrate it for $\tau^\dagger\in[0, h^\dagger]$, and take the supremum with respect to $\tau^\dagger$ to obtain
\begin{equation*}
    \begin{aligned}
        &\E_{P}\left[\|\vnu_{t_n,n^\dagger h^\dagger, \tau^\dagger}\|^2\right] \\
        = &\E_{P}\left[-\int_0^{\tau^\dagger} 
        \Bigg(2\gamma \|\vnu_{t_n,n^\dagger h^\dagger, {\tau^\dagger}'}\|^2 + 2\gamma\vnu_{t_n,n^\dagger h^\dagger, {\tau^\dagger}'}^\top\widetilde\vv^{(0)}_{t_n,n^\dagger h^\dagger,{\tau^\dagger}'}- 2\gamma d
        \Bigg)\dif {\tau^\dagger}'\right]\\
        +& \E_{P}\left[-\int_0^{\tau^\dagger}2\vnu_{t_n,n^\dagger h^\dagger, {\tau^\dagger}'}^\top \vs_{t_{n+1}}\Big(\widetilde\vu^{(0)}_{t_n,n^\dagger h^\dagger,{\tau^\dagger}'}\Big) \dif {\tau^\dagger}' \right]\\
        +& 2\sqrt{2\gamma}\E_{P}\left[\int_0^{\tau^\dagger} \vnu_{t_n,n^\dagger h^\dagger, {\tau^\dagger}'}^\top 
        \Big(\dif \widetilde\vb_{t_n,n^\dagger h^\dagger+{\tau^\dagger}'} + \vphi_{t_n,n^\dagger h^\dagger}({\tau^\dagger}')\dif{\tau^\dagger}'\Big)\right].
    \end{aligned}
\end{equation*}

By It\^o's lemma, this equals to
\begin{equation*}
    \begin{aligned}
        &\E_{P}\left[\|\vnu_{t_n,n^\dagger h^\dagger, \tau^\dagger}\|^2\right]\\
        = &\E_{P}\left[-\int_0^{\tau^\dagger} 
        \Bigg(2\gamma \|\vnu_{t_n,n^\dagger h^\dagger, {\tau^\dagger}'}\|^2 + 2\gamma\vnu_{t_n,n^\dagger h^\dagger, {\tau^\dagger}'}^\top\widetilde\vv^{(0)}_{t_n,n^\dagger h^\dagger,{\tau^\dagger}'}- 2\gamma d
        \Bigg)\dif {\tau^\dagger}'\right]\\
        +& \E_{P}\left[-\int_0^{\tau^\dagger}2\vnu_{t_n,n^\dagger h^\dagger, {\tau^\dagger}'}^\top \vs_{t_{n+1}}\Big(\widetilde\vu^{(0)}_{t_n,n^\dagger h^\dagger,{\tau^\dagger}'}\Big)  + 2 \sqrt{2\gamma} \vnu_{t_n,n^\dagger h^\dagger, {\tau^\dagger}'}^\top  \vphi_{t_n,n^\dagger h^\dagger}({\tau^\dagger}')\dif {\tau^\dagger}' \right].
    \end{aligned}
\end{equation*}
Applying AM-GM gives
\begin{equation*}
    \begin{aligned}
            &\E_{P}\left[\|\vnu_{t_n,n^\dagger h^\dagger, \tau^\dagger}\|^2\right]\\
        \leq  &\int_0^{\tau^\dagger} \E_{P}\left[(1+\gamma)\|\vnu_{t_n,n^\dagger h^\dagger, {\tau^\dagger}'}\|^2 + \|\vphi_{t_n,n^\dagger h^\dagger}({\tau^\dagger}')\|^2 \right]\dif {\tau^\dagger}'\\
            +&\int_0^{\tau^\dagger}\E_{P}\left[ \gamma \left\|\widetilde\vv^{(0)}_{t_n,n^\dagger h^\dagger,{\tau^\dagger}'}\right\|^2+ \left\|\vs_{t_{n+1}}\Big(\widetilde\vu^{(0)}_{t_n,n^\dagger h^\dagger,{\tau^\dagger}'}\Big)\right\|^2 + 2\gamma d  \right]\dif {\tau^\dagger}'\\
        \leq &\int_0^{\tau^\dagger} \E_{P}\left[ 
                (1+\gamma)\|\vnu_{t_n,n^\dagger h^\dagger, {\tau^\dagger}'}\|^2
                +  \|\vphi_{t_n,n^\dagger h^\dagger}({\tau^\dagger}')\|^2
            \right]\dif {\tau^\dagger}'+ \left(\gamma \E\left[\left\|\widetilde\vv_{t_n,n^\dagger h^\dagger,0}^{(0)}\right\|^2\right]+ M_{\vs}^2 + 2\gamma d\right)\tau^\dagger\\
        = & (1+\gamma)\int_{0}^{\tau^\dagger}\E_{P}\left[\|\vnu_{t_n,n^\dagger h^\dagger,{\tau^\dagger}'}\|^2\right]\dif{\tau^\dagger}' + \int_{0}^{\tau^\dagger}\E_{P}\left[\|\vphi_{t_n,n^\dagger h^\dagger}({\tau^\dagger}')\|^2\right]\dif{\tau^\dagger}' + \tau^\dagger\left(3\gamma d+M_{\vs}^2\right),
    \end{aligned}
\end{equation*}
where in the last equality, we used the initialization of the auxiliary corrector process $\widetilde\vv^{(0)}_{t_n,n^\dagger h^\dagger,0} \sim \gN(0, \mI_d)$.

Again, we apply Gronwall's inequality to the above inequality and take the supremum with respect to $\tau^\dagger \in [0,h^\dagger]$ to obtain
\begin{equation}
    \begin{aligned}
            &\sup_{\tau^\dagger \in [0,h^\dagger]}\E_{P}\left[\|\vnu_{t_n,n^\dagger h^\dagger,\tau^\dagger}\|^2\right]\\
        \leq  &e^{(1+\gamma) h^\dagger}\int_{0}^{h^\dagger}\E_{P}\left[\|\vphi_{t_n,n^\dagger h^\dagger}({\tau^\dagger})\|^2\right]\dif{\tau^\dagger} + h^\dagger e^{(1+\gamma)h^\dagger}\left(3\gamma d+M_{\vs}^2\right)\\
        \leq &\dfrac{e^{(1+\gamma)h^\dagger}}{2\gamma}\E_{P}\left[\int_{0}^{h^\dagger}\left\|\vs_{t_{n+1}}(\widetilde\vu^{(K^\dagger-1)}_{t_n,n^\dagger h^\dagger,\flrepsc{{\tau^\dagger}}})-\nabla \log \cev p_{t_{n+1}}(\widetilde\vu^{(K^\dagger)}_{t_n,n^\dagger h^\dagger,{\tau^\dagger}})\right\|^2\dif{\tau^\dagger}\right]\\
            &+ h^\dagger e^{(1+\gamma)h^\dagger}\left(3\gamma d+M_{\vs}^2\right),
    \end{aligned}
\end{equation}
and for the difference term within the expectation, we decompose it again by the triangle inequality in~\eqref{eq:triangle_5}, \emph{i.e.}  
\begin{equation*}
        \begin{aligned}
            &\int_0^{h^\dagger}\left\|\vs_{t_{n+1}}(\widetilde\vu^{(K^\dagger-1)}_{t_n,n^\dagger h^\dagger,\flrepsc{\tau^\dagger}})-\nabla \log \cev p_{t_{n+1}}(\widetilde\vu^{(K^\dagger)}_{t_n,n^\dagger h^\dagger,\tau^\dagger})\right\|^2\dif \tau^\dagger \\
        \leq &5L_{\vs}^2\int_0^{h^\dagger}\left\|\widetilde\vu^{(K^\dagger-1)}_{t_n,n^\dagger h^\dagger,\flrepsc{\tau^\dagger}}-\widetilde\vu^{(K^\dagger)}_{t_n,n^\dagger h^\dagger,\flrepsc{\tau^\dagger}}\right\|^2\dif \tau^\dagger
        +5 E_{t_n, n^\dagger h^\dagger}
        + 5h^\dagger  \delta_\infty^2 + 5L_{p}^2F_{t_n, n^\dagger h^\dagger},
        \end{aligned}
\end{equation*}
to obtain that
\begin{equation*}
    \begin{aligned}
            &\sup_{\tau^\dagger \in [0,h^\dagger]}\E_{P}\left[\|\vnu_{t_n,n^\dagger h^\dagger,\tau^\dagger}\|^2\right]\\
        \leq & \dfrac{5L_{\vs}^2e^{(1+\gamma)h^\dagger}}{2\gamma} \E_P\left[\int_0^{h^\dagger}\left\|\widetilde\vu^{(K^\dagger-1)}_{t_n,n^\dagger h^\dagger,\flrepsc{\tau^\dagger}}-\widetilde\vu^{(K^\dagger)}_{t_n,n^\dagger h^\dagger,\flrepsc{\tau^\dagger}}\right\|^2\dif \tau^\dagger
        \right]\\
            +& \dfrac{5e^{(1+\gamma)h^\dagger}}{2\gamma} \E_P\left[ E_{t_n, n^\dagger h^\dagger}+ h^\dagger  \delta_\infty^2 + L_{p}^2F_{t_n, n^\dagger h^\dagger}\right]+ h^\dagger e^{(1+\gamma)h^\dagger}\left(3\gamma d+M_{\vs}^2\right)\\
        \leq & \dfrac{5L_{\vs}^2e^{(1+\gamma)h^\dagger}}{2\gamma} h^\dagger \sup_{\tau^\dagger \in [0, h^\dagger]} \E_P\left[\left\|\widetilde\vu^{(K^\dagger-1)}_{t_n,n^\dagger h^\dagger,\flrepsc{\tau^\dagger}}-\widetilde\vu^{(K^\dagger)}_{t_n,n^\dagger h^\dagger,\flrepsc{\tau^\dagger}}\right\|^2
        \right]\\
            +& \dfrac{5e^{(1+\gamma)h^\dagger}}{2\gamma} \E_P\left[ E_{t_n, n^\dagger h^\dagger}+ h^\dagger  \delta_\infty^2 + L_{p}^2F_{t_n, n^\dagger h^\dagger}\right]+ h^\dagger e^{(1+\gamma)h^\dagger}\left(3\gamma d+M_{\vs}^2\right),
    \end{aligned}
\end{equation*}
substituting which into~\eqref{eqn: ULMC corrector lem 1 eq 1} completes our proof of this Lemma. 
\end{proof}

\begin{lemma}[Exponential convergence of Picard iteration in the corrector step of PIADM-ODE]
For any $n^\dagger\in[0,N^\dagger - 1]$, then the two ending terms $\widetilde\vu_{n^\dagger h^\dagger,\tau^\dagger}^{(K^\dagger)}$ and $\widetilde\vu_{n^\dagger h^\dagger,\tau^\dagger}^{(K^\dagger)}$ of the sequence $\{\widetilde\vu_{n^\dagger h^\dagger,\tau^\dagger}^{(k^\dagger)}\}_{k^\dagger\in [0:K^\dagger-1]}$ satisfy the following exponential convergence rate 
\begin{equation}
    \begin{aligned}
        &\sup_{\tau^\dagger \in [0,h^\dagger]}\E_{P}\left[\left\|\widetilde\vu_{n^\dagger h^\dagger,\tau^\dagger}^{(K^\dagger)} - \widetilde\vu_{n^\dagger h^\dagger,\tau^\dagger}^{(K^\dagger-1)}\right\|^2\right] \\
        \leq &  C_{K^\dagger} \left( \dfrac{5{h^\dagger} e^{(3+\gamma)h^\dagger}}{2\gamma} \E_P\left[ E_{t_n, n^\dagger h^\dagger}+ h^\dagger  \delta_\infty^2 + L_{p}^2F_{t_n, n^\dagger h^\dagger}\right]+  {h^\dagger}^2 e^{(3+\gamma)h^\dagger}\left(3\gamma d+M_{\vs}^2\right) + h^\dagger e^{2h^\dagger}d\right),
    \end{aligned}
\end{equation}
where the coefficient 
\begin{equation*}
    C_{K^\dagger} = \left(\dfrac{L_{\vs}^2{h^\dagger}^2e^{h^\dagger}}{2\gamma}\right)^{K^\dagger-1}\Bigg/\left(1 -\dfrac{5L_{\vs}^2h^\dagger e^{(3+\gamma)h^\dagger}}{2\gamma}  \left(\dfrac{L_{\vs}^2{h^\dagger}^2e^{h^\dagger}}{2\gamma}\right)^{K^\dagger-1}\right).
\end{equation*}
\label{lem:ulmc_1}
\end{lemma}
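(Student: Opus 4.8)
The plan is to replicate the structure of Lemma~\ref{lem:picard_sde}, adapted to the second-order (kinetic) form of the parallelized corrector dynamics~\eqref{eq:uvtilde}. First I would introduce, for $\tau^\dagger\in[0,h^\dagger]$, the iteration-difference processes $\vmu^{(k^\dagger)}_{\tau^\dagger} := \widetilde\vu^{(k^\dagger+1)}_{t_n,n^\dagger h^\dagger,\tau^\dagger} - \widetilde\vu^{(k^\dagger)}_{t_n,n^\dagger h^\dagger,\tau^\dagger}$ and $\vnu^{(k^\dagger)}_{\tau^\dagger} := \widetilde\vv^{(k^\dagger+1)}_{t_n,n^\dagger h^\dagger,\tau^\dagger} - \widetilde\vv^{(k^\dagger)}_{t_n,n^\dagger h^\dagger,\tau^\dagger}$. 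Subtracting consecutive iterations of~\eqref{eq:uvtilde}, the common Wiener term $\sqrt{2\gamma}\,\dif\vb$ cancels, so $(\vmu^{(k^\dagger)},\vnu^{(k^\dagger)})$ satisfies, pathwise in $\omega$ (hence with $\E_P$ taken of any pathwise bound, exactly as in Lemma~\ref{lem:picard_sde}), the deterministic linear system $\dif\vmu^{(k^\dagger)}_{\tau^\dagger} = \vnu^{(k^\dagger)}_{\tau^\dagger}\dif\tau^\dagger$, $\dif\vnu^{(k^\dagger)}_{\tau^\dagger} = -\gamma\vnu^{(k^\dagger)}_{\tau^\dagger}\dif\tau^\dagger - \big(\vs_{t_{n+1}}(\widetilde\vu^{(k^\dagger)}_{t_n,n^\dagger h^\dagger,\flrepsc{\tau^\dagger}}) - \vs_{t_{n+1}}(\widetilde\vu^{(k^\dagger-1)}_{t_n,n^\dagger h^\dagger,\flrepsc{\tau^\dagger}})\big)\dif\tau^\dagger$, with zero initial data since the initializations~\eqref{eq:uvtilde_init} do not depend on $k^\dagger$.

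Next I would establish the one-step contraction. Differentiating $\|\vnu^{(k^\dagger)}_{\tau^\dagger}\|^2$ and applying AM-GM to the cross term with the friction-matched weight $2\gamma$ absorbs it into $-2\gamma\|\vnu^{(k^\dagger)}_{\tau^\dagger}\|^2$, giving $\tfrac{\dif}{\dif\tau^\dagger}\|\vnu^{(k^\dagger)}_{\tau^\dagger}\|^2 \le \tfrac{1}{2\gamma}\big\|\vs_{t_{n+1}}(\widetilde\vu^{(k^\dagger)}_{t_n,n^\dagger h^\dagger,\flrepsc{\tau^\dagger}}) - \vs_{t_{n+1}}(\widetilde\vu^{(k^\dagger-1)}_{t_n,n^\dagger h^\dagger,\flrepsc{\tau^\dagger}})\big\|^2$; integrating from $0$ and using the $L_{\vs}$-Lipschitzness of $\vs$ (Assumption~\ref{ass:lipNN}) yields $\|\vnu^{(k^\dagger)}_{\tau^\dagger}\|^2 \le \tfrac{L_{\vs}^2}{2\gamma}\int_0^{\tau^\dagger}\big\|\widetilde\vu^{(k^\dagger)}_{t_n,n^\dagger h^\dagger,\flrepsc{{\tau^\dagger}'}} - \widetilde\vu^{(k^\dagger-1)}_{t_n,n^\dagger h^\dagger,\flrepsc{{\tau^\dagger}'}}\big\|^2\dif{\tau^\dagger}'$. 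For the position difference, $\tfrac{\dif}{\dif\tau^\dagger}\|\vmu^{(k^\dagger)}_{\tau^\dagger}\|^2 = 2(\vmu^{(k^\dagger)}_{\tau^\dagger})^\top\vnu^{(k^\dagger)}_{\tau^\dagger} \le \|\vmu^{(k^\dagger)}_{\tau^\dagger}\|^2 + \|\vnu^{(k^\dagger)}_{\tau^\dagger}\|^2$, so Gr\"onwall gives $\|\vmu^{(k^\dagger)}_{\tau^\dagger}\|^2 \le e^{h^\dagger}\int_0^{h^\dagger}\|\vnu^{(k^\dagger)}_s\|^2\dif s$; plugging in the $\vnu$-bound, passing $\E_P$ through the resulting double time-integral (Tonelli), and bounding the integrand by $\sup_{{\tau^\dagger}'\in[0,h^\dagger]}\E_P\|\widetilde\vu^{(k^\dagger)}_{t_n,n^\dagger h^\dagger,{\tau^\dagger}'} - \widetilde\vu^{(k^\dagger-1)}_{t_n,n^\dagger h^\dagger,{\tau^\dagger}'}\|^2$, I obtain
\[
    \sup_{\tau^\dagger\in[0,h^\dagger]}\E_P\Big[\big\|\widetilde\vu^{(k^\dagger+1)}_{t_n,n^\dagger h^\dagger,\tau^\dagger} - \widetilde\vu^{(k^\dagger)}_{t_n,n^\dagger h^\dagger,\tau^\dagger}\big\|^2\Big] \le \frac{L_{\vs}^2{h^\dagger}^2e^{h^\dagger}}{2\gamma}\sup_{\tau^\dagger\in[0,h^\dagger]}\E_P\Big[\big\|\widetilde\vu^{(k^\dagger)}_{t_n,n^\dagger h^\dagger,\tau^\dagger} - \widetilde\vu^{(k^\dagger-1)}_{t_n,n^\dagger h^\dagger,\tau^\dagger}\big\|^2\Big].
\]
Iterating this from $k^\dagger = K^\dagger-1$ down to $k^\dagger = 1$ contributes the factor $\big(L_{\vs}^2{h^\dagger}^2e^{h^\dagger}/(2\gamma)\big)^{K^\dagger-1}$ in front of $\sup_{\tau^\dagger}\E_P\|\widetilde\vu^{(1)}_{t_n,n^\dagger h^\dagger,\tau^\dagger} - \widetilde\vu^{(0)}_{t_n,n^\dagger h^\dagger,\tau^\dagger}\|^2$.

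Finally, I would substitute the base-case estimate of Lemma~\ref{lem:u1-u0}. Because its right-hand side itself contains $\sup_{\tau^\dagger}\E_P\|\widetilde\vu^{(K^\dagger-1)}_{t_n,n^\dagger h^\dagger,\tau^\dagger} - \widetilde\vu^{(K^\dagger)}_{t_n,n^\dagger h^\dagger,\tau^\dagger}\|^2$ with coefficient $\tfrac{5L_{\vs}^2h^\dagger e^{(3+\gamma)h^\dagger}}{2\gamma}$, this produces a self-referential inequality: writing $S := \sup_{\tau^\dagger}\E_P\|\widetilde\vu^{(K^\dagger)}_{t_n,n^\dagger h^\dagger,\tau^\dagger} - \widetilde\vu^{(K^\dagger-1)}_{t_n,n^\dagger h^\dagger,\tau^\dagger}\|^2$, $r := \big(L_{\vs}^2{h^\dagger}^2e^{h^\dagger}/(2\gamma)\big)^{K^\dagger-1}$, $c := \tfrac{5L_{\vs}^2h^\dagger e^{(3+\gamma)h^\dagger}}{2\gamma}$, and $X$ for the remaining terms of Lemma~\ref{lem:u1-u0} (the $\E_P[E_{t_n,n^\dagger h^\dagger}]$-, $h^\dagger\delta_\infty^2$-, $L_p^2\E_P[F_{t_n,n^\dagger h^\dagger}]$-, ${h^\dagger}^2 e^{(3+\gamma)h^\dagger}(3\gamma d + M_{\vs}^2)$- and $h^\dagger e^{2h^\dagger}d$-terms, scaled as in that lemma), one has $S \le r(X + cS)$; under the standing hypothesis $L_{\vs}^2{h^\dagger}^2e^{h^\dagger}/\gamma \ll 1$, which makes $rc<1$, rearranging gives $S \le \tfrac{r}{1-rc}X = C_{K^\dagger}X$, which is exactly the asserted bound. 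The main obstacle is handling the coupled kinetic system: one must propagate the momentum-difference estimate on $\vnu^{(k^\dagger)}$ into the position-difference estimate on $\vmu^{(k^\dagger)}$, and it is precisely the friction-matched AM-GM step that extracts the $1/(2\gamma)$ gain distinguishing this underdamped contraction from its overdamped analogue in Lemma~\ref{lem:picard_sde}; a secondary technicality is bookkeeping the self-referential term from Lemma~\ref{lem:u1-u0} so that the denominator of $C_{K^\dagger}$ emerges exactly as stated.
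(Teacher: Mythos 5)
Your proposal is correct and follows essentially the same route as the paper's proof: cancel the common Wiener term, use the friction-matched AM-GM to bound the momentum difference by $\tfrac{L_{\vs}^2}{2\gamma}\int\|\widetilde\vu^{(k^\dagger)}-\widetilde\vu^{(k^\dagger-1)}\|^2$, propagate into the position difference via Gr\"onwall to obtain the one-step contraction factor $L_{\vs}^2{h^\dagger}^2e^{h^\dagger}/(2\gamma)$, iterate down to the base case of Lemma~\ref{lem:u1-u0}, and resolve the self-referential term by rearrangement to produce $C_{K^\dagger}$. No gaps.
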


\begin{proof}

We subtract the dynamics of $\widetilde\vu_{n^\dagger h^\dagger,\tau^\dagger}^{(k+1)}$ and $\widetilde\vu_{n^\dagger h^\dagger,\tau^\dagger}^{(k)}$ in~\eqref{eq:uvtilde} to obtain
\begin{equation*}
    \dif\left(\widetilde\vu_{n^\dagger h^\dagger,\tau^\dagger}^{(k+1)} - \widetilde\vu_{n^\dagger h^\dagger,\tau^\dagger}^{(k^\dagger)}\right) = \left(\widetilde\vv_{n^\dagger h^\dagger,\tau^\dagger}^{(k+1)}-\widetilde\vv_{n^\dagger h^\dagger,\tau^\dagger}^{(k^\dagger)}\right)\dif \tau^\dagger.
\end{equation*}
Then, we use the formula above to compute the derivative 
\begin{equation*}
    \frac{\dif}{\dif {\tau^\dagger}'}\left\|\widetilde\vu_{n^\dagger h^\dagger,{\tau^\dagger}'}^{(k+1)} - \widetilde\vu_{n^\dagger h^\dagger,{\tau^\dagger}'}^{(k^\dagger)}\right\|^2 = 2\left(\widetilde\vu_{n^\dagger h^\dagger,{\tau^\dagger}'}^{(k+1)} - \widetilde\vu_{n^\dagger h^\dagger,{\tau^\dagger}'}^{(k^\dagger)}\right)^\top \left(\widetilde\vv_{n^\dagger h^\dagger,{\tau^\dagger}'}^{(k+1)}-\widetilde\vv_{n^\dagger h^\dagger,{\tau^\dagger}'}^{(k^\dagger)}\right)
\end{equation*}
and integrate for ${\tau^\dagger}'\in[0,\tau^\dagger]$ to obtain
\begin{equation*}
    \begin{aligned}
        &\left\|\widetilde\vu_{n^\dagger h^\dagger,\tau^\dagger}^{(k+1)} - \widetilde\vu_{n^\dagger h^\dagger,\tau^\dagger}^{(k^\dagger)}\right\|^2 \\
        =& 2\int_{0}^{\tau^\dagger}\left(\widetilde\vu_{n^\dagger h^\dagger,{\tau^\dagger}'}^{(k+1)} - \widetilde\vu_{n^\dagger h^\dagger,{\tau^\dagger}'}^{(k^\dagger)}\right)^\top
        \left(\widetilde\vv_{n^\dagger h^\dagger,{\tau^\dagger}'}^{(k+1)}-\widetilde\vv_{n^\dagger h^\dagger,{\tau^\dagger}'}^{(k^\dagger)}\right)\dif {\tau^\dagger}' \\
        \leq &\int_{0}^{\tau^\dagger}\left\|\widetilde\vu_{n^\dagger h^\dagger,{\tau^\dagger}'}^{(k+1)} - \widetilde\vu_{n^\dagger h^\dagger,{\tau^\dagger}'}^{(k^\dagger)}\right\|^2\dif{\tau^\dagger}' + \int_{0}^{\tau^\dagger}\left\|\widetilde\vv_{n^\dagger h^\dagger,{\tau^\dagger}'}^{(k+1)}-\widetilde\vv_{n^\dagger h^\dagger,{\tau^\dagger}'}^{(k^\dagger)}\right\|^2\dif{\tau^\dagger}'  
    \end{aligned}
\end{equation*}
Applying Gr\"onwall's inequality gives us that
\begin{equation*}
    \begin{aligned}
        \left\|\widetilde\vu_{n^\dagger h^\dagger,\tau^\dagger}^{(k+1)} - \widetilde\vu_{n^\dagger h^\dagger,\tau^\dagger}^{(k^\dagger)}\right\|^2 \leq e^{\tau^\dagger}\int_{0}^{\tau^\dagger}\left\|\widetilde\vv_{n^\dagger h^\dagger,{\tau^\dagger}'}^{(k+1)}-\widetilde\vv_{n^\dagger h^\dagger,{\tau^\dagger}'}^{(k^\dagger)}\right\|^2\dif{\tau^\dagger}'
    \end{aligned}
\end{equation*}
and taking the supremum with respect to $\tau^\dagger \in [0,h^\dagger]$ on both sides above implies
\begin{equation}
    \begin{aligned}
        \sup_{\tau^\dagger \in [0,h^\dagger]}\E_{P}\left[\left\|\widetilde\vu_{n^\dagger h^\dagger,\tau^\dagger}^{(k+1)} - \widetilde\vu_{n^\dagger h^\dagger,\tau^\dagger}^{(k^\dagger)}\right\|^2\right]
        \leq h^\dagger e^{h^\dagger}\sup_{\tau^\dagger \in [0,h^\dagger]}\E_{P}\left[\left\|\widetilde\vv_{n^\dagger h^\dagger,{\tau^\dagger}'}^{(k+1)}-\widetilde\vv_{n^\dagger h^\dagger,{\tau^\dagger}'}^{(k^\dagger)}\right\|^2\right].
    \end{aligned}
    \label{eqn: ULMC thm 2 step 1}
\end{equation}

We then apply a similar argument for $\widetilde\vv_{n^\dagger h^\dagger,\tau^\dagger}^{(k+1)} - \widetilde\vv_{n^\dagger h^\dagger,\tau^\dagger}^{(k^\dagger)}$ as well
\begin{equation*}
    \begin{aligned}
        &\dif\left(\widetilde\vv_{t_n,n^\dagger h^\dagger,\tau^\dagger}^{(k+1)}-\widetilde\vv_{t_n,n^\dagger h^\dagger,\tau^\dagger}^{(k^\dagger)}\right) \\
        =& -\gamma\left(\widetilde\vv_{t_n,n^\dagger h^\dagger,\tau^\dagger}^{(k+1)}-\widetilde\vv_{t_n,n^\dagger h^\dagger,\tau^\dagger}^{(k^\dagger)}\right)\dif \tau^\dagger - \left(\vs_{t_{n+1}}(\widetilde\vu^{(k^\dagger)}_{t_n,n^\dagger h^\dagger,\flrepsc{\tau^\dagger}})-\vs_{t_{n+1}}(\widetilde\vu^{(k-1)}_{t_n,n^\dagger h^\dagger,\flrepsc{\tau^\dagger}})\right)\dif\tau^\dagger,
    \end{aligned}
\end{equation*}
integrate which for ${\tau^\dagger}\in[0,\tau^\dagger]$ to obtain
\begin{equation*}
    \begin{aligned}
        &\left\|\widetilde\vv_{n^\dagger h^\dagger,\tau^\dagger}^{(k+1)} - \widetilde\vv_{n^\dagger h^\dagger,\tau^\dagger}^{(k^\dagger)}\right\|^2 \\
        =& -\int_{0}^{\tau^\dagger}2\gamma \left\|\widetilde\vv_{n^\dagger h^\dagger,{\tau^\dagger}'}^{(k+1)} - \widetilde\vv_{n^\dagger h^\dagger,{\tau^\dagger}'}^{(k^\dagger)}\right\|^2\dif{\tau^\dagger}'\\ 
        -&2\int_{0}^{\tau^\dagger}\left(\widetilde\vv_{n^\dagger h^\dagger,{\tau^\dagger}'}^{(k+1)} - \widetilde\vv_{n^\dagger h^\dagger,{\tau^\dagger}'}^{(k^\dagger)}\right)^\top\left(\vs_{t_{n+1}}(\widetilde\vu^{(k^\dagger)}_{t_n,n^\dagger h^\dagger,\flrepsc{{\tau^\dagger}'}})-\vs_{t_{n+1}}(\widetilde\vu^{(k-1)}_{t_n,n^\dagger h^\dagger,\flrepsc{{\tau^\dagger}'}})\right)\dif{\tau^\dagger}'\\
        \leq&\dfrac{1}{2\gamma} \int_{0}^{\tau^\dagger}\left\|\vs_{t_{n+1}}(\widetilde\vu^{(k^\dagger)}_{t_n,n^\dagger h^\dagger,\flrepsc{{\tau^\dagger}'}})-\vs_{t_{n+1}}(\widetilde\vu^{(k-1)}_{t_n,n^\dagger h^\dagger,\flrepsc{{\tau^\dagger}'}})\right\|^2\dif{\tau^\dagger}'\\
        \leq& \dfrac{L_{\vs}^2}{2\gamma}\int_{0}^{\tau^\dagger}\left\|\widetilde\vu^{(k^\dagger)}_{t_n,n^\dagger h^\dagger,\flrepsc{{\tau^\dagger}'}}-\widetilde\vu^{(k-1)}_{t_n,n^\dagger h^\dagger,\flrepsc{{\tau^\dagger}'}}\right\|^2\dif{\tau^\dagger}'.
    \end{aligned}
\end{equation*}
And then taking the supremum with respect to $\tau^\dagger \in [0,h^\dagger]$ on both sides above implies
\begin{equation}
    \begin{aligned}
        \sup_{\tau^\dagger \in [0,h^\dagger]}\E_{P}\left[\left\|\widetilde\vv_{n^\dagger h^\dagger,\tau^\dagger}^{(k+1)} - \widetilde\vv_{n^\dagger h^\dagger,\tau^\dagger}^{(k^\dagger)}\right\|^2\right]
        \leq \dfrac{h^\dagger L_{\vs}^2}{2\gamma}\sup_{\tau^\dagger \in [0,h^\dagger]}\E_{P}\left[\left\|\widetilde\vu^{(k^\dagger)}_{t_n,n^\dagger h^\dagger,\tau^\dagger}-\widetilde\vu^{(k-1)}_{t_n,n^\dagger h^\dagger,\tau^\dagger}\right\|^2\right]
    \end{aligned}
    \label{eqn: ULMC thm 2 step 2}
\end{equation}
Substituting~\eqref{eqn: ULMC thm 2 step 2} into~\eqref{eqn: ULMC thm 2 step 1} and iterating over $k \in [1:K^\dagger-1]$, we obtain that 
\begin{equation*}
    \begin{aligned}
            &\sup_{\tau^\dagger \in [0,h^\dagger]}\E_{P}\left[\left\|\widetilde\vu_{n^\dagger h^\dagger,\tau^\dagger}^{(K^\dagger)} - \widetilde\vu_{n^\dagger h^\dagger,\tau^\dagger}^{(K^\dagger-1)}\right\|^2\right] \leq \dfrac{L_{\vs}^2{h^\dagger}^2e^{h^\dagger}}{2\gamma}\sup_{\tau^\dagger \in [0,h^\dagger]}\E_{P}\left[\left\|\widetilde\vu^{(K^\dagger-1)}_{t_n,n^\dagger h^\dagger,\tau^\dagger}-\widetilde\vu^{(K^\dagger-2)}_{t_n,n^\dagger h^\dagger,\tau^\dagger}\right\|^2\right]\\
        \leq  &\left(\dfrac{L_{\vs}^2{h^\dagger}^2e^{h^\dagger}}{2\gamma}\right)^{K^\dagger-1}\sup_{\tau^\dagger \in [0,h^\dagger]}\E_{P}\left[\left\|\widetilde\vu^{(1)}_{t_n,n^\dagger h^\dagger,\tau^\dagger}-\widetilde\vu^{(0)}_{t_n,n^\dagger h^\dagger,\tau^\dagger}\right\|^2\right]\\
        \leq& \left(\dfrac{L_{\vs}^2{h^\dagger}^2e^{h^\dagger}}{2\gamma}\right)^{K^\dagger-1} \dfrac{5{h^\dagger} e^{(3+\gamma)h^\dagger}}{2\gamma} \E_P\left[ E_{t_n, n^\dagger h^\dagger}+ h^\dagger  \delta_\infty^2 + L_{p}^2F_{t_n, n^\dagger h^\dagger}\right]\\
            +& \left(\dfrac{L_{\vs}^2{h^\dagger}^2e^{h^\dagger}}{2\gamma}\right)^{K^\dagger-1} \left( {h^\dagger}^2 e^{(3+\gamma)h^\dagger}\left(3\gamma d+M_{\vs}^2\right) + h^\dagger e^{2h^\dagger}d\right)\\
            +& \left(\dfrac{L_{\vs}^2{h^\dagger}^2e^{h^\dagger}}{2\gamma}\right)^{K^\dagger-1} \dfrac{5L_{\vs}^2h^\dagger e^{(3+\gamma)h^\dagger}}{2\gamma} \sup_{\tau^\dagger \in [0, h^\dagger]} \E_P\left[\left\|\widetilde\vu^{(K^\dagger-1)}_{t_n,n^\dagger h^\dagger,\tau^\dagger}-\widetilde\vu^{(K^\dagger)}_{t_n,n^\dagger h^\dagger,\tau^\dagger}\right\|^2\right],
    \end{aligned}
\end{equation*}
where we plug in the results from Lemma~\ref{lem:u1-u0} in the last inequality. Rearranging the inequality above completes our proof. 
\end{proof}

\begin{theorem}
    Under Assumptions~\ref{ass:Linftyacc},~\ref{ass:pdata},~\ref{ass:lipNN}, and~\ref{ass:lipscore}, given the following choices of the order of the parameters 
    \begin{gather*}
        T^\dagger = \gO(1),\quad N^\dagger = \gO( 1),\quad h^\dagger =\Theta(1)\\ M^\dagger = \Theta(d^{1/2} \delta^{-1}),\quad \epsilon^\dagger = \Theta(d^{-1/2} \delta ),\quad K^\dagger = \gO(\log (d\delta^{-2}))
    \end{gather*}
    and let  
    \begin{gather*}
        \dfrac{L_{\vs}^2{h^\dagger}^2e^{h^\dagger}}{2\gamma} \ll 1,\quad \gamma \lesssim L_p^{-1/2},\quad T^\dagger \lesssim L_p^{-1/2} \wedge L_{\vs}^{-1/2},\quad \delta_\infty \lesssim \delta
    \end{gather*}
    then the distribution $\widetilde\pi_{t_n, T^\dagger}$ satisfies the following error bound:
    \begin{equation*}
        \begin{aligned}
            \KL(\pi_{t_n, T^\dagger}\| \widetilde\pi_{t_n, T^\dagger}) \lesssim& T^\dagger W_2^2(\widetilde q_{t_n, h_n}, \cev p_{t_{n+1}}) + T^\dagger \delta_\infty^2 + d T^\dagger {\epsilon^\dagger}^2 + e^{-K^\dagger} T^\dagger h^\dagger d \\
            \lesssim & W_2^2(\widetilde q_{t_n, h_n}, \cev p_{t_{n+1}}) + \delta^2,
        \end{aligned}
    \end{equation*}
    with a total of $K^\dagger N^\dagger = {\gO}\left( \log (d \delta^{-2}) \right)$ approximate time complexity and $M = \Theta\left(d^{1/2} \delta^{-2}\right)$ space complexity for parallalizable $\delta$-accurate score function computations. 
    \label{thm: bound on KL between pi and pi hat}
\end{theorem}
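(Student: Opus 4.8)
The plan is to assemble the recursive Girsanov estimate~\eqref{eq:KL_pi}, the triangle-inequality decomposition~\eqref{eq:triangle_5}, and the three block-level bounds in Lemmas~\ref{lem:F},~\ref{lem:E}, and~\ref{lem:ulmc_1}, then telescope over the $N^\dagger$ corrector sub-blocks and substitute the stated parameter choices. First I would observe that $\pi_{t_n,0}$ and $\widetilde\pi_{t_n,0}$ are the \emph{same} joint law: both the true dynamics~\eqref{eq:uv} and the interpolating dynamics~\eqref{eq:uvtilde} are initialized by appending an independent $\gN(0,\mI_d)$ momentum to the predictor output $\widetilde\vy_{t_n,h_n}^{(K)}$, so $\KL(\pi_{t_n,0}\|\widetilde\pi_{t_n,0})=0$. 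Applying Corollary~\ref{cor:girsanov} to each sub-block $[n^\dagger h^\dagger,(n^\dagger+1)h^\dagger]$ and chaining as in~\eqref{eq:KL_pi}, the total KL is at most $\tfrac{1}{4\gamma}$ times the sum over $n^\dagger\in[0:N^\dagger-1]$ of $\E_P$ of the per-block integral appearing in~\eqref{eq:KL_pi}.

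I would then bound each summand via~\eqref{eq:triangle_5}, which splits the integrand into a self-referential Picard difference $5L_{\vs}^2\|\widetilde\vu^{(K^\dagger-1)}-\widetilde\vu^{(K^\dagger)}\|^2$, the term $5E_{t_n,n^\dagger h^\dagger}$, the constant term $5h^\dagger\delta_\infty^2$, and $5L_p^2 F_{t_n,n^\dagger h^\dagger}$. Taking expectations, Lemma~\ref{lem:F} handles the last with $\E_P[F_{t_n,n^\dagger h^\dagger}]\leq dh^\dagger{\epsilon^\dagger}^2$; Lemma~\ref{lem:E} handles the $E$-term with $\E_P[E_{t_n,n^\dagger h^\dagger}]\lesssim h^\dagger(L_{\vs}^2+L_p^2)W_2^2(\widetilde q_{t_n,h_n},\cev p_{t_{n+1}})$, where contractivity of the underdamped Langevin semigroup keeps the right-hand side the \emph{fixed} predictor error at the block entrance rather than a growing quantity; and for the Picard difference I would bound the integral by $h^\dagger$ times its supremum and apply Lemma~\ref{lem:ulmc_1}, whose prefactor $C_{K^\dagger}$ behaves like $\bigl(L_{\vs}^2{h^\dagger}^2e^{h^\dagger}/(2\gamma)\bigr)^{K^\dagger-1}$ and hence decays as $e^{-K^\dagger}$ under the assumption $L_{\vs}^2{h^\dagger}^2e^{h^\dagger}/(2\gamma)\ll 1$, multiplying terms of order $h^\dagger d$ (the $E$ and $F$ contributions re-entering Lemma~\ref{lem:ulmc_1} being lower order because of this prefactor). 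Collecting, and absorbing the $\gO(1)$ constants $L_{\vs},L_p,\gamma$ into $\lesssim$, the $n^\dagger$-th summand is $\lesssim h^\dagger W_2^2(\widetilde q_{t_n,h_n},\cev p_{t_{n+1}})+h^\dagger\delta_\infty^2+dh^\dagger{\epsilon^\dagger}^2+e^{-K^\dagger}{h^\dagger}^2 d$.

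Summing over the $N^\dagger$ sub-blocks with $N^\dagger h^\dagger=T^\dagger$ yields
\begin{equation*}
    \KL(\pi_{t_n,T^\dagger}\|\widetilde\pi_{t_n,T^\dagger})\lesssim T^\dagger W_2^2(\widetilde q_{t_n,h_n},\cev p_{t_{n+1}})+T^\dagger\delta_\infty^2+dT^\dagger{\epsilon^\dagger}^2+e^{-K^\dagger}T^\dagger h^\dagger d,
\end{equation*}
and plugging in $T^\dagger=\gO(1)$, $\epsilon^\dagger=\Theta(d^{-1/2}\delta)$, $\delta_\infty\lesssim\delta$, and $K^\dagger=\gO(\log(d\delta^{-2}))$ collapses the last three terms to $\gO(\delta^2)$, giving $\KL(\pi_{t_n,T^\dagger}\|\widetilde\pi_{t_n,T^\dagger})\lesssim W_2^2(\widetilde q_{t_n,h_n},\cev p_{t_{n+1}})+\delta^2$. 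For the complexity claims, within each sub-block the $K^\dagger$ Picard iterations run sequentially while the $M^\dagger$ score evaluations inside one iteration are parallel, so the approximate time complexity is $K^\dagger N^\dagger=\gO(\log(d\delta^{-2}))$, while $M^\dagger=\Theta(d^{1/2}\delta^{-1})$ score values are stored in parallel, giving the stated space complexity.

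The step I expect to be the main obstacle is the control of the Picard difference: $\sup_{\tau^\dagger}\E_P\|\widetilde\vu^{(K^\dagger)}-\widetilde\vu^{(K^\dagger-1)}\|^2$ appears on both sides of the recursion formed by Lemmas~\ref{lem:u1-u0} and~\ref{lem:ulmc_1}, both through the Picard iteration itself and through $E_{t_n,n^\dagger h^\dagger}$ (which contains $\widetilde\vu^{(K^\dagger)}$), so closing the loop requires the geometric factor to dominate the accumulated constants — which is precisely why the smallness assumption $L_{\vs}^2{h^\dagger}^2e^{h^\dagger}/(2\gamma)\ll 1$ is needed — together with the fact that the momentum $\widetilde\vv$ keeps second moment $\gO(d)$ under $P$ (where it is stationary $\gN(0,\mI_d)$); without both, the per-sub-block errors would fail to sum to $\widetilde\gO(\delta^2)$ over the $N^\dagger$ sub-blocks.
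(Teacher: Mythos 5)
Your proposal follows essentially the same route as the paper's proof: the Girsanov/chain-rule recursion \eqref{eq:KL_pi}, the five-term decomposition \eqref{eq:triangle_5} handled by Lemmas~\ref{lem:F},~\ref{lem:E}, and~\ref{lem:ulmc_1}, telescoping over the $N^\dagger$ sub-blocks, and the same parameter substitution; you also correctly make explicit the fact $\KL(\pi_{t_n,0}\|\widetilde\pi_{t_n,0})=0$, which the paper uses implicitly, and correctly identify the self-referential Picard recursion as the point where the smallness condition $L_{\vs}^2{h^\dagger}^2e^{h^\dagger}/(2\gamma)\ll 1$ is needed. No gaps.
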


\begin{proof}
    Now, we continue the computation by plugging the decomposition in~\eqref{eq:triangle_5} and all the error bounds derived above into the equation. First for the last term in~\eqref{eq:KL_pi}
    \begin{equation*}
        \begin{aligned}
                &\E_{P}\left[\int_{0}^{h^\dagger}\left\|\vs_{t_{n+1}}(\widetilde\vu^{(K^\dagger-1)}_{t_n,n^\dagger h^\dagger,\flrepsc{\tau^\dagger}})-\nabla \log \cev p_{t_{n+1}}(\widetilde\vu^{(K^\dagger)}_{t_n,n^\dagger h^\dagger,\tau^\dagger})\right\|^2\dif\tau^\dagger\right]\\
            \leq & 5L_{\vs}^2 h^\dagger \sup_{\tau^\dagger \in [0,h^\dagger]} \E_P \left[\left\|\widetilde\vu^{(K^\dagger-1)}_{t_n,n^\dagger h^\dagger,\flrepsc{\tau^\dagger}}-\widetilde\vu^{(K^\dagger)}_{t_n,n^\dagger h^\dagger,\flrepsc{\tau^\dagger}}\right\|^2\right] + 5 \E_P\left[E_{t_n, n^\dagger h^\dagger}+ h^\dagger  \delta_\infty^2 + L_{p}^2F_{t_n, n^\dagger h^\dagger}\right]\\
            \leq & 5\left(1 + L_{\vs}^2 h^\dagger C_{K^\dagger} \dfrac{5{h^\dagger} e^{(3+\gamma)h^\dagger}}{2\gamma}\right)\E_P\left[E_{t_n, n^\dagger h^\dagger}+ h^\dagger  \delta_\infty^2 + L_{p}^2F_{t_n, n^\dagger h^\dagger}\right]\\
                 +& 5L_{\vs}^2 h^\dagger C_{K^\dagger} \left( {h^\dagger}^2 e^{(3+\gamma)h^\dagger}\left(3\gamma d+M_{\vs}^2\right) + h^\dagger e^{2h^\dagger}d\right),
        \end{aligned}
    \end{equation*}
    where the last inequality is by Lemma~\ref{lem:ulmc_1}. We further substitute Lemma~\ref{lem:E} and~\ref{lem:F} into~\eqref{eq:KL_pi} to obtain
    \begin{equation*}
        \begin{aligned}
                &\KL(\pi_{t_n,n^\dagger h^\dagger}\| \widetilde\pi_{t_n,n^\dagger h^\dagger}) \\
            \leq &\KL(\pi_{t_n, (n-1) h^\dagger}\| \widetilde\pi_{t_n,(n-1)h^\dagger})+ 5\dfrac{2\gamma + L_{\vs}^2 C_{K^\dagger} 5{h^\dagger}^2 e^{(3+\gamma)h^\dagger}}{4\gamma^2}\E_P\left[E_{t_n, n^\dagger h^\dagger}+ h^\dagger  \delta_\infty^2 + L_{p}^2F_{t_n, n^\dagger h^\dagger}\right]\\
                 +& \dfrac{5L_{\vs}^2 h^\dagger C_{K^\dagger}}{2\gamma} \left( {h^\dagger}^2 e^{(3+\gamma)h^\dagger}\left(3\gamma d+M_{\vs}^2\right) + h^\dagger e^{2h^\dagger}d\right)\\
            \lesssim & \KL(\pi_{t_n, (n-1) h^\dagger}\| \widetilde\pi_{t_n,(n-1)h^\dagger})\\
            +& 5\dfrac{2\gamma + L_{\vs}^2  C_{K^\dagger} 5{h^\dagger}^2 e^{(3+\gamma)h^\dagger}}{4\gamma^2}\left(h^\dagger (L_{\vs}^2 + L_{p}^2)W_2^2(\widetilde q_{t_n, h_n}, \cev p_{t_{n+1}}) + h^\dagger \delta_\infty^2 + dh^\dagger {\epsilon^\dagger}^2 \right)\\
             +& \dfrac{5L_{\vs}^2 h^\dagger C_{K^\dagger}}{2\gamma} \left( {h^\dagger}^2 e^{(3+\gamma)h^\dagger}\left(3\gamma d+M_{\vs}^2\right) + h^\dagger e^{2h^\dagger}d\right)\\
            \lesssim & \KL(\pi_{t_n, (n-1) h^\dagger}\| \widetilde\pi_{t_n,(n-1)h^\dagger}) +h^\dagger W_2^2(\widetilde q_{t_n, h_n}, \cev p_{t_{n+1}}) + h^\dagger \delta_\infty^2 + dh^\dagger {\epsilon^\dagger}^2 + e^{-K^\dagger} {h^\dagger} ^2 d,
        \end{aligned}
    \end{equation*}
    and then sum over $n$ to obtain
    \begin{equation*}
        \begin{aligned}
                &\KL(\pi_{t_n, T^\dagger}\| \widetilde\pi_{t_n, T^\dagger}) = \KL(\pi_{t_n, N^\dagger h^\dagger}\| \widetilde\pi_{t_n, N^\dagger h^\dagger}) \\
            \lesssim & \KL(\pi_{t_n, 0}\| \widetilde\pi_{t_n, 0}) + N^\dagger h^\dagger W_2^2(\widetilde q_{t_n, h_n}, \cev p_{t_{n+1}}) + N^\dagger  h^\dagger \delta_\infty^2 + d N^\dagger h^\dagger {\epsilon^\dagger}^2 + e^{-K^\dagger} N^\dagger  {h^\dagger} ^2 d\\
            = &  T^\dagger W_2^2(\widetilde q_{t_n, h_n}, \cev p_{t_{n+1}}) + T^\dagger \delta_\infty^2 + d T^\dagger {\epsilon^\dagger}^2 + e^{-K^\dagger} T^\dagger h^\dagger d.
        \end{aligned}
    \end{equation*}

    Then, it is straightforward to see that when the following order of the parameters holds
    \begin{gather*}
        T^\dagger = \gO(1),\quad h^\dagger =\Theta(1),\quad N^\dagger = \gO( 1),\\\epsilon^\dagger = \Theta(d^{-1/2} \delta ) ,\quad M^\dagger = \gO(d^{1/2} \delta^{-1}),\quad K^\dagger = \gO(\log (d\delta^{-2}))
    \end{gather*}
    and $\delta_\infty \leq \delta$, we have 
    \begin{equation*}
        \KL(\pi_{t_n, T^\dagger}\| \widetilde\pi_{t_n, T^\dagger}) \lesssim W_2^2(\widetilde q_{t_n, h_n}, \cev p_{t_{n+1}}) + \delta^2.
    \end{equation*}
\end{proof}

\begin{lemma}
    Suppose $T^\dagger \lesssim L_p^{-1/2}$, then we have 
    \begin{equation*}
        \TV(\pi_{t_n, T^\dagger}, \cev{p}_{t_{n+1}}) \leq \sqrt{\KL(\pi_{t_n, T^\dagger}\| \cev{p}_{t_{n+1}})} \lesssim \frac{1}{L_{p}^{\frac{1}{4}}(T^\dagger)^{\frac{3}{2}}}W_2(\pi_{t_n, 0}, \cev{p}_{t_{n+1}}) \lesssim W_2(\widetilde q_{t_n, h_n}, \cev p_{t_{n+1}}) .
    \end{equation*}
    \label{lem:lemma9}
\end{lemma}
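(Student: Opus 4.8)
The plan is to reduce the whole chain of inequalities to one short-time \emph{regularization} estimate for the underdamped Langevin semigroup, and then to collect constants. As a first step, recall that $\cev p_{t_{n+1}}\otimes\gN(0,\mI_d)$ is the invariant measure of the true underdamped Langevin dynamics~\eqref{eq:uv}, so the stationary process $(\vu^\ast_{t_n,\cdot},\vv^\ast_{t_n,\cdot})$ has law $\pi^\ast_{t_n,T^\dagger}=\pi^\ast_{t_n,0}=\cev p_{t_{n+1}}\otimes\gN(0,\mI_d)$ at every time. By Pinsker's inequality together with the data-processing inequality (marginalizing onto the position coordinate, in case $\TV$ is read between the position marginals),
\[
    \TV(\pi_{t_n,T^\dagger},\cev p_{t_{n+1}})\;\leq\;\sqrt{\tfrac{1}{2}\,\KL(\pi_{t_n,T^\dagger}\,\|\,\pi^\ast_{t_n,T^\dagger})}\;=\;\sqrt{\tfrac{1}{2}\,\KL(\pi_{t_n,T^\dagger}\,\|\,\cev p_{t_{n+1}})},
\]
which already gives the first inequality in the statement.

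The core step is to bound the right-hand side above by $W_2$. Since $\pi_{t_n,\cdot}$ and $\pi^\ast_{t_n,\cdot}$ are laws of the \emph{same} SDE~\eqref{eq:uv} with the true score, differing only in their initial laws, the remaining task is a short-time smoothing bound of the form
\[
    \KL(\pi_{t_n,T^\dagger}\,\|\,\pi^\ast_{t_n,T^\dagger})\;\lesssim\;\frac{1}{\sqrt{L_p}\,(T^\dagger)^3}\,W_2^2(\pi_{t_n,0},\pi^\ast_{t_n,0}),
\]
valid precisely in the regime $T^\dagger\lesssim L_p^{-1/2}$ assumed here. This is the kinetic counterpart of the classical heat-flow contraction $\KL(\mu P_t\|\nu P_t)\lesssim W_2^2(\mu,\nu)/t$; the cubic-in-$T^\dagger$ loss appears because the Brownian noise enters only through the velocity, so the position component receives its smoothing only after one further time integration, while the factor $L_p^{-1/2}$ is the natural time scale of the dynamics under Assumption~\ref{ass:lipscore} ($\|\nabla^2\log\cev p_{t_{n+1}}\|\leq L_p$). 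Under that assumption the estimate can be proved by a synchronous-coupling-plus-Girsanov argument in the spirit of~\cite{chen2024probability}: couple the two trajectories through a common Brownian motion, express the relative entropy via Girsanov's theorem (Theorem~\ref{thm:girsanov}), bound the drift discrepancy $\nabla\log\cev p_{t_{n+1}}(\vu_{t_n,t^\dagger})-\nabla\log\cev p_{t_{n+1}}(\vu^\ast_{t_n,t^\dagger})$ by $L_p$ times the position gap, and invoke the contractivity of kinetic Langevin over a window of length $O(L_p^{-1/2})$ to convert the initial $W_2$ discrepancy into the stated rate. I expect this smoothing/contraction lemma to be the main obstacle; it is, however, essentially the same ingredient already used (through~\cite[Lemma~10]{chen2024probability}) in the proof of Lemma~\ref{lem:E}, so it is available off the shelf.

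It then remains to identify the initial $W_2$ and to collect constants. The corrector dynamics is initialized from the predictor output $\widetilde\vy^{(K)}_{t_n,h_n}\sim\widetilde q_{t_n,h_n}$ together with an independent momentum $\vv_{t_n,0}\sim\gN(0,\mI_d)$, whereas $\pi^\ast_{t_n,0}=\cev p_{t_{n+1}}\otimes\gN(0,\mI_d)$; coupling the velocities by the identity map and the positions optimally gives $W_2^2(\pi_{t_n,0},\pi^\ast_{t_n,0})=W_2^2(\widetilde q_{t_n,h_n},\cev p_{t_{n+1}})$, which is also how $W_2(\pi_{t_n,0},\cev p_{t_{n+1}})$ in the statement should be read. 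Taking square roots in the display above and chaining with the Pinsker bound yields
\[
    \TV(\pi_{t_n,T^\dagger},\cev p_{t_{n+1}})\;\lesssim\;\frac{1}{L_p^{1/4}(T^\dagger)^{3/2}}\,W_2(\widetilde q_{t_n,h_n},\cev p_{t_{n+1}});
\]
finally, since $T^\dagger=\Theta(1)$ (cf. Theorem~\ref{thm:ode}) and $L_p$ is a fixed constant, the prefactor $L_p^{-1/4}(T^\dagger)^{-3/2}$ is $O(1)$, giving the last bound $\TV(\pi_{t_n,T^\dagger},\cev p_{t_{n+1}})\lesssim W_2(\widetilde q_{t_n,h_n},\cev p_{t_{n+1}})$.
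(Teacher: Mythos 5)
Your overall structure matches the paper's: the paper's proof of this lemma is literally a one-line citation to \cite[Lemma 9]{chen2024probability}, which is itself derived from the log-Harnack inequality of \cite[Corollary 4.7(1)]{guillin2012degenerate}, and you have correctly identified that the whole lemma reduces to Pinsker plus a short-time regularization estimate of the form $\KL(\mu P_{T^\dagger}\|\nu P_{T^\dagger})\lesssim W_2^2(\mu,\nu)/(\sqrt{L_p}\,(T^\dagger)^3)$ for the kinetic Langevin semigroup, followed by the identification of the initial $W_2$ and the observation that the prefactor is $O(1)$ when $T^\dagger=\Theta(1)$. However, your sketch of how that regularization estimate would be proved does not work as stated: the two processes $\pi_{t_n,\cdot}$ and $\pi^\ast_{t_n,\cdot}$ solve the \emph{same} SDE with the \emph{same} drift, so a synchronous coupling plus Girsanov yields no nontrivial Radon--Nikodym density between their path measures --- the KL of the path measures is just the KL of the initial laws, which can be infinite even when $W_2$ is small. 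The actual mechanism is the shift-coupling (log-Harnack) construction: one builds an auxiliary controlled process that starts from one initial point, is steered by an explicit time-dependent drift perturbation so as to coincide with the other trajectory at time $T^\dagger$, and one pays the Girsanov cost of that control; because the noise enters only through the velocity, the steering of the position costs an extra power of $T^\dagger$, which is where $(T^\dagger)^{3}$ comes from. Relatedly, the ``off-the-shelf'' ingredient is \cite[Lemma 9]{chen2024probability} (the regularization/log-Harnack bound), not \cite[Lemma 10]{chen2024probability} (the Wasserstein contraction used in Lemma~\ref{lem:E}); these are distinct statements and the latter cannot substitute for the former. Since you ultimately defer to the external result rather than prove it, the proposal is acceptable once the citation is corrected, but the coupling sketch should be removed or replaced by the shift-coupling argument.
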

\begin{proof}
A complete proof of the Lemma above is presented in~\cite[Lemma 9]{chen2024probability}, which is derived based on~\cite[Corollary 4.7 (1) ]{guillin2012degenerate}.     
\end{proof}

\subsection{Overall Error Bound}

We are now ready to prove Theorem~\ref{thm:ode}.

\begin{proof}[Proof of Theorem~\ref{thm:ode}]

    Notice that the interpolating corrector process $(\widetilde\vu_{t_n, n^\dagger h^\dagger,\tau^\dagger}, \widetilde\vv_{t_n, n^\dagger h^\dagger,\tau^\dagger})$ is constructed to follow the same dynamics as the auxiliary corrector process $(\widehat\vu_{t_n, n^\dagger h^\dagger,\tau^\dagger}, \widehat\vv_{t_n, n^\dagger h^\dagger,\tau^\dagger})$ in the corrector step. Therefore, we have by data processing inequality that
    \begin{equation}
        \TV(\widehat\pi^{\widehat\vu}_{t_n, T^\dagger}, \widetilde\pi^{\widetilde{\vu}}_{t_n,T^\dagger}) \leq \TV(\widehat\pi^{\widehat\vu}_{t_n, 0}, \widetilde\pi^{\widetilde{\vu}}_{t_n,0}) = \TV(\widehat q_{t_{n}, h_n}, \widetilde q_{t_n, h_n}),
        \label{eqn: eq 1 in last thm}
    \end{equation}
    and again, since the interpolating predictor process $\widetilde \vy_{t_n, n^\dagger h^\dagger}$ is constructed to follow the same dynamics as the auxiliary predictor process $\widehat \vy_{t_n, n^\dagger h^\dagger}$ in the predictor step, we further have by data processing inequality that
    \begin{equation}
        \TV(\widehat q_{t_{n}, h_n}, \widetilde q_{t_n, h_n}) \leq  \TV(\widehat q_{t_{n}, 0}, \widetilde q_{t_n, 0}) = \TV(\widehat q_{t_n}, \cev p_{t_n}).
        \label{eqn: eq 2 in last thm}
    \end{equation}
    
    Furthermore, applying triangle inequality, Pinsker's inequality along with Theorem~\ref{thm: bound on KL between pi and pi hat} and Theorem~\ref{thm: wasserstein 2 bound on q tilde and p inverse} proved above, we may upper bound the second term above as follows 
    \begin{equation}
        \begin{aligned}
            \TV(\pi_{t_n,T^\dagger} , \widetilde\pi_{t_n,T^\dagger})^2 \lesssim \KL(\pi_{t_n, T^\dagger}\| \widetilde\pi_{t_n, T^\dagger})\lesssim W_2^2(\widetilde q_{t_n, h_n}, \cev p_{t_{n+1}}) + \delta^2
        \end{aligned}
    \end{equation}
    
    Summarizing the above inequalities, we have 
    \begin{equation}
        \begin{aligned}        \TV(\widetilde\pi^{\widetilde{\vu}}_{t_n,T^\dagger}, \cev{p}_{t_{n+1}})^2  &=\TV(\widetilde\pi^{\widetilde{\vu}}_{t_n,T^\dagger}, \pi^{*,\vu^*}_{t_n,T^\dagger})^2\\
        &\leq \TV(\widetilde\pi^{\widetilde{\vu}}_{t_n,T^\dagger}, \pi^{\vu}_{t_n, T^\dagger})^2 + \TV(\pi^{\vu}_{t_n, T^\dagger}, \pi^{*,\vu^*}_{t_n,T^\dagger})^2\\
        &\leq \TV(\widetilde\pi_{t_n,T^\dagger}, \pi_{t_n, T^\dagger})^2 + \TV(\pi_{t_n, T^\dagger}, \pi^{*}_{t_n,T^\dagger})^2\\
        &\leq \TV(\widetilde\pi_{t_n,T^\dagger}, \pi_{t_n, T^\dagger})^2 + \TV(\pi_{t_n, T^\dagger}, \cev{p}_{t_{n+1}})^2\\
        &\lesssim W_2^2(\widetilde q_{t_n, h_n}, \cev p_{t_{n+1}}) + \delta^2 + W_2^2(\widetilde q_{t_n, h_n}, \cev p_{t_{n+1}}) \\
        &\lesssim d e^{-K} + h_n^2 \delta_\infty^2 + d \epsilon^2 h_n^2 + \delta^2,
        \end{aligned}
    \end{equation}
    where the second last inequality is deduced from Theorem~\ref{thm: bound on KL between pi and pi hat} and Lemma~\ref{lem:lemma9} and the last inequality is derived via Theorem~\ref{thm: wasserstein 2 bound on q tilde and p inverse}. Therefore, for any $n \in [0: N-1]$, applying triangle inequality along with data processing inequality (\emph{cf.} Theorem~\ref{thm:KL}) yields 
    \begin{equation}
        \begin{aligned}
            \TV(\widehat q_{t_{n+1}}, \cev{p}_{t_{n+1}}) &= \TV(\widehat\pi^{\widehat\vu}_{t_n, T^\dagger}, \cev{p}_{t_{n+1}})\\
            &\leq \TV(\widehat\pi^{\widehat\vu}_{t_n, T^\dagger}, \widetilde\pi^{\widetilde{\vu}}_{t_n,T^\dagger}) +  \TV(\widetilde\pi^{\widetilde{\vu}}_{t_n,T^\dagger}, \cev{p}_{t_{n+1}})\\
            &\leq \TV(q_{t_n}, \cev p_{t_n}) + d^{1/2} e^{-K/2} + {h_n} \delta_\infty + d^{1/2} \epsilon {h_n} + \delta.
        \end{aligned}
        \label{eqn: eq 3 in last thm}
    \end{equation}
    where the last inequality is derived by plugging in~\eqref{eqn: eq 1 in last thm},~\eqref{eqn: eq 2 in last thm} and~\eqref{eqn: eq 3 in last thm}. Applying Lemma~\ref{lem:exp-T} and summing the inequalities above further give us that
    \begin{equation}
        \begin{aligned}
            \TV(\widehat q_{t_N}, p_{\eta}) &= \TV(\widehat q_{t_N}, \cev{p}_{t_N}) \\
            &\lesssim  \TV(\widehat q_{0}, \cev{p}_{0})  + \sum_{n = 0}^{N-1} \left(d^{1/2} e^{-\frac{K}{2}} + {h_n}\delta + d^{1/2} \epsilon {h_n} + \delta
            \right)\\
            &\lesssim d^{1/2} e^{-T/2} + N d^{1/2} e^{-K/2} + T \delta_\infty + d^{1/2} \epsilon T + \delta N.
        \end{aligned}
    \end{equation}

    By setting the parameters
    \begin{gather*}
        T = \gO(\log(d\delta^{-2})), \quad h =\Theta(1),\quad N = \gO(\log(d\delta^{-2})),\\\epsilon=\Theta\left(d^{-1/2}\delta\log^{-1} (d^{-1/2}\delta^{-1})\right), \quad M = \gO(d^{1/2}\delta^{-1}\log (d^{1/2}\delta^{-1})),\quad K = \widetilde\gO(\log(d\delta^{-2})),
    \end{gather*}
    and letting $\delta_\infty \lesssim \delta T^{-1} \lesssim \delta \log^{-1}(d \delta^{-2})$,  we finally obtained the upper bound 
    $$\TV(\widehat q_{t_N}, p_{\eta})^2 \lesssim d e^{-T} + N^2 d e^{-K} + \delta^2 + d \epsilon^2T^2  \leq \delta^2$$
    as desired.
\end{proof}

\end{document}